\newtheorem{theorem}{Theorem}[section]
\newtheorem{proposition}{Proposition}[section]
\newtheorem{lemma}[theorem]{Lemma}
\newtheorem{assumption}{Assumption}[section]
\theoremstyle{remark}
\newtheorem{remark}{Remark}[section]
\theoremstyle{definition}
\newtheorem{definition}{Definition}[section]
\def\b0{\mathbf{0}}
\def \mx {\mathbf{x}}
\def \my {\mathbf{y}}
\def \mv {\mathbf{v}}
\def \mz {\mathbf{z}}
\title{\textbf{
Latent Schr{\"o}dinger Bridge  Diffusion Model for Generative Learning
}
}
\author{
Yuling Jiao
\thanks{School of Mathematics and Statistics, Wuhan University, Wuhan, China.
Email: yulingjiaomath@whu.edu.cn}
\and
Lican Kang
\thanks{School of Mathematics and Statistics, Wuhan University, Wuhan, China.
Email: kanglican@whu.edu.cn}
\and
Huazhen Lin
\thanks{ Center of Statistical Research and School of Statistics, Southwestern University of Finance and Economics, Chengdu, China.
Email: linhz@swufe.edu.cn}
\and
Jin Liu 
\thanks{School of Data Science, Chinese, University of Hong Kong (Shenzhen), Shenzhen, 
China.
Email: liujinlab@cuhk.edu.cn}
\and
Heng Zuo
\thanks{School of Mathematics and Statistics, Wuhan University, Wuhan, China.
Email: zuoheng@whu.edu.cn}
}
\date{}
\begin{document}

\maketitle
\begin{abstract}
This paper aims to conduct a comprehensive theoretical analysis of current diffusion models. We introduce a novel generative learning methodology utilizing the Schr{\"o}dinger  bridge diffusion model in latent space as the framework for theoretical exploration in this domain. Our approach commences with the pre-training of an encoder-decoder architecture using data originating from a distribution that may diverge from the target distribution, thus facilitating the accommodation of a large sample size through the utilization of pre-existing large-scale models. Subsequently, we develop a diffusion model within the latent space utilizing the Schr{\"o}dinger  bridge framework.
Our theoretical analysis encompasses the establishment of end-to-end error analysis for learning distributions via the latent  Schr{\"o}dinger   bridge diffusion model. Specifically, we control the second-order Wasserstein distance between the generated distribution and the target distribution. Furthermore, our obtained convergence rates are sharp and effectively mitigate the curse of dimensionality, offering robust theoretical support for prevailing diffusion models.

%%%%%%%%%%%%%%%%%%%%%%%%%%%%%%%%%%%%
\vspace{0.5cm} \noindent{\bf KEY WORDS}:
Diffusion models,  Schr{\"o}dinger bridge, Encoder-decoder, Curse of dimensionality,
End-to-end error analysis.
\end{abstract}

%------------------------- Indroduction -------------------
\section{Introduction}
Generative models, designed to generate data approximately distributed according to the target distribution, have emerged as a cornerstone in the realm of machine learning. In recent years, their adoption has proliferated across diverse sectors, catalyzing the evolution of various methodologies, including generative adversarial networks (GANs) \cite{goodfellow2014generative}, variational autoencoders (VAEs) \cite{kingma2013auto}, normalizing flows \cite{dinh2016density,rezende2015variational,papamakarios2021normalizing}, and notably, diffusion models
\cite{ho2020denoising,sohl2015deep,nichol2021improved,song2019generative,song2020improved,song2020score}. Distinctly, diffusion models, leveraging stochastic differential equations
(SDEs), have established themselves as a pivotal and mainstream approach in the generative model domain. 
In applications, diffusion models have been instrumental in driving progress in fields like image and audio synthesis, text-to-image generation, natural language processing, 
and text-to video generation
\cite{dhariwal2021diffusion,ho2022cascaded,rombach2022high,saharia2022photorealistic,zhang2023adding,han2022card,austin2021structured,li2022diffusion,liu2024sora}. 
Their proficiency in modeling and generating complex, high-dimensional data distributions 
sets them apart, particularly in scenarios where conventional models are inadequate.

Recently, there has been a significant focus on exploring the theoretical guarantees of diffusion models, with the aim of demystifying their underlying principles from a theoretical standpoint. 
Specifically, this primarily centers on error analysis  with accurate score estimators or end-to-end aspects. The study of known estimated errors in score estimation has prompted investigations \cite{chen2023improved,conforti2023score,lee2022convergence,lee2023convergence,benton2023linear,li2023towards,gao2023wasserstein}.
Additionally, \cite{oko2023diffusion,chen2023score} have provided an end-to-end  error analysis, incorporating the theory of score estimation into their considerations.
While both \cite{oko2023diffusion} and \cite{chen2023score}  delineated  
end-to-end convergence rates and attempted to address the formidable challenge posed by the curse of dimensionality  by presuming a 
low-dimensional space for the target distribution, their theoretical findings fail to fully capture the characteristics of the prevailing stable diffusion model \cite{rombach2022high}  and
Sora \cite{liu2024sora}, which  
%These works  
incorporate  
encoder-decoder structures in 
pre-training within diffusion models. However, 
The existing theoretical analyses remain inadequate in explaining this augmentation from encoder-decoder structures. 
 This naturally raises critical questions: 

{\it 
How do errors introduced by the encoder-decoder architecture propagate through the diffusion models? and What is the influence of pre-training sample size on the accuracy and stability of diffusion models? 
}

Addressing these questions is essential for advancing a rigorous understanding of the complex interplay between pre-training methodologies, encoder-decoder architectures, and the convergence properties of diffusion models--an area that remains theoretically and practically challenging.

To bridge this gap, we undertake a novel analytical framework for latent diffusion models that leverages the 
Schr{\"o}dinger bridge problem
\cite{schrodinger1932theorie,jamison1975markov,dai1991stochastic,leonard2014survey}   in conjunction with 
encoder-decoder structures. 
This approach aims to establish a rigorous theoretical framework to analyze the propagation and impact of error dynamics arising from 
encoder-decoder architectures, systematically evaluate the role of pre-training sample size in determining the accuracy of diffusion models, and ultimately conduct a comprehensive error analysis within the context of latent diffusion models.

The Schr{\"o}dinger bridge \cite{schrodinger1932theorie,jamison1975markov,dai1991stochastic,leonard2014survey} is  substantiated by pertinent literature references \cite{wang2021deep,de2021diffusion,hamdouche2023generative,liu20232,chen2023schrodinger}, wherein
the SDEs are defined in a finite time  horizon.
This  differs from  diffusion models predominantly 
based on \cite{song2020score}, wherein
 the adopted SDEs, such as Ornstein-Uhlenbeck (OU) and Langevin  SDEs, 
 are defined over an infinite time horizon $[0,\infty)$.
The diffusion models grounded in the Schr{\"o}dinger bridge, as evidenced by \cite{wang2021deep,de2021diffusion,hamdouche2023generative,liu20232,chen2023schrodinger}, have demonstrated exceptional performance in practical applications, surpassing existing diffusion models.  
Additionally, from both sampling and optimization perspectives
\cite{huang2021schrodinger,jiao2021convergence,dai2023global},
the Schr{\"o}dinger bridge has demonstrated its superiority over Langevin SDE methods.
Furthermore, in \cite{wang2021deep}, a convergence analysis is conducted for the Schr{\"o}dinger bridge generative model,
and \cite{de2021diffusion}  provided a theoretical analysis under an accurate score estimation and some regularity assumptions. 
However, these assumptions warrant further  verification, 
and their frameworks do not consider the pre-training encoder-decoder structure.
In our method, we first 
train an encoder-decoder structure separately,  a process which can be conducted with domain shift.
Then, we construct an SDE defined over the time interval $[0,1]$ in the latent space, where the initial distribution is the convolution of the encoder target distribution and a Gaussian distribution, and the terminal distribution is the encoder target distribution. Thirdly, we derive the deep score estimation of the score function corresponding to the initial convolution distribution using score matching techniques \cite{hyvarinen2005estimation,vincent2011connection}. Finally, we employ the  Euler–Maruyama (EM) approach to discretize the SDE corresponding to the estimated score, thereby obtaining the desired samples by implementing the early stopping technique and  the trained decoder. See Section \ref{sec:ps} for a detailed description of our proposed latent diffusion models. 
It is imperative to acknowledge that within our method,  a salient observation arises regarding the decoupling of the pre-training procedure from the formulation of the latent diffusion model. Moreover, the pre-training process is endowed with the capacity to accommodate distributional shifts diverging from the target distribution. Consequently,  this approach facilitates the seamless incorporation of pre-existing large-scale models into the pre-training phase, thus guaranteeing an abundant supply of samples for training endeavors.
Theoretically, we systematically establish rigorous 
end-to-end Wasserstein-distance error bounds between the distribution of generated data and the target distribution, which are minimax optimal.

Our main contributions  can be summarized as follows:
\begin{itemize}
\item
Building upon the Schr{\"o}dinger bridge, 
we introduce a novel latent diffusion model for generative learning.  Our method incorporates a separate encoder-decoder structure, which can be utilized to accommodate domain shift.
This can be accomplished by  integrating pre-existing large-scale models into the pre-training phase, ensuring the availability of ample  samples essential for comprehensive pre-training procedures.
We formulate an  SDE defined on the time interval $[0,1]$, transitioning from the 
Gaussian-convoluted distribution of the encoder  target distribution  to the encoder target distribution. 
Leveraging score matching techniques \cite{hyvarinen2005estimation,vincent2011connection}, EM discretization, 
early stopping technique,
and the trained decoder, we derive  samples that are approximately distributed according to the target distribution.
\item
We rigorously establish the end-to-end theoretical analysis for latent diffusion models. Specifically,  through a combination of   
pre-training,  score estimation,
numerical analysis of SDEs,  and time truncation techniques, we deduce the optimal Wasserstein-distance error bounds between the distribution of generated samples and the target distribution,  
quantified as
$\widetilde{\mathcal{O}}\left(
n^{-\frac{\beta}{d^{*}+2\beta}} 
\right)$. 
Here, $n$ denotes the sample size used to train latent diffusion model,  $d^*$
represents the dimension of latent space,
and $\beta$ refers to the smoothness index of the latent density function.
This rate mitigates the curse of dimensionality inherent in raw data, thereby illustrating the effectiveness of  latent diffusion models from a theoretical perspective.
Moreover, the framework of our theoretical analysis  is also applicable for analyzing other diffusion models. 
%%%
\item
Our theoretical results have improved the existing theoretical findings 
of Schr{\"o}dinger bridge diffusion models
\cite{wang2021deep,de2021diffusion}, as it extends beyond mere convergence analysis or a theoretical analysis with an accurate score estimation. 
To the best of our knowledge, this represents the first derivation of an 
end-to-end convergence rate for latent diffusion models based on the Schr{\"o}dinger bridge.
In contrast to the end-to-end analysis in \cite{oko2023diffusion,chen2023score},  
our results integrate the error analysis of the pre-training encoder-decoder structure rather than directly assuming a low-dimensional space for the target distribution.   Consequently, our theoretical findings provide a more comprehensive explanation for mainstream diffusion models \cite{rombach2022high,liu2024sora}.

\end{itemize}

%%%%%%%%%%%%%%%%%%%%%%%%%%%
\subsection{Related work}
In this section, we undertake a comprehensive review of pertinent literature, with a primary focus on existing generative models, including GANs, VAEs, normalizing flows, diffusion models, and  Schr{\"o}dinger bridge diffusion.

%%%%%%%%%%%%%%%%%%%%%
GANs \cite{goodfellow2014generative}  consist of two neural networks—the generator and the discriminator—trained simultaneously through a competitive process. The generator aims to produce data indistinguishable from real-world data, while the discriminator tries to distinguish between real and generated data. This adversarial process leads to the generation of highly realistic and diverse outputs. Despite their commendable capabilities, GANs confront challenges that warrant careful consideration. Predominant among these challenges are training instability, mode collapse, and hallucination.
In a concerted effort to gain a comprehensive understanding of the intricate dynamics and limitations of GANs, theoretical analyses have been conducted in recent works \cite{huang2022error, jiao2023approximation, hasan2023error}.

%%%%%%%%%%%%%%%%%%%%%
VAEs  \cite{kingma2013auto}  are based on an autoencoder architecture, comprising an encoder that compresses data into a latent space and a decoder that reconstructs data from this latent representation. The key feature of VAEs is their use of probabilistic latent variables and Bayesian inference, enabling them to not just replicate input data but to also generate new data that is similar to the input.

%%%%%%%%%%%%%%%%%%%%% 
Normalizing flows
\cite{dinh2016density,rezende2015variational,papamakarios2021normalizing}, especially those guided by ordinary differential equations (ODEs),  provide a robust framework for transforming a simple probability distribution, such as a standard normal distribution, into a more complex form through a sequence of invertible and differentiable mappings.
The essence of normalizing flows lies in their ability to capture complex dependencies and structures within data by iteratively applying invertible transformations. These transformations facilitate the modeling of sophisticated probability distributions, enabling practitioners to navigate through the intricacies of high-dimensional data and capture latent features.  Recently,  \cite{chen2023restoration,chen2023probability} undertook a theoretical analysis of ODE flows, specifically introducing the  accurate score estimation.  However, this assumption warrants further  verification.
Very recently,  \cite{chang2024deep,gao2024convergence2,jiao2024convergence}
 provided end-to-end  convergence rates for ODE-based (conditional) generative models.

%%%%%%%%%%%%%%%%%%%%%
In  diffusion  models, the fundamental architecture comprises both forward and backward processes. The primary objective of the forward process is to systematically add Gaussian noises to the dataset, thereby resulting in the limiting distribution being a Gaussian  distribution.  
Conversely, the backward process assumes the role of a denoising mechanism, undertaking the formidable task of reconstructing data in a manner that closely approximates the characteristics of the original samples. 
Within the denoising step, a critical step emerges in the form of training a score neural network. This neural network is meticulously crafted to approximate the underlying score function,  the gradient of the log density function of the  underlying distribution.  This synthesis of forward and backward processes, coupled with the sophisticated training of a score neural network, encapsulates the essence of diffusion models.
%%%
Furthermore, within practical applications, datasets frequently exhibit inherent low-dimensional structures. 
The derivation of their low-dimensional representations within latent spaces is consequently pursued.
A noteworthy advancement within this domain is the stable diffusion model, introduced by \cite{rombach2022high}. The integration of an autoencoder architecture within the stable diffusion model emerges as a pivotal mechanism, facilitating the discernment of the intrinsic low-dimensional essence of raw data.
By executing diffusion processes within the latent spaces, the model excels in encapsulating the underlying distribution. This capability is particularly advantageous within complex data distributions.
In addition, analogous studies have investigated the effective capture of data distributions through the utilization of latent spaces. Reference can be made to \cite{vahdat2021score,ramesh2022hierarchical}.
In summary, these diffusion models can be generally classified into three categories: denoising diffusion probabilistic models, score-based generative models, and score SDEs. 
A thorough examination of diffusion models can be found in the specialized review paper \cite{yang2023diffusion}.
%%%%%%%%%%%%%%%%%%%%
The theoretical analysis of diffusion models primarily focuses on error analysis with accurate score estimators
\cite{chen2023improved,conforti2023score,lee2022convergence,lee2023convergence,benton2023linear,li2023towards,gao2023wasserstein}
or end-to-end error analysis \cite{oko2023diffusion,chen2023score}.  
%%%%
Recently, an end-to-end theoretical analysis of conditional diffusion models was explored  in
\cite{Fu2024UnveilCD}.
%%%%
However, these theoretical results did not consider the theoretical analysis of 
pre-training encoder-decoder structures, a crucial step in contemporary mainstream diffusion models. Consequently, these findings are unable to provide further insights into the underlying mechanisms of diffusion models. In contrast, our theoretical results include the theoretical analysis of pre-training encoder-decoder architectures, thereby offering a comprehensive explanation.
%%%%%%%%%%%%%%%%%%%%%%%

%%%%%%%%%%%%%%%%%%%%%%%%%%%%%%%%%%%%
Recently,  generative learning based on the 
Schr{\"o}dinger bridge has attracted much attention
\cite{wang2021deep,de2021diffusion,liu20232,hamdouche2023generative,chen2023schrodinger}.
%%%%
In \cite{wang2021deep}, the authors developed two Schr\"odinger bridges, namely, one spanning from the Dirac measure at the origin to the convolution, and the other advancing from the convolution to the target distribution. Subsequently, they constructed a generative modeling framework by incorporating the drift term, estimated through a deep score estimator and a deep density ratio estimator, into the EM  discretization method. Theoretical analysis was then employed to establish the consistency of the proposed approach.
%%%%
\cite{de2021diffusion}
presented diffusion Schr\"odinger bridge, an original approximation of the iterative proportional fitting  procedure to solve the Schr\"odinger bridge problem,
and provided theoretical analysis under an accurate score estimation.   
%%%%
\cite{hamdouche2023generative}
developed a  generative model based on the Schr\"odinger bridge  that captures the temporal dynamics of the time series. They showed some numerical illustrations of  this method in high dimension for the generation of sequential images.
%%%%
\cite{liu20232}  proposed image-to-image Schr\"odinger bridge, a  class of conditional diffusion models that directly learn the nonlinear diffusion processes between two given distributions.  These diffusion bridges are particularly useful for image restoration, as the degraded images are structurally informative priors for reconstructing the
clean images.
%%%%%%%
In  \cite{chen2023schrodinger}, the Schr{\"o}dinger bridge diffusion models were introduced within the specific context of  text-to-speech (TTS) synthesis, denoted as Bridge-TTS.  This novel framework, Bridge-TTS, surpasses conventional diffusion models when applied to the domain of TTS synthesis.   Specifically, 
the effectiveness of Bridge-TTS is empirically substantiated through experimental evaluations conducted on the LJ-Speech dataset. The results underscore its prowess in both synthesis quality and sampling efficiency, positioning Bridge-TTS as a notable advancement in the domain of TTS synthesis.
%%%%%%
As discussed above, a multitude of generative learning methods,
implementing the Schr{\"o}dinger bridge,
have been proposed. Nonetheless, these works frequently suffer from a deficiency in rigorous theoretical underpinning necessary to support practical utility. In this work, we bridge this gap by furnishing an exhaustive theoretical framework, incorporating the encoder-decoder paradigm.

%%%%%%%%%%%%%%%%%%%%%%%%%%%%%%%%%%%%
\subsection{Notations}
We introduce the notations used throughout this paper.
Let $[N]:=\{0,1,\cdots,N\}$ represent the set of integers ranging from 0 to $N$. 
Let $\mathbb{N}^+$ denote the set of positive integers.
For matrices $A, B \in \mathbb{R}^{d\times d}$, we assert $A\preccurlyeq B$ when the matrix $B - A$ is positive semi-definite. The identity matrix in $\mathbb{R}^{d\times d}$ is denoted as $\mathbf{I}_d$.
The $\ell^2$-norm of a vector $\mathbf{x}=\{x_1,\ldots,x_d\}^{\top}\in\mathbb{R}^d$ is defined by $\Vert\mathbf{x}\Vert:=\sqrt{\sum_{i=1}^{d}x_i^2}$. Simultaneously, the operator norm of a matrix $A$ is articulated as $\Vert{A}\Vert:=\sup_{\Vert\mathbf{x}\Vert\leq{1}}\Vert{A\mathbf{x}}\Vert$. The function space $C^2(\mathbb{R}^d)$ encompasses functions that are twice continuously differentiable from $\mathbb{R}^{d}$ to $\mathbb{R}$. For any $f\in C^2(\mathbb{R}^{d})$, the symbols $\nabla{f}$, $\nabla^2{f}$, and ${\rm{\Delta}}f$ signify its gradient, Hessian matrix, and Laplacian, respectively.
The $L^{\infty}(K)$-norm, denoted as %$\Vert{f}\Vert_{L^{\infty}(K)}:=\sup_{\mathbf{x}\in\mathbb{R}^d}|f(\mathbf{x})|$, 
$\Vert{f}\Vert_{L^{\infty}(K)}:=\sup_{\mathbf{x}\in K}|f(\mathbf{x})|$,
captures the supremum of the absolute values of a function over a  set $K \subset \mathbb{R}^d$. For a vector function $\mathbf{v}:\mathbb{R}^{d}\rightarrow\mathbb{R}^{d}$, the $L^{\infty}(K)$-norm is defined as $\Vert \mathbf{v} \Vert_{L^{\infty}(K)} := \sup_{\mathbf{x}\in K}\Vert \mathbf{v}(\mathbf{x})\Vert$. The asymptotic notation $f(\mathbf{x}) = \mathcal{O}\left(g(\mathbf{x})\right)$ is employed to signify that $f(\mathbf{x})\leq Cg(\mathbf{x})$ for some constant $C > 0$. Additionally, the notation $\widetilde{\mathcal{O}}(\cdot)$ is utilized to discount logarithmic factors in the asymptotic analysis.

%%%%%%%%%%%%%%%%%%%%%%%%%%%%%
\subsection{Outlines}
In Section \ref{sec:prel}, we provide an exposition of the background related to Schr{\"o}dinger bridge problems, along with definitions of deep neural networks, Wasserstein distance, and covering number. 
Section \ref{sec:ps} elaborates on the detailed formulation of our proposed latent diffusion models. The end-to-end theoretical analysis for the generated samples is presented in Section \ref{sec:ta}. 
Concluding remarks  are provided in Section \ref{sec:con}. In Appendix, detailed proofs of all lemmas and theorems in this paper are presented.

%%%%%%%%%%%%%%%%%%%%%%%%%%
\section{Preliminaries}\label{sec:prel}
\textbf{
Schr{\"o}dinger Bridge Problems:
}
To commence,   we introduce some notations. We denote by $\Omega = C([0,1],\mathbb{R}^d)$  the space of $\mathbb{R}^d$-valued continuous functions on the time interval $[0, 1]$.  
We further denote $\mathcal{P}(\Omega)$ as the space of probability measures on the path space $\Omega$.
Within this context, let $\mathbf{W}_{\mx}\in\mathcal{P}(\Omega)$ represent the Wiener measure, characterized by its initial marginal distribution $\delta_{\mx}$. Then,
the law of the  Brownian motion
is defined as $\mathbf{P}= \int \mathbf{W}_{\mx}\mathrm{d}\mx$.
The Schr\"{o}dinger bridge problem (SBP), originally introduced by   \cite{schrodinger1932theorie},  
addresses the task of determining a probability distribution within the path space  $\mathcal{P}(\Omega)$ that smoothly interpolates between two given probability measures  $\nu, \mu\in \mathbb{R}^d$.
The objective is to identify a path measure $\mathbf{Q}^* \in \mathcal{P}(\Omega)$ that approximates the path measure of Brownian motion, with respect to the relative entropy
\cite{jamison1975markov,leonard2014survey}. 
Specifically,  the path measure $\mathbf{Q}^*$ minimizes the 
KL divergence 
$$
\mathbf{Q}^* \in \arg \min \mathbb{D}_{\mathrm{KL}}(\mathbf{Q}||\mathbf{P}),$$ 
subject to  
$$\mathbf{Q}_0 =\nu, \mathbf{Q}_1 = \mu.
$$ 
Here, 
the relative entropy $\mathbb{D}_{\mathrm{KL}}(\mathbf{Q}||\mathbf{P}) = \int \log(\frac{\mathrm{d} \mathbf{Q}}{\mathrm{d} \mathbf{P}}) \mathrm{d} \mathbf{Q} $ if $\mathbf{Q}\ll \mathbf{P}$ (i.e. $\mathbf{Q}$ is absolutely continuous w.r.t. $\mathbf{P}$), and $\mathbb{D}_{\mathrm{KL}}(\mathbf{Q}||\mathbf{P}) = \infty$ otherwise.
$\mathbf{Q}_{t} = (Z_t)_{\#}\mathbf{Q}=\mathbf{Q}\circ Z_t^{-1}, t\in [0,1]$, denotes the marginal measure with $Z = (Z_t)_{t\in [0,1]}$ being the canonical process on $\Omega$.
Additionally, this SBP
can be formulated as 
a  Schr\"{o}dinger system \cite{leonard2014survey},
as shown in the following proposition.
%%%%
\begin{proposition}\label{th01}\cite{leonard2014survey}
Let $\mathscr{L}$ be the Lebesgue measure.
If $\nu, \mu \ll \mathscr{L}$,  then SBP admits a unique solution $\mathbf{Q}^* = f^*(Z_0)g^*(Z_1)\mathbf{P}$, where
$f^*$ and $g^*$ are $\mathscr{L}$-measurable nonnegative  functions satisfying the  Schr\"{o}dinger system
$$\left\{\begin{array}{l}
f^*(\mx) \mathbb{E}_{\mathbf{P}}\left[g^*\left(Z_{1}\right) \mid Z_{0}=\mx\right]= \frac{\mathrm{d} \nu}{\mathrm{d}\mathscr{L}}(\mx), \quad \mathscr{L}-a . e . \\
g^*(\my)  \mathbb{E}_{\mathbf{P}}\left[f^{*}\left(Z_{0}\right) \mid Z_{1}=\my\right]=\frac{\mathrm{d} \mu}{\mathrm{d}\mathscr{L}}(\my),  \quad \mathscr{L}-a . e .
\end{array}\right.$$
Furthermore, the pair $(\mathbf{Q}^*_{t},\mv^*_{t})$ with $$\mv^*_{t}(\mx) = \nabla_{\mx} \log \mathbb{E}_{\mathbf{P}}\left[g^{*}\left(Z_{1}\right) \mid Z_{t}= \mx\right]$$ solves the minimum action problem
$$\min_{\mu_t,\mv_t} \int_{0}^{1} \mathbb{E}_{\mz\sim \mu_t}[\|\mv_t(\mz)\|^2] \mathrm{d} t$$ s.t.
$$\left\{\begin{array}{l}
\partial_{t}\mu_t = -\nabla \cdot(\mu_t \mv_t) +  \frac{\Delta \mu_t}{2}, \quad \text { on }(0,1) \times \mathbb{R}^{d} \\
\mu_{0}=\nu, \mu_{1}=\mu.
\end{array}\right.
$$
\end{proposition}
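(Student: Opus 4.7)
The plan is to follow the classical static-to-dynamic reduction. First I would exploit the disintegration of the Brownian path measure, $\mathbf{P}(\cdot) = \int \mathbf{P}(\cdot \mid Z_0=\mx, Z_1=\my)\,\mathbf{P}_{0,1}(\mathrm{d}\mx,\mathrm{d}\my)$, where $\mathbf{P}_{0,1}$ is the joint endpoint law. For any $\mathbf{Q}\ll\mathbf{P}$ with marginals $\mathbf{Q}_0=\nu,\mathbf{Q}_1=\mu$, the chain rule for relative entropy gives $\mathbb{D}_{\mathrm{KL}}(\mathbf{Q}\|\mathbf{P}) = \mathbb{D}_{\mathrm{KL}}(\mathbf{Q}_{0,1}\|\mathbf{P}_{0,1}) + \Ebb_{\mathbf{Q}_{0,1}}[\mathbb{D}_{\mathrm{KL}}(\mathbf{Q}(\cdot\mid Z_0,Z_1)\|\mathbf{P}(\cdot\mid Z_0,Z_1))]$. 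The second term is nonnegative and can be set to zero by choosing the Brownian bridge kernel, so the SBP collapses to the \emph{static} problem: minimize $\mathbb{D}_{\mathrm{KL}}(\pi\|\mathbf{P}_{0,1})$ over couplings $\pi$ of $\nu$ and $\mu$, and then reinstate the Brownian bridge conditional on the endpoints.

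Next, I would attack the static problem by convex duality. Writing $p(\mx,\my)$ for the joint density of $\mathbf{P}_{0,1}$ (a symmetric product of heat kernels when $\mathbf{P} = \int\mathbf{W}_\mx\,\mathrm{d}\mx$ is reinterpreted with a reference initial measure), a formal Lagrangian for the marginal constraints gives the first-order condition $\log(\mathrm{d}\pi^*/\mathrm{d}\mathbf{P}_{0,1})(\mx,\my) = \phi(\mx)+\psi(\my)$, i.e.\ a product factorization $\pi^*(\mx,\my) = f^*(\mx)g^*(\my)p(\mx,\my)$. Integrating out $\my$ and $\mx$ respectively against the marginal constraints yields precisely the Schr\"odinger system once we identify $\Ebb_{\mathbf{P}}[g^*(Z_1)\mid Z_0=\mx] = \int g^*(\my)p(\my\mid\mx)\,\mathrm{d}\my$. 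Existence of measurable nonnegative $(f^*,g^*)$ under $\nu,\mu\ll\mathscr{L}$ is obtained from convergence of the Sinkhorn/IPF alternating projections (Fortet--Beurling--Jamison); uniqueness modulo the scalar ambiguity $(f^*,g^*)\mapsto(cf^*,g^*/c)$ follows from strict convexity of $\mathbb{D}_{\mathrm{KL}}$, which also gives uniqueness of $\mathbf{Q}^* = f^*(Z_0)g^*(Z_1)\mathbf{P}$.

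For the dynamic characterization I would use Doob's $h$-transform. Set $h(t,\mx) := \Ebb_{\mathbf{P}}[g^*(Z_1)\mid Z_t=\mx]$; by the Markov property under $\mathbf{P}$, the process $h(t,Z_t)$ is a $\mathbf{P}$-martingale and $h$ solves the backward heat equation $\partial_t h + \tfrac12\Delta h = 0$. Computing $\mathrm{d}\mathbf{Q}^*/\mathrm{d}\mathbf{P}$ restricted to $\mathcal{F}_t$ gives the martingale $f^*(Z_0)h(t,Z_t)/h(0,Z_0)$, and Girsanov's theorem then yields that under $\mathbf{Q}^*$ the canonical process satisfies $\mathrm{d}Z_t = \mv^*_t(Z_t)\,\mathrm{d}t + \mathrm{d}\widetilde{W}_t$ with $\mv^*_t(\mx) = \nabla_{\mx}\log h(t,\mx)$, exactly the claimed drift. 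The Fokker--Planck equation for this SDE is the continuity-diffusion PDE in the proposition, with boundary values $\mu_0=\nu,\mu_1=\mu$. Optimality of $\mv^*$ in the action minimization then follows from Girsanov in reverse: every admissible pair $(\mu_t,\mv_t)$ satisfying the PDE is induced by some $\mathbf{Q}\ll\mathbf{P}$ with $\mathbb{D}_{\mathrm{KL}}(\mathbf{Q}\|\mathbf{P}) = \tfrac12\int_0^1\Ebb_{\mz\sim\mu_t}[\|\mv_t(\mz)\|^2]\,\mathrm{d}t + \mathbb{D}_{\mathrm{KL}}(\nu\|\mathrm{Leb})$ up to boundary terms, so minimizing the action is equivalent to minimizing relative entropy, and the minimum is achieved uniquely by $(\mathbf{Q}^*_t,\mv^*_t)$.

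The main obstacle I anticipate is not the duality calculus itself but the rigorous existence step for $(f^*,g^*)$: producing genuine measurable nonnegative functions (rather than formal multipliers) under only $\nu,\mu\ll\mathscr{L}$ requires either a finite-entropy hypothesis such as $\mathbb{D}_{\mathrm{KL}}(\nu\otimes\mu\|\mathbf{P}_{0,1})<\infty$ or a truncation/compactness argument to run IPF to convergence. A secondary technical point is justifying the $C^{1,2}$ regularity and strict positivity of $h(t,\mx)$ needed for It\^o's formula and Girsanov on $[0,1)$; both are standard consequences of heat-kernel smoothing applied to $g^*$, but they must be checked before the drift identity can be invoked to discharge the Benamou--Brenier equivalence.
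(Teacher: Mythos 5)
The paper does not prove Proposition~\ref{th01}: it is stated as a cited background result attributed to L\'eonard's survey, and no proof appears anywhere in the appendix. So there is no ``paper proof'' to compare against; what can be assessed is whether your sketch is a faithful route through the literature the paper is leaning on, and by and large it is. Your static-to-dynamic reduction via the chain rule for relative entropy, the duality/factorization $\pi^*(\mx,\my)=f^*(\mx)g^*(\my)p(\mx,\my)$ giving the Schr\"odinger system, the Doob $h$-transform $h(t,\mx)=\mathbb{E}_{\mathbf{P}}[g^*(Z_1)\mid Z_t=\mx]$ yielding the drift $\mv^*_t=\nabla\log h$, and the Girsanov/Benamou--Brenier identification of the kinetic action with the conditional entropy are exactly the standard Jamison--F\"ollmer--L\'eonard argument, and you correctly flag that the genuine technical content lies in existence of measurable $(f^*,g^*)$ (which really wants a finite-entropy hypothesis, not just $\nu,\mu\ll\mathscr{L}$ as the proposition is loosely stated) and in the $C^{1,2}$ regularity and strict positivity of $h$ on $[0,1)$.

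Two small corrections worth recording. First, the Radon--Nikodym restriction you wrote, $f^*(Z_0)h(t,Z_t)/h(0,Z_0)$, is off by a normalization: $\frac{\mathrm{d}\mathbf{Q}^*}{\mathrm{d}\mathbf{P}}\big|_{\mathcal F_t}=f^*(Z_0)\,h(t,Z_t)$, since $f^*(Z_0)$ is already $\mathcal F_0$-measurable; if you want a unit-mean martingale you should factor it as $\bigl(f^*(Z_0)h(0,Z_0)\bigr)\cdot\frac{h(t,Z_t)}{h(0,Z_0)}$, where the first bracket equals $\frac{\mathrm{d}\nu}{\mathrm{d}\mathscr{L}}(Z_0)$ by the Schr\"odinger system. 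The drift identity $\mv^*_t=\nabla\log h$ is unaffected since it only sees the $h(t,Z_t)$ factor, but the formula as written is not the conditional RN derivative. Second, be careful that $\mathbf{P}=\int\mathbf{W}_{\mx}\,\mathrm{d}\mx$ is a $\sigma$-finite, not a probability, measure; the disintegration and chain rule for $\mathbb{D}_{\mathrm{KL}}$ still go through, but your boundary term is $\mathbb{D}_{\mathrm{KL}}(\nu\|\mathscr{L})$ against Lebesgue, which may be infinite if $\nu$ is singular in the wrong way, so it should be carried as a fixed additive constant determined by the marginal constraint rather than invoked as a finite quantity.
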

Let  $K(s, \mx, t, \my) = [2\pi(t-s)]^{-d/2}\exp\left(-\frac{\|\mx - \my\|^2}{2(t-s)}\right)$ represents the transition density of the Wiener process. Furthermore, let 
$q(\mx)$ and $p(\my)$ denote
the densities of $\nu$ and $\mu$, respectively.  We denote 
$$f_{0}(\mx) = f^*(\mx), \ \ g_{1}(\my) = g^*(\my),$$
$${f_{1}}(\my) = \mathbb{E}_{\mathbf{P}}\left[f^{*}\left(Z_{0}\right) \mid Z_{1}=\my\right] = \int K(0, \mx, 1, \my)f_{0}(\mx) \mathrm{d} \mx,$$
$${g_{0}}(\mx)= \mathbb{E}_{\mathbf{P}}\left[g^*\left(Z_{1}\right) \mid Z_{0}=\mx\right] = \int K(0, \mx, 1, \my)g_{1}(\my) \mathrm{d} \my.$$
Then, the  Schr\"{o}dinger system  in Proposition \ref{th01} can also be characterized by
\begin{equation*}%\label{sbs}
q(\mx) = f_0(\mx) {g_{0}}(\mx), \ \  p(\my)=  {f_{1}}(\my)g_1(\my)
\end{equation*}
with the following forward and backward time harmonic equations  \cite{chen2021stochastic},
 $$\left\{\begin{array}{l}
\partial_t f_t(\mx) = \frac{\Delta}{2} f_t(\mx),  \\
\partial_t g_t(\mx) = -\frac{\Delta}{2} g_t(\mx),
\end{array}\right. \quad \text { on }(0,1) \times \mathbb{R}^{d}.
$$

Furthermore, SBP can be characterized as a stochastic control problem \cite{dai1991stochastic}. Specifically,  the vector field
\begin{equation}\label{drift}
\begin{aligned}
\mv^*_{t}=\nabla_{\mx}\log g_t(\mx)
= \nabla_{\mx}\log  \int K(t, \mx, 1, \my)g_1(\my) \mathrm{d} \my
\end{aligned}
\end{equation}
solves the following stochastic control problem.
\begin{proposition}\label{th02}\cite{dai1991stochastic}
Let $\mathcal{V}$ consist of admissible Markov controls with finite energy. Then,
$$\mathbf{v}^*_{t}(\mx)\in \arg\min_{\mathbf{v} \in \mathcal{V}}\mathbb{E}\left[\int_0^1\frac{1}{2}\|\mathbf{v}_t\|^2\mathrm{d}t\right]$$
s.t.
\begin{equation}\label{sdeb}
\left\{\begin{array}{l}
\mathrm{d}\mx_t = \mathbf{v}_t \mathrm{d}t+\mathrm{d} B_t, \\
\mx_0\sim q(\mx),\quad \mx_1\sim p(\mx).
\end{array}\right.
\end{equation}
\end{proposition}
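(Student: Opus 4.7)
The plan is to reduce the control problem of Proposition \ref{th02} to the relative-entropy minimization of Proposition \ref{th01} via Girsanov's theorem, and then to extract the optimal drift from the structural formula $\mathbf{Q}^* = f^*(Z_0)g^*(Z_1)\mathbf{P}$ through a Doob $h$-transform. The key observation linking the two formulations is that the control cost equals, up to a constant determined by the fixed initial marginal, the KL divergence of the path law against the Wiener reference, so the two minimizations share the same admissible set and the same unique minimizer.

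Concretely, for any $\mathbf{v}\in\mathcal{V}$ I would let $\mathbf{Q}^{\mathbf{v}}$ denote the law on $\Omega$ induced by the SDE (\ref{sdeb}), and let $\mathbf{P}^{q}$ denote the law of Brownian motion with initial distribution $q$, so that $\mathrm{d}\mathbf{P}^{q}/\mathrm{d}\mathbf{P} = q(Z_0)$. Girsanov's theorem yields
$$\log \frac{\mathrm{d}\mathbf{Q}^{\mathbf{v}}}{\mathrm{d}\mathbf{P}^{q}} \;=\; \int_0^1 \mathbf{v}_t \cdot \mathrm{d}\mathbf{x}_t \;-\; \frac{1}{2}\int_0^1 \|\mathbf{v}_t\|^2\,\mathrm{d}t,$$
and substituting $\mathrm{d}\mathbf{x}_t = \mathbf{v}_t\,\mathrm{d}t + \mathrm{d}B_t$ under $\mathbf{Q}^{\mathbf{v}}$, together with the martingale property of $\int \mathbf{v}_t \cdot \mathrm{d}B_t$, gives $\mathbb{D}_{\mathrm{KL}}(\mathbf{Q}^{\mathbf{v}}\,\|\,\mathbf{P}^{q}) = \tfrac{1}{2}\mathbb{E}[\int_0^1 \|\mathbf{v}_t\|^2\,\mathrm{d}t]$. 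The chain-rule identity $\mathbb{D}_{\mathrm{KL}}(\mathbf{Q}\|\mathbf{P}) = \mathbb{D}_{\mathrm{KL}}(\mathbf{Q}\|\mathbf{P}^{q}) + \mathbb{E}^{\mathbf{Q}_0}[\log q]$ then shows that, within the class with $\mathbf{Q}_0 = \nu$ and $\mathbf{Q}_1 = \mu$ fixed, minimizing the control cost is equivalent to the SBP of Proposition \ref{th01}. Hence the unique minimizer of the control problem is the measure $\mathbf{Q}^*$ supplied by that proposition.

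It remains to exhibit $\mathbf{Q}^*$ as the law of (\ref{sdeb}) driven by $\mathbf{v}^*_t(\mathbf{x}) = \nabla_{\mathbf{x}}\log g_t(\mathbf{x})$. For this I would apply Doob's $h$-transform with the space-time harmonic function $g_t(\mathbf{x}) = \mathbb{E}_{\mathbf{P}}[g^*(Z_1)\mid Z_t=\mathbf{x}]$, which satisfies $\partial_t g_t + \tfrac{1}{2}\Delta g_t = 0$: the process $g_t(Z_t)/g_0(Z_0)$ is a $\mathbf{P}^{q}$-martingale whose exponential form, obtained by applying It\^o's formula to $\log g_t(Z_t)$, matches exactly the Girsanov density produced by drift $\nabla \log g_t$. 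The resulting tilted law equals $f^*(Z_0)g^*(Z_1)\mathbf{P} = \mathbf{Q}^*$, and its terminal marginal is $\mu$ since $p(\mathbf{y}) = f_1(\mathbf{y})g_1(\mathbf{y})$ by the Schr\"odinger system, so the drift $\mathbf{v}^*$ is admissible and attains the minimum. I expect the main obstacle to be justifying the $h$-transform globally when $g_t$ may vanish or grow, and verifying that the resulting $\mathbf{v}^*$ satisfies the finite-energy and Markov requirements defining $\mathcal{V}$; once these technicalities are cleared, the Girsanov reduction above delivers the optimality of $\mathbf{v}^*$ immediately.
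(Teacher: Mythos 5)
The paper does not prove Proposition~\ref{th02}: it is stated as a cited result from Dai~Pra (1991), with no argument supplied in the appendix. So there is no in-paper proof to compare against, and the right question is simply whether your blind reconstruction is correct.

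Your argument is essentially the standard Dai~Pra/F\"ollmer proof, and the three moves are all sound. The Girsanov computation correctly identifies $\tfrac12\mathbb{E}^{\mathbf{Q}^{\mathbf{v}}}\bigl[\int_0^1\|\mathbf{v}_t\|^2\,\mathrm{d}t\bigr]$ with $\mathbb{D}_{\mathrm{KL}}(\mathbf{Q}^{\mathbf{v}}\|\mathbf{P}^{q})$ once the stochastic-integral term is killed by the $\mathbf{Q}^{\mathbf{v}}$-martingale property, and the chain-rule step $\mathbb{D}_{\mathrm{KL}}(\mathbf{Q}\|\mathbf{P})=\mathbb{D}_{\mathrm{KL}}(\mathbf{Q}\|\mathbf{P}^{q})+\mathbb{E}_{\nu}[\log q]$ correctly reduces the control problem to the entropy minimization of Proposition~\ref{th01}, since the additive constant depends only on the fixed initial marginal $\nu$. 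The Doob $h$-transform step is also the right mechanism: $d\mathbf{Q}^*/d\mathbf{P}^{q}|_{\mathcal F_t}=g_t(Z_t)/g_0(Z_0)$, It\^o applied to $\log g_t(Z_t)$ with the backward heat equation $\partial_t g_t+\tfrac12\Delta g_t=0$ produces exactly the Girsanov exponential with drift $\nabla\log g_t$, so $\mathbf{Q}^*$ is the law of \eqref{sdeb} with $\mathbf{v}^*_t=\nabla\log g_t$. Note that positivity of $g_t$ for $t<1$ comes for free from the convolution with a strictly positive heat kernel, which disposes of part of the technicality you flag.

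One point you should make explicit rather than leave implicit: Proposition~\ref{th01} optimizes over \emph{all} path measures with the prescribed endpoint marginals, whereas $\mathcal{V}$ consists only of Markov controls. Your Girsanov identity sets up a bijection between finite-entropy path measures and finite-energy \emph{adapted} drifts, not Markov ones; the restriction to Markov controls is then justified a posteriori because the SBP optimizer $\mathbf{Q}^*$ happens to be Markovian with drift $\mathbf{v}^*_t(\mathbf{x})=\nabla\log g_t(\mathbf{x})$, so the Markov-constrained infimum attains the unconstrained one. Without that closing remark the chain ``control cost $=$ KL, KL minimized by $\mathbf{Q}^*$, $\mathbf{Q}^*$ comes from a Markov drift'' does not quite establish that $\mathbf{v}^*$ is optimal \emph{within} $\mathcal{V}$. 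You also need to verify that $\mathbf{v}^*$ has finite energy, i.e.\ $\mathbb{E}^{\mathbf{Q}^*}\int_0^1\|\nabla\log g_t(Z_t)\|^2\,\mathrm{d}t<\infty$, which is equivalent to $\mathbb{D}_{\mathrm{KL}}(\mathbf{Q}^*\|\mathbf{P}^q)<\infty$ and hence holds whenever the Schr\"odinger problem has a finite value; this is part of what $\cite{dai1991stochastic}$ establishes. With those two items supplied, your proof is complete.
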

With Proposition \ref{th02}, 
the SDE  \eqref{sdeb} with a 
time-varying drift term $\mathbf{v}^*_{t}$ in \eqref{drift} can transport  the initial distribution $\nu$ to  the target distribution $\mu$ on the unit time interval. This enables us to devise a diffusion model  with an estimated drift term.
Roughly speaking, if we designate the initial distribution $\nu$ as $q(\sigma,\mathbf{x})$, as defined in \eqref{aaa}, then the  diffusion process \eqref{model} outlined below serves as a solution to \eqref{sdeb},  transporting  $q(\sigma,\mathbf{x})$ to the target one.

%%%%%%%%%%%%%%%%%%%%%%%
\begin{definition}[ReLU FNNs]\label{relufnns}
A class of neural networks NN$(L,M,J,\kappa)$ with depth $L$, width $M$,  
sparsity level $J$,  
weight bound $\kappa$,
  is defined as
    \begin{equation*}
\begin{aligned}
            {\rm{NN}}(L,M,J,\kappa) = \Big\{&\mathbf{s}(t,\mathbf{x}):= (\mathbf{W}_L{\rm{ReLU}}(\cdot) + \mathbf{b}_L)\circ\cdots\circ(\mathbf{W}_1{\rm{ReLU}}(\cdot) + \mathbf{b}_1)(
            [t, \mathbf{x}^{\top}]^{\top}):\\
&
 \mathbf{W}_i \in \mathbb{R}^{d_{i+1}\times d_i},  \mathbf{b}_i \in \mathbb{R}^{d_{i+1}}, i=0,1,\ldots,L-1,
%\text{the width is 
M:=\max\{d_0,\ldots,d_L\},
%},
\\
&\mathop{\max}_{1\leq{i}\leq{L}}\{\Vert{\mathbf{b}}_i\Vert_{\infty}, \Vert{\mathbf{W}_i}\Vert_{\infty}\}\leq{\kappa},
~ \sum_{i=1}^L\left(\Vert{\mathbf{W}_i}\Vert_0 + \Vert{\mathbf{b}_i}\Vert_0\right)\leq J\Big\}.
\end{aligned}
\end{equation*}
\end{definition}
%%%%%

\begin{definition}[Wasserstein distance]
Let $\mu$ and $\nu$ be two  probability measures defined on $\mathbb{R}^d$ with finite second moments, the 
second-order Wasserstein distance  is defined as:
\begin{equation*}
W_2(\mu,\nu) := 
\left(\inf_{\gamma \in 
\mathcal D(\mu,\nu)} 
\int_{\mathbb R^d} \int_{\mathbb R^d} \|\mx-\my\|^2 \, \gamma(\mathrm{d}\mx, \mathrm{d}\my)  \right)^{1/2},
\end{equation*}
where $\mathcal{D}(\mu, \nu)$ denotes the set of probability measures $\gamma$ on $\mathbb{R}^{2d}$ such that their respective marginal distributions are $\mu$ and $\nu$.
\end{definition}

\begin{definition}[Covering number]
    Let $\rho$ be a pseudo-metric on $\mathcal{U}$ and $S\subseteq\mathcal{U}$. For any $\delta > 0$, a set $A\subseteq\mathcal{U}$ is called a $\delta$-covering number of $S$ if for any $\mathbf{x}\in S$ there exists $\mathbf{y}\in A$ such that $\rho(\mathbf{x},\mathbf{y})\leq\delta$. The $\delta$-covering number of $S$, denoted by $\mathcal{N}(\delta,S,\rho)$, is the minimum cardinality of any $\delta$-covering of $S$.
\end{definition}

\begin{definition}[($\beta$, $R$)-H{\"o}lder Class] Let $\beta = r + \gamma > 0$ be a degree of smoothness, where $r = \lfloor \beta \rfloor$ is an integer and $\gamma\in [0, 1)$. For a function $f: \Omega\rightarrow \mathbb{R}$, its H{\"o}lder norm is defined as
$$
\Vert f \Vert_{\mathcal{H}^{\beta}} := \max_{\Vert\boldsymbol{\alpha}\Vert_1\leq r}\Vert \partial^{\boldsymbol{\alpha}}f \Vert_{\infty} + \max_{\Vert\boldsymbol{\alpha}\Vert_1 = r}\sup_{\mathbf{x}\neq\mathbf{y}}\frac{|\partial^{\boldsymbol{\alpha}} f(\mathbf{x}) - \partial^{\boldsymbol{\alpha}}f(\mathbf{y})|}{\Vert \mathbf{x} - \mathbf{y} \Vert^{\gamma}},
$$
where $\boldsymbol{\alpha}$ is a multi-index. We say a function $f$ is $\beta$-H{\"o}lder, if and only if $\Vert f \Vert_{\mathcal{H}^{\beta}} < \infty$. The ($\beta$, $R$)-H{\"o}lder class for some constant $R > 0$ is defined as
$$
\mathcal{H}^{\beta}(\Omega, R) := \Big\{f: \Omega\rightarrow\mathbb{R} \Big| \Vert f \Vert_{\mathcal{H}^{\beta}} \leq R
 \Big\}.
$$
\end{definition}
%--------------------------Problem Seetings--------------
\section{Method}\label{sec:ps}
In this section, we articulate our proposed latent diffusion model grounded in the Schr{\"o}dinger bridge.  
Firstly, our method  commences with the pre-training of an encoder-decoder architecture in 
Section \ref{sec:pre}, which utilizes a dataset characterized by a distribution that may differ from the target distribution under consideration.
Secondly, we formulate  the latent diffusion model in Section \ref{sec:ldm}.
Specifically, we introduce an SDE in the latent space, and utilize the score matching method \cite{hyvarinen2005estimation,vincent2011connection} to estimate the score function associated with the convolution distribution. We then proceed to define the SDE corresponding to the estimated score, and employ the EM discretization technique to numerically solve this equation in the latent space. This computational approach facilitates the generation of desired samples through the utilization of the decoder component of our trained encoder-decoder architecture, culminating in the acquisition of data aligned with the target distribution.

\subsection{Pre-training}\label{sec:pre}  
In this section, we construct a framework for pre-training an encoder-decoder structure. This foundational step serves as a cornerstone, delineating our proficiency in data compression and the extraction of its low-dimensional structure. Such strategic advancement aligns with the principles advocated in \cite{ramesh2022hierarchical,rombach2022high,liu2024sora}.

Let $\widetilde{p}_{data}$ denote the distribution of the pre-trained data, which may diverge from the target distribution.
Then, on a population level, the encoder-decoder pair $(\boldsymbol{E}^*,\boldsymbol{D}^*)$ can be derived by minimizing the reconstruction loss, defined as:
\begin{align*}
(\boldsymbol{E}^*,\boldsymbol{D}^*) \in \arg\min_{\boldsymbol{E}, \boldsymbol{D} \text{ measurable}} \mathcal{H} (\boldsymbol{E}, \boldsymbol{D}) := \int_{\mathbb{R}^d} \| (\boldsymbol{D}\circ\boldsymbol{E})(\my) - \my \|^2 \widetilde{p}_{data}(\my)\mathrm{d}\my,
\end{align*}
where the minimum is taken over all measurable functions $\boldsymbol{E}: \mathbb{R}^d \rightarrow \mathbb{R}^{d^*}$ and $\boldsymbol{D}: \mathbb{R}^{d^*}\rightarrow \mathbb{R}^d$ with $d^*<d$ being a predetermined integer. 
It is observed that the existence of the pair $(\boldsymbol{E}^*,\boldsymbol{D}^*)$  within the set of measurable functions is guaranteed,  as there exists a pair for which their composition equals the identity operator. Subsequently, in Section \ref{sec:eb_pt}, we confine  this pair to specific classes of Lipschitz continuous functions, as delineated in Assumption \ref{ass:bounded_support}.

In practice, our access is limited to the i.i.d. pre-training data  $\mathcal{Y} := \{\my_i\}_{i=1}^{\mathcal{M}}$, sampled from the distribution $\widetilde{p}_{data}$, 
where $\mathcal{M}$ denotes the sample size utilized during the 
pre-training phase. Subsequently, we can establish an estimator through empirical risk minimization:
\begin{align}
\label{eq: autoencoder erm}
(\widehat{\boldsymbol{E}}, \widehat{\boldsymbol{D}}) \in \arg\min_{\boldsymbol{E} \in \mathcal{E}, \boldsymbol{D} \in \mathcal{D}} \widehat{\mathcal{H}} (\boldsymbol{E}, \boldsymbol{D}) := \frac{1}{\mathcal{M}} \sum_{i=1}^{\mathcal{M}} \| (\boldsymbol{D}\circ\boldsymbol{E})(\my_i) - \my_i \|^2,
\end{align}
where $\mathcal{E}$  and  $\mathcal{D}$
represent two neural network classes mapping  $\mathbb{R}^d$ to $\mathbb{R}^{d^*}$ and $ \mathbb{R}^{d^*}$ to $\mathbb{R}^d$, respectively.

We observe that the derivation of $(\widehat{\boldsymbol{E}}, \widehat{\boldsymbol{D}})$
as defined in  \eqref{eq: autoencoder erm}  can be facilitated through the utilization of pre-existing large-scale models. This approach permits accommodation for distributional shifts that may deviate from the target distribution. As a result, 
 it guarantees an abundant supply of samples essential for pre-training endeavors, 
enabling us to obtain convergence rates that circumvent the curse of dimensionality, as discussed in our theoretical development in   Subsection \ref{sec:oi} and
 Section \ref{sec:eb_mr}.
 %%%%%%%%%%%%%%%%%%%%%%%%%%%%%%%%%%%%
\subsection{Latent Diffusion Modeling}\label{sec:ldm}
With the trained encoder-decoder structure established in Section \ref{sec:pre}, we can subsequently construct the diffusion model in the latent space.
We denote by 
$
\widehat{\mathbf{x}}
:=\widehat{\boldsymbol{E}}\mathbf{x}
$ with $\mathbf{x} \sim p_{data}(\cdot)$. 
Let ${p}^{*}_{data}(\cdot)$  and $\widehat{p}^{*}_{data}(\cdot)$ represent the density functions of  $\boldsymbol{E}^*\mathbf{x}$ and 
$\widehat{\mathbf{x}}$, respectively. Additionally, let $\Phi_{\sigma}(\cdot)$ denote the density of the normal distribution 
$\mathcal{N}
(\mathbf{0},\sigma^2\mathbf{I}_{d^*})$.
Let $\mathcal{X}:=\{\mathbf{x}_i\}_{i=1}^n$ be i.i.d. from $p_{data}$, then we have access to the encoder data
$
\widehat{\mathcal{X}}:=
\{\widehat{\mathbf{x}}_{i}\}_{i=1}^{n}
=\{\widehat{\boldsymbol{E}}\mathbf{x}_i\}_{i=1}^n
$.
We denote the convolution of $\widehat{p}^{*}_{data}$ and $\Phi_{\sigma}(\cdot)$ as $q(\cdot,\cdot)$, defined as follows:
    \begin{equation}\label{aaa}
        q(\sigma,\mathbf{x}):= \int{\widehat{p}^{*}_{data}}(\mathbf{y})\Phi_{\sigma}(\mathbf{x} - \mathbf{y})\mathrm{d}\mathbf{y}.
    \end{equation}
Subsequently, the diffusion process can be formulated as  a forward SDE:
    \begin{equation}\label{model}
        \mathrm{d}\mathbf{x}_t = \sigma^2\nabla\log{q}_{1-t}(\mathbf{x}_t)\mathrm{d}t + \sigma\mathrm{d}\mathbf{w}_t,~ t \in [0,1],~\mathbf{x}_0\sim{q}({\sigma}, \mathbf{x}),
    \end{equation}
where $q_{1-t}(\cdot):=q(\sqrt{1-t}\sigma, \cdot)$, $\mathbf{w}_t$ represents a 
$d^*$-dimensional Brownian motion.
Consequently, $\mathbf{x}_1$ follows the distribution $\widehat{p}^{*}_{data}$ as established in Proposition \ref{th02} and  \cite[Theorem 3]{wang2021deep}. This formalization provides a coherent and rigorous foundation for our generative framework.

\noindent
\textbf{Score Matching.}
The SDE provided in  \eqref{model} offers a pathway for constructing a generative model through the application of EM discretization, contingent upon access to the target score denoted as 
$\mathbf{s}^*(t,\mathbf{x}):=\nabla\log{q}_{t}(\mathbf{x})$. However, in practical scenarios, this  score function remains elusive due to its intricate connection with the target density $\widehat{p}^*(\cdot)$. Consequently, we resort to score matching techniques \cite{hyvarinen2005estimation,vincent2011connection} employing deep ReLU FNNs.

Verification reveals that $\mathbf{s}^*$ minimizes the loss function $\mathcal{L}(\mathbf{s})$ across all measurable functions, with $\mathcal{L}(\mathbf{s})$ expressed as
    \begin{equation*}%\label{loss}
        \mathcal{L}(\mathbf{s}) := \frac{1}{T}\int_{0}^T\left(\mathbb{E}_{\mathbf{x}_t\sim q_{1-t}(\mathbf{x})}\Vert{\mathbf{s}(1-t, \mathbf{x}_t) - \nabla\log{q_{1-t}(\mathbf{x}_t)}}\Vert^2\right)\mathrm{d}t,
    \end{equation*}
where $0<T<1$.
We notice that the deliberate selection of  $T$ is motivated by the necessity to preclude the score function from exhibiting a blow-up at  $T=1$, concomitantly facilitating the stabilization of the training process for the model. A meticulous exposition regarding the determination of 
$T$ is provided in Subsection \ref{sec:oi}.
Furthermore, the application of this time truncation technique extends beyond our work and has found utility in both model training and theoretical analyses in diffusion models \cite{song2020improved,vahdat2021score,chen2023score,chen2023improved,oko2023diffusion}.

In alignment with \cite{hyvarinen2005estimation,vincent2011connection}, we alternatively consider a denoising score matching objective. With slight notational abuse, we denote this objective as
\begin{equation*} 
        \begin{aligned}
            \mathcal{L}(\mathbf{s}) &= \frac{1}{T}\int_{0}^{T}\left(\mathbb{E}_{{\widehat{p}}^*_{data}(\mathbf{x})}\mathbb{E}_{\mathcal{N}(\mathbf{x}_t;\mathbf{x},(1-t)\sigma^2\mathbf{I})}\left\Vert{\mathbf{s}(1-t, \mathbf{x}_t) + \frac{\mathbf{x}_t - \mathbf{x}}{(1-t)\sigma^2}}\right\Vert^2\right)\mathrm{d}t\\
            &= \frac{1}{T}\int_{0}^{T}\left(\mathbb{E}_{{\widehat{p}}^*_{data}(\mathbf{x})}\mathbb{E}_{\mathcal{N}(\mathbf{z};\mathbf{0},\mathbf{I})}\left\Vert{\mathbf{s}(1-t, \mathbf{x} + \sigma\sqrt{1-t}\mathbf{z}) + \frac{\mathbf{z}}{\sqrt{1-t}\sigma}}\right\Vert^2\right)\mathrm{d}t.
        \end{aligned}
    \end{equation*}
Given $n$ i.i.d.  samples 
$\{\widehat{\mathbf{x}}_{i}\}_{i=1}^{n}$ from ${\widehat{p}}^*_{data}(\cdot)$, 
and $m$ i.i.d. samples $\{(t_j, \mathbf{z}_j)\}_{j=1}^{m}$  from $\mathrm{U}[0,T]$ and $\mathcal{N}(\mathbf{z};\mathbf{0},\mathbf{I}_{d^*})$, we can employ the empirical risk minimizer (ERM) to estimate the score $\mathbf{s}^*$. This ERM, denoted as $\widehat{\mathbf{s}}$, is determined by
\begin{align}\label{escore}
\widehat{\mathbf{s}}\in{\mathop{\rm{argmin}}_{\mathbf{s}\in\mathrm{NN}}}
~\widehat{\mathcal{L}}(\mathbf{s}):= \frac{1}{mn}\sum_{j=1}^{m}\sum_{i=1}^{n}\left\Vert{\mathbf{s}(1-t_j, \widehat{\mathbf{x}}_{i} + \sigma\sqrt{1-t_j}\mathbf{z}_{j}) + \frac{\mathbf{z}_{j}}{\sigma\sqrt{1-t_j}}}\right\Vert^2,
\end{align}
where $\text{NN}$ refers to ReLU FNNs defined in 
Definition \ref{relufnns}.

\noindent
\textbf{EM Discretization.} 
Given the estimated score function $\widehat{\mathbf{s}}$, as defined in \eqref{escore}, we can formulate an SDE  initializing from the prior distribution: 
\begin{equation}\label{sampling_SDE}
        \mathrm{d}\widehat{\mathbf{x}}_t = \sigma^2\widehat{\mathbf{s}}(t,\widehat{\mathbf{x}}_t)\mathrm{d}t + \sigma \mathrm{d}\mathbf{w}_t,~ 
        \widehat{\mathbf{x}}_0 = \mathbf{x}_0\sim{q}(\sigma,\mathbf{x}),~ 0\leq{t}\leq{T}. 
\end{equation}
Now, we employ a discrete-time approximation for the sampling dynamics \eqref{sampling_SDE}.  
Let 
$$0=t_0<t_1<\cdots<t_K=T,~ K \in \mathbb{N}^+,$$ 
be the discretization points on $[0,T]$. 
We consider the explicit %Euler-Maruyama 
EM scheme:
\begin{equation}\label{EM_scheme}
    \mathrm{d}\widetilde{\mathbf{x}}_t = \sigma^2\widehat{\mathbf{s}}(t_i, \widetilde{\mathbf{x}}_{t_i})\mathrm{d}t + \sigma \mathrm{d}\mathbf{w}_t,~ \widetilde{\mathbf{x}}_0 = \mathbf{x}_0\sim{q}(\sigma,\mathbf{x}),~ t\in[t_i, t_{i+1}),
\end{equation}
for $i=0,1,\cdots,K-1$. 
Subsequently, we can utilize dynamics \eqref{EM_scheme} to generate new samples
 by using the trained decoder in Section \ref{sec:pre}.

In summary, the specific structure of our proposed method is outlined in Algorithm \ref{sampling_algorithm}.
%%%%%%%%
%%%%%%%%
\begin{algorithm}[H]
\caption{
The Proposed Latent Diffusion Model
}
\label{sampling_algorithm}
\begin{algorithmic}%[1]
\STATE 
1. {\bf Input:}
$\sigma$, $N$, $d^*$,
 $\{\mathbf{x}_{i}\}_{i=1}^{n} \sim p_{data}$, 
 $\{\my_i\}_{i=1}^{\mathcal{M}} \sim \widetilde{p}_{data}$, $\{(t_j, \mathbf{z}_j)\}_{j=1}^{m} \sim \text{U}[0,T] ~\mbox{and}~\mathcal{N}(\mathbf{z};\mathbf{0},\mathbf{I}_{d^*})$.
\STATE
2. 
Obtain encoder $\widehat{\boldsymbol{E}}$ and decoder
$\widehat{\boldsymbol{D}}$
 via
\eqref{eq: autoencoder erm}.
\STATE 
3. {\bf Score estimation:} 
Obtain  $\widehat{\mathbf{s}}(\cdot,\cdot)$ by  \eqref{escore}.
\STATE 
4. {\bf Sampling procedure:} 
\STATE 
Sample 
$\mathbf{\epsilon}\sim {\mathcal{N}}(\mathbf{0},\mathbf{I}_{d^*})$, 
$\mathbf{y}\in  \{\widehat{\mathbf{x}}_{i}\}_{i=1}^{n}$;
\STATE $\widetilde{\mathbf{x}}_0 = \mathbf{y} + \sigma\mathbf{\epsilon}$;
        \FORALL{$i=0,1,\ldots,K-1$}{
            \STATE Sample $\mathbf{\epsilon}_{i}\sim{\mathcal{N}}(\mathbf{0},\mathbf{I}_{d^*})$;
            \STATE $\mathbf{b}(\widetilde{\mathbf{x}}_i) = \widehat{\mathbf{s}}\left(\sqrt{1-\frac{i}{K}}\sigma, \widetilde{\mathbf{x}}_i\right)$;
            \STATE $\widetilde{\mathbf{x}}_{i+1} = \widetilde{\mathbf{x}}_i + \frac{\sigma^2}{K}\mathbf{b}(\widetilde{\mathbf{x}}_i) + \frac{\sigma}{K}\mathbf{\epsilon}_i$;
        }
        \ENDFOR
\STATE
5. {\bf Output:}
$\widehat{\boldsymbol{D}}(\widetilde{\mathbf{x}}_K)$.        
    \end{algorithmic}
\end{algorithm}

In Algorithm \ref{sampling_algorithm}, the process involves three primary steps: the initial pre-training, score estimation, and the subsequent sampling procedure.
We separately train the 
encoder-decoder structure and execute the diffusion model in the latent space. This procedure  also mirrors the approach described in \cite{ramesh2022hierarchical,rombach2022high,liu2024sora}.
Therefore, this pre-training phase can be effectively completed by implementing pre-existing large-scale models, owing to its inherent capability to accommodate distributional shifts.
This  enhances the flexibility of our method, empowering it to navigate diverse data landscapes with agility and adaptability.
In contrast to existing diffusion models \cite{song2020score}, our approach eliminates the necessity of an additional step involving the introduction of Gaussian noises, particularly in the forward process. This efficiency stems from the inherent nature of our algorithm, where the input directly represents the convolution of the encoder target distribution and a Gaussian noise. Consequently, there is no requirement to construct a separate forward process for the addition of Gaussian noise, as is common in traditional diffusion models. 
This characteristic and the latent space highlight the computational efficiency inherent in our approach.
Furthermore, in comparison to the diffusion model based on the Schr{\"o}dinger bridge presented in \cite{wang2021deep}, our method does not necessitate the construction of a bridge extending from the origin to the convolution distribution,
and we introduce a  
pre-training procedure to further enhance flexibility.

%------------------------Assumptions-------------------
\section{Theoretical Analysis}\label{sec:ta}

In this section, we establish comprehensive error bounds for the samples generated by Algorithm \ref{sampling_algorithm}. Our focus lies in quantifying the end-to-end performance through the second-order Wasserstein distance, measuring the dissimilarity between the distributions of the generated samples and the target distribution.
This entails conducting error analyses for both the pre-training and latent diffusion model. Therefore, we commence by elucidating the pre-training theory in Section \ref{sec:eb_pt}, followed by the exploration of error analysis in the latent diffusion model in Section \ref{sec:eb_ldm}. Subsequently, we integrate insights from these two analyses to establish our desired error bounds, as delineated in Section \ref{sec:eb_mr}.
%%%%%%%%%%%%%%%%%%%%%%%%%%%%%%%%%
\subsection{Error Bounds in Pre-training}\label{sec:eb_pt}
In this section, we derive the error bound for the trained encoder-decoder established in Section \ref{sec:pre}. 
To this end, we  introduce two fundamental assumptions.  
\begin{assumption}[Bounded Support]\label{ass:bounded_support}
    The pre-trained data distribution $\widetilde{p}_{data}$ is supported on $[-1,1]^{d}$.
\end{assumption}

\begin{assumption}[Compressibility]
\label{ass: compressibility}
There exist continuously differential functions $\boldsymbol{E}^{*}:[-1,1]^{d}\rightarrow[-1,1]^{d^{*}}$ and $\boldsymbol{D}^{*}:[-1,1]^{d^{*}}\rightarrow\mathbb{R}^{d}$ such that $\mathcal{H}(\boldsymbol{E},\boldsymbol{D})$ attains its minimum. The minimum value is denoted by
$\varepsilon_{\mathrm{compress}-{\mathrm{noise}}} := \mathcal{H} (\boldsymbol{E}^*, \boldsymbol{D}^*)\leq \delta_0$,  where $\delta_0\geq 0$. 
Furthermore, $\boldsymbol{E}^*\in \mathrm{Lip}(\xi_{\boldsymbol{E}}), \boldsymbol{D}^* \in \mathrm{Lip}(\xi_{\boldsymbol{D}})$ respectively.
\end{assumption}
\begin{remark}
%%%%%%%
These two assumptions  assume a paramount role in providing a solid underpinning for the pre-training framework. 
Additionally, the theoretical analysis of diffusion models elucidates the bounded support assumption on the target distribution, as presented in \cite[Assumption 2]{lee2023convergence}, \cite{li2023towards}, and \cite[Assumption 2.4]{oko2023diffusion},  establishing it as a foundational pillar in the study of diffusion models.
In Assumption \ref{ass: compressibility},
when the data is perfectly compressible, implying the existence of   $\boldsymbol{D}^*, \boldsymbol{E}^*$
    such that $\boldsymbol{D}^*\circ\boldsymbol{E}^*=\mathbf{I}_d$,  then $\delta_0 = 0.$
\end{remark}
%%%%%%%%%%%%%%%%%%%%%%%%%%%
\begin{lemma}\label{lem:pretraining_approximation_error} 
Suppose that Assumptions \ref{ass:bounded_support}-\ref{ass: compressibility} hold. Then,  for any $\epsilon > 0$, there exist two ReLU neural networks $\boldsymbol{E}\in\mathcal{E}$ and $\boldsymbol{D}\in\mathcal{D}$ with respective parameters
$$
L_{\boldsymbol{E}} = \mathcal{O}\left(\log\frac{1}{\epsilon} + d\right), M_{\boldsymbol{E}}=\mathcal{O}\left(d^{*}\xi_{\boldsymbol{E}}^{d}\epsilon^{-d}\right), J_{\boldsymbol{E}}=\mathcal{O}\left(d^{*}\xi_{\boldsymbol{E}}^{d}\epsilon^{-d}\left(\log\frac{1}{\epsilon} + d\right)\right),
\kappa_{\boldsymbol{E}} = \mathcal{O}\left(\max\{1, \xi_{\boldsymbol{E}}\}\right),
$$
$$
L_{\boldsymbol{D}} = \mathcal{O}\left(\log\frac{1}{\epsilon} + d^{*}\right), M_{\boldsymbol{D}}=\mathcal{O}\left(d\xi_{\boldsymbol{D}}^{d^{*}}\epsilon^{-d^{*}}\right), J_{\boldsymbol{D}}=\mathcal{O}\left(d\xi_{\boldsymbol{D}}^{d^{*}}\epsilon^{-d^{*}}\left(\log\frac{1}{\epsilon} + d^{*}\right)\right), \kappa_{\boldsymbol{D}} = \mathcal{O}\left(\max\{1, \xi_{\boldsymbol{D}}\}\right),
$$
such that
$$
\mathcal{H}(\boldsymbol{E},\boldsymbol{D}) - \mathcal{H}(\boldsymbol{E}^{*}, \boldsymbol{D}^{*}) \lesssim \epsilon.
$$
Moreover, $\boldsymbol{E}$ and $\boldsymbol{D}$ are Lipschitz continuous with Lipschitz constants $\gamma_{\boldsymbol{E}}:= 10d\xi_{\boldsymbol{E}}$ and $\gamma_{\boldsymbol{D}}:= 10d^*\xi_{\boldsymbol{D}}$, respectively. That is, for any $\mathbf{y}_1, \mathbf{y}_2\in [-1,1]^d$, $\boldsymbol{E}$ satisfies
$$
\Vert \boldsymbol{E}(\mathbf{y}_1) - \boldsymbol{E}(\mathbf{y}_2)\Vert_\infty \leq \gamma_{\boldsymbol{E}} \Vert \mathbf{y}_1 - \mathbf{y}_2\Vert,
$$
for any $\mathbf{y}_1, \mathbf{y}_2 \in [-1,1]^{d^*}$, $\boldsymbol{D}$ satisfies
$$
\Vert \boldsymbol{D}(\mathbf{y}_1) - \boldsymbol{D}(\mathbf{y}_2)\Vert_\infty \leq \gamma_{\boldsymbol{D}} \Vert \mathbf{y}_1 - \mathbf{y}_2\Vert.
$$
And there exist two constants $K_{\boldsymbol{D}}=\mathcal{O}(1)$, $K_{\boldsymbol{E}}=\mathcal{O}(1)$ such that
$$
\sup_{\mathbf{y}\in [-1,1]^d}\|\boldsymbol{E}(\mathbf{y})\| \leq K_{\boldsymbol{E}}, ~ \sup_{\mathbf{y}\in [-1,1]^{d^*}}\|\boldsymbol{D}(\mathbf{y})\| \leq K_{\boldsymbol{D}}.
$$
\end{lemma}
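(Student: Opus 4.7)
The plan is to obtain $\boldsymbol{E}$ and $\boldsymbol{D}$ by componentwise invocation of a standard ReLU approximation theorem for Lipschitz functions (of the Yarotsky / Shen--Yang--Zhang type), and then to bound the excess reconstruction loss by reducing it to sup-norm approximation errors via the Lipschitz constants of $\boldsymbol{D}^*$. I would first quote or set up the one-dimensional approximation result: for any $\xi$-Lipschitz scalar function $f:[-1,1]^k\to[-1,1]$ and any $\epsilon>0$ there exists a ReLU FNN $\tilde f$ of depth $\mathcal O(\log(1/\epsilon)+k)$, width $\mathcal O(\xi^k\epsilon^{-k})$, and $\mathcal O(\xi^k\epsilon^{-k}(\log(1/\epsilon)+k))$ active weights, with weights bounded by $\mathcal O(\max\{1,\xi\})$, such that $\|\tilde f-f\|_{L^\infty}\le\epsilon$. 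Applying this to each of the $d^*$ scalar coordinates of $\boldsymbol{E}^*$ and each of the $d$ scalar coordinates of $\boldsymbol{D}^*$ and stacking the resulting subnetworks in parallel yields vector-valued networks $\boldsymbol{E}$ and $\boldsymbol{D}$ whose architectural parameters match those claimed in the lemma.

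Next, I would handle the qualitative properties. The $L^\infty$ bounds $K_{\boldsymbol{E}},K_{\boldsymbol{D}}=\mathcal O(1)$ follow from the range bounds of $\boldsymbol{E}^*,\boldsymbol{D}^*$ plus an $\epsilon$-perturbation (and can be enforced rigidly by composing with a cheap clipping layer $x\mapsto\min(1,\max(-1,x))$ built from two ReLUs, which only affects constants). For the Lipschitz bounds $\gamma_{\boldsymbol{E}}=10d\xi_{\boldsymbol{E}}$ and $\gamma_{\boldsymbol{D}}=10d^*\xi_{\boldsymbol{D}}$, the Yarotsky-type construction produces a piecewise-linear interpolant whose coordinatewise Lipschitz constant is controlled by $\xi_{\boldsymbol{E}}$ (resp.\ $\xi_{\boldsymbol{D}}$) up to a small multiplicative overhead, and the $\ell_\infty\to\ell_\infty$ Lipschitz property of the stacked network inherits the coordinatewise bound; converting from the $\ell^2$ norm used on the input to the $\ell^\infty$ norm on the output absorbs the $d$ or $d^*$ factor, which is where the factors of $10d$ and $10d^*$ originate.

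The main computation is bounding the excess loss. Writing $\mathbf y\in[-1,1]^d$ and using the telescoping decomposition
\begin{equation*}
\boldsymbol{D}(\boldsymbol{E}(\mathbf y))-\boldsymbol{D}^*(\boldsymbol{E}^*(\mathbf y))=\bigl[\boldsymbol{D}(\boldsymbol{E}(\mathbf y))-\boldsymbol{D}^*(\boldsymbol{E}(\mathbf y))\bigr]+\bigl[\boldsymbol{D}^*(\boldsymbol{E}(\mathbf y))-\boldsymbol{D}^*(\boldsymbol{E}^*(\mathbf y))\bigr],
\end{equation*}
the first bracket is pointwise $\mathcal O(\epsilon)$ by the sup-norm bound on $\boldsymbol{D}-\boldsymbol{D}^*$ (provided $\boldsymbol{E}(\mathbf y)\in[-1,1]^{d^*}$, which is why the output clipping matters), and the second bracket is bounded by $\xi_{\boldsymbol{D}}\,\|\boldsymbol{E}(\mathbf y)-\boldsymbol{E}^*(\mathbf y)\|=\mathcal O(\epsilon)$ by the Lipschitz property of $\boldsymbol{D}^*$. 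Expanding $\|\boldsymbol{D}(\boldsymbol{E}(\mathbf y))-\mathbf y\|^2-\|\boldsymbol{D}^*(\boldsymbol{E}^*(\mathbf y))-\mathbf y\|^2$ as a square-of-difference plus a cross term, integrating against $\widetilde p_{\mathrm{data}}$, and applying Cauchy--Schwarz to the cross term against $\sqrt{\mathcal{H}(\boldsymbol{E}^*,\boldsymbol{D}^*)}\le\sqrt{\delta_0}$, I would obtain $\mathcal{H}(\boldsymbol{E},\boldsymbol{D})-\mathcal{H}(\boldsymbol{E}^*,\boldsymbol{D}^*)\lesssim\epsilon^2+\sqrt{\delta_0}\,\epsilon\lesssim\epsilon$.

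The step I expect to be most delicate is not the loss bound itself but guaranteeing that the approximant $\boldsymbol{E}$ actually maps into $[-1,1]^{d^*}$, the domain on which $\boldsymbol{D}^*$ and its approximation are controlled; otherwise the pointwise bound $\|\boldsymbol{D}(\mathbf z)-\boldsymbol{D}^*(\mathbf z)\|\le\epsilon$ may fail at $\mathbf z=\boldsymbol{E}(\mathbf y)$. My plan is to enforce this rigorously by post-composing each coordinate of $\boldsymbol{E}$ with a clip-to-$[-1,1]$ ReLU gadget, which is cheap in depth, width, and weights, and to verify that the same gadget preserves the coordinatewise Lipschitz constant. With that in place, all architectural and Lipschitz constants claimed in the lemma follow by tracking the constants through the composition.
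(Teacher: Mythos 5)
Your proposal follows essentially the same route as the paper: coordinatewise ReLU approximation of $\boldsymbol{E}^*$ and $\boldsymbol{D}^*$, concatenation, a telescoping decomposition of $\boldsymbol{D}\circ\boldsymbol{E}-\boldsymbol{D}^*\circ\boldsymbol{E}^*$, and an integration step. Two of your choices are minor but worth noting. First, your treatment of the excess loss via $\|a\|^2-\|b\|^2=\|a-b\|^2+2\langle a-b,b\rangle$ and Cauchy--Schwarz against $\sqrt{\mathcal{H}(\boldsymbol{E}^*,\boldsymbol{D}^*)}\le\sqrt{\delta_0}$ gives the slightly sharper bound $\epsilon^2+\sqrt{\delta_0}\,\epsilon$, whereas the paper factors the difference of squares directly and bounds the first inner-product factor by $K_{\boldsymbol{D}}+B_{\boldsymbol{D}}+2\sqrt d$; both yield $\lesssim\epsilon$. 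Second, your explicit clipping of $\boldsymbol{E}$ into $[-1,1]^{d^*}$ is a genuinely useful addition: the paper applies the sup-norm bound $\|\boldsymbol{D}-\boldsymbol{D}^*\|_{L^\infty([-1,1]^{d^*})}\le\epsilon$ at the point $\boldsymbol{E}(\mathbf y)$ without verifying it lies in $[-1,1]^{d^*}$, and clipping (which preserves the Lipschitz and sup-norm bounds, since $\boldsymbol{E}^*(\mathbf y)\in[-1,1]^{d^*}$) resolves that cleanly.

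One local justification in your write-up is, however, incorrect. You attribute the factor of $d$ in $\gamma_{\boldsymbol{E}}=10d\xi_{\boldsymbol{E}}$ to ``converting from the $\ell^2$ norm used on the input to the $\ell^\infty$ norm on the output,'' but no such conversion introduces a $d$ factor: $\|\cdot\|_\infty\le\|\cdot\|_2$, so going from an $\ell^2$-input, $\ell^\infty$-output Lipschitz estimate back to the statement in the lemma costs nothing, and a per-coordinate $\xi_{\boldsymbol{E}}$-Lipschitz bound would already give $\gamma_{\boldsymbol{E}}=\xi_{\boldsymbol{E}}$. The $d$ factor actually comes from the partition-of-unity construction in the cited lemma of \cite{chen2022distribution}: each $\Psi_{\mathbf v}(\mathbf y')=\prod_{i=1}^d\psi(3N_1(y_i'-v_i/N_1))$ is a product of $d$ coordinatewise trapezoid functions, and differentiating that product introduces a sum of $d$ terms, so the interpolant's Lipschitz constant picks up a factor of $d$ beyond $\xi_{\boldsymbol{E}}$. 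This does not break your proof (the claimed constant is achievable), but the reason you give for it is not the right one.
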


Now, we restrict $\boldsymbol{E} \in \mathcal{E}$ and $\boldsymbol{D} \in \mathcal{D}$ to the following two classes:
$$
\begin{aligned}
\mathcal{E}_0:= \Big\{\boldsymbol{E}\in \mathcal{E}\Big|&
\text{The network configurations satisfy}\\
& L_{\boldsymbol{E}} = \mathcal{O}\left(\log\frac{1}{\epsilon} + d\right), M_{\boldsymbol{E}}=\mathcal{O}\left(d^{*}\xi_{\boldsymbol{E}}^{d}\epsilon^{-d}\right), \\ & J_{\boldsymbol{E}}=\mathcal{O}\left(d^{*}\xi_{\boldsymbol{E}}^{d}\epsilon^{-d}\left(\log\frac{1}{\epsilon} + d\right)\right), 
\kappa_{\boldsymbol{E}} = \mathcal{O}\left(\max\{1, \xi_{\boldsymbol{E}}\}\right), \sup_{\mathbf{y}\in [-1,1]^d}\Vert\boldsymbol{E}(\mathbf{y})\Vert \leq K_{\boldsymbol{E}},\\
& 
\forall \mathbf{y}_1, \mathbf{y}_2\in [-1,1]^d,
\Vert \boldsymbol{E}(\mathbf{y}_1) - \boldsymbol{E}(\mathbf{y}_2)\Vert_\infty \leq \gamma_{\boldsymbol{E}} \Vert \mathbf{y}_1 - \mathbf{y}_2\Vert
\Big\},
\end{aligned}
$$
$$
\begin{aligned}
\mathcal{D}_0:= \Big\{ \boldsymbol{D}\in\mathcal{D} \Big| & \text{The network configurations satisfy} \\
& L_{\boldsymbol{D}} = \mathcal{O}\left(\log\frac{1}{\epsilon} + d^{*}\right), M_{\boldsymbol{D}}=\mathcal{O}\left(d\xi_{\boldsymbol{D}}^{d^{*}}\epsilon^{-d^{*}}\right), \\ & J_{\boldsymbol{D}}=\mathcal{O}\left(d\xi_{\boldsymbol{D}}^{d^{*}}\epsilon^{-d^{*}}\left(\log\frac{1}{\epsilon} + d^{*}\right)\right),
\kappa_{\boldsymbol{D}} = \mathcal{O}\left(\max\{1, \xi_{\boldsymbol{D}}\}\right), 
\sup_{\mathbf{y}\in [-1,1]^{d^*}} \Vert\boldsymbol{D}(\mathbf{y})\Vert \leq K_{\boldsymbol{D}},\\
& 
\forall \mathbf{y}_1, \mathbf{y}_2\in [-1,1]^{d^*},
\Vert \boldsymbol{D}(\mathbf{y}_1) - \boldsymbol{D}(\mathbf{y}_2)\Vert_\infty \leq \gamma_{\boldsymbol{D}} \Vert \mathbf{y}_1 - \mathbf{y}_2\Vert
\Big\}.
\end{aligned}
$$
Consequently,  we can derive the error bound for the pre-training procedure, as shown in the following theorem.
\begin{theorem}\label{thm:pretraining}
Suppose that Assumptions \ref{ass:bounded_support}-\ref{ass: compressibility} hold. By taking $\epsilon = \mathcal{M}^{-\frac{1}{d + 2}}$ in Lemma \ref{lem:pretraining_approximation_error}, the encoder estimator $\widehat{\boldsymbol{E}}$ with the neural network structures belonging to class $\mathcal{E}_0$ and the decoder estimator $\widehat{\boldsymbol{D}}$ with neural network structures belonging to class $\mathcal{D}_0$ satisfy
$$
\mathbb{E}_{\mathcal{Y}}
\mathbb{E}_{\my \sim \widetilde{p}_{data}}
\| (\widehat{\boldsymbol{D}}\circ\widehat{\boldsymbol{E}})(\my) - \my \|^2  = \widetilde{\mathcal{O}}({\mathcal{M}}^{-\frac{1}{d+2}}+\delta_0).
$$
\end{theorem}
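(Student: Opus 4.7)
The plan is to follow the standard empirical risk minimization decomposition: split the excess risk of $(\widehat{\boldsymbol{E}}, \widehat{\boldsymbol{D}})$ into an \emph{approximation error} controlled by Lemma \ref{lem:pretraining_approximation_error}, plus a \emph{statistical (generalization) error} controlled by uniform concentration over the hypothesis classes $\mathcal{E}_0\times \mathcal{D}_0$, and then balance the two terms by the choice $\epsilon = \mathcal{M}^{-1/(d+2)}$. Writing $\mathcal{H}(\widehat{\boldsymbol{E}}, \widehat{\boldsymbol{D}}) = \mathbb{E}_{\my\sim \widetilde{p}_{data}}\|(\widehat{\boldsymbol{D}}\circ\widehat{\boldsymbol{E}})(\my) - \my\|^2$, the target quantity equals $\mathcal{H}(\widehat{\boldsymbol{E}}, \widehat{\boldsymbol{D}})$, and by Assumption \ref{ass: compressibility} we have $\mathcal{H}(\boldsymbol{E}^*, \boldsymbol{D}^*) \le \delta_0$, so it suffices to bound the excess risk $\mathcal{H}(\widehat{\boldsymbol{E}}, \widehat{\boldsymbol{D}}) - \mathcal{H}(\boldsymbol{E}^*, \boldsymbol{D}^*)$ in expectation.

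First, I would carry out the standard excess-risk decomposition. Let $(\boldsymbol{E}_\epsilon, \boldsymbol{D}_\epsilon)\in \mathcal{E}_0\times\mathcal{D}_0$ be the approximating pair given by Lemma \ref{lem:pretraining_approximation_error}. Using that $(\widehat{\boldsymbol{E}}, \widehat{\boldsymbol{D}})$ minimizes $\widehat{\mathcal{H}}$,
\begin{align*}
\mathcal{H}(\widehat{\boldsymbol{E}}, \widehat{\boldsymbol{D}}) - \mathcal{H}(\boldsymbol{E}^*, \boldsymbol{D}^*)
\;\le\; \bigl[\mathcal{H}(\boldsymbol{E}_\epsilon, \boldsymbol{D}_\epsilon) - \mathcal{H}(\boldsymbol{E}^*, \boldsymbol{D}^*)\bigr] + 2\sup_{(\boldsymbol{E},\boldsymbol{D})\in \mathcal{E}_0\times\mathcal{D}_0} \bigl|\mathcal{H}(\boldsymbol{E},\boldsymbol{D}) - \widehat{\mathcal{H}}(\boldsymbol{E},\boldsymbol{D})\bigr|.
\end{align*}
The first bracket is the approximation error, which by Lemma \ref{lem:pretraining_approximation_error} is at most $C\epsilon$.

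Second, I would handle the statistical error by a covering-number argument. Note that on $\my\in[-1,1]^d$ the per-sample loss $\ell(\boldsymbol{E},\boldsymbol{D};\my) := \|(\boldsymbol{D}\circ\boldsymbol{E})(\my)-\my\|^2$ is uniformly bounded by some constant $B = \mathcal{O}\bigl((K_{\boldsymbol{D}}+\sqrt{d})^2\bigr)$ (using the bounds $\|\boldsymbol{D}(\cdot)\|\le K_{\boldsymbol{D}}$ built into $\mathcal{D}_0$), and is Lipschitz in the parameters of $\boldsymbol{E}$ and $\boldsymbol{D}$ owing to the Lipschitz property of ReLU networks and the built-in Lipschitz constants $\gamma_{\boldsymbol{E}}, \gamma_{\boldsymbol{D}}$. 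A standard symmetrization / covering-number bound (as in Definition 2.3) then yields
\begin{equation*}
\mathbb{E}_{\mathcal{Y}}\sup_{(\boldsymbol{E},\boldsymbol{D})\in\mathcal{E}_0\times\mathcal{D}_0}\!\bigl|\mathcal{H}(\boldsymbol{E},\boldsymbol{D}) - \widehat{\mathcal{H}}(\boldsymbol{E},\boldsymbol{D})\bigr| \;\lesssim\; B\sqrt{\frac{\log \mathcal{N}(\mathcal{M}^{-1},\mathcal{E}_0\times \mathcal{D}_0, \|\cdot\|_\infty)}{\mathcal{M}}} + \mathcal{M}^{-1}.
\end{equation*}
Standard ReLU-network covering estimates (via the sparsity $J$, depth $L$, width $M$, and weight bound $\kappa$ reported in Lemma \ref{lem:pretraining_approximation_error}) give $\log\mathcal{N} = \widetilde{\mathcal{O}}(J_{\boldsymbol{E}} + J_{\boldsymbol{D}}) = \widetilde{\mathcal{O}}(\epsilon^{-d} + \epsilon^{-d^*}) = \widetilde{\mathcal{O}}(\epsilon^{-d})$, so the statistical error is $\widetilde{\mathcal{O}}(\sqrt{\epsilon^{-d}/\mathcal{M}})$.

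Finally, I would balance the two contributions. Setting $\epsilon \asymp \sqrt{\epsilon^{-d}/\mathcal{M}}$ gives $\epsilon \asymp \mathcal{M}^{-1/(d+2)}$, exactly the prescription in the theorem, yielding an excess-risk bound of $\widetilde{\mathcal{O}}(\mathcal{M}^{-1/(d+2)})$; adding $\mathcal{H}(\boldsymbol{E}^*,\boldsymbol{D}^*)\le\delta_0$ recovers the stated rate $\widetilde{\mathcal{O}}(\mathcal{M}^{-1/(d+2)}+\delta_0)$. The main obstacle I expect is the careful bookkeeping in the covering-number step: the class $\mathcal{E}_0\times\mathcal{D}_0$ is a \emph{product} of deep ReLU networks whose loss involves the composition $\boldsymbol{D}\circ\boldsymbol{E}$, so the Lipschitz constants of the composed map (and hence the metric-entropy constant) scale multiplicatively in $\gamma_{\boldsymbol{E}}\gamma_{\boldsymbol{D}}$, and one must verify that the resulting polynomial factors in $d, d^*, \xi_{\boldsymbol{E}}, \xi_{\boldsymbol{D}}$ are all absorbed into the $\widetilde{\mathcal{O}}(\cdot)$ notation without contaminating the sample-size exponent.
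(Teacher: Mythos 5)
Your proposal is correct and follows essentially the same route as the paper's proof: the same excess-risk decomposition into approximation and statistical error, a covering-number bound for the uniform deviation (the paper uses Hoeffding on a $\delta$-net with tail integration rather than symmetrization, but the resulting $\widetilde{\mathcal{O}}(\epsilon^{-d/2}/\sqrt{\mathcal{M}})$ rate is identical), and the same balancing $\epsilon\asymp\mathcal{M}^{-1/(d+2)}$. Your flagged concern about the composed map's Lipschitz constant is handled in the paper by covering $\mathcal{E}_0$ and $\mathcal{D}_0$ separately and using the triangle inequality through the intermediate function $\boldsymbol{D}_\delta\circ\boldsymbol{E}$, which yields a factor of $(1+\sqrt{d}\gamma_{\boldsymbol{D}})\delta$ — linear rather than multiplicative in the Lipschitz constants — and is absorbed into $\widetilde{\mathcal{O}}(\cdot)$ as you anticipated.
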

%%%%%%%%%%%%%%%%%%%%%%%%%%%%%%%%%%
\subsection{Error Bounds in Latent Diffusion Modeling}\label{sec:eb_ldm}
In this section, we obtain the error bounds in latent diffusion modeling. We denote $ \pi_t$, $\widehat{\pi}_t$,  and $\widetilde{\pi}_t$ as the distributions of $ \mathbf{x}_t$, $\widehat{\mathbf{x}}_t$,
and $\widetilde{\mathbf{x}}_t$ correspondingly.  
Since the diffusion process \eqref{model} converges to $\widehat{p}_{data}^*$ as $t \rightarrow 1$ and the domain is compact, we define a truncated estimator $\pi_T^L$ for $\widehat{p}_{data}^*$ as the distribution of $\mathbf{x}_T\cdot\mathbf{I}_{\{\Vert\mathbf{x}_T\Vert_\infty \leq L\}}$, for a large constant $L > 0$ such that $\pi_T^L \approx \widehat{p}_{data}^*$. 
Similarly, we define $\widehat{\pi}_T^L$ and  $\widetilde{\pi}_T^L$ as the distributions of $\widehat{\mathbf{x}}_T\cdot\mathbf{I}_{\{\Vert\widehat{\mathbf{x}}_T\Vert_\infty \leq L\}}$ and $\widetilde{\mathbf{x}}_T\cdot\mathbf{I}_{\{\Vert\widetilde{\mathbf{x}}_T\Vert_\infty \leq L\}}$, respectively. Here, we truncate the supports of the distributions $\pi_T, \widehat{\pi}_T$,
and $\widetilde{\pi}_T$ to  ensure a bounded support for the distribution estimators, which is done for technical reasons.

Let $\mathcal{T}:=\{t_j\}_{j=1}^{m}, \mathcal{Z} := \{\mathbf{z}_j\}_{j=1}^{m}$. 
Denote
\begin{align*}
\overline{\mathcal{L}}_{\widehat{\mathcal{X}}}(\mathbf{s}):=\frac{1}{n}\sum_{i=1}^{n}\ell_{\mathbf{s}}(\widehat{\mathbf{x}}_i), 
~
\text{and}~
\widehat{\mathcal{L}}_{\widehat{\mathcal{X}},\mathcal{T}, \mathcal{Z}}(\mathbf{s}):= \frac{1}{n}\sum_{i=1}^{n}\widehat{\ell}_{\mathbf{s}}(\widehat{\mathbf{x}}_i),
\end{align*}
where
$$
\ell_{\mathbf{s}}(\widehat{\mathbf{x}}):= \frac{1}{T}\int_{0}^{T}\mathbb{E}_{\mathbf{z}}\left\Vert \mathbf{s}(1-t,\mathbf{x} + \sigma\sqrt{1-t}\mathbf{z}) + \frac{\mathbf{z}}{\sigma\sqrt{1-t}}\right\Vert^2 \mathrm{d}t,
$$
and
$$
\widehat{\ell}_{\mathbf{s}}(\widehat{\mathbf{x}}):= \frac{1}{m}\sum_{j=1}^{m}\left\Vert \mathbf{s}(1-t_j, \widehat{\mathbf{x}} + \sigma\sqrt{1-t_j}\mathbf{z}_j) + \frac{\mathbf{z}_j}{\sigma\sqrt{1-t_j}}\right\Vert^2.
$$
Then,  for any $\mathbf{s}\in\mathrm{NN}$,  it yields that
\begin{equation*}
\begin{aligned}
&\mathcal{L}(\widehat{\mathbf{s}}) - \mathcal{L}(\mathbf{s}^{*}) \\
&= \mathcal{L}(\widehat{\mathbf{s}}) - 2\overline{\mathcal{L}}_{\widehat{\mathcal{X}}}(\widehat{\mathbf{s}}) + \mathcal{L}(\mathbf{s}^{*}) + 2\left(\overline{\mathcal{L}}_{\widehat{\mathcal{X}}}(\widehat{\mathbf{s}}) - \mathcal{L}(\mathbf{s}^{*})\right)\\
        &=\mathcal{L}(\widehat{\mathbf{s}}) - 2\overline{\mathcal{L}}_{\mathbf{\widehat{\mathcal{X}}}}(\widehat{\mathbf{s}}) + \mathcal{L}(\mathbf{s}^{*}) + 2\left(\overline{\mathcal{L}}_{\widehat{\mathcal{X}}}(\widehat{\mathbf{s}}) - \widehat{\mathcal{L}}_{\widehat{\mathcal{X}},\mathcal{T},\mathcal{Z}}(\widehat{\mathbf{s}})\right)
         + 2\left(\widehat{\mathcal{L}}_{\widehat{\mathcal{X}},\mathcal{T},\mathcal{Z}}(\widehat{\mathbf{s}}) - \mathcal{L}(\mathbf{s}^{*})\right)\\
         &\leq \mathcal{L}(\widehat{\mathbf{s}}) - 2\overline{\mathcal{L}}_{\widehat{\mathcal{X}}}(\widehat{\mathbf{s}}) + \mathcal{L}(\mathbf{s}^{*}) + 2\left(\overline{\mathcal{L}}_{\widehat{\mathcal{X}}}(\widehat{\mathbf{s}}) - \widehat{\mathcal{L}}_{\widehat{\mathcal{X}},\mathcal{T},\mathcal{Z}}(\widehat{\mathbf{s}})\right)
         + 2\left(\widehat{\mathcal{L}}_{\widehat{\mathcal{X}},\mathcal{T},\mathcal{Z}}(\mathbf{s}) - \mathcal{L}(\mathbf{s}^{*})\right).
    \end{aligned}
\end{equation*}
Taking expectations, followed by taking the  infimum over $\mathbf{s} \in \mathrm{NN}$ on both sides of the above inequality,  it holds that
\begin{equation*}
\begin{aligned}
&\mathbb{E}_{\widehat{\mathcal{X}},\mathcal{T},\mathcal{Z}}\left(\frac{1}{T}\int_{0}^{T}\mathbb{E}_{\mathbf{x}_t}\Vert\widehat{\mathbf{s}}(1-t,\mathbf{x}_t) - \nabla\log q_{1-t}(\mathbf{x}_t)\Vert^2 \mathrm{d}t\right)\\
&=\mathbb{E}_{\widehat{\mathcal{X}},\mathcal{T},\mathcal{Z}}\mathcal{L}(\widehat{\mathbf{s}}) - \mathcal{L}(\mathbf{s}^{*})\\
& \leq \mathbb{E}_{\widehat{\mathcal{X}},\mathcal{T},\mathcal{Z}}\left(\mathcal{L}(\widehat{\mathbf{s}}) - 2\overline{\mathcal{L}}_{\widehat{\mathcal{X}}}(\widehat{\mathbf{s}}) + \mathcal{L}(\mathbf{s}^{*})\right) + 2\mathbb{E}_{\widehat{\mathcal{X}},\mathcal{T},\mathcal{Z}}\left(\overline{\mathcal{L}}_{\widehat{\mathcal{X}}}(\widehat{\mathbf{s}}) - \widehat{\mathcal{L}}_{\widehat{\mathcal{X}},\mathcal{T},\mathcal{Z}}(\widehat{\mathbf{s}})\right)\\
&~~~~+ 2\mathop{\mathrm{inf}}_{\mathbf{s}\in{\mathrm{NN}}}(\mathcal{L}(\mathbf{s}) - \mathcal{L}(\mathbf{s}^{*})).
\end{aligned}
\end{equation*}
In the above inequality, 
the terms $$
\mathbb{E}_{\widehat{\mathcal{X}},\mathcal{T},\mathcal{Z}}\left(\mathcal{L}(\widehat{\mathbf{s}}) - 2\overline{\mathcal{L}}_{\widehat{\mathcal{X}}}(\widehat{\mathbf{s}}) + \mathcal{L}(\mathbf{s}^{*})\right) + 2\mathbb{E}_{\widehat{\mathcal{X}},\mathcal{T},\mathcal{Z}}\left(\overline{\mathcal{L}}_{\widehat{\mathcal{X}}}(\widehat{\mathbf{s}}) - \widehat{\mathcal{L}}_{\widehat{\mathcal{X}},\mathcal{T},\mathcal{Z}}(\widehat{\mathbf{s}})\right)
$$  and $$
\mathop{\mathrm{inf}}_{\mathbf{s}\in{\mathrm{NN}}}(\mathcal{L}(\mathbf{s}) - \mathcal{L}(\mathbf{s}^{*}))
$$  denote the statistical error and approximation error, respectively. 

Using the tools of empirical process theory 
and deep approximation theory,   
we can bound these errors. We first introduce three necessary assumptions.

%%%%%%
\begin{assumption}[Bounded Density]\label{ass:bounded_density}
The density $\widehat{p}_{data}^{*}$ in the latent space is supported on $[-1,1]^{d^*}$  and is bounded above and below by  $C_u$ and  $C_{l}$, respectively,  
on $[-1,1]^{d^{*}}$.
\end{assumption}
%%%%
\begin{assumption}[H{\"o}lder Continuous]\label{ass:holder_density}
The density $\widehat{p}_{data}^{*}$ in the latent space belongs to $\mathcal{H}^{\beta}([-1,1]^{d^*}, R)$ for a H{\"o}lder index $\beta > 0$ and constant $R > 0$.
\end{assumption}
%%%%%%%
\begin{assumption}[Boundary Smoothness]\label{ass:boundary_smoothness} 
There exists $0 < a_0 < 1$ such that $\widehat{p}_{data}^{*}\in C^{\infty}([-1,1]^{d^*}\backslash [-1 + a_0, 1 - a_0]^{d^*})$.
\end{assumption}

Assumption   \ref{ass:bounded_density}  introduces the compact support for the  target distribution in the latent space, which is similarly adopted in \cite{lee2023convergence,li2023towards,oko2023diffusion}.  Assumption \ref{ass:holder_density}
enforces the H\"older continuity of the density, a fundamental condition in nonparametric estimation.
Assumption \ref{ass:boundary_smoothness}  imposes a regularity condition on the density $\widehat{p}_{data}^{*}$, a technical assumption that is also introduced in \cite{oko2023diffusion}.
%%%%%%
In the existing literature, certain regularity conditions on the target distribution have also been introduced. For instance, it is required that the gradient of the score function be bounded both from above and below. Such conditions are outlined in \cite[Assumption 1]{lee2022convergence}, \cite[Assumptions 3-4]{chen2023improved}, \cite[Assumptions 3-4]{lee2023convergence}, \cite[Assumption 1]{gao2023wasserstein}, \cite[Assumptions 2.4, 2.6]{oko2023diffusion}, and \cite[Assumption 3]{chen2023score}.
Moreover, \cite[Assumptions 1, 2]{chen2023score} introduced analogous conditions. Their Assumption 1 delineates the representation of data points $\mathbf{x} \sim p_{data}$ as a product of an unknown matrix $A$ with orthonormal columns and a latent variable $\mathbf{z}$ following a distribution $P_z$ with a density function $p_z$.  Meanwhile, their Assumption 2 supposes
the density function $p_z>0$ is twice continuously differentiable. Moreover, there exist positive constants $B, C_1, C_2$ such that when $\|\mathbf{z}\|_2 \geq B$, the density function $p_z(\mathbf{z}) \leq$ $(2 \pi)^{-d / 2} C_1 \exp \left(-C_2\|\mathbf{z}\|_2^2 / 2\right)$. 
%%%%}

Under Assumptions \ref{ass:bounded_density}-\ref{ass:boundary_smoothness}, 
 we provide the following 
 lemma,  which bounds the approximation error. 
%%%%%%%
\begin{lemma}[Approximation Error]
\label{lem:approximation}
Suppose that Assumptions \ref{ass:bounded_density}-\ref{ass:boundary_smoothness} hold. 
Let $\mathcal{R} \gg 1 $, $C_T > 0$,
and  $T = 1-\mathcal{R}^{-C_T}$. We can then choose a ReLU neural network $\mathbf{s}\in\mathrm{NN}(L,M,J,\kappa)$ that satisfies
$$
\mathbf{s}(t,\mathbf{x}) = \mathbf{s}(t, \mathbf{s}_{\mathrm{clip}}(\mathbf{x}, -C_0, C_0)) ~\text{for}~ \Vert\mathbf{x}\Vert_\infty > C_0, ~\text{where}~ C_0 = \mathcal{O}(\sqrt{\log \mathcal{R}}),
$$
and has the following structure:  
$$
L = \mathcal{O}(\log^4 \mathcal{R}), M = \mathcal{O}(\mathcal{R}^{d^*}\log^7 \mathcal{R}), J = \mathcal{O}(\mathcal{R}^{d^*}\log^9 \mathcal{R}), \kappa = \exp\left(\mathcal{O}(\log^4 \mathcal{R})\right).
$$
Then, for any $t\in[\mathcal{R}^{-C_T},1]$, we have
$$
\int_{\mathbf{x}\sim q_t(\mathbf{x})}\Vert \mathbf{s}(t,\mathbf{x}) - \nabla\log q_t(\mathbf{x}) \Vert^2 \mathrm{d}\mathbf{x} \lesssim \frac{\mathcal{R}^{-2\beta}\log \mathcal{R}}{\sigma_t^2}.
$$
Moreover, we can take $\mathbf{s}$ satisfying $\Vert\mathbf{s}(t,\cdot)\Vert_{\infty} \lesssim \frac{\sqrt{\log \mathcal{R}}}{\sigma_t}$.
\end{lemma}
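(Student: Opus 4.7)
The plan is to follow the approximation scheme developed for diffusion models on compactly supported densities (most directly, the construction in \cite{oko2023diffusion}), adapted to our Schr\"odinger-bridge parametrization in which the effective noise level at time $t$ is $\sigma_t := \sigma\sqrt{t}$ on $[0,1]$ rather than a variance-preserving OU noise level on $[0,\infty)$. Concretely, writing
\[
\nabla\log q_t(\mathbf{x}) \;=\; \frac{\nabla q_t(\mathbf{x})}{q_t(\mathbf{x})} \;=\; \frac{1}{\sigma_t^2}\,\frac{\int(\mathbf{y}-\mathbf{x})\,\Phi_{\sigma_t}(\mathbf{x}-\mathbf{y})\,\widehat{p}^{*}_{data}(\mathbf{y})\,\mathrm{d}\mathbf{y}}{\int \Phi_{\sigma_t}(\mathbf{x}-\mathbf{y})\,\widehat{p}^{*}_{data}(\mathbf{y})\,\mathrm{d}\mathbf{y}},
\]
I will approximate the numerator and denominator integrals separately by ReLU networks and then combine them through a ReLU implementation of division.

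First, I would approximate $\widehat{p}^{*}_{data}$ on $[-1,1]^{d^{*}}$ by a tensor-product B-spline expansion of order $\lceil\beta\rceil$ on a grid of size $\mathcal{R}$ in each coordinate. Under Assumption \ref{ass:holder_density} this gives an $L^{\infty}$ approximation error of order $\mathcal{R}^{-\beta}$ using $\mathcal{O}(\mathcal{R}^{d^{*}})$ basis functions, and Assumption \ref{ass:boundary_smoothness} lets me replace the B-splines in the boundary strip by a higher-order polynomial approximation so that the density vanishes smoothly at the boundary without degrading the rate. Then each B-spline convolved with the Gaussian kernel $\Phi_{\sigma_t}$, together with the factor $(\mathbf{y}-\mathbf{x})$ in the numerator, yields a closed-form function in $(t,\mathbf{x})$ involving products of Gaussian densities, Gaussian CDFs and polynomials in $\mathbf{x}$ and $\sigma_t$. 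Each of these elementary functions admits an efficient ReLU approximation (exponentials and CDFs via the constructions in \cite{oko2023diffusion}; multiplication and division via the standard $x\mapsto x^2$ trick), so I can assemble both the numerator and denominator of the score as ReLU networks of the size reported in the lemma statement, with depth $\mathcal{O}(\log^{4}\mathcal{R})$, width $\mathcal{O}(\mathcal{R}^{d^{*}}\log^{7}\mathcal{R})$, sparsity $\mathcal{O}(\mathcal{R}^{d^{*}}\log^{9}\mathcal{R})$ and weight bound $\exp(\mathcal{O}(\log^{4}\mathcal{R}))$. Clipping in the outer coordinate to $[-C_0,C_0]^{d^{*}}$ with $C_0=\mathcal{O}(\sqrt{\log\mathcal{R}})$ is implemented with a fixed-size ReLU sub-network.

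Next, I would convert these pointwise approximations into the desired $L^{2}(q_t)$ bound. Split $\mathbb{R}^{d^{*}}$ into the core region $\{\|\mathbf{x}\|_{\infty}\le C_0\}$ and its complement. On the core region, use the fact that $q_t$ is bounded above by a universal constant (since $\widehat{p}^{*}_{data}\le C_u$ and $q_t$ is a convolution), together with the quotient-rule identity
\[
\frac{N}{D}-\frac{\widehat N}{\widehat D}\;=\;\frac{N-\widehat N}{D}+\frac{\widehat N}{D\widehat D}(\widehat D-D),
\]
to reduce the $L^{2}$ error of the score to the $L^{\infty}$ errors of the numerator and denominator approximations divided by $\inf q_t$. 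Under Assumption \ref{ass:bounded_density} the convolution $q_t$ is bounded below on the core region by a constant multiple of $C_l$, which together with the $\mathcal{R}^{-\beta}$ B-spline error produces the leading $\mathcal{R}^{-2\beta}/\sigma_t^{2}$ rate (the $\sigma_t^{-2}$ factor comes from the prefactor in the score formula). On the tail region $\{\|\mathbf{x}\|_{\infty}>C_0\}$, Gaussian tail bounds for $q_t$ show that its mass is $\mathcal{O}(\mathcal{R}^{-A})$ for any fixed $A$ by choosing the hidden constant in $C_0$, so the contribution is absorbed into the logarithmic factor.

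The main obstacle will be making the error decomposition uniform in $t\in[\mathcal{R}^{-C_T},1]$. Two competing effects appear: as $t\downarrow \mathcal{R}^{-C_T}$, the Gaussian kernel concentrates and $q_t$ can be very small where $\widehat{p}^{*}_{data}$ vanishes, making $1/q_t$ large in a tail-sensitive way; as $t\uparrow 1$, the factor $1/\sigma_t^{2}$ is bounded but the density $q_t$ loses regularity near the boundary of $[-1,1]^{d^{*}}$, where only Assumption \ref{ass:boundary_smoothness} saves us. Handling both ends simultaneously requires tracking how the B-spline/polynomial-approximation error interacts with the $\sigma_t$-dependent weights in the quotient decomposition, and choosing the neural-network truncation constant $C_0 \asymp \sqrt{\log\mathcal{R}}$ so that the exponential tail of $\Phi_{\sigma_t}$ dominates all polynomial factors in $\mathbf{x}$ uniformly in $t$. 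Once this uniformity is established, integrating against $q_t$ and taking the supremum over $t$ delivers the claimed bound, and the final $L^{\infty}$ estimate $\|\mathbf{s}(t,\cdot)\|_{\infty}\lesssim \sqrt{\log\mathcal{R}}/\sigma_t$ follows from the explicit form of the approximated numerator combined with the lower bound on the approximated denominator on the clipped domain.
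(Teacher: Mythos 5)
Your high-level plan is right: split the score as $\nabla q_t/q_t$, approximate numerator and denominator by local polynomial expansions convolved with the Gaussian kernel, implement the pieces with ReLU sub-networks, and combine via approximate division, with a core/tail spatial split. This is essentially the architecture the paper uses (it works with local Taylor polynomials and also Taylor-expands the Gaussian kernel, so that the diffused local polynomials become exactly integrable polynomials, whereas you use B-splines convolved against the exact kernel, yielding Gaussian-CDF building blocks — both routes are workable). But your plan has two genuine gaps.

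First, your claim that ``$q_t$ is bounded below on the core region by a constant multiple of $C_l$'' is false. By the paper's Lemma G.1, $q_t(\mathbf{x})\gtrsim\exp(-d^*(\|\mathbf{x}\|_\infty-1)_+^2/\sigma_t^2)$, so on the annulus $1<\|\mathbf{x}\|_\infty\le C_0=\mathcal{O}(\sqrt{\log\mathcal{R}})$ the density can be as small as a fixed polynomial power $\mathcal{R}^{-c}$, not $\gtrsim C_l$. Your quotient-rule estimate then divides by $\inf q_t$ over the clipped domain and blows up by a factor $\mathcal{R}^{c}$. The paper's proof of this lemma therefore uses a \emph{three}-way decomposition (tail, low-density, high-density) and, in the low-density annulus, exploits that only $\mathbf{y}$ in the boundary strip $1-a_0<\|\mathbf{y}\|_\infty\le 1$ contribute to the convolution when $\sigma_t$ is small, so the \emph{improved} $\mathcal{R}^{-(3\beta+2)}$ boundary-strip rate from Assumption \ref{ass:boundary_smoothness} exactly cancels the $\mathcal{R}^{4\beta+2}$ amplification from $1/q_t^2$. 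You invoke the boundary smoothness only to say the rate is ``not degraded,'' whereas the argument actually needs the rate to be \emph{strictly better} near the boundary — that is the reason the assumption is there.

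Second, you use a single network construction for all $t\in[\mathcal{R}^{-C_T},1]$. The boundary-strip trick only works for small $t$ (it relies on $\sigma_t\sqrt{\log\mathcal{R}}\lesssim a_0$ so the kernel is concentrated); for $t$ away from $\mathcal{R}^{-C_T}$ the kernel no longer localizes to the boundary strip. The paper handles this by building two networks — one valid on $[\mathcal{R}^{-C_T},3\mathcal{R}^{-C_T}]$ approximating $\widehat{p}_{data}^*$ directly (Lemma \ref{lem:approximate_interval_1}), another on $[2\mathcal{R}^{-C_T},1]$ using the Markov identity $q_t=q_{t_0}*\Phi_{\sigma_{t-t_0}}$ at $t_0=2\mathcal{R}^{-C_T}$ and exploiting the $\mathcal{O}(\sigma_{t_0}^{-k})$ bounds on derivatives of the already-smoothed $q_{t_0}$ to obtain the needed $\mathcal{R}^{-(3\beta+1)}$ $L^2$-accuracy (Lemmas \ref{lem:approximate_pt0}--\ref{lem:approximate_interval_2}) — and then glues them with the ReLU switching sub-network of Lemma \ref{lem:switching}. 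Without this second construction, the uniform-in-$t$ bound you claim at the end is not justified. Your final paragraph acknowledges the tension but proposes to resolve it purely by tuning $C_0$, which cannot fix the intermediate-density region; the time-splitting is essential.
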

Now, we restrict  the ReLU neural network class $\mathrm{NN}(L,M,J,\kappa)$ to 
$$
\begin{aligned}
\mathcal{C}:= \Big\{\mathbf{s}\in &\mathrm{NN}(L,M,J,\kappa) \Big| \Vert\mathbf{s}(t,\cdot)\Vert_\infty \lesssim \frac{\sqrt{\log \mathcal{R}}}{\sigma_t}, \\
& \mathbf{s}(t,\mathbf{x}) = \mathbf{s}(t, \mathbf{s}_{\mathrm{clip}}(\mathbf{x}, -C_0, C_0) ~\text{for}~ \Vert\mathbf{x}\Vert_\infty > C_0, ~\text{where}~ C_0 = \mathcal{O}(\sqrt{\log \mathcal{R}}), \\
& L = \mathcal{O}(\log^4 \mathcal{R}), M = \mathcal{O}(\mathcal{R}^{d^*}\log^7 \mathcal{R}), J = \mathcal{O}(\mathcal{R}^{d^*}\log^9 \mathcal{R}), \kappa = \exp\left(\mathcal{O}(\log^4 \mathcal{R})\right)
\Big\}.
\end{aligned}
$$

%%%%
We note that the method for deriving the upper bound on the approximation error in Lemma \ref{lem:approximation} is inspired by \cite{oko2023diffusion,Fu2024UnveilCD}.
To elaborate, we express the score function  $\nabla\log q_t(\mathbf{x})$  as 
$$
\nabla\log q_t(\mathbf{x}) = \frac{\nabla q_t(\mathbf{x})}{q_t(\mathbf{x})}.
$$
To obtain the approximation, we treat the numerator and denominator separately.  Since the approximation techniques for both terms are analogous, we focus on approximating  $q_t(\mathbf{x})$ as a representative example.   
Further  details on the full approximation procedure are provided in
 Section \ref{sec:ae} of the appendix.
The first step involves truncating the integral domain in
  $q_t(\mathbf{x})$. Recall that
$$
q_t(\mathbf{x}) = \int_{\mathbb{R}^{d^*}}\widehat{p}_{data}^{*}(\mathbf{y})\mathbf{I}_{\{\Vert\mathbf{y}\Vert_{\infty}\leq 1\}}\cdot
\underbrace{\frac{1}{\sigma_t^{d^*}(2\pi)^{d^*/2}}\exp\left(-\frac{\Vert\mathbf{x} - \mathbf{y}\Vert^2}{2\sigma_t^2}\right)}_{\text{Transition Kernel}} \mathrm{d}\mathbf{y}.
$$
There exists a constant 
$C>0$ such that for any $\mathbf{x}\in\mathbb{R}^{d^*}$,
$$
\left|q_t(\mathbf{x}) - \int_{A_{\mathbf{x}}}\widehat{p}_{data}^{*}(\mathbf{y})\mathbf{I}_{\{\Vert\mathbf{y}\Vert_{\infty}\leq 1\}}\cdot
\frac{1}{\sigma_t^{d^*}(2\pi)^{d^*/2}}\exp\left(-\frac{\Vert\mathbf{x} - \mathbf{y}\Vert^2}{2\sigma_t^2}\right) \mathrm{d}\mathbf{y}\right| \lesssim \epsilon,
$$
where $A_{\mathbf{x}} = \prod_{i=1}^{d^*}a_{i,\mathbf{x}}$ with $a_{i,\mathbf{x}} = \left[x_i - C\sigma_t\sqrt{\log\epsilon^{-1}}, x_i + C\sigma_t\sqrt{\log\epsilon^{-1}}\right]$. 
Next, we approximate the integral over  $A_{\mathbf{x}}$.
To do so, we introduce a Taylor polynomial  $f_{Taylor}^{*}(\mathbf{y})$ to  approximate $\widehat{p}_{data}^{*}(\mathbf{y})$.
As a result, $q_t(\mathbf{x})$  is approximated  as  
$$
\int_{A_{\mathbf{x}}}f_{Taylor}^{*}(\mathbf{y})\mathbf{I}_{\{\Vert\mathbf{y}\Vert_{\infty}\leq 1\}}\cdot\frac{1}{\sigma_t^{d^*}(2\pi)^{d^*/2}}\exp
\left(-\frac{\Vert\mathbf{x} - \mathbf{y}\Vert^2}{2\sigma_t^2}\right) \mathrm{d}\mathbf{y}.
$$
In the third step, we approximate the exponential transition kernel
$\frac{1}{\sigma_t^{d^*}(2\pi)^{d^*/2}}\exp\left(-\frac{\Vert\mathbf{x} - \mathbf{y}\Vert^2}{2\sigma_t^2}\right) 
$ using a Taylor polynomial denoted as  $f_{Taylor}^{kernel}(t,\mathbf{x}, \mathbf{y})$. 
This yields the approximation
\begin{equation*}
\int_{A_{\mathbf{x}}} f_{Taylor}^{*}(\mathbf{y}) f_{Taylor}^{kernel}(t,\mathbf{x},\mathbf{y})\mathbf{I}_{\{\Vert\mathbf{y}\Vert_{\infty}\leq 1\}} \mathrm{d}\mathbf{y}.
\end{equation*}
Since the  product $f_{Taylor}^{*}(\mathbf{y})f_{Taylor}^{kernel}(t,\mathbf{x},\mathbf{y})$ is  a polynomial,   its integration can be evaluated explicitly.
In the fourth step, we utilize a deep neural network to approximate the resulting integral.
Integrating these analyses, we can ultimately construct a ReLU neural network that approximates the score function.

%%%%%%%%%%%%%%%%%%%%%%%%
\begin{lemma}[Statistical Error]\label{lem:statistical_error}
Suppose that Assumptions \ref{ass:bounded_density}-\ref{ass:boundary_smoothness} hold. Let $\mathcal{R} \gg 1$, $C_T > 0$,
and $T = 1-\mathcal{R}^{-C_T}$. 
The score estimator $\widehat{\mathbf{s}}$ defined in  \eqref{escore}, with the neural network structures belonging to class $\mathcal{C}$, satisfies

\begin{equation*}
\mathbb{E}_{\widehat{\mathcal{X}},\mathcal{T},\mathcal{Z}}\left(\mathcal{L}(\widehat{\mathbf{s}}) - 2\overline{\mathcal{L}}_{\widehat{\mathcal{X}}}(\widehat{\mathbf{s}}) + \mathcal{L}(\mathbf{s}^{*})\right) \lesssim
\frac{\mathcal{R}^{d^*}\log^{15} \mathcal{R}\left(\log^4 \mathcal{R} + \log n\right)}{n}
\end{equation*}
and
\begin{equation*}
\mathbb{E}_{\widehat{\mathcal{X}},\mathcal{T},\mathcal{Z}}\left(\overline{\mathcal{L}}_{\widehat{\mathcal{X}}}(\widehat{\mathbf{s}}) - \widehat{\mathcal{L}}_{\widehat{\mathcal{X}},\mathcal{T},\mathcal{Z}}(\widehat{\mathbf{s}})\right) \lesssim 
 \mathcal{R}^{C_T}(\log m + \log \mathcal{R}) \cdot \frac{\mathcal{R}^{\frac{d^*}{2}}\log^{\frac{13}{2}}\mathcal{R}(\log^2 \mathcal{R} + \sqrt{\log m})}{\sqrt{m}}.
\end{equation*}
\end{lemma}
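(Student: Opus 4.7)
My plan is to derive both inequalities via standard empirical-process arguments adapted to the time-dependent denoising score-matching loss and the ReLU class $\mathcal{C}$ constructed in Lemma \ref{lem:approximation}. Two preparatory ingredients will be used throughout. First, the bound $\|\mathbf{s}(1-t,\cdot)\|_\infty \lesssim \sqrt{\log\mathcal{R}}/(\sigma\sqrt{1-t})$ from Lemma \ref{lem:approximation}, together with a Gaussian-tail truncation restricting attention to $\|\mathbf{z}\|\lesssim\sqrt{\log\mathcal{R}}$ at a polynomially small cost in $\mathcal{R}$, makes the squared integrand in $\ell_{\mathbf{s}}$ and $\widehat{\ell}_{\mathbf{s}}$ uniformly bounded by $\widetilde{\mathcal{O}}(\mathcal{R}^{C_T})$ for $t\in[0,T]$, and by $\widetilde{\mathcal{O}}(1)$ once integrated in $t$, using $\int_{0}^{T}(1-t)^{-1}\mathrm{d}t=C_T\log\mathcal{R}$. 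Second, the standard covering-number bound for ReLU networks gives $\log\mathcal{N}(\delta,\mathcal{C},\|\cdot\|_\infty)\lesssim J\log(LM\kappa/\delta)\lesssim \mathcal{R}^{d^*}\log^{13}\mathcal{R}\cdot\log(1/\delta)$, which transfers through the Lipschitz structure of the squared loss into an analogous covering-number bound on the induced class $\{\ell_{\mathbf{s}}:\mathbf{s}\in\mathcal{C}\}$.

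For the first inequality, I would rewrite the target quantity as an offset empirical process. Define $g_{\mathbf{s}}(\widehat{\mathbf{x}}):=\ell_{\mathbf{s}}(\widehat{\mathbf{x}})-\ell_{\mathbf{s}^*}(\widehat{\mathbf{x}})$; then by the denoising score-matching identity, $\mathbb{E}[g_{\mathbf{s}}]=\mathcal{L}(\mathbf{s})-\mathcal{L}(\mathbf{s}^*)\geq 0$, and taking expectation over $\widehat{\mathcal{X}}$ shows that the left-hand side equals $\mathbb{E}_{\widehat{\mathcal{X}}}[\mathbb{E}g_{\widehat{\mathbf{s}}}-2\overline{g}_{\widehat{\mathbf{s}}}]$, where $\overline{g}$ is the data average. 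Since each $g_{\mathbf{s}}$ is uniformly bounded by $B=\widetilde{\mathcal{O}}(\mathrm{polylog}(\mathcal{R}))$ after integrating the time variable, and satisfies a variance bound of the form $\mathrm{Var}(g_{\mathbf{s}})\leq B\cdot\mathbb{E}[g_{\mathbf{s}}]$ coming from the Bregman structure of the squared loss, applying Bernstein's inequality on a $\delta$-cover of $\mathcal{C}$ together with a peeling argument in $\mathbb{E}[g_{\mathbf{s}}]$ yields the fast-rate bound $\widetilde{\mathcal{O}}\bigl(\mathcal{R}^{d^*}\log^{15}\mathcal{R}(\log^4\mathcal{R}+\log n)/n\bigr)$. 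The $\log n$ summand comes from taking $\delta=1/n$ in the cover, and $\log^4\mathcal{R}$ records the additional $\log\kappa=\mathcal{O}(\log^4\mathcal{R})$ contribution that appears because the Lipschitz constant of the induced loss class inherits $\kappa$.

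For the second inequality, I would condition on $\widehat{\mathcal{X}}$ and view $\overline{\mathcal{L}}_{\widehat{\mathcal{X}}}(\widehat{\mathbf{s}})-\widehat{\mathcal{L}}_{\widehat{\mathcal{X}},\mathcal{T},\mathcal{Z}}(\widehat{\mathbf{s}})$ as a Monte Carlo error over the $m$ i.i.d.\ pairs $(t_j,\mathbf{z}_j)$. For each fixed $\widehat{\mathbf{x}}_i$ and $\mathbf{s}\in\mathcal{C}$, the summand is bounded uniformly by $\widetilde{\mathcal{O}}(\mathcal{R}^{C_T}\log\mathcal{R})$ after the Gaussian truncation (driven by worst-case $1/(1-t_j)$ and $\|\mathbf{z}_j\|^2$). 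A uniform Hoeffding bound over a $\delta$-cover of $\mathcal{C}$, together with the standard chaining residual, produces
\begin{equation*}
\sup_{\mathbf{s}\in\mathcal{C}}\left|\ell_{\mathbf{s}}(\widehat{\mathbf{x}})-\widehat{\ell}_{\mathbf{s}}(\widehat{\mathbf{x}})\right|
\lesssim \mathcal{R}^{C_T}\log\mathcal{R}\cdot\frac{\sqrt{\log\mathcal{N}(\delta,\mathcal{C},\|\cdot\|_\infty)}+\sqrt{\log m}}{\sqrt{m}}+\delta,
\end{equation*}
with the optimal choice $\delta\sim 1/m$. Plugging in the covering-number estimate and averaging over $\widehat{\mathcal{X}}$ then yields the claimed rate $\widetilde{\mathcal{O}}(\mathcal{R}^{C_T}\mathcal{R}^{d^*/2}\log^{13/2}\mathcal{R}(\log^2\mathcal{R}+\sqrt{\log m})/\sqrt{m})$.

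The main obstacle, in my view, is coordinating the $1/(1-t)$ blow-up near $t=T$ with the variance and uniform bounds needed in both arguments, while translating the Lipschitz constants of $\mathbf{s}\in\mathcal{C}$ in all three arguments $(t,\mathbf{x},\mathbf{z})$ into a clean $L^\infty$ covering number for the induced loss class. The cleanest route is to split $[0,T]$ into a bulk region $[0,1/2]$, where everything is $\widetilde{\mathcal{O}}(1)$, and a boundary region $[1/2,T]$, where the $1/(1-t)$ weight is either integrated out (for the first bound, yielding only a $C_T\log\mathcal{R}$ price) or taken in worst case (for the second bound, giving the $\mathcal{R}^{C_T}$ prefactor). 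Propagating the network weight bound $\kappa=\exp(\mathcal{O}(\log^4\mathcal{R}))$ through the Lipschitz constants is what inflates the logarithmic exponents to $15$ and $13/2$ respectively; keeping this bookkeeping tight so that the exponents match the claim is the delicate part of the argument.
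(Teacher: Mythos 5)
Your plan matches the paper's proof in all essential respects: for the first inequality you use the same offset/Bregman device (writing the quantity as $\mathbb{E}[\mathbb{E}g_{\widehat{\mathbf{s}}}-2\overline{g}_{\widehat{\mathbf{s}}}]$, invoking the variance-to-mean bound $\mathrm{Var}(g_{\mathbf{s}})\lesssim\log^2\mathcal{R}\cdot\mathbb{E}[g_{\mathbf{s}}]$, and combining Bernstein with a $\delta=1/n$ cover of $\mathcal{C}$), and for the second inequality you condition on $\widehat{\mathcal{X}}$, truncate $\mathbf{z}$ at radius $\mathcal{O}(\sqrt{\log m})$, bound the truncated summand by $\mathcal{O}(\mathcal{R}^{C_T}(\log m+\log\mathcal{R}))$, and apply a uniform Hoeffding bound over a $\delta=1/m$ cover. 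The only cosmetic divergences are that the paper does not actually peel (it gets the fast rate directly from the $\mathbb{E}g-2\overline{g}$ offset together with Bernstein) and its second bound is organized as an explicit three-term truncation decomposition $(A)+(B)+(C)$ with $(C)\leq 0$ rather than a chaining/truncation sketch; neither change affects the argument or the resulting exponents.
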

%}

By combining the approximation and statistical errors presented in Lemmas \ref{lem:approximation}-\ref{lem:statistical_error}, we can now derive the upper bound for the score estimation, as detailed in the following theorem.
%%%%%
\begin{theorem}[Error Bound for Score Estimation]
\label{thm:generalization}
Suppose that Assumptions \ref{ass:bounded_density}-\ref{ass:boundary_smoothness} hold. 
By choosing $\mathcal{R} = \lfloor n^{\frac{1}{d^* + 2\beta}} \rfloor + 1 \lesssim n^{\frac{1}{d^* + 2\beta}}$, $m = n^{\frac{d^* + 8\beta}{d^* + 2\beta}}$,
and $C_T = 2\beta$ in 
Lemmas \ref{lem:approximation}-\ref{lem:statistical_error}, the score estimator $\widehat{\mathbf{s}}$ defined in  \eqref{escore},
with the neural network structure belonging to class $\mathcal{C}$, satisfies 
$$
\mathbb{E}_{\widehat{\mathcal{X}},\mathcal{T},\mathcal{Z}}\left(\frac{1}{T}\int_{0}^{T}\mathbb{E}_{\mathbf{x}_t}\Vert\widehat{\mathbf{s}}(1-t,\mathbf{x}_t) - \nabla\log q_{1-t}(\mathbf{x}_t)\Vert^2 \mathrm{d}t\right) \lesssim n^{-\frac{2\beta}{d^* + 2\beta}} \log^{19}n.
$$
\end{theorem}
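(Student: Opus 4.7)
The plan is to assemble the three error bounds via the oracle inequality that the excerpt already derives just before Lemma \ref{lem:statistical_error}, substitute the stated choices $\mathcal{R}=n^{1/(d^{*}+2\beta)}$, $m=n^{(d^{*}+8\beta)/(d^{*}+2\beta)}$, $C_T=2\beta$, and verify that the approximation error, the sample-based statistical error, and the Monte-Carlo time-averaging error all decay at the common rate $n^{-2\beta/(d^{*}+2\beta)}$ up to polylogarithmic factors, with the largest polylog contribution giving the factor $\log^{19}n$.

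First, note the identity $\mathcal{L}(\widehat{\mathbf{s}}) - \mathcal{L}(\mathbf{s}^{*}) = \frac{1}{T}\int_{0}^{T}\mathbb{E}_{\mathbf{x}_t}\|\widehat{\mathbf{s}}(1-t,\mathbf{x}_t)-\nabla\log q_{1-t}(\mathbf{x}_t)\|^{2}\mathrm{d}t$ used in the excerpt, which reduces the theorem to bounding $\mathbb{E}(\mathcal{L}(\widehat{\mathbf{s}})-\mathcal{L}(\mathbf{s}^{*}))$. Splitting this via the oracle inequality gives three summands:
$$
(I):=\mathbb{E}(\mathcal{L}(\widehat{\mathbf{s}})-2\overline{\mathcal{L}}_{\widehat{\mathcal{X}}}(\widehat{\mathbf{s}})+\mathcal{L}(\mathbf{s}^{*})),\quad (II):=\mathbb{E}(\overline{\mathcal{L}}_{\widehat{\mathcal{X}}}(\widehat{\mathbf{s}})-\widehat{\mathcal{L}}_{\widehat{\mathcal{X}},\mathcal{T},\mathcal{Z}}(\widehat{\mathbf{s}})),\quad (III):=\inf_{\mathbf{s}\in\mathrm{NN}}(\mathcal{L}(\mathbf{s})-\mathcal{L}(\mathbf{s}^{*})).
$$
Terms $(I)$ and $(II)$ are read off directly from Lemma \ref{lem:statistical_error}. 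For $(III)$, I would perform the change of variables $u=1-t$ in the time integral, noting that $\mathbf{x}_{1-u}\sim q_u$ so the integrand matches the hypothesis of Lemma \ref{lem:approximation} pointwise in $u\in[1-T,1]=[\mathcal{R}^{-C_T},1]$. Using the identification $\sigma_u^{2}=u\sigma^{2}$ coming from $q_u=q(\sqrt{u}\sigma,\cdot)$, applying Lemma \ref{lem:approximation} inside the integral yields
$$
(III)\;\lesssim\;\frac{1}{T}\int_{\mathcal{R}^{-C_T}}^{1}\frac{\mathcal{R}^{-2\beta}\log\mathcal{R}}{u\sigma^{2}}\,\mathrm{d}u\;\lesssim\;C_T\,\mathcal{R}^{-2\beta}\log^{2}\mathcal{R},
$$
since $T$ stays bounded away from zero for large $\mathcal{R}$.

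Substituting $\mathcal{R}=n^{1/(d^{*}+2\beta)}$, $m=n^{(d^{*}+8\beta)/(d^{*}+2\beta)}$, $C_T=2\beta$ and using $\log\mathcal{R},\log m\lesssim\log n$, I then check the three contributions one by one. For $(I)$: $\mathcal{R}^{d^{*}}/n=n^{-2\beta/(d^{*}+2\beta)}$, and the factor $\log^{15}\mathcal{R}(\log^{4}\mathcal{R}+\log n)$ is at most $\log^{19}n$. For $(II)$: the exponent of $n$ in $\mathcal{R}^{C_T+d^{*}/2}/\sqrt{m}$ simplifies to $((4\beta+d^{*})-(d^{*}+8\beta))/(2(d^{*}+2\beta))=-2\beta/(d^{*}+2\beta)$, while the log factor is at most $(\log m+\log\mathcal{R})\cdot\log^{13/2}\mathcal{R}\cdot(\log^{2}\mathcal{R}+\sqrt{\log m})\lesssim\log^{19/2}n$. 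For $(III)$: $\mathcal{R}^{-2\beta}=n^{-2\beta/(d^{*}+2\beta)}$ with polylog $\log^{2}n$. Adding the three bounds, the common rate $n^{-2\beta/(d^{*}+2\beta)}$ factors out and the dominant polylog $\log^{19}n$ (originating from $(I)$) survives, yielding the claimed estimate.

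The main obstacle is almost entirely bookkeeping. The only nontrivial step is the approximation-error analysis: one must correctly carry out the substitution $u=1-t$, identify the marginal law of $\mathbf{x}_{1-u}$ with the convolved density $q_u$ appearing in Lemma \ref{lem:approximation}, and then observe that the choice $C_T=2\beta$ is precisely what balances the logarithmic divergence of $\int_{\mathcal{R}^{-C_T}}^{1}\mathrm{d}u/u=C_T\log\mathcal{R}$ against the polynomial decay $\mathcal{R}^{-2\beta}$, so that $(III)$ matches the statistical rate. The remainder of the proof reduces to the algebra of exponents described above, verifying that $\mathcal{R}^{d^{*}}/n$, $\mathcal{R}^{C_T+d^{*}/2}/\sqrt{m}$, and $\mathcal{R}^{-2\beta}$ all equal $n^{-2\beta/(d^{*}+2\beta)}$ under the prescribed choices, and that the worst polylog factor is $\log^{19}n$.
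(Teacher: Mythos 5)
Your proposal is correct and follows essentially the same route as the paper's proof: the same oracle decomposition into two statistical terms plus the approximation term, the same application of Lemma \ref{lem:approximation} under the time integral with the choice $C_T = 2\beta$ giving the $\log^2\mathcal{R}$ factor from $\int_{\mathcal{R}^{-C_T}}^1 u^{-1}\,\mathrm{d}u = C_T\log\mathcal{R}$, the same substitution of parameters into Lemma \ref{lem:statistical_error}, and the same exponent bookkeeping showing all three pieces decay at rate $n^{-2\beta/(d^*+2\beta)}$ with the worst polylog $\log^{19}n$ coming from the first statistical term. The only cosmetic difference is your explicit change of variables $u=1-t$, which the paper performs implicitly.
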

%%%%%%%%%%%%
\begin{remark}
In the theoretical analysis of existing diffusion models, a substantial number of inquiries prominently espouse the $L_2$-accuracy of the score estimator. This is evident in a spectrum of assumptions articulated by the literature, including \cite[Assumption 1]{chen2023improved}, \cite[Assumption H1]{conforti2023score}, \cite[Assumption 2]{lee2022convergence}, \cite[Assumption 1]{lee2023convergence}, \cite[Assumption 1]{benton2023linear}, \cite[Assumption 1]{li2023towards}, and \cite[Assumption 3]{gao2023wasserstein}. Despite this pervasive reliance on such assumptions, there exists a dearth of research endeavors that have undertaken a rigorous theoretical consideration of score estimation.
Noteworthy among the limited investigations in this domain is \cite[Theorem 5]{wang2021deep}, which scrutinizes the convergence consistency of score estimation. Furthermore, both \cite[Theorem 3.1]{oko2023diffusion} and \cite[Theorem 2]{chen2023score} contribute to the field by providing the convergence rate of score estimation.
It is imperative to underscore that \cite{oko2023diffusion,chen2023score} operated  within a continuous-time paradigm, neglecting considerations of time discretization and presuming the feasibility of resolving integrals over the time interval $[0, T]$. In contrast, our results meticulously incorporate considerations of time discretization, offering a convergence rate within the discretized time.
Importantly, by ignoring the logarithmic term, this bound achieves the minimax optimal rate  in nonparametric statistics \cite{stone1980optimal,AB2009}.
\end{remark}
Now, we consider bounding   $\mathbb{E}_{\mathcal{X},\mathcal{Y},\mathcal{T},\mathcal{Z}}[W_2({\widetilde{\pi}_T^L}, \widehat{p}^*_{data})]$. 
We systematically decompose this error term into two distinct components, as outlined by the following inequality:
\begin{align}\label{errdec}
\mathbb{E}_{\mathcal{X},\mathcal{Y},\mathcal{T},\mathcal{Z}}[W_2({\widetilde{\pi}_T^L}, \widehat{p}^*_{data})]
 \leq \mathbb{E}_{\mathcal{X},\mathcal{Y},\mathcal{T},\mathcal{Z}}[W_2({\widetilde{\pi}_T^L}, \pi_T^L)]
  +  \mathbb{E}_{\mathcal{Y}}[W_2(\pi_T^L, \widehat{p}^{*}_{data})].
\end{align}
Subsequently, we proceed to establish individual bounds for the two terms on the right-hand side of \eqref{errdec} in Subsections \ref{bound1}-\ref{bound2}.  
The integration of these two analyses culminates in the oracle inequality for 
$\mathbb{E}_{\mathcal{X},\mathcal{Y}}[W_2({\widetilde{\pi}_T^L}, \widehat{p}_{data}^*)]$,
which is expounded upon in Subsection \ref{sec:oi}. 
%%%%%%%%%%%%%%%%%%%%%%%%%%%%%%%%%%%%%%%%%%
\subsubsection{Bound $\mathbb{E}_{\mathcal{X},\mathcal{Y},\mathcal{T},\mathcal{Z}}[W_2({\widetilde{\pi}_T^L}, \pi_T^L)]$}
\label{bound1}
In this subsection, our aim is to establish an upper bound for $\mathbb{E}_{\mathcal{X},\mathcal{Y},\mathcal{T}, \mathcal{Z}}[W_2({\widetilde{\pi}_T^L}, \pi_T^L)]$. With Theorem \ref{thm:generalization}, for any $\epsilon > 0$, there exists $\Delta_\epsilon > 0$ such that if $\max_{0\leq i \leq K - 1}
(t_{i+1}-t_i) \leq \Delta_\epsilon$,
then
$$
\begin{aligned}
&\mathbb{E}_{\mathcal{X}, \mathcal{Y},\mathcal{T},\mathcal{Z}} \left[\sum_{i=0}^{K-1}\int_{t_i}^{t_{i + 1}} \mathbb{E}_{\mathbf{x}_{t_i}} \Vert \widehat{\mathbf{s}}(1-t_i, \mathbf{x}_{t_i}) - \nabla\log q_{1-t_i}(\mathbf{x}_{t_i})\Vert^2 \mathrm{d}t  \right] \\
& = \mathbb{E}_{\widehat{\mathcal{X}}, \mathcal{T},\mathcal{Z}} \left[\sum_{i=0}^{K-1}\int_{t_i}^{t_{i + 1}} \mathbb{E}_{\mathbf{x}_{t_i}} \Vert \widehat{\mathbf{s}}(1-t_i, \mathbf{x}_{t_i}) - \nabla\log q_{1-t_i}(\mathbf{x}_{t_i})\Vert^2 \mathrm{d}t  \right] \\
& \leq T\cdot \mathbb{E}_{\widehat{\mathcal{X}},\mathcal{T},\mathcal{Z}}\left[\frac{1}{T}\int_{0}^{T}\mathbb{E}_{\mathbf{x}_t}\Vert\widehat{\mathbf{s}}(1-t,\mathbf{x}_t) - \nabla\log q_{1-t}(\mathbf{x}_t)\Vert^2 \mathrm{d}t\right] + \epsilon \lesssim n^{-\frac{2\beta}{d^* + 2\beta}}\log^{19}n + \epsilon.
\end{aligned}
$$
Specially, taking $\epsilon = n^{-\frac{2\beta}{d^* + 2\beta}}\log^{19}n$, there exists $\Delta_\epsilon = \Delta_n$ such that if $\max_{0\leq i \leq K-1}(t_{i+1}-t_i) \leq \Delta_n$, then we have
\begin{equation} \label{eq:discrete_generalization}
\mathbb{E}_{\mathcal{X}, \mathcal{Y},\mathcal{T},\mathcal{Z}} \left[\sum_{i=0}^{K-1}\int_{t_i}^{t_{i + 1}} \mathbb{E}_{\mathbf{x}_{t_i}} \Vert \widehat{\mathbf{s}}(1-t_i, \mathbf{x}_{t_i}) - \nabla\log q_{1-t_i}(\mathbf{x}_{t_i})\Vert^2 \mathrm{d}t  \right] \lesssim n^{-\frac{2\beta}{d^* + 2\beta}}\log^{19}n.
\end{equation}
Using \eqref{eq:discrete_generalization}, we can bound $\mathbb{E}_{\mathcal{X},\mathcal{Y},\mathcal{T},\mathcal{Z}}[W_2(\widetilde{\pi}_T^L,\pi_T^L)]$ as the following theorem.
%%%%%%%%%%%%%
\begin{theorem}\label{thm:sampling_error} 
Suppose that assumptions of Theorem \ref{thm:generalization} hold. By choosing $\max_{0\leq i \leq K-1}
(t_{i+1}-t_i)= \mathcal{O}\left(\min\{\Delta_n, n^{-\frac{6\beta}{d^* + 2\beta}}\}\right)$, then we have
\begin{equation*}
\mathbb{E}_{\mathcal{X},\mathcal{Y},\mathcal{T},\mathcal{Z}}[W_2(\widetilde{\pi}_T^L,\pi_T^L)] \lesssim n^{-\frac{\beta}{d^* + 2\beta}}\log^{\frac{19}{2}}n.
\end{equation*}
\end{theorem}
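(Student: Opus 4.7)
The plan is to establish the bound via a synchronous coupling of the exact sampling SDE~(\ref{sampling_SDE}), whose marginal at time $T$ is $\pi_T$, and the Euler--Maruyama recursion~(\ref{EM_scheme}), whose marginal at time $T$ is $\widetilde{\pi}_T$. Specifically, drive both processes with the same Brownian motion $\mathbf{w}_t$ and the same initial state $\mathbf{x}_0=\widetilde{\mathbf{x}}_0\sim q(\sigma,\cdot)$. Then the definition of Wasserstein distance gives $W_2^2(\pi_T,\widetilde{\pi}_T)\leq \mathbb{E}\|\mathbf{x}_T-\widetilde{\mathbf{x}}_T\|^2$, and the passage to the truncated laws $\pi_T^L,\widetilde{\pi}_T^L$ follows from a tail estimate: the drifts of both processes are bounded on $\{\|\cdot\|_\infty\leq L\}$, $\widehat{p}_{data}^*$ is compactly supported, and the Brownian contribution has bounded second moments, so for $L$ sufficiently large the truncation cost is dominated by the interior Wasserstein error.

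Writing $\bar{s}=t_i$ whenever $s\in[t_i,t_{i+1})$, the coupled difference satisfies
$$\mathbf{x}_t-\widetilde{\mathbf{x}}_t=\sigma^2\int_0^t\bigl[\nabla\log q_{1-s}(\mathbf{x}_s)-\widehat{\mathbf{s}}(1-\bar s,\widetilde{\mathbf{x}}_{\bar s})\bigr]ds,$$
and I would split the bracketed integrand into three pieces: a spatial-regularity piece $\nabla\log q_{1-s}(\mathbf{x}_s)-\nabla\log q_{1-s}(\widetilde{\mathbf{x}}_s)$, a time-discretization piece $\nabla\log q_{1-s}(\widetilde{\mathbf{x}}_s)-\nabla\log q_{1-\bar s}(\widetilde{\mathbf{x}}_{\bar s})$, and a score-estimation piece $\nabla\log q_{1-\bar s}(\widetilde{\mathbf{x}}_{\bar s})-\widehat{\mathbf{s}}(1-\bar s,\widetilde{\mathbf{x}}_{\bar s})$. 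Squaring, taking expectations, and applying Cauchy--Schwarz yields an integral inequality of the shape
$$\mathbb{E}\|\mathbf{x}_t-\widetilde{\mathbf{x}}_t\|^2 \lesssim \int_0^t L_s^2\,\mathbb{E}\|\mathbf{x}_s-\widetilde{\mathbf{x}}_s\|^2 ds + \mathcal{E}_{\mathrm{disc}}(h) + \mathcal{E}_{\mathrm{score}},$$
with $L_s$ the local Lipschitz constant of $\nabla\log q_{1-s}$, $h=\max_i(t_{i+1}-t_i)$, and $\mathcal{E}_{\mathrm{disc}}(h)$ a standard EM remainder controlled by the joint Hölder regularity of the score in $(s,\mathbf{x})$ together with the size of the Brownian increments on subintervals of length $h$. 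The term $\mathcal{E}_{\mathrm{score}}$ is exactly the integrated $L^2$ score error, which Theorem~\ref{thm:generalization} combined with (\ref{eq:discrete_generalization}) bounds by $\mathcal{O}(n^{-2\beta/(d^*+2\beta)}\log^{19}n)$.

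Gronwall's lemma then converts the integral inequality into
$$\mathbb{E}\|\mathbf{x}_T-\widetilde{\mathbf{x}}_T\|^2 \lesssim \exp\!\Bigl(C\!\int_0^T L_s^2 ds\Bigr)\bigl(\mathcal{E}_{\mathrm{disc}}(h)+\mathcal{E}_{\mathrm{score}}\bigr),$$
and the main obstacle is keeping the Gronwall prefactor subpolynomial in $n$, since $L_s$ scales like $1/((1-s)\sigma^2)$ near $s=1$ and naively blows up. Resolving this obstacle exploits the specific truncation $T=1-\mathcal{R}^{-C_T}$ with $C_T=2\beta$ and $\mathcal{R}=\mathcal{O}(n^{1/(d^*+2\beta)})$ together with finer structural properties of the Gaussian-convolved density $q_{1-s}$---in particular one-sided Hessian bounds inherited from the compact support and boundedness of $\widehat{p}_{data}^*$ under Assumptions \ref{ass:bounded_density}--\ref{ass:boundary_smoothness}---which prevent the prefactor from exceeding polylogarithmic order. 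Choosing $h=\mathcal{O}(n^{-6\beta/(d^*+2\beta)})$ ensures $\mathcal{E}_{\mathrm{disc}}(h)\lesssim h\cdot\text{polylog}(n)$ is comfortably dominated by $\mathcal{E}_{\mathrm{score}}$, so taking square roots and combining with the $L$-truncation estimate from the first paragraph delivers the asserted bound $\mathbb{E}[W_2(\widetilde{\pi}_T^L,\pi_T^L)]\lesssim n^{-\beta/(d^*+2\beta)}\log^{19/2}n$.
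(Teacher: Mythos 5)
Your proposal diverges from the paper in a way that introduces a genuine gap. The paper bounds $\mathbb{E}[W_2(\widetilde{\pi}_T^L,\pi_T^L)]$ via $W_2\lesssim\mathrm{TV}$ on the bounded truncated supports and then controls $\mathrm{TV}(\widetilde{\pi}_T,\pi_T)$ by the KL divergence between the two \emph{path measures} using Girsanov's theorem (Lemma \ref{lem:Girsanov}). Girsanov gives $\mathrm{KL}(\mathbb{P}\,|\,\widetilde{\mathbb{P}})=\frac{1}{2\sigma^2}\mathbb{E}_{\mathbb{P}}\int_0^T\|\text{drift difference}\|^2\,\mathrm{d}t$ \emph{without any multiplicative prefactor}; the integrand is then split into score-error, spatial-discretization, and time-discretization pieces, but crucially all three are evaluated along the law $\mathbb{P}$ of the exact process $\mathbf{x}_t$ alone. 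There is no coupling and no Gr\"onwall step. This is what makes the argument close.

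Your synchronous-coupling plan instead needs to control $\mathbb{E}\|\mathbf{x}_t-\widetilde{\mathbf{x}}_t\|^2$ through a Gr\"onwall inequality, and this is where the argument breaks. Under Lemma \ref{lem:derivatives_boundness}, the Lipschitz (or even one-sided Lipschitz) constant of $\nabla\log q_{1-s}$ on the relevant region scales like $K_s\sim 1/\sigma_{1-s}^2\sim 1/((1-s)\sigma^2)$. With the quadratic coefficient you wrote, $\int_0^T K_s^2\,\mathrm{d}s\sim(1-T)^{-1}=\mathcal{R}^{C_T}\sim n^{2\beta/(d^*+2\beta)}$, so the Gr\"onwall prefactor $\exp\bigl(C\int_0^T K_s^2\,\mathrm{d}s\bigr)$ is $\exp(n^{2\beta/(d^*+2\beta)})$, which is catastrophic. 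Even the optimized linear version using a one-sided Hessian bound only gives $\int_0^T K_s\,\mathrm{d}s\sim\sigma^{-2}\log\frac{1}{1-T}\sim\frac{2\beta}{(d^*+2\beta)\sigma^2}\log n$, so the prefactor is $n^{2\beta/((d^*+2\beta)\sigma^2)}$ --- a genuine \emph{polynomial} factor whose exponent is not controlled to be smaller than the target rate $\frac{\beta}{d^*+2\beta}$ without additional assumptions on $\sigma$, which the paper does not make. Your claim that ``one-sided Hessian bounds \ldots prevent the prefactor from exceeding polylogarithmic order'' is therefore unsubstantiated and, under Assumptions \ref{ass:bounded_density}--\ref{ass:boundary_smoothness}, false: there is no uniform dissipativity for the Gaussian-convolved density of a compactly supported distribution that would yield a polylogarithmic prefactor, and the Hessian bound available (eq.\ \eqref{eq:derivative_x_score}) is exactly the one that produces the polynomial factor. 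The Girsanov/KL route is not just a stylistic choice here --- it is what removes the exponential penalty entirely, and you would need a different mechanism (e.g., the paper's) to close the argument. A secondary, more minor issue is your truncation step: $W_2(\pi_T^L,\widetilde{\pi}_T^L)$ does not follow from $W_2(\pi_T,\widetilde{\pi}_T)$ plus a tail bound in the direction you sketch; the paper instead uses $W_2(\pi_T^L,\widetilde{\pi}_T^L)\lesssim\mathrm{TV}(\pi_T^L,\widetilde{\pi}_T^L)\lesssim\mathrm{TV}(\pi_T,\widetilde{\pi}_T)$ where the last step is data processing for the truncation map.
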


%%%%%%%%%%%%%%%%%%%%%%%%%%%%%%%%%%%%%%%%%%%%%%%%

\subsubsection{Bound $\mathbb{E}_{\mathcal{Y}}[W_2(\pi_T^L, {\widehat{p}}^{*}_{data})]$}
\label{bound2}
In this subsection, we present the upper bound for $\mathbb{E}_{\mathcal{Y}}[W_2(\pi_T^L, {\widehat{p}}^{*}_{data})]$. This error originates from the process of early stopping. We have the following lemma.
\begin{lemma}\label{lem:early_stopping}
Let $\mathcal{R} = \lfloor n^{\frac{1}{d^* + 2\beta}} \rfloor + 1 \lesssim n^{\frac{1}{d^* + 2\beta}}$, $T = 1-\mathcal{R}^{-C_T}$,
and $C_T = 2\beta$, we have
\begin{equation*}
    \mathbb{E}_{\mathcal{Y}}[W_2(\pi_T^L, {\widehat{p}}^{*}_{data})]\lesssim n^{-\frac{\beta}{d^* + 2\beta}}.
\end{equation*}
\end{lemma}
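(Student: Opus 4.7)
The plan is to build an explicit coupling between $\pi_T^L$ and $\widehat{p}^{*}_{data}$ directly from the Schrödinger bridge construction, and then compute the Wasserstein distance from it. By Proposition \ref{th02} together with the SDE \eqref{model} and the definition \eqref{aaa}, the marginal law of $\mathbf{x}_t$ along the forward process is $q_{1-t}$, i.e.\ the convolution of $\widehat{p}^{*}_{data}$ with $\mathcal{N}(\mathbf{0},(1-t)\sigma^{2}\mathbf{I}_{d^{*}})$. In particular, at the early-stopping time $T=1-\mathcal{R}^{-C_T}$ we have the distributional identity $\mathbf{x}_T \stackrel{d}{=} \mathbf{y}+\sqrt{1-T}\,\sigma\,\mathbf{z}$ with $\mathbf{y}\sim\widehat{p}^{*}_{data}$ and $\mathbf{z}\sim\mathcal{N}(\mathbf{0},\mathbf{I}_{d^{*}})$ independent. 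I would take this joint law, composed with the map $(\mathbf{x}_T,\mathbf{y})\mapsto(\mathbf{x}_T\mathbf{I}_{\{\Vert\mathbf{x}_T\Vert_\infty\leq L\}},\mathbf{y})$, as a feasible element of $\mathcal{D}(\pi_T^L,\widehat{p}^{*}_{data})$ in the definition of $W_2$.

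Under this coupling, bounding $W_2^2$ by the transport cost and splitting according to the truncation indicator gives
\begin{equation*}
W_2^2(\pi_T^L,\widehat{p}^{*}_{data})\leq \mathbb{E}\bigl[\Vert\sqrt{1-T}\,\sigma\,\mathbf{z}\Vert^{2}\mathbf{I}_{\{\Vert\mathbf{x}_T\Vert_\infty\leq L\}}\bigr]+\mathbb{E}\bigl[\Vert\mathbf{y}\Vert^{2}\mathbf{I}_{\{\Vert\mathbf{x}_T\Vert_\infty>L\}}\bigr].
\end{equation*}
The first summand is controlled by $(1-T)\sigma^{2}d^{*}=\sigma^{2}d^{*}\mathcal{R}^{-C_T}$. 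Inserting $C_T=2\beta$ and $\mathcal{R}\asymp n^{1/(d^{*}+2\beta)}$ produces $\sigma^{2}d^{*}\,n^{-2\beta/(d^{*}+2\beta)}$, whose square root already matches the target rate $n^{-\beta/(d^{*}+2\beta)}$.

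For the residual tail term, Assumption \ref{ass:bounded_density} gives $\Vert\mathbf{y}\Vert^{2}\leq d^{*}$, while the triangle inequality $\Vert\mathbf{x}_T\Vert_\infty\leq 1+\sqrt{1-T}\,\sigma\,\Vert\mathbf{z}\Vert_\infty$ shows $\{\Vert\mathbf{x}_T\Vert_\infty>L\}\subseteq\{\Vert\mathbf{z}\Vert_\infty>(L-1)/(\sqrt{1-T}\,\sigma)\}$. A union bound together with the standard univariate Gaussian tail estimate makes this probability at most $2d^{*}\exp\bigl(-(L-1)^{2}/(2(1-T)\sigma^{2})\bigr)$, and the choice of $L$ as a fixed constant greater than $1$ (or $L=\mathcal{O}(\sqrt{\log n})$) makes the entire term $o(n^{-2\beta/(d^{*}+2\beta)})$, dominated by the first summand. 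Since every bound is pathwise in $\mathcal{Y}$, taking square roots and then the expectation $\mathbb{E}_{\mathcal{Y}}$ closes the argument. The only genuinely nontrivial step is verifying that $\mathbf{x}_T\sim q_{1-T}$ for the forward SDE \eqref{model}; I would obtain this by matching its Fokker–Planck equation with the forward heat equation for $f_t$ in the Schrödinger system of Section \ref{sec:prel}, after which the remainder reduces to an elementary Gaussian-tail calculation.
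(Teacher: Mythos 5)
Your proof is correct and reaches the same bound, but takes a slightly more direct route than the paper. The paper first splits via the triangle inequality $W_2(\pi_T^L, \widehat{p}^{*}_{data})\leq W_2(\pi_T^L,\pi_T)+W_2(\pi_T,\widehat{p}^{*}_{data})$, then handles the truncation term $W_2(\pi_T^L,\pi_T)$ via Cauchy–Schwarz and a fourth-moment bound, using $e^{-x}\leq 1/x$ to convert the Gaussian tail into a factor of $\sqrt{1-T}$, and handles the remaining term via the auxiliary forward process $\bar{\mathbf{x}}_t=\bar{\mathbf{x}}_0+\sigma\bar{\mathbf{w}}_t$ with $\bar{\mathbf{x}}_{1-t}\stackrel{d}{=}\mathbf{x}_t$. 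You instead construct a single explicit coupling $(\mathbf{x}_T\mathbf{I}_{\{\Vert\mathbf{x}_T\Vert_\infty\leq L\}},\mathbf{y})$ from the same distributional identity $\mathbf{x}_T\stackrel{d}{=}\mathbf{y}+\sqrt{1-T}\,\sigma\,\mathbf{z}$, and split the transport cost on the truncation event. What your approach buys is a cleaner one-shot argument: you avoid the fourth-moment estimate and the $e^{-x}\leq 1/x$ step entirely, since $\mathbb{E}\Vert\sqrt{1-T}\sigma\mathbf{z}\Vert^2=(1-T)\sigma^2 d^*$ gives the rate outright and the tail term is super-polynomially small for any fixed $L>1$. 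Both proofs hinge on the same nontrivial fact — that the forward-SDE marginal at time $t$ is $q_{1-t}$, i.e.\ $\widehat{p}^*_{data}$ convolved with $\mathcal{N}(\mathbf{0},(1-t)\sigma^2\mathbf{I}_{d^*})$ — which the paper realizes through the reverse-process identification while you propose to verify it via the Fokker–Planck/heat-equation matching; either verification is fine and the paper uses this equivalence in its own proof as well.
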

%%%%%%%%%%%%%%%%%%%%%%%%%%%%%%
\subsubsection{ 
Bound 
$\mathbb{E}_{\mathcal{X},\mathcal{Y}, \mathcal{T}, \mathcal{Z}}[W_2({\widetilde{\pi}_T^L}, {\widehat{p}}^{*}_{data})]
$
%Oracle Inequality
}\label{sec:oi}
By amalgamating the analyses articulated in Theorem \ref{thm:sampling_error} 
and Lemma \ref{lem:early_stopping},  we derive  an upper bound for
$\mathbb{E}_{\mathcal{X},\mathcal{Y}, \mathcal{T}, \mathcal{Z}}[W_2({\widetilde{\pi}_T^L}, {\widehat{p}}^{*}_{data})]$
through the meticulous determination of a suitable score neural network  and the optimal stopping time 
$T$.  The  end-to-end convergence rate is shown in the following theorem.
%%%%%%%%%%%%%%%%
\begin{theorem} \label{thm:oracle_inequality}
Suppose that Assumptions \ref{ass:bounded_density}-\ref{ass:boundary_smoothness} hold, 
and the score estimator $\widehat{\mathbf{s}}$  defined in \eqref{escore} is structured as introduced in Theorem 
\ref{thm:generalization}.
By choosing $\mathcal{R}$, $m$, $C_T$, $\max_{0\leq i \leq K-1}(t_{i+1}-t_i)$ introduced in Theorem \ref{thm:generalization} and Theorem \ref{thm:sampling_error},
we have
\begin{equation*}
\mathbb{E}_{\mathcal{X},\mathcal{Y}, \mathcal{T}, \mathcal{Z}}[W_2({\widetilde{\pi}_T^L}, {\widehat{p}}^{*}_{data})] \lesssim n^{-\frac{\beta}{d^* + 2\beta}} \log^{\frac{19}{2}}n.
\end{equation*}
\end{theorem}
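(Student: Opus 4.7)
The proof is essentially a synthesis step: all the heavy lifting (score approximation, statistical error, SDE discretization, and early stopping) has been packaged into Theorem \ref{thm:sampling_error} and Lemma \ref{lem:early_stopping}, so the plan is to feed both bounds into the triangle-type decomposition \eqref{errdec} and verify that the dominant term controls the sum.

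The plan is to start from the decomposition already laid out in \eqref{errdec}, namely
\begin{equation*}
\mathbb{E}_{\mathcal{X},\mathcal{Y},\mathcal{T},\mathcal{Z}}[W_2({\widetilde{\pi}_T^L}, \widehat{p}^*_{data})]
 \leq \mathbb{E}_{\mathcal{X},\mathcal{Y},\mathcal{T},\mathcal{Z}}[W_2({\widetilde{\pi}_T^L}, \pi_T^L)]
  +  \mathbb{E}_{\mathcal{Y}}[W_2(\pi_T^L, \widehat{p}^{*}_{data})],
\end{equation*}
which follows from the triangle inequality for $W_2$ together with the tower property of expectation. The left term captures the error accumulated from score estimation and EM discretization along $[0,T]$, while the right term captures the bias introduced by stopping the Schr\"odinger bridge SDE at time $T<1$ instead of $t=1$.

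Next, I would invoke the parameter choices $\mathcal{R} = \lfloor n^{1/(d^*+2\beta)}\rfloor + 1$, $m = n^{(d^*+8\beta)/(d^*+2\beta)}$, $C_T = 2\beta$, together with the step size $\max_i(t_{i+1}-t_i) = \mathcal{O}(\min\{\Delta_n, n^{-6\beta/(d^*+2\beta)}\})$ specified in Theorem \ref{thm:generalization} and Theorem \ref{thm:sampling_error}. Under these choices, Theorem \ref{thm:sampling_error} directly yields
\begin{equation*}
\mathbb{E}_{\mathcal{X},\mathcal{Y},\mathcal{T},\mathcal{Z}}[W_2(\widetilde{\pi}_T^L,\pi_T^L)] \lesssim n^{-\frac{\beta}{d^*+2\beta}}\log^{\frac{19}{2}}n,
\end{equation*}
and Lemma \ref{lem:early_stopping} gives
\begin{equation*}
\mathbb{E}_{\mathcal{Y}}[W_2(\pi_T^L, \widehat{p}^*_{data})] \lesssim n^{-\frac{\beta}{d^*+2\beta}}.
\end{equation*}
Note that the choice $T = 1 - \mathcal{R}^{-C_T} = 1 - n^{-2\beta/(d^*+2\beta)}$ is consistent across both estimates, so the two bounds can be combined without any tension in the parameter selection.

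Summing the two terms and absorbing the second into the first (since the sampling bound already carries the polylogarithmic factor $\log^{19/2}n$) produces the claimed rate. The main obstacle in carrying this out is merely a bookkeeping one, namely checking that the various parameter choices---$\mathcal{R}$, $m$, $C_T$, $T$, and the step size---are simultaneously admissible in Theorems \ref{thm:generalization} and \ref{thm:sampling_error} and in Lemma \ref{lem:early_stopping}; no new probabilistic or analytic estimate is required, and the $\log^{19/2}n$ factor appears only through the score-estimation statistical error of Theorem \ref{thm:generalization}, not through the early-stopping bias.
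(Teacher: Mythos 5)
Your proof is correct and matches the paper's argument exactly: both apply the triangle-inequality decomposition in \eqref{errdec}, then substitute the bounds from Theorem \ref{thm:sampling_error} and Lemma \ref{lem:early_stopping} and absorb the lower-order early-stopping term into the dominant sampling-error term carrying $\log^{19/2}n$.
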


\begin{remark}
Theorem \ref{thm:oracle_inequality} rigorously establishes the end-to-end theoretical guarantees for the  diffusion model within latent space. This is achieved through the seamless integration of score estimation, EM discretization, and early stop techniques. 
Furthermore, this analysis is independent  of the pre-training theory, rendering our theoretical framework applicable to  other diffusion models, irrespective of considerations regarding pre-training encoder-decoder structures. This characteristic underscores the independent validity of our theoretical framework. 
Importantly, the convergence rate
$n^{-\frac{\beta}{d^* + 2\beta}}\log^{\frac{19}{2}}n$ in Theorem \ref{thm:oracle_inequality} mitigates the curse of dimensionality of raw data, primarily attributed to the implementation of the encoder component.
Additionally, this rate is minimax optimal by ignoring the logarithmic factor.
\end{remark}

%%%%%%%%%%%%%%%%%%%%%%%%%%%%%%%%
\subsection{Main Result}\label{sec:eb_mr}
In this section, we present our fundamental theoretical findings, elucidating  
end-to-end convergence rates. To facilitate our analysis, we introduce Assumption \ref{ass: distribution drift} to characterize the distribution shift between the pre-training and target distributions. Specifically, in Assumption \ref{ass: distribution drift}, $\epsilon_{p_{data}, \widetilde{p}_{data}}=0$ denotes the absence of any shift.

\begin{assumption}\label{ass: distribution drift} There exists $\epsilon_{p_{data}, \widetilde{p}_{data}} \geq 0$ such that  $W_2(p_{data}, \widetilde{p}_{data})\leq\epsilon_{p_{data}, \widetilde{p}_{data}}$.
\end{assumption}
Then, by combining Theorem \ref{thm:pretraining}, Theorem \ref{thm:oracle_inequality} and Assumption \ref{ass: distribution drift}, we obtain our main result.
\begin{theorem}\label{thm:main_result}

Suppose that Assumptions \ref{ass:bounded_support}-\ref{ass: distribution drift} hold, and the score estimator $\widehat{\mathbf{s}}$  defined in \eqref{escore} is structured as introduced in Theorem 
\ref{thm:generalization}.
By choosing $\mathcal{R}$, $m$, $C_T$,
and $\max_{0\leq i \leq K-1}(t_{i+1}-t_i)$ introduced in Theorem \ref{thm:generalization} and Theorem \ref{thm:sampling_error}, we have
\begin{equation*}
\mathbb{E}_{\mathcal{X},\mathcal{Y},\mathcal{T},\mathcal{Z}} [W_2(\widehat{\boldsymbol{D}}_{\#}\widetilde{\pi}_T^L, p_{data})]
= \widetilde{\mathcal{O}}\left(
n^{-\frac{\beta}{d^* + 2\beta}}+ \epsilon_{p_{data},\widetilde{p}_{data}} + \mathcal{M}^{-\frac{1}{2(d + 2)}} + \delta_0^{1/2}
\right),
\end{equation*}
where $\widehat{\boldsymbol{D}}_{\#}\widetilde{\pi}_T^L$ denotes the distribution of
$\widehat{\boldsymbol{D}}(\widetilde{\mathbf{x}}_K)$ in Algorithm \ref{sampling_algorithm}.
Moreover, if $\mathcal{M}>n^{\frac{2\beta(d+2)}{d^* + 2\beta}} $, then
$$
\mathbb{E}_{\mathcal{X},\mathcal{Y},\mathcal{T}, \mathcal{Z}} [W_2(\widehat{\boldsymbol{D}}_{\#}\widetilde{\pi}_T^L, p_{data})] = \widetilde{\mathcal{O}}\left(
n^{-\frac{\beta}{d^{*} + 2\beta}} +  \epsilon_{p_{data},\widetilde{p}_{data}} + \delta_0^{1/2}
\right).
$$ 
\end{theorem}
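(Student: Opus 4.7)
The plan is to prove Theorem 4.4 by a triangle-inequality decomposition of $W_2(\widehat{\boldsymbol{D}}_{\#}\widetilde{\pi}_T^L, p_{data})$ through the intermediate distribution $\widehat{\boldsymbol{D}}_{\#}\widehat{p}^{*}_{data}$, which is the distribution of the reconstructed target sample $(\widehat{\boldsymbol{D}}\circ\widehat{\boldsymbol{E}})(\mathbf{x})$ with $\mathbf{x}\sim p_{data}$. Specifically, I write
\begin{equation*}
W_2(\widehat{\boldsymbol{D}}_{\#}\widetilde{\pi}_T^L, p_{data}) \leq W_2(\widehat{\boldsymbol{D}}_{\#}\widetilde{\pi}_T^L, \widehat{\boldsymbol{D}}_{\#}\widehat{p}^{*}_{data}) + W_2(\widehat{\boldsymbol{D}}_{\#}\widehat{p}^{*}_{data}, p_{data}),
\end{equation*}
and bound the two pieces separately; the latent-space diffusion error feeds only the first term, while the pre-training error and distributional shift feed the second.

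For the first term, I invoke the Lipschitz property of the decoder estimator $\widehat{\boldsymbol{D}}$ established in Lemma \ref{lem:pretraining_approximation_error} with constant $\gamma_{\boldsymbol{D}}=10d^*\xi_{\boldsymbol{D}}=\mathcal{O}(1)$. Since $W_2$ contracts under a Lipschitz pushforward, we have $W_2(\widehat{\boldsymbol{D}}_{\#}\widetilde{\pi}_T^L, \widehat{\boldsymbol{D}}_{\#}\widehat{p}^{*}_{data})\leq \gamma_{\boldsymbol{D}} W_2(\widetilde{\pi}_T^L, \widehat{p}^{*}_{data})$. Taking expectations and invoking Theorem \ref{thm:oracle_inequality} directly yields an $\widetilde{\mathcal{O}}(n^{-\beta/(d^*+2\beta)})$ contribution.

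For the second term, I use the identity coupling, which gives $W_2(\widehat{\boldsymbol{D}}_{\#}\widehat{p}^{*}_{data}, p_{data})^2 \leq \mathbb{E}_{\mathbf{x}\sim p_{data}}\|(\widehat{\boldsymbol{D}}\circ\widehat{\boldsymbol{E}})(\mathbf{x}) - \mathbf{x}\|^2$. The obstacle here is that Theorem \ref{thm:pretraining} controls this reconstruction loss under $\widetilde{p}_{data}$, not under $p_{data}$. To transfer the bound, I let $\pi$ be an optimal $W_2$-coupling of $(p_{data},\widetilde{p}_{data})$ and denote $T = \widehat{\boldsymbol{D}}\circ\widehat{\boldsymbol{E}}$, which is Lipschitz with constant $\gamma_T = \gamma_{\boldsymbol{D}}\gamma_{\boldsymbol{E}} = \mathcal{O}(1)$. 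Then for $(\mathbf{x},\my)\sim\pi$,
\begin{equation*}
\|T(\mathbf{x}) - \mathbf{x}\| \leq \|T(\mathbf{x}) - T(\my)\| + \|T(\my)-\my\| + \|\my-\mathbf{x}\| \leq (\gamma_T + 1)\|\mathbf{x}-\my\| + \|T(\my)-\my\|.
\end{equation*}
Squaring via $(a+b)^2\leq 2a^2+2b^2$ and averaging under $\pi$ gives $\mathbb{E}_{\mathbf{x}\sim p_{data}}\|T(\mathbf{x})-\mathbf{x}\|^2 \leq 2(\gamma_T+1)^2 W_2^2(p_{data},\widetilde{p}_{data}) + 2\mathbb{E}_{\my\sim\widetilde{p}_{data}}\|T(\my)-\my\|^2$. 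Applying Assumption \ref{ass: distribution drift}, taking square roots, expectations over $\mathcal{Y}$, and using Jensen's inequality with Theorem \ref{thm:pretraining} produces the bound $\widetilde{\mathcal{O}}(\epsilon_{p_{data},\widetilde{p}_{data}} + \mathcal{M}^{-1/(2(d+2))} + \delta_0^{1/2})$.

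Combining the two bounds establishes the first claim, and the refined statement under $\mathcal{M}>n^{2\beta(d+2)/(d^*+2\beta)}$ follows because this condition forces $\mathcal{M}^{-1/(2(d+2))} = \mathcal{O}(n^{-\beta/(d^*+2\beta)})$, absorbing the pre-training term into the diffusion term up to logarithmic factors. The main obstacle is the cross-distribution transfer in the second step, which is handled cleanly because the Lipschitz constants $\gamma_{\boldsymbol{D}},\gamma_{\boldsymbol{E}}$ from Lemma \ref{lem:pretraining_approximation_error} are dimension-dependent but independent of the sample sizes $n$ and $\mathcal{M}$, so they do not corrupt the convergence rate.
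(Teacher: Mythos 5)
Your proof is correct and follows essentially the same route as the paper. You decompose $W_2(\widehat{\boldsymbol{D}}_{\#}\widetilde{\pi}_T^L, p_{data})$ through the intermediate measure $\widehat{\boldsymbol{D}}_{\#}\widehat{p}^*_{data} = (\widehat{\boldsymbol{D}}\circ\widehat{\boldsymbol{E}})_{\#}p_{data}$, use the Lipschitz contraction of $W_2$ under pushforward by $\widehat{\boldsymbol{D}}$ together with Theorem \ref{thm:oracle_inequality} for the first piece, and transfer the reconstruction loss from $\widetilde{p}_{data}$ to $p_{data}$ via an optimal coupling for the second piece; this is exactly the content of the paper's four-term $W_2$ triangle decomposition (the paper chains $\widehat{\boldsymbol{D}}_\#\widetilde{\pi}_T^L \to (\widehat{\boldsymbol{D}} \circ \widehat{\boldsymbol{E}})_\#p_{data} \to (\widehat{\boldsymbol{D}} \circ \widehat{\boldsymbol{E}})_\#\widetilde{p}_{data} \to \widetilde{p}_{data} \to p_{data}$ while you fold the last three links into a single pointwise triangle inequality under the optimal coupling, incurring a harmless $\sqrt{2}$ factor from $(a+b)^2 \leq 2a^2+2b^2$), with Jensen applied identically to take the square root outside the expectation over $\mathcal{Y}$.
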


\begin{remark}
In Theorem \ref{thm:main_result}, 
the convergence rates of $\widehat{\boldsymbol{D}}_{\#}\widetilde{\pi}_T^L$ are established by simultaneously considering pre-training encoder-decoder structures  and latent diffusion models. This represents an end-to-end result.
Recall that $\mathcal{M}$ represents the sample size of pre-training. This term can be chosen sufficiently large, as the pre-training phase can be manipulated separately by resorting to 
pre-existing large-scale models.
Therefore, we can set $\mathcal{M}>n^{\frac{2\beta(d+2)}{d^*+2\beta}} $ to obtain the convergence rate of 
$\widetilde{\mathcal{O}}\left(
n^{-\frac{\beta}{d^{*} + 2\beta}} +  \epsilon_{p_{data},\widetilde{p}_{data}} + \delta_0^{1/2}
\right)$.
Additionally, if the distribution shift 
$\epsilon_{p_{data},\widetilde{p}_{data}} $
and pre-training quantity
$\delta_0^{1/2}$ are both smaller than $n^{-\frac{\beta}{d^{*} + 2\beta}}$,
then the convergence rate reduces to
the optimal order $\widetilde{\mathcal{O}}(
n^{-\frac{\beta}{d^{*} + 2\beta}})$. 
This finding circumvents the curse of dimensionality inherent in raw data, thereby demonstrating the effectiveness of  latent diffusion models from a theoretical perspective.
\end{remark}
%%%%%%%%%%%%%%%%%%%%%%%%%%%%%%%%%%%%
\begin{remark}
In Theorem \ref{thm:main_result}, 
the theoretical analysis framework is universally applicable to the broader spectrum of general diffusion models, thereby asserting its independent validity. In the existing theoretical analyses of diffusion models, \cite{chen2023improved,conforti2023score,lee2022convergence,lee2023convergence,benton2023linear,li2023towards,gao2023wasserstein} furnish theoretical guarantees  concerning total Variation, Wasserstein distance, or KL divergence, contingent upon predetermined bounds on score estimation errors.
Within the  end-to-end convergence analyses, \cite{wang2021deep} only delivered a  convergence, while both \cite{oko2023diffusion} and \cite{chen2023score}  delineated convergence rates. 
Furthermore, \cite{oko2023diffusion} and \cite{chen2023score}  mitigated the   curse of dimensionality by imposing  a low-dimensional structure assumption on the target distribution. In contrast, our contribution lies in the introduction of a pre-training encoder-decoder framework. This   framework not only aligns with prevailing generative learning methods but also demonstrates operational practicality. As a result, it advances the landscape of diffusion models with a balance of theoretical rigor and real-world feasibility.
\end{remark}

%}
%%%%%%%%%%%%%%%%%%%%%%%%%%%%%%%%
\section{Conclusion}\label{sec:con}
In this work,  we introduce a novel latent diffusion model rooted in the Schr{\"o}dinger bridge, facilitated by 
pre-training an encoder-decoder structure.
An  SDE, defined over the time interval $[0,1]$, is formulated to effectuate the transformation of the convolution distribution into the encoder target distribution within the latent space.  The synergy of pre-training, score estimation,  EM discretization, and early stopping  enables the derivation of desired samples, manifesting an approximate distribution in accordance with the target distribution. Theoretical analysis encompasses  pre-training, 
score estimation theory, 
and  comprehensive end-to-end convergence rates for the generated samples, quantified in terms of the second-order Wasserstein distance.
Our theoretical findings provide solid theoretical guarantees for mainstream diffusion models. The obtained convergence rates predominantly hinge upon the dimension $d^*$ of the latent space, achieving optimality and mitigating the curse of dimensionality inherent in raw data.
However, the dimension $d^*$ is prespecified in practical applications.  Therefore, the development of an adaptive approach for selecting it  can be regarded as future work.
Additionally, extending our theoretical framework to analyze diffusion models based on SDEs, such as OU processes and Langevin SDEs,  defined over an infinite time horizon  $[0,\infty)$ can also be considered as future work.

%%%%%%%%%%%%%%%%%%%%%%%%%%%%%%%%%%%%%%%%%%%%%%%%%%%
\section*{Appendix}\label{sec:ap}
\appendix
 %%%%%%%%
In this appendix, we provide the detailed proofs of all lemmas and theorems in this paper.

%%%%%%%%%%%%%%%%%%%%%%%%
In Section \ref{sec:appB}, we derive the error bound associated with the pre-training procedure.
In Section \ref{sec:ae}, we provide a comprehensive proof of the approximation error.  This proof draws inspiration from the approaches outlined in \cite{oko2023diffusion, Fu2024UnveilCD}. To complete the proof, in line with the methodology of \cite{oko2023diffusion,Fu2024UnveilCD}, we  present a series of 
high-probability bounds for  $q_t(\mathbf{x})$,  its derivatives, and $\nabla\log q_t(\mathbf{x})$ in Section  \ref{sec:shb}, and then introduce auxiliary lemmas in Section \ref{sec:app} to support the approximation of the H\"older class using ReLU networks.
In Section \ref{sec:se}, we establish the statistical error. 
Furthermore, Section \ref{sec:bb} provides the upper bound for
$\mathbb{E}_{\mathcal{X}, \mathcal{Y},\mathcal{T},\mathcal{Z}}\left[W_2(\widetilde{\pi}_T^L, \widehat{p}_{data}^*)\right]$. 
Finally, in Section \ref{sec:appG}, we present the proof of our main results.

\section{Pre-training}\label{sec:appB}
In this section, 
we  prove Lemma \ref{lem:pretraining_approximation_error} and Theorem \ref{thm:pretraining}.

\begin{proof}[Proof of Lemma \ref{lem:pretraining_approximation_error}]
We approximate $[\boldsymbol{E}^*(\mathbf{y})]_i$, $1 \leq i \leq d^*$, separately. Then, we can obtain an approximation of $\boldsymbol{E}^*(\mathbf{y})$ by concatenation. We rescale the input by $\mathbf{y}^{\prime} = \frac{\mathbf{y} + \mathbf{1}}{2} \in [0,1]^{d}$. Such a transformation can be exactly implemented by a single ReLU layer. We define the rescaled function on the transformed input space as $\boldsymbol{E}^{*,\prime}(\mathbf{y}^{\prime}):=\boldsymbol{E}^{*}(2\mathbf{y}^{\prime} - \mathbf{1})$, so that $\boldsymbol{E}^{*,\prime}$ is 
$2\xi_{\boldsymbol{E}}$-Lipschitz continuous. Since $\boldsymbol{E}^*$ is continuous differentiable on $[-1,1]^{d}$, then there exist a constant $B_{\boldsymbol{E}}$ such that
$\sup_{\mathbf{y}\in [-1,1]^d}\Vert\boldsymbol{E}^*(\mathbf{y})\Vert \leq B_{\boldsymbol{E}}$, which implies that $\sup_{\my^{\prime}\in[0,1]^d}\|\boldsymbol{E}^{*,\prime}(\my^{\prime})\| \leq B_{\boldsymbol{E}}$.

We then partition $[0,1]^{d}$ into non-overlapping hypercubes with equal edge length $e_1$. $e_1$ will be chosen depending on the desired approximation error. We denote $N_1 = \lceil \frac{1}{e_1} \rceil$ and let $\mathbf{v}=[v_1,v_2,\cdots,v_d]^{\top} \in [N_1]^d$ be a multi-index. We define $[\overline{\boldsymbol{E}}(\mathbf{y})]_i$, $1 \leq i \leq d^*$,
as
$$
[\overline{\boldsymbol{E}}(\mathbf{y}^{\prime})]_i:= \sum_{\mathbf{v}\in [N_1]^d}\left[\boldsymbol{E}^{*,\prime}\left(\frac{\mathbf{v}}{N_1}\right)\right]_i \Psi_{\mathbf{v}}(\mathbf{y}^{\prime}), ~ 1 \leq i \leq d^*,
$$
where $\Psi_{\mathbf{v}}(\mathbf{y}^{\prime})$ is a partition of unity function which satisfies that
$$
\sum_{\mathbf{v}\in [N_1]^d} \Psi_{\mathbf{v}}(\mathbf{y}^{\prime}) \equiv 1, \text{~on~} [0,1]^d.
$$
We choose $\Psi_{\mathbf{v}}$ as a product of coordinate-wise trapezoid functions, i.e.,
$$
\Psi_{\mathbf{v}}(\mathbf{y}^{\prime}):= \prod_{i=1}^{d} \psi\left(3N_1\left(
y_i^{\prime} - \frac{v_i}{N_1}
\right)\right),
$$
where $\psi$ is a trapezoid function, i.e.,
$$
\psi(x):= \left\{
\begin{array}{ll}
    1, &  |x| < 1 \\
    2 - |x|, &  |x|\in [1,2], \\
    0, & |x| > 2.
\end{array}
\right.
$$
We claim that for $1 \leq i \leq d^*$,
\begin{itemize}
    \item $[\overline{\boldsymbol{E}}(\mathbf{y}^{\prime})]_i$ is an approximation of $[\boldsymbol{E}^{*,\prime}(\mathbf{y}^{\prime})]_i$;
    \item $[\overline{\boldsymbol{E}}(\mathbf{y}^{\prime})]_i$ can be implemented by a ReLU neural network $[\boldsymbol{E}^{\prime}(\mathbf{y}^{\prime})]_i$ with small error.
\end{itemize}
Both claims are verified in 
\cite[Lemma 10]{chen2022distribution}, where we only need to substitute the Lipschitz constant $2\xi_{\boldsymbol{E}}$ into the error analysis. By concatenating $[\boldsymbol{E}^{\prime}(\mathbf{y}^{\prime})]_i$, $1 \leq i \leq d^*$ together, we construct $\boldsymbol{E}^{\prime}$. Given $\epsilon > 0$, we can achieve 
$$
\sup_{\mathbf{y}^{\prime}\in [0,1]^d} \Vert \boldsymbol{E}^{\prime}(\mathbf{y}^{\prime}) - \boldsymbol{E}^{*,\prime}(\mathbf{y}^{\prime})\Vert_\infty \leq \epsilon,
$$
where the neural network configuration satisfies
$$
L_{\boldsymbol{E}} = \mathcal{O}\left(\log\frac{1}{\epsilon} + d\right), M_{\boldsymbol{E}}=\mathcal{O}\left(d^{*}\xi_{\boldsymbol{E}}^{d}\epsilon^{-d}\right), J_{\boldsymbol{E}}=\mathcal{O}\left(d^{*}\xi_{\boldsymbol{E}}^{d}\epsilon^{-d}\left(\log\frac{1}{\epsilon} + d\right)\right),
$$
$$
K_{\boldsymbol{E}} = \mathcal{O}(B_{\boldsymbol{E}}), \kappa_{\boldsymbol{E}} = \mathcal{O}\left(\max\{1, \xi_{\boldsymbol{E}}\}\right).
$$
Here, we set $e_1 = \mathcal{O}\big(\frac{\epsilon}{\xi_{\boldsymbol{E}}}\big)$. Moreover, the neural network $\boldsymbol{E}^{\prime}(\mathbf{y}^{\prime})$ is Lipschitz continuous. For any $\mathbf{y}^{\prime}_1, \mathbf{y}^{\prime}_2 \in [0,1]^d$, it holds that
$$
\Vert \boldsymbol{E}^{\prime}(\mathbf{y}^{\prime}_1) - \boldsymbol{E}^{\prime}(\mathbf{y}^{\prime}_2)\Vert_\infty \leq 2\gamma_{\boldsymbol{E}}\Vert \mathbf{y}^{\prime}_1 - \mathbf{y}^{\prime}_2 \Vert,
$$ 
where $\gamma_{\boldsymbol{E}}:= 10d\xi_{\boldsymbol{E}}$. 
Combining with the single input transformation layer, we denote
$$
\boldsymbol{E}(\mathbf{y}):= \boldsymbol{E}^{\prime} \left( 
\frac{\mathbf{y} + \mathbf{1}}{2} \right) \in \mathcal{E},
$$
then the configuration of $\boldsymbol{E}$ is the same as $\boldsymbol{E}^{\prime}$, and for any $\mathbf{y}_1, \mathbf{y}_2 \in [-1,1]^d$, it holds that
$$
\Vert \boldsymbol{E}(\mathbf{y}_1) - \boldsymbol{E}(\mathbf{y}_2)\Vert_\infty \leq \gamma_{\boldsymbol{E}}\Vert \mathbf{y}_1 - \mathbf{y}_2 \Vert.
$$

Similarly, there exist a neural network $\boldsymbol{D}\in\mathcal{D}$ with configuration
$$
L_{\boldsymbol{D}} = \mathcal{O}\left(\log\frac{1}{\epsilon} + d^{*}\right), M_{\boldsymbol{D}}=\mathcal{O}\left(d\xi_{\boldsymbol{D}}^{d^{*}}\epsilon^{-d^{*}}\right), J_{\boldsymbol{D}}=\mathcal{O}\left(d\xi_{\boldsymbol{D}}^{d^{*}}\epsilon^{-d^{*}}\left(\log\frac{1}{\epsilon} + d^{*}\right)\right),
$$
$$
K_{\boldsymbol{D}} = \mathcal{O}(B_{\boldsymbol{D}}), \kappa_{\boldsymbol{D}} = \mathcal{O}\left(\max\{1, \xi_{\boldsymbol{D}}\}\right),
$$
such that
$$
\mathop{\sup}_{\my\in[-1,1]^{d^{*}}}\Vert\boldsymbol{D}(\my) - \boldsymbol{D}^{*}(\my)\Vert_{\infty}\leq \epsilon,
$$
where $B_{\boldsymbol{D}}$ is the upper bound of $\boldsymbol{D}^*$, i.e., $\sup_{\my\in[-1,1]^{d^*}}\|\boldsymbol{D}^*(\my)\| \leq B_{\boldsymbol{D}}$.
Moreover, the network $\boldsymbol{D}$ is Lipschitz continuous with Lipschitz constant $ \gamma_{\boldsymbol{D}}:=10d^{*}\xi_{\boldsymbol{D}}$, i.e., for any $\my_1$, $\my_2\in[-1,1]^{d^{*}}$, it satisfies
$$
\Vert\boldsymbol{D}(\my_1) - \boldsymbol{D}(\my_2)\Vert_{\infty}\leq \gamma_{\boldsymbol{D}}\Vert\my_1 - \my_2\Vert.
$$

Now, we derive the upper bound of the approximation error. For any $\my\in [-1,1]^{d}$, we have
$$
\begin{aligned}
    &\Vert(\boldsymbol{D}\circ\boldsymbol{E})(\my)-\my\Vert^2 - \Vert(\boldsymbol{D}^{*}\circ\boldsymbol{E}^{*})(\my)-\my\Vert^2\\
    =&\langle(\boldsymbol{D}\circ\boldsymbol{E})(\my) + (\boldsymbol{D}^{*}\circ\boldsymbol{E}^{*})(\my) - 2\my, (\boldsymbol{D}\circ\boldsymbol{E})(\my) - (\boldsymbol{D}^{*}\circ\boldsymbol{E}^{*})(\my)\rangle\\
    \leq&(K_{\boldsymbol{D}} + B_{\boldsymbol{D}} + 2\sqrt{d})\Vert(\boldsymbol{D}\circ\boldsymbol{E})(\my) - (\boldsymbol{D}^{*}\circ\boldsymbol{E}^{*})(\my)\Vert\\
    \leq&(K_{\boldsymbol{D}} + B_{\boldsymbol{D}} + 2\sqrt{d})\left(\Vert(\boldsymbol{D}\circ\boldsymbol{E})(\my) - (\boldsymbol{D}^{*}\circ\boldsymbol{E})(\my)\Vert + \Vert(\boldsymbol{D}^{*}\circ\boldsymbol{E})(\my) - (\boldsymbol{D}^{*}\circ\boldsymbol{E}^{*})(\my)\Vert\right)\\
    \leq&(K_{\boldsymbol{D}} + B_{\boldsymbol{D}} + 2\sqrt{d})(\sqrt{d}\epsilon + \xi_{\boldsymbol{D}}\Vert\boldsymbol{E}(\my) - \boldsymbol{E}^{*}(\my)\Vert)\\
    \leq&(K_{\boldsymbol{D}} + B_{\boldsymbol{D}} + 2\sqrt{d})(\sqrt{d}\epsilon + \sqrt{d}\epsilon\xi_{\boldsymbol{D}})\\
    =&\sqrt{d}\epsilon(K_{\boldsymbol{D}} + B_{\boldsymbol{D}} + 2\sqrt{d})(1 + \xi_{\boldsymbol{D}}).
\end{aligned}
$$
This  implies 
$$
\begin{aligned}    
\mathcal{H}(\boldsymbol{E},\boldsymbol{D}) - \mathcal{H}(\boldsymbol{E}^{*}, \boldsymbol{D}^{*})
=&\int_{[0,1]^{d}}\left(\Vert(\boldsymbol{D}\circ\boldsymbol{E})(\my)-\my\Vert^2 - \Vert(\boldsymbol{D}^{*}\circ\boldsymbol{E}^{*})(\my)-\my\Vert^2\right)\widetilde{p}_{data}(\my)\mathrm{d}\my\\
\leq&\sqrt{d}\epsilon(K_{\boldsymbol{D}} + B_{\boldsymbol{D}} + 2\sqrt{d})(1 + \xi_{\boldsymbol{D}}) \lesssim \epsilon.
\end{aligned}
$$
The proof is complete.
\end{proof}

\begin{proof}[Proof of Theorem \ref{thm:pretraining}] The proof can be divided into three parts. We first decompose the excess risk error into approximation error and statistical error. Then, we bound these two terms separately. Finally, we balance these two terms by choosing an appropriate $\epsilon$.
\\\\
\noindent\textbf{Error Decomposition.} 
For each $\boldsymbol{E}_\theta \in \mathcal{E}_0$ and $\boldsymbol{D}_\theta \in \mathcal{D}_0$, we have
\begin{align*}
&\mathcal{H} (\widehat{\boldsymbol{E}}, \widehat{\boldsymbol{D}}) - \mathcal{H} (\boldsymbol{E}^*, \boldsymbol{D}^*) \\
&= \mathcal{H} (\widehat{\boldsymbol{E}}, \widehat{\boldsymbol{D}}) - \widehat{\mathcal{H}} (\widehat{\boldsymbol{E}}, \widehat{\boldsymbol{D}}) + \widehat{\mathcal{H}} (\widehat{\boldsymbol{E}}, \widehat{\boldsymbol{D}}) - \widehat{\mathcal{H}} (\boldsymbol{E}_\theta, \boldsymbol{D}_\theta) + \widehat{\mathcal{H}} (\boldsymbol{E}_\theta, \boldsymbol{D}_\theta) - \mathcal{H} (\boldsymbol{E}_\theta, \boldsymbol{D}_\theta) \\
&~~~ + \mathcal{H} (\boldsymbol{E}_\theta, \boldsymbol{D}_\theta) - \mathcal{H} (\boldsymbol{E}^*, \boldsymbol{D}^*) \\
&\leq \sup_{\boldsymbol{E} \in \mathcal{E}_0,\boldsymbol{D} \in \mathcal{D}_0} \mathcal{H} (\boldsymbol{E}, \boldsymbol{D}) - \widehat{\mathcal{H}} (\boldsymbol{E}, \boldsymbol{D}) + \sup_{ \boldsymbol{E} \in \mathcal{E}_0,\boldsymbol{D} \in \mathcal{D}_0} \widehat{\mathcal{H}} (\boldsymbol{E}, \boldsymbol{D}) - \mathcal{H} (\boldsymbol{E}, \boldsymbol{D}) \\
&~~~ + \mathcal{H} (\boldsymbol{E}_\theta, \boldsymbol{D}_\theta) - \mathcal{H} (\boldsymbol{E}^*, \boldsymbol{D}^*),
\end{align*}
where the inequality is due to the fact that $\widehat{\mathcal{H}} (\widehat{\boldsymbol{E}}, \widehat{\boldsymbol{D}}) \leq \widehat{\mathcal{H}} (\boldsymbol{E}_\theta, \boldsymbol{D}_\theta)$. Then taking infimum over 
$\boldsymbol{E}_\theta \in \mathcal{E}_0$ and
$\boldsymbol{D}_\theta \in \mathcal{D}_0$   yields
\begin{align}
\label{eq: ae 1}
\begin{aligned}
&\mathcal{H} (\widehat{\boldsymbol{E}}, \widehat{\boldsymbol{D}}) - \mathcal{H} (\boldsymbol{E}^*, \boldsymbol{D}^*) \\
&\leq 2 \sup_{\boldsymbol{E} \in \mathcal{E}_0,\boldsymbol{D} \in \mathcal{D}_0} |\mathcal{H} (\boldsymbol{E}, \boldsymbol{D}) - \widehat{\mathcal{H}} (\boldsymbol{E}, \boldsymbol{D})|+ \inf_{\boldsymbol{E} \in \mathcal{E}_0,\boldsymbol{D} \in \mathcal{D}_0} \mathcal{H} (\boldsymbol{E}, \boldsymbol{D}) - \mathcal{H} (\boldsymbol{E}^*, \boldsymbol{D}^*).
\end{aligned}
\end{align}
The first and second terms in \eqref{eq: ae 1} are called the 
statistical error and  approximation error respectively. By Lemma \ref{lem:pretraining_approximation_error}, the approximation error satisfies
$$
\inf_{\boldsymbol{E} \in \mathcal{E}_0,\boldsymbol{D} \in \mathcal{D}_0} \mathcal{H} (\boldsymbol{E}, \boldsymbol{D}) - \mathcal{H} (\boldsymbol{E}^*, \boldsymbol{D}^*) \lesssim \epsilon.
$$

\noindent\textbf{Statistical Error.} 
For convenience, we denote $\ell_{\boldsymbol{E},\boldsymbol{D}}(\my):=\Vert(\boldsymbol{D}\circ\boldsymbol{E})(\my) - \my\Vert^2$. We denote the $\delta$-covering of $\mathcal{D}_0$ with minimum cardinality $\mathcal{N}(\delta, \mathcal{D}_0, \Vert\cdot\Vert_{L^{\infty}([-1,1]^{d^{*}})})$ as $\mathcal{D}_{\delta}$ and the $\delta$-covering of $\mathcal{E}_0$ with minimum cardinality $\mathcal{N}(\delta, \mathcal{E}_0,\Vert\cdot\Vert_{L^{\infty}[-1,1]^{d}})$ as $\mathcal{E}_{\delta}$. $\mathcal{N}(\delta, \mathcal{D}_0, \Vert\cdot\Vert_{L^{\infty}([-1,1]^{d^{*}})})$ and $\mathcal{N}(\delta, \mathcal{E}_0, \Vert\cdot\Vert_{L^{\infty}([-1,1]^{d})})$ are bounded as follows:
$$
\log\mathcal{N}(\delta, \mathcal{D}_0, \Vert\cdot\Vert_{L^{\infty}([-1,1]^{d^{*}})})\lesssim J_{\boldsymbol{D}}L_{\boldsymbol{D}}\log\left(\frac{L_{\boldsymbol{D}}M_{\boldsymbol{D}}\kappa_{\boldsymbol{D}}}{\delta}\right),
$$
and
$$
\log\mathcal{N}(\delta, \mathcal{E}_0, \Vert\cdot\Vert_{L^{\infty}([-1,1]^{d})})\lesssim J_{\boldsymbol{E}}L_{\boldsymbol{E}}\log\left(\frac{L_{\boldsymbol{E}}M_{\boldsymbol{E}}\kappa_{\boldsymbol{E}}}{\delta}\right).
$$
For any $\boldsymbol{E}\in\mathcal{E}_0$, $\boldsymbol{D}\in\mathcal{D}_0$, there exist $\boldsymbol{E}_{\delta}\in \mathcal{E}_\delta$, $\boldsymbol{D}_{\delta}\in\mathcal{D}_\delta$ such that
$$
\Vert\boldsymbol{E}(\my) - \boldsymbol{E}_{\delta}(\my)\Vert_{L^{\infty}([-1,1]^{d})}\leq\delta,
$$
and
$$
\Vert\boldsymbol{D}(\my) - \boldsymbol{D}_{\delta}(\my)\Vert_{L^{\infty}([-1,1]^{d^{*}})}\leq\delta.
$$
Therefore, we have
$$
\begin{aligned}
    &~~~~|\ell_{\boldsymbol{E},\boldsymbol{D}}(\my) - \ell_{\boldsymbol{E}_{\delta}, \boldsymbol{D}_{\delta}}(\my)|
    \\ & \leq (2K_{\boldsymbol{D}} + 2\sqrt{d})\left(\Vert(\boldsymbol{D}\circ\boldsymbol{E})(\my) - (\boldsymbol{D}_{\delta}\circ\boldsymbol{E})(\my)\Vert + \Vert(\boldsymbol{D}_{\delta}\circ\boldsymbol{E})(\my) - (\boldsymbol{D}_{\delta}\circ\boldsymbol{E}_{\delta})(\my)\Vert\right)\\
    & \leq 2(K_{\boldsymbol{D}} + \sqrt{d})(\delta + \sqrt{d}\gamma_{\boldsymbol{D}}\delta)\\
    & = 2(K_{\boldsymbol{D}} +\sqrt{d})(1 + \sqrt{d}\gamma_{\boldsymbol{D}})\delta.
\end{aligned}
$$
For simplicity, we denote $\mathcal{N}_{\mathcal{D}_0,\delta}:=\mathcal{N}(\delta, \mathcal{D}_0, \Vert\cdot\Vert_{L^{\infty}([-1,1]^{d^{*}})})$, $\mathcal{N}_{\mathcal{E}_0,\delta}:=\mathcal{N}(\delta, \mathcal{E}_0, \Vert\cdot\Vert_{L^{\infty}([-1,1]^{d})})$ and
$C(\delta):=2(K_{\boldsymbol{D}} + \sqrt{d})(1 + \sqrt{d}\gamma_{\boldsymbol{D}})\delta$, then
$$
\begin{aligned}
|\widehat{\mathcal{H}}(\boldsymbol{E},\boldsymbol{D}) - \mathcal{H}(\boldsymbol{E},\boldsymbol{D})|&\leq\left|\frac{1}{\mathcal{M}}\sum_{i=1}^{\mathcal{M}}\left(\ell_{\boldsymbol{E}_{\delta},\boldsymbol{D}_{\delta}}(\my_i) - \mathbb{E}_{\mathcal{Y}}\left[\ell_{\boldsymbol{E}_{\delta},\boldsymbol{D}_{\delta}}(\my_i)\right]\right)\right| + 2C(\delta)\\
&\leq\max_{\boldsymbol{E}_{\delta}\in\mathcal{E}_{\delta},\boldsymbol{D}_{\delta}\in\mathcal{D}_{\delta}}\left|\frac{1}{\mathcal{M}}\sum_{i=1}^{\mathcal{M}}\left(\ell_{\boldsymbol{E}_{\delta},\boldsymbol{D}_{\delta}}(\my_i) - \mathbb{E}_{\mathcal{Y}}\left[\ell_{\boldsymbol{E}_{\delta},\boldsymbol{D}_{\delta}}(\my_i)\right]\right)\right| + 2C({\delta}).
\end{aligned}
$$
Taking supermum over $\boldsymbol{E}$ and $\boldsymbol{D}$ on both sides, we get
$$
\sup_{\boldsymbol{E}\in\mathcal{E}_0,\boldsymbol{D}\in\mathcal{D}_0}|\widehat{\mathcal{H}}(\boldsymbol{E},\boldsymbol{D}) - \mathcal{H}(\boldsymbol{E},\boldsymbol{D})|
\leq\max_{\boldsymbol{E}_{\delta}\in\mathcal{E}_{\delta},\boldsymbol{D}_{\delta}\in\mathcal{D}_{\delta}}\left|\frac{1}{\mathcal{M}}\sum_{i=1}^{\mathcal{M}}\left(\ell_{\boldsymbol{E}_{\delta},\boldsymbol{D}_{\delta}}(\my_i) - \mathbb{E}_{\mathcal{Y}}\left[\ell_{\boldsymbol{E}_{\delta},\boldsymbol{D}_{\delta}}(\my_i)\right]\right)\right| + 2C({\delta}).
$$
Thus, for any $t > 2C({\delta})$, we have
$$
\begin{aligned}
&~~~~\mathbb{P}_{\mathcal{Y}}\left(\sup_{\boldsymbol{E}\in\mathcal{E}_0,\boldsymbol{D}\in\mathcal{D}_0}|\widehat{\mathcal{H}}(\boldsymbol{E},\boldsymbol{D}) - \mathcal{H}(\boldsymbol{E},\boldsymbol{D})| > t\right)\\
& \leq \mathbb{P}_{\mathcal{Y}}\left(\max_{\boldsymbol{E}_{\delta}\in\mathcal{E}_{\delta},\boldsymbol{D}_{\delta}\in\mathcal{D}_{\delta}}\left|\frac{1}{\mathcal{M}}\sum_{i=1}^{\mathcal{M}}\left(\ell_{\boldsymbol{E}_{\delta},\boldsymbol{D}_{\delta}}(\my_i) - \mathbb{E}_{\mathcal{Y}}\left[\ell_{\boldsymbol{E}_{\delta},\boldsymbol{D}_{\delta}}(\my_i)\right]\right)\right| > t-2C({\delta})\right)\\
& \leq
\sum_{\boldsymbol{D}_{\delta}\in\mathcal{D}_{\delta}}\sum_{\boldsymbol{E}_{\delta}\in\mathcal{E}_{\delta}}\mathbb{P}_{\mathcal{Y}}\left(\left|\frac{1}{\mathcal{M}}\sum_{i=1}^{\mathcal{M}}\left(\ell_{\boldsymbol{E}_{\delta},\boldsymbol{D}_{\delta}}(\my_i) - \mathbb{E}_{\mathcal{Y}}\left[\ell_{\boldsymbol{E}_{\delta},\boldsymbol{D}_{\delta}}(\my_i)\right]\right)\right| > t-2C({\delta})\right).
\end{aligned}
$$
Since
$$
\ell_{\boldsymbol{E}_{\delta},\boldsymbol{D}_{\delta}}(\my)\leq 2\left(\Vert(\boldsymbol{D}_{\delta}\circ\boldsymbol{E}_{\delta})(\my)\Vert^2 + \Vert\my\Vert^2\right)\leq 2(K_{\boldsymbol{D}}^2 + d),
$$
by Hoeffding's inequality, we obtain
$$
\mathbb{P}_{\mathcal{Y}}\left(\left|\frac{1}{\mathcal{M}}\sum_{i=1}^{\mathcal{M}}\left(\ell_{\boldsymbol{E}_{\delta},\boldsymbol{D}_{\delta}}(\my_i) - \mathbb{E}_{\mathcal{Y}}
\left[\ell_{\boldsymbol{E}_{\delta},
\boldsymbol{D}_{\delta}}(\my_i)\right]\right)\right| > t-2C_{\delta}\right)\leq2\exp\left(-\frac{\mathcal{M}(t - 2C({\delta}))^2}{2(K_{\boldsymbol{D}}^2 + d)^2}\right).
$$
Therefore, for any $t > 2C({\delta})$, we have
$$
\mathbb{P}_{\mathcal{Y}}\left(\sup_{\boldsymbol{E}\in\mathcal{E}_0,\boldsymbol{D}\in\mathcal{D}_0}|\widehat{\mathcal{H}}(\boldsymbol{E},\boldsymbol{D}) - \mathcal{H}(\boldsymbol{E},\boldsymbol{D})| > t\right)\\
\leq 2\mathcal{N}_{\mathcal{E}_0,\delta} \mathcal{N}_{\mathcal{D}_0, \delta} \exp\left(-\frac{\mathcal{M}(t - 2C({\delta}))^2}{2(K_{\boldsymbol{D}}^2 + d)^2}\right).
$$
Integrating both sides with respect to $t$, for any $a > 0$, we get
$$
\begin{aligned}
& ~~~~ \mathbb{E}_{\mathcal{Y}}\left(\sup_{\boldsymbol{E}\in\mathcal{E}_0,\boldsymbol{D}\in\mathcal{D}_0}|\widehat{\mathcal{H}}(\boldsymbol{E},\boldsymbol{D}) - \mathcal{H}(\boldsymbol{E},\boldsymbol{D})|\right)\\
& = \int_{0}^{+\infty}\mathbb{P}_{\mathcal{Y}}\left(\sup_{\boldsymbol{D}\in\mathcal{D}_0,\boldsymbol{E}\in\mathcal{E}_0}|\widehat{\mathcal{H}}(\boldsymbol{E},\boldsymbol{D}) - \mathcal{H}(\boldsymbol{E},\boldsymbol{D})| > t\right)\mathrm{d}t\\
& \leq  a + 2C({\delta}) + 2\mathcal{N}_{\mathcal{D}_0, \delta}\mathcal{N}_{\mathcal{E}_0, \delta} \int_{a}^{+\infty}\exp\left(-\frac{\mathcal{M}t^2}{2(K_{\boldsymbol{D}}^2 + d)^2}\right)\mathrm{d}t\\
& \leq a + 2C({\delta}) + 2\mathcal{N}_{\mathcal{D}_0,\delta}\mathcal{N}_{\mathcal{E}_0,\delta}\cdot\frac{\sqrt{\pi}}{2}\exp\left(-\frac{\mathcal{M} a^2}{2(K_{\boldsymbol{D}}^2 + d)^2}\right)\frac{2(K_{\boldsymbol{D}}^2 + d)}{\sqrt{2 \mathcal{M}}}.
\end{aligned}
$$
Taking $a = 2(K_{\boldsymbol{D}}^2 + d)\sqrt{\frac{\log\mathcal{N}_{\mathcal{D}_0,\delta}\mathcal{N}_{\mathcal{E}_0,\delta}}{2 \mathcal{M}}}$ and $\delta = \frac{1}{\mathcal{M} }$, we obtain
$$
\begin{aligned}
&~~~~ \mathbb{E}_{\mathcal{Y}}\left(\sup_{\boldsymbol{E}\in\mathcal{E}_0,\boldsymbol{D}\in\mathcal{D}_0}|\widehat{\mathcal{H}}(\boldsymbol{E},\boldsymbol{D}) - \mathcal{H}(\boldsymbol{E},\boldsymbol{D})|\right)\\
& \leq 2(K_{\boldsymbol{D}}^2 + d)\frac{\sqrt{\log\mathcal{N}_{\mathcal{D}_0,\delta}\mathcal{N}_{\mathcal{E}_0, \delta}} + \sqrt{\pi}}{\sqrt{2\mathcal{M}}} + \frac{4(K_{\boldsymbol{D}} + \sqrt{d})(1 + \sqrt{d}\gamma_{\boldsymbol{D}})}{\mathcal{M}}.
\end{aligned}
$$
Substituting the values of $\mathcal{N}_{\mathcal{E}_0,\delta}$ and $\mathcal{N}_{\mathcal{D}_0,\delta}$, we obtain
$$
\mathbb{E}_{\mathcal{Y}}\left(\sup_{\boldsymbol{E}\in\mathcal{E}_0,\boldsymbol{D}\in\mathcal{D}_0}|\widehat{\mathcal{H}}(\boldsymbol{E},\boldsymbol{D}) - \mathcal{H}(\boldsymbol{E},\boldsymbol{D})|\right) = \widetilde{\mathcal{O}}\left(\frac{\epsilon^{-d/2}}{\sqrt{\mathcal{M}}}\right).
$$
\\\\
\textbf{
Balancing Error Terms.} Combining the approximation error and the statistical error, we have
$$
\mathcal{H}(\widehat{\boldsymbol{E}},\widehat{\boldsymbol{D}}) - \mathcal{H}(\boldsymbol{E}^{*}, \boldsymbol{D}^{*}) = \widetilde{\mathcal{O}}\left(\epsilon + \frac{\epsilon^{-d/2}}{\sqrt{\mathcal{M}}}\right).
$$
By choosing $\epsilon = \mathcal{M}^{-\frac{1}{d + 2}}$, we finally obtain
$$
\mathcal{H}(\widehat{\boldsymbol{E}},\widehat{\boldsymbol{D}}) - \mathcal{H}(\boldsymbol{E}^{*}, \boldsymbol{D}^{*}) = \widetilde{\mathcal{O}}\left(\mathcal{M}^{-\frac{1}{d + 2}}\right),
$$
which implies 
$$
\mathcal{H}(\widehat{\boldsymbol{E}},\widehat{\boldsymbol{D}}) =\widetilde{\mathcal{O}}\left(\mathcal{M}^{-\frac{1}{d + 2}} + \delta_0\right).
$$
The proof is complete.
\end{proof}

\section{Approximation Error}\label{sec:ae}
Building on the preliminary lemmas in Sections \ref{sec:shb}-\ref{sec:app}, we now proceed to the proof of Lemma  \ref{lem:approximation} by following \cite{oko2023diffusion,Fu2024UnveilCD}. 
Recall that the score function $\nabla\log q_t(\mathbf{x})$ can be rewritten as 
$$
\nabla\log q_t(\mathbf{x}) = \frac{\nabla q_t(\mathbf{x})}{q_t(\mathbf{x})}.
$$
We approximate the numerator and denominator separately. The construction of the approximations to the numerator and denominator is similar. In the following, we focus on the approximation of $q_t(\mathbf{x})$. The procedure for approximating $q_t(\mathbf{x})$ is outlined as follows:
%%%
\paragraph{Clipping the integral interval.} 
Recall that
$$
q_t(\mathbf{x}) = \int_{\mathbb{R}^{d^*}}\widehat{p}_{data}^{*}(\mathbf{y})\mathbf{I}_{\{\Vert\mathbf{y}\Vert_{\infty}\leq 1\}}\cdot
\underbrace{\frac{1}{\sigma_t^{d^*}(2\pi)^{d^*/2}}\exp\left(-\frac{\Vert\mathbf{x} - \mathbf{y}\Vert^2}{2\sigma_t^2}\right)}_{\text{Transition Kernel}} \mathrm{d}\mathbf{y}.
$$
According to Lemma \ref{lem:integral_clipping}, there exists a constant $C > 0$ such that for any $\mathbf{x}\in\mathbb{R}^{d^*}$,
$$
\left|q_t(\mathbf{x}) - \int_{A_{\mathbf{x}}}\widehat{p}_{data}^{*}(\mathbf{y})\mathbf{I}_{\{\Vert\mathbf{y}\Vert_{\infty}\leq 1\}}\cdot
\frac{1}{\sigma_t^{d^*}(2\pi)^{d^*/2}}\exp\left(-\frac{\Vert\mathbf{x} - \mathbf{y}\Vert^2}{2\sigma_t^2}\right) \mathrm{d}\mathbf{y}\right| \lesssim \epsilon,
$$
where $A_{\mathbf{x}} = \prod_{i=1}^{d^*}a_{i,\mathbf{x}}$ with $a_{i,\mathbf{x}} = \left[x_i - C\sigma_t\sqrt{\log\epsilon^{-1}}, x_i + C\sigma_t\sqrt{\log\epsilon^{-1}}\right]$. 
This implies that we only need to approximate the integral over $A_{\mathbf{x}}$ sufficiently.

\paragraph{Approximating $\widehat{p}_{data}^{*}$.} 
Recall that the target density function $\widehat{p}_{data}^{*}(\mathbf{y})$ is assumed to be H\"older continuous, as stated in Assumption \ref{ass:holder_density}.
Therefore, we can utilize the Taylor polynomial  $f_{Taylor}^{*}(\mathbf{y})$ to approximate $\widehat{p}_{data}^{*}(\mathbf{y})$.
Consequently, to approximate $q_t(\mathbf{x})$,  
we derive an approximation in the form of
$$
\int_{A_{\mathbf{x}}}f_{Taylor}^{*}(\mathbf{y})\mathbf{I}_{\{\Vert\mathbf{y}\Vert_{\infty}\leq 1\}}\cdot\frac{1}{\sigma_t^{d^*}(2\pi)^{d^*/2}}\exp\left(-\frac{\Vert\mathbf{x} - \mathbf{y}\Vert^2}{2\sigma_t^2}\right) \mathrm{d}\mathbf{y}.
$$
This analysis is detailed in Section \ref{subsec:c1}.

\paragraph{Approximating the transition kernel.} 
Although the Taylor polynomial $f_{Taylor}^{*}(\mathbf{y})$ can be implemented using a neural network, integrating over  $\mathbf{y}$   remains challenging. To address this, we introduce the Taylor polynomial  $f_{Taylor}^{kernel}(t,\mathbf{x}, \mathbf{y})$ to approximate the exponential transition kernel, resulting in the following approximation:
\begin{equation}\label{eq:diffused_poly}
\int_{A_{\mathbf{x}}} f_{Taylor}^{*}(\mathbf{y}) f_{Taylor}^{kernel}(t,\mathbf{x},\mathbf{y})\mathbf{I}_{\{\Vert\mathbf{y}\Vert_{\infty}\leq 1\}} \mathrm{d}\mathbf{y}.
\end{equation}
See Section \ref{subsec:c2} for more details. 
A similar approximation scheme using diffused local polynomials can also be applied to $\nabla q_t(\mathbf{x})$.

\paragraph{Approximating the integral via ReLU neural networks.} 
In \eqref{eq:diffused_poly},
 the product $f_{Taylor}^{*}(\mathbf{y})$ $\cdot f_{Taylor}^{kernel}(t,\mathbf{x},\mathbf{y})$ is  a polynomial,  thus its integration can be computed explicitly. Consequently, we use ReLU  neural networks to approximate the diffused local polynomials, as outlined in  Section \ref{subsec:c3}.

Finally, combining the above discussions, we derive the error bound for approximating  $\nabla\log q_t(\mathbf{x})$ using a ReLU neural network, as detailed in Section \ref{subsec:c4}.

%%%%%%%%%%%%%%%%%%%%%%%
\subsection{Approximating $\widehat{p}_{data}^*$ via local polynomials}\label{subsec:c1}

In this section, we approximate $\widehat{p}_{data}^*$ via local polynomials, which is a key step for approximating the true score function $\nabla\log q_t(\mathbf{x})$. We give the following lemma.

\begin{lemma}[Approximating $\widehat{p}_{data}^*$ via local polynomials]\label{lem:approximate_pdata1}
Suppose Assumptions \ref{ass:bounded_density}-\ref{ass:holder_density} hold. Let $\mathcal{R} \gg 1$, there exists $p_{\mathcal{R}}(\mathbf{x})$ such that
$$
|\widehat{p}_{data}^*(\mathbf{x}) - p_\mathcal{R}(\mathbf{x})| \lesssim \mathcal{R}^{-\beta},~ \mathbf{x}\in [-1,1]^{d^*}.
$$
\end{lemma}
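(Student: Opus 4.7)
The plan is to construct $p_{\mathcal{R}}$ as a sum of local Taylor polynomials glued together by a partition of unity on a uniform grid of side length $\mathcal{O}(\mathcal{R}^{-1})$. This is the standard device for approximating a $(\beta,R)$-H\"older function, and it also sets up the piecewise-polynomial structure needed later for ReLU network approximation.

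First, set $r = \lfloor \beta \rfloor$ and partition $[-1,1]^{d^*}$ into cubes $\{Q_\nu\}_{\nu\in[N]^{d^*}}$ of side length $2/N$ with $N \asymp \mathcal{R}$. Let $\mathbf{x}_\nu$ denote the center of $Q_\nu$. Using Assumption \ref{ass:holder_density}, define the order-$r$ Taylor polynomial
$$
T_\nu(\mathbf{x}) := \sum_{\|\boldsymbol{\alpha}\|_1 \leq r} \frac{\partial^{\boldsymbol{\alpha}} \widehat{p}_{data}^*(\mathbf{x}_\nu)}{\boldsymbol{\alpha}!}(\mathbf{x} - \mathbf{x}_\nu)^{\boldsymbol{\alpha}}.
$$
The H\"older hypothesis yields the classical remainder bound
$$
\bigl|\widehat{p}_{data}^*(\mathbf{x}) - T_\nu(\mathbf{x})\bigr| \lesssim R\,\|\mathbf{x} - \mathbf{x}_\nu\|^{\beta}
$$
uniformly over $\mathbf{x}$ in a fixed dilate of $Q_\nu$.

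Second, introduce a partition of unity $\{\phi_\nu\}$ subordinate to an enlargement of $\{Q_\nu\}$, with $\sum_\nu \phi_\nu \equiv 1$ on $[-1,1]^{d^*}$, each $\phi_\nu$ supported in a ball of radius $c/N$ around $\mathbf{x}_\nu$, and $0\leq \phi_\nu\leq 1$. A convenient concrete choice is the product of one-dimensional trapezoid functions, exactly as in the construction used in the proof of Lemma \ref{lem:pretraining_approximation_error}; this choice will later facilitate implementation of $p_{\mathcal{R}}$ by a ReLU network. Then set
$$
p_{\mathcal{R}}(\mathbf{x}) := \sum_{\nu \in [N]^{d^*}} \phi_\nu(\mathbf{x})\, T_\nu(\mathbf{x}).
$$

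Third, estimate the error using $\sum_\nu \phi_\nu \equiv 1$:
$$
\bigl|\widehat{p}_{data}^*(\mathbf{x}) - p_{\mathcal{R}}(\mathbf{x})\bigr| = \Bigl|\sum_\nu \phi_\nu(\mathbf{x})\bigl(\widehat{p}_{data}^*(\mathbf{x}) - T_\nu(\mathbf{x})\bigr)\Bigr| \leq \sum_{\nu:\,\phi_\nu(\mathbf{x})>0} \phi_\nu(\mathbf{x})\cdot C R\,\|\mathbf{x}-\mathbf{x}_\nu\|^{\beta} \lesssim R\,N^{-\beta} \asymp \mathcal{R}^{-\beta},
$$
because only $O(1)$ indices $\nu$ contribute at any $\mathbf{x}$ and each has $\|\mathbf{x}-\mathbf{x}_\nu\| \lesssim 1/N$.

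The one delicate point is what happens at grid cells touching $\partial[-1,1]^{d^*}$, since the Taylor coefficients $\partial^{\boldsymbol{\alpha}} \widehat{p}_{data}^*(\mathbf{x}_\nu)$ must be evaluated and the remainder estimate must hold on the support of $\phi_\nu$. I would arrange the grid so that every center $\mathbf{x}_\nu$ lies in $[-1,1]^{d^*}$, and for the outermost layer of cells I would invoke Assumption \ref{ass:boundary_smoothness}, which provides $C^\infty$ regularity on a neighborhood of the boundary and thus makes all required derivatives and remainder bounds uniform. This boundary book-keeping is the main (but modest) obstacle; once it is in place, the pointwise $\mathcal{R}^{-\beta}$ bound above completes the proof.
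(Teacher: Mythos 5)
Your overall strategy — local Taylor polynomials of order $r=\lfloor\beta\rfloor$ on a grid of spacing $\asymp\mathcal{R}^{-1}$, glued with a partition of unity, with the H\"older remainder giving $\mathcal{R}^{-\beta}$ — is exactly the idea the paper uses, so the core of your proof is fine. But there are two points worth flagging.

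First, the paper does \emph{not} use a smooth (trapezoid) partition of unity here. It uses the hard, disjoint partition $\psi_{\mathbf{n}}(\mathbf{x})=\mathbf{I}_{\{\mathbf{x}\in(\frac{\mathbf{n}-1}{\mathcal{R}},\frac{\mathbf{n}}{\mathcal{R}}]\}}$ (so exactly one $\psi_{\mathbf{n}}(\mathbf{x})=1$ at each point) and centers the Taylor polynomials at cube corners $\frac{\mathbf{n}}{\mathcal{R}}$ rather than cube centers. For Lemma~\ref{lem:approximate_pdata1} itself both choices deliver $\mathcal{R}^{-\beta}$, but the indicator form is a deliberate design choice that the later argument leans on heavily: in Lemma~\ref{lem:approximate_pdata2} it lets $p_\mathcal{R}$ be written as a difference of two such indicator sums, and in the proof of Lemma~\ref{lem:approximate_poly1} the indicator confines the Gaussian convolution to a single box so the integral $f(t,x,n,\alpha,l)$ becomes a one-dimensional explicit polynomial/antiderivative computation. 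A smooth overlapping partition would force you to integrate the trapezoid weight against the Gaussian as well, complicating that step. So your proof works for this lemma in isolation, but it does not set up the downstream machinery as cleanly as the paper's construction does.

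Second, and more substantively, your boundary paragraph invokes Assumption~\ref{ass:boundary_smoothness}. That assumption is not in the hypotheses of Lemma~\ref{lem:approximate_pdata1} (the lemma assumes only Assumptions~\ref{ass:bounded_density}--\ref{ass:holder_density}), and it is also not needed: the H\"older class $\mathcal{H}^{\beta}([-1,1]^{d^*},R)$ already gives uniform control of $\partial^{\boldsymbol{\alpha}}\widehat{p}_{data}^*$ for $\|\boldsymbol{\alpha}\|_1\le r$ and $\gamma$-H\"older control of the top-order derivatives on all of the \emph{closed} cube $[-1,1]^{d^*}$. Since $[-1,1]^{d^*}$ is convex, the segment joining $\mathbf{x}_\nu\in[-1,1]^{d^*}$ to $\mathbf{x}\in[-1,1]^{d^*}$ stays inside the domain, so the Lagrange/mean-value form of the Taylor remainder applies without any extra boundary regularity. (Assumption~\ref{ass:boundary_smoothness} is only used in Lemma~\ref{lem:approximate_pdata2} to upgrade the error near the boundary to the much faster rate $\mathcal{R}^{-(3\beta+2)}$; it is irrelevant to the plain $\mathcal{R}^{-\beta}$ bound claimed here.) You should drop that invocation; as written it appeals to a hypothesis the lemma does not grant.

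With those two caveats, your error estimate itself is sound: the telescoping identity $\widehat{p}_{data}^*-p_\mathcal{R}=\sum_\nu\phi_\nu(\widehat{p}_{data}^*-T_\nu)$, the $O(1)$ local overlap, and $\|\mathbf{x}-\mathbf{x}_\nu\|\lesssim 1/N$ on $\mathrm{supp}\,\phi_\nu$ give $\lesssim R\,N^{-\beta}\asymp\mathcal{R}^{-\beta}$, which is the claimed bound.
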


\begin{proof}
We denote 
$$
f(\mathbf{x}) := \widehat{p}_{data}^{*}(2\mathbf{x} - 1), ~ \mathbf{x} \in [0, 1]^{d^*}.
$$
By Assumption \ref{ass:holder_density}, we know that $\Vert f \Vert_{\mathcal{H}^{\beta}([0,1]^{d^*})} \leq 2^rR$.
For any $\mathbf{n} = (n_1,n_2,\cdots,n_{d^*})^{\top} \in [\mathcal{R}]^{d^*} := \{0,1,\cdots,\mathcal{R}\}^{d^*}$, we define
$$
\psi_{\mathbf{n}}(\mathbf{x}):= \mathbf{I}_{\{\mathbf{x}\in\left(\frac{\mathbf{n}-1}{\mathcal{R}},\frac{\mathbf{n}}{\mathcal{R}}\right]\}} = \prod_{i=1}^{d^*}\mathbf{I}_{\{x_i\in\left(\frac{n_i-1}{\mathcal{R}},\frac{n_i}{\mathcal{R}}\right]\}}, ~ \mathbf{x} = (x_1,x_2,\cdots,x_{d^*})^{\top}\in[0,1]^{d^*}.
$$
Note that the functions $\{\psi_{\mathbf{n}}\}_{\mathbf{n}\in [\mathcal{R}]^{d^*}}$ form a partition of unity of the domain $[0,1]^{d^*}$, i.e.,
$$
\sum_{\mathbf{n}\in[\mathcal{R}]^{d^*}}\psi_{\mathbf{n}}(\mathbf{x}) \equiv 1, ~ \mathbf{x}\in [0,1]^{d^*}.
$$
Denote $p_{\mathbf{n},\boldsymbol{\alpha}}(\mathbf{x}):= \psi_{\mathbf{n}}(\mathbf{x})\left(\mathbf{x} - \frac{\mathbf{n}}{\mathcal{R}}\right)^{\boldsymbol{\alpha}}$ and $c_{\mathbf{n},\boldsymbol{\alpha}}:=\partial^{\boldsymbol{\alpha}}f(\frac{\mathbf{n}}{\mathcal{R}})/\boldsymbol{\alpha}!$. Then $p_{\mathbf{n},\boldsymbol{\alpha}}(\mathbf{x})$ is supported on $\{\mathbf{x}\in[0,1]^{d^*}:\mathbf{x}\in\left(\frac{\mathbf{n}-1}{\mathcal{R} },\frac{\mathbf{n}}{\mathcal{R}}\right]\}$. We define
$$
p(\mathbf{x}):=
\sum_{\mathbf{n}\in[\mathcal{R}]^{d^*}}\sum_{\Vert\boldsymbol{\alpha}\Vert_1\leq r}c_{\mathbf{n},\boldsymbol{\alpha}}p_{\mathbf{n},\boldsymbol{\alpha}}(\mathbf{x}), ~ \mathbf{x}\in[0,1]^{d^*}.
$$
Then, we have
$$
\begin{aligned}
|f(\mathbf{x}) - p(\mathbf{x})| &= \left|\sum_{\mathbf{n}\in[\mathcal{R}]^{d^*}}\psi_{\mathbf{n}}(\mathbf{x})f(\mathbf{x}) - \sum_{\mathbf{n}\in[\mathcal{R}]^{d^*}}\psi_{\mathbf{n}}(\mathbf{x})\sum_{\Vert\boldsymbol{\alpha}\Vert_1\leq r} c_{\mathbf{n},\boldsymbol{\alpha}}\left(\mathbf{x} - \frac{\mathbf{n}}{\mathcal{R}}\right)^{\boldsymbol{\alpha}}\right| \\
&\leq \sum_{\mathbf{n}\in[\mathcal{R}]^{d^*}}\psi_{\mathbf{n}}(\mathbf{x}) \left|f(\mathbf{x}) - \sum_{\Vert\boldsymbol{\alpha}\Vert_1\leq r} c_{\mathbf{n},\boldsymbol{\alpha}}\left(\mathbf{x} - \frac{\mathbf{n}}{\mathcal{R}}\right)^{\boldsymbol{\alpha}}\right|.
\end{aligned}
$$
Using Taylor expansion, there exists $\theta\in[0,1]$ such that
$$
f(\mathbf{x}) = \sum_{\Vert\boldsymbol{\alpha}\Vert_1 < r}c_{\mathbf{n},\boldsymbol{\alpha}}\left(\mathbf{x} - \frac{\mathbf{n}}{\mathcal{R}}\right)^{\boldsymbol{\alpha}} + \sum_{\Vert\boldsymbol{\alpha}\Vert_1 = r} \frac{\partial^{\boldsymbol{\alpha}}f\left((1-\theta)\frac{\mathbf{n}}{\mathcal{R}} + \theta\mathbf{x}\right)}{\boldsymbol{\alpha}!} \left(\mathbf{x} - \frac{\mathbf{n}}{\mathcal{R}}\right)^{\boldsymbol{\alpha}}.
$$
Thus, we have
$$
\begin{aligned}
\psi_{\mathbf{n}}(\mathbf{x})\left|f(\mathbf{x}) - \sum_{\Vert\boldsymbol{\alpha}\Vert_1\leq r} c_{\mathbf{n},\boldsymbol{\alpha}}\left(\mathbf{x} - \frac{\mathbf{n}}{\mathcal{R}}\right)^{\boldsymbol{\alpha}}\right| &\leq 2^rR\psi_{\mathbf{n}}(\mathbf{x}) \sum_{\Vert\boldsymbol{\alpha}\Vert_1 = r}\frac{1}{\boldsymbol{\alpha}!} \left\Vert\mathbf{x} - \frac{\mathbf{n}}{\mathcal{R}}\right\Vert_{\infty}^{\Vert\boldsymbol{\alpha}\Vert_1}\theta^{\gamma}\left\Vert\mathbf{x} - \frac{\mathbf{n}}{\mathcal{R}}\right\Vert_{\infty}^{\gamma} \\
& \leq 2^rR\psi_{\mathbf{n}}(\mathbf{x})\cdot \mathcal{R}^{-\beta}\sum_{\Vert\boldsymbol{\alpha}\Vert_1=r}\frac{1}{\boldsymbol{\alpha}!} \\
& \leq 2^rR\psi_{\mathbf{n}}(\mathbf{x})\cdot \mathcal{R}^{-\beta}\cdot (d^*)^{r},
\end{aligned}
$$
which implies that
$$
|f(\mathbf{x}) - p(\mathbf{x})| \leq (2d^*)^rR \cdot \mathcal{R}^{-\beta} \lesssim \mathcal{R}^{-\beta}.
$$
Finally, we define 
$$
p_{\mathcal{R}}(\mathbf{x}) = p\left(\frac{\mathbf{x} + 1}{2}\right), ~ \mathbf{x} \in [-1,1]^{d^*},
$$
then we have
$$
|\widehat{p}_{data}^*(\mathbf{x}) - p_{\mathcal{R}}(\mathbf{x})| = \left| f\left(\frac{\mathbf{x} + 1}{2}\right) - p\left(\frac{\mathbf{x} + 1}{2}\right)\right| \lesssim \mathcal{R}^{-\beta}.
$$
The proof is complete.
\end{proof}

In Lemma \ref{lem:approximate_pdata1}, we assume that  
Assumptions \ref{ass:bounded_density}-\ref{ass:holder_density} hold. 
By additionally incorporating   Assumption \ref{ass:boundary_smoothness}, we can derive the following results.
\begin{lemma}\label{lem:approximate_pdata2}
Suppose Assumptions \ref{ass:bounded_density}-\ref{ass:boundary_smoothness} hold. Let $\mathcal{R} \gg 1$, there exists $p_{\mathcal{R}}(\mathbf{x})$ that satisfies
$$
\left|\widehat{p}_{data}^*(\mathbf{x}) - p_{\mathcal{R}}(\mathbf{x}) \right| \lesssim \mathcal{R}^{-\beta}, ~ \mathbf{x} \in [-1,1]^{d^*},
$$
and
$$
\left|\widehat{p}_{data}^*(\mathbf{x}) - p_{\mathcal{R}}(\mathbf{x})\right| \lesssim \mathcal{R}^{-(3\beta + 2)}, ~ \mathbf{x} \in [-1,1]^{d^*}\backslash [-1+a_0, 1-a_0]^{d^*}.
$$
Moreover, $p_{\mathcal{R}}(\mathbf{x})$ has the following form:
\begin{align*}
p_{\mathcal{R}}(\mathbf{x}) =
&\sum_{\mathbf{n}\in[\mathcal{R}]^{d^*}}\sum_{\Vert\boldsymbol{\alpha}\Vert_1 < \beta}c_{\mathbf{n},\boldsymbol{\alpha}}^{(0)}p_{\mathbf{n},\boldsymbol{\alpha}}\left(\frac{\mathbf{x} + 1}{2}\right) \mathbf{I}_{\{\Vert\mathbf{x}\Vert_{\infty}\leq 1-a_0\}}\\
&+ \sum_{\mathbf{n}\in[\mathcal{R}]^{d^*}}\sum_{\Vert\boldsymbol{\alpha}\Vert_1 < 3\beta+2}c_{\mathbf{n},\boldsymbol{\alpha}}^{(1)}p_{\mathbf{n},\boldsymbol{\alpha}}\left(\frac{\mathbf{x} + 1}{2}\right) \mathbf{I}_{\{1-a_0 <  \Vert\mathbf{x}\Vert_{\infty}\leq 1\}},
\end{align*}
where $c_{\mathbf{n},\boldsymbol{\alpha}}^{(0)}$ and $c_{\mathbf{n},\boldsymbol{\alpha}}^{(1)}$ satisfy
$$
|c_{\mathbf{n},\boldsymbol{\alpha}}^{(0)}| \lesssim \frac{1}{\boldsymbol{\alpha}!}, ~~ |c_{\mathbf{n},\boldsymbol{\alpha}}^{(1)}| \lesssim \frac{1}{\boldsymbol{\alpha}!}.
$$
\end{lemma}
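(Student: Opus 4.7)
The plan is to refine the construction of Lemma~\ref{lem:approximate_pdata1} by using two different Taylor orders on two different regions: a low order governed by the H\"older smoothness $\beta$ on the interior $[-1+a_0,1-a_0]^{d^*}$, and a high order enabled by $C^{\infty}$ smoothness on the boundary strip $[-1,1]^{d^*}\setminus[-1+a_0,1-a_0]^{d^*}$. Rescaling as before via $f(\mathbf{x}):=\widehat{p}^{*}_{data}(2\mathbf{x}-1)$ and using the same partition of unity $\{\psi_{\mathbf{n}}\}_{\mathbf{n}\in[\mathcal{R}]^{d^*}}$, I will paste together these two local polynomial approximations and switch between them through the indicator functions in $\mathbf{x}$ that appear in the statement.

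For the interior piece, I will simply reuse the argument of Lemma~\ref{lem:approximate_pdata1}: set $c_{\mathbf{n},\boldsymbol{\alpha}}^{(0)}=\partial^{\boldsymbol{\alpha}} f(\mathbf{n}/\mathcal{R})/\boldsymbol{\alpha}!$ for $\Vert\boldsymbol{\alpha}\Vert_{1}<\beta$, note that $|c_{\mathbf{n},\boldsymbol{\alpha}}^{(0)}|\leq \Vert f\Vert_{\mathcal{H}^{\beta}}/\boldsymbol{\alpha}!\lesssim 1/\boldsymbol{\alpha}!$ by Assumption~\ref{ass:holder_density}, and conclude that the Taylor remainder is $\mathcal{O}(\mathcal{R}^{-\beta})$ pointwise on $[-1,1]^{d^*}$ in exactly the same way. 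For the boundary piece, I will use Assumption~\ref{ass:boundary_smoothness} which, together with the compactness of the boundary strip, guarantees that every derivative $\partial^{\boldsymbol{\alpha}} f$ is bounded there; hence setting $c_{\mathbf{n},\boldsymbol{\alpha}}^{(1)}=\partial^{\boldsymbol{\alpha}} f(\mathbf{n}/\mathcal{R})/\boldsymbol{\alpha}!$ for $\Vert\boldsymbol{\alpha}\Vert_{1}<3\beta+2$ again gives $|c_{\mathbf{n},\boldsymbol{\alpha}}^{(1)}|\lesssim 1/\boldsymbol{\alpha}!$, and the standard Taylor remainder bound yields an error $\mathcal{O}(\mathcal{R}^{-(3\beta+2)})$ on the boundary strip. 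Summing these two region-restricted polynomials and then undoing the affine change of variables $\mathbf{x}\mapsto(\mathbf{x}+1)/2$ produces the claimed form of $p_{\mathcal{R}}$.

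The main subtlety I anticipate is the transition layer: a grid cell $((\mathbf{n}-1)/\mathcal{R},\mathbf{n}/\mathcal{R}]$ may straddle the interface between interior and boundary, so that the base point $\mathbf{n}/\mathcal{R}$ used in the Taylor expansion might lie on the ``wrong side'' from $\mathbf{x}$. The clean fix is to exploit that the two approximation estimates are asked to hold in different places: for $\mathbf{x}$ in the interior I only need the error $\mathcal{O}(\mathcal{R}^{-\beta})$, which the H\"older expansion supplies globally and in particular on any cell, while for $\mathbf{x}$ in the boundary strip and $\mathcal{R}$ sufficiently large, the containing cell lies within an $\mathcal{R}^{-1}$-neighborhood that is still inside a slightly enlarged $C^{\infty}$ region (obtained by shrinking $a_0$ to $a_0/2$ for the high-order approximation while keeping the indicator at $a_0$). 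Under this enlargement the high-order Taylor estimate applies on all cells intersecting $\{\Vert\mathbf{x}\Vert_{\infty}>1-a_0\}$, the coefficient bounds are preserved because the $C^{\infty}$ norm of $f$ on the enlarged strip is still finite, and absorbing the implicit constants yields the two stated error bounds simultaneously, completing the proof.
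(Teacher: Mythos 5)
There is a genuine gap, and it is exactly at the point you flagged as the "main subtlety." Your plan is to build the high-order piece by directly Taylor-expanding $f(\cdot)=\widehat{p}^{*}_{data}(2\,\cdot-1)$ around the grid points $\mathbf{n}/\mathcal{R}$ using the $C^{\infty}$ regularity available on the boundary strip $S(a_0):=[-1,1]^{d^*}\setminus[-1+a_0,1-a_0]^{d^*}$. But the $\mathcal{R}^{-(3\beta+2)}$ bound has to hold for \emph{every} $\mathbf{x}$ in $S(a_0)$, including points whose $\ell^\infty$-norm is only marginally above $1-a_0$. For such $\mathbf{x}$ the containing grid cell of width $\mathcal{O}(\mathcal{R}^{-1})$ straddles the interface, so the segment from the Taylor base point $\mathbf{n}/\mathcal{R}$ to $\mathbf{x}$ (and possibly the base point itself) can lie in $[-1+a_0,1-a_0]^{d^*}$, where Assumptions~\ref{ass:bounded_density}--\ref{ass:boundary_smoothness} grant only $\beta$-H\"older regularity. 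The required derivatives of order up to $\lfloor 3\beta+2\rfloor$ are therefore not guaranteed to exist on that piece, and the Taylor remainder for those cells can only be controlled at the rate $\mathcal{R}^{-\beta}$, not $\mathcal{R}^{-(3\beta+2)}$.

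Your proposed fix of "shrinking $a_0$ to $a_0/2$ for the high-order approximation while keeping the indicator at $a_0$" does not repair this, for two reasons. First, the direction is wrong: the strip $S(a_0/2)$ is \emph{thinner} than $S(a_0)$ (since $[-1+a_0/2,1-a_0/2]^{d^*}\supset[-1+a_0,1-a_0]^{d^*}$), so you have shrunk, not enlarged, the region on which $C^{\infty}$ can be used. Second, and more fundamentally, what the argument actually needs is high-order smoothness on a \emph{thicker} strip $S(a_0+\delta)$ (so that cells of size $\mathcal{R}^{-1}$ touching $S(a_0)$ are entirely covered), but Assumption~\ref{ass:boundary_smoothness} only gives $C^{\infty}$ on $S(a_0)$ and nothing beyond it. The paper avoids all of this by first invoking a global smooth extension: it produces a function $\widehat{p}^{*,\prime}_{data}\in C^{3\beta+2}([-1,1]^{d^*})$ agreeing with $\widehat{p}^{*}_{data}$ on $S(a_0)$, applies Lemma~\ref{lem:approximate_pdata1} to \emph{that} function with H\"older index $3\beta+2$ (so the coefficients $c^{(1)}_{\mathbf{n},\boldsymbol{\alpha}}$ are Taylor data of the extension and exist on every cell), and then observes that the resulting polynomial approximates the original $\widehat{p}^{*}_{data}$ on $S(a_0)$ because the two functions coincide there. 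Your proof plan should be modified to introduce this extension step before Taylor-expanding; without it the coefficients $c^{(1)}_{\mathbf{n},\boldsymbol{\alpha}}$ are undefined on interior cells and the stated remainder bound fails on straddling cells.
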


\begin{proof}
Since $\widehat{p}_{data}^*\in\mathcal{H}^{\beta}([-1,1]^{d^*}, R)$, by Lemma \ref{lem:approximate_pdata1}, we can construct
$$
p_{\mathcal{R},0}(\mathbf{x}) =  \sum_{\mathbf{n}\in[\mathcal{R}]^{d^*}}\sum_{\Vert\boldsymbol{\alpha}\Vert_1 < \beta}c_{\mathbf{n},\boldsymbol{\alpha}}^{(0)}p_{\mathbf{n},\boldsymbol{\alpha}}\left(\frac{\mathbf{x} + 1}{2}\right) \mathbf{I}_{\{  \Vert\mathbf{x}\Vert_{\infty}\leq 1\}}
$$ 
such that
$$
\left|\widehat{p}_{data}^*(\mathbf{x}) - p_{\mathcal{R},0}(\mathbf{x})\right| \lesssim \mathcal{R}^{-\beta}, ~ \mathbf{x} \in [-1,1]^{d^*},
$$
where $|c_{\mathbf{n},\boldsymbol{\alpha}}^{(0)}| \leq \frac{2^{\beta}R}{\boldsymbol{\alpha}!} \lesssim \frac{1}{\boldsymbol{\alpha}!}$.

By Assumption \ref{ass:boundary_smoothness}, we know that $\widehat{p}_{data}^*\in \mathcal{C}^{3\beta + 2}([-1,1]^{d^*}\backslash [-1+a_0, 1-a_0]^{d^*})$.
Consequently, we find that $\widehat{p}_{data}^{*,\prime}$ satisfies
$$
\widehat{p}_{data}^{*,\prime}\in \mathcal{C}^{3\beta + 2}([-1,1]^{d^*}), ~~ \widehat{p}_{data}^* = \widehat{p}_{data}^{*,\prime} \text{ on } [-1,1]^{d^*}\backslash [-1 + a_0, 1 - a_0]^{d^*}.
$$
Therefore, $\widehat{p}_{data}^{*,\prime}\in \mathcal{H}^{3\beta + 2}([-1,1]^{d^*},R_0)$ for some constant $R_0 > 0$. By using Lemma \ref{lem:approximate_pdata1} and replacing $\widehat{p}_{data}^*$ with $\widehat{p}_{data}^{*,\prime}$, we obtain 
$$
p_{\mathcal{R},1}(\mathbf{x}) =  \sum_{\mathbf{n}\in[\mathcal{R}]^{d^*}}\sum_{\Vert\boldsymbol{\alpha}\Vert_1 < 3\beta+2}c_{\mathbf{n},\boldsymbol{\alpha}}^{(1)}p_{\mathbf{n},\boldsymbol{\alpha}}\left(\frac{\mathbf{x} + 1}{2}\right) \mathbf{I}_{\{\Vert\mathbf{x}\Vert_{\infty}\leq 1\}}
$$ 
such that
$$
\left|\widehat{p}_{data}^{*,\prime}(\mathbf{x}) - p_{\mathcal{R},1}(\mathbf{x})\right| \lesssim \mathcal{R}^{-(3\beta + 2)}, ~ \mathbf{x} \in [-1,1]^{d^*},
$$
where $|c_{\mathbf{n},\boldsymbol{\alpha}}^{(1)}|\leq \frac{2^{3\beta + 2}R_0}{\boldsymbol{\alpha}!} \lesssim \frac{1}{\boldsymbol{\alpha}!}$.

Let $p_{\mathcal{R}}(\mathbf{x}) = p_{\mathcal{R},0}(\mathbf{x})\mathbf{I}_{\{\Vert\mathbf{x}\Vert_{\infty} \leq 1 - a_0\}} + p_{\mathcal{R},1}(\mathbf{x})\mathbf{I}_{\{1 - a_0 < \Vert\mathbf{x}\Vert\leq 1\}}$, i.e.,
\begin{align*}
p_\mathcal{R}(\mathbf{x}) = &\sum_{\mathbf{n}\in[\mathcal{R}]^{d^*}}\sum_{\Vert\boldsymbol{\alpha}\Vert_1 < \beta}c_{\mathbf{n},\boldsymbol{\alpha}}^{(0)}p_{\mathbf{n},\boldsymbol{\alpha}}\left(\frac{\mathbf{x} + 1}{2}\right) \mathbf{I}_{\{\Vert\mathbf{x}\Vert_{\infty}\leq 1-a_0\}}\\
&+ \sum_{\mathbf{n}\in[\mathcal{R}]^{d^*}}\sum_{\Vert\boldsymbol{\alpha}\Vert_1 < 3\beta+2}c_{\mathbf{n},\boldsymbol{\alpha}}^{(1)}p_{\mathbf{n},\boldsymbol{\alpha}}\left(\frac{\mathbf{x} + 1}{2}\right) \mathbf{I}_{\{1-a_0 <  \Vert\mathbf{x}\Vert_{\infty}\leq 1\}}.
\end{align*}
Then, $p_{\mathcal{R}}(\mathbf{x})$ satisfies that
$$
\begin{aligned}
\left|\widehat{p}_{data}^*(\mathbf{x}) - p_{\mathcal{R}}(\mathbf{x})\right| & \lesssim 
\left| \widehat{p}_{data}^*(\mathbf{x}) - p_{\mathcal{R},0}(\mathbf{x})\right| \mathbf{I}_{\{\Vert\mathbf{x}\Vert_{\infty} \leq 1- a_0\}} + \left| \widehat{p}_{data}^{*,\prime}(\mathbf{x}) - p_{\mathcal{R},1}(\mathbf{x})\right| \mathbf{I}_{\{1-a_0 < \Vert\mathbf{x}\Vert_{\infty} \leq 1\}}\\
& \lesssim  \mathcal{R}^{-\beta} \mathbf{I}_{\{\Vert\mathbf{x}\Vert_{\infty} \leq 1- a_0\}} + \mathcal{R}^{-(3\beta + 2)} \mathbf{I}_{\{1-a_0 < \Vert\mathbf{x}\Vert_{\infty} \leq 1\}},
\end{aligned}
$$
which implies that
$$
\left|\widehat{p}_{data}^*(\mathbf{x}) - p_{\mathcal{R}}(\mathbf{x})\right| \lesssim \mathcal{R}^{-\beta} + \mathcal{R}^{-(3\beta + 2)} \lesssim \mathcal{R}^{-\beta}, ~ \mathbf{x} \in [-1,1]^{d^*},
$$
and 
$$
\left|\widehat{p}_{data}^*(\mathbf{x}) - p_{\mathcal{R}}(\mathbf{x})\right| \lesssim \mathcal{R}^{-(3\beta + 2)}, ~ \mathbf{x} \in [-1,1]^{d^*} \backslash [-1+a_0, 1-a_0]^{d^*}.
$$
The proof is complete.
\end{proof}

\subsection{
Approximating $q_t(\mathbf{x})$
and $\sigma_t\nabla q_t(\mathbf{x})$
via local polynomial integrals 
}
\label{subsec:c2}
In this section, we approximate $q_t(\mathbf{x})$
and $\nabla q_t(\mathbf{x})$
via local polynomial integrals. If Assumptions \ref{ass:bounded_density}-\ref{ass:boundary_smoothness} hold, then $p_N(\mathbf{y})$ in Lemma \ref{lem:approximate_pdata2} can be rewritten as
$$
\begin{aligned}
p_N(\mathbf{y})=  & \sum_{\mathbf{n}\in[\mathcal{R}]^{d^*}}\sum_{\Vert\boldsymbol{\alpha}\Vert_1 < \beta}c_{\mathbf{n},\boldsymbol{\alpha}}^{(0)}p_{\mathbf{n},\boldsymbol{\alpha}}\left(\frac{\mathbf{y} + 1}{2}\right) \mathbf{I}_{\{\Vert\mathbf{y}\Vert_{\infty}\leq 1-a_0\}}\\
&+ \sum_{\mathbf{n}\in[\mathcal{R}]^{d^*}}\sum_{\Vert\boldsymbol{\alpha}\Vert_1 < 3\beta+2}c_{\mathbf{n},\boldsymbol{\alpha}}^{(1)}p_{\mathbf{n},\boldsymbol{\alpha}}\left(\frac{\mathbf{y} + 1}{2}\right) \mathbf{I}_{\{ \Vert\mathbf{y}\Vert_{\infty}\leq 1\}} \\
& -\sum_{\mathbf{n}\in[\mathcal{R}]^{d^*}}\sum_{\Vert\boldsymbol{\alpha}\Vert_1 < 3\beta+2}c_{\mathbf{n},\boldsymbol{\alpha}}^{(1)}p_{\mathbf{n},\boldsymbol{\alpha}}\left(\frac{\mathbf{y} + 1}{2}\right) \mathbf{I}_{\{ \Vert\mathbf{y}\Vert_{\infty}\leq 1-a_0\}}.
\end{aligned}
$$
Therefore, $p_N(\mathbf{y})$ in either Lemma \ref{lem:approximate_pdata1} or Lemma \ref{lem:approximate_pdata2} can be expressed as the combination of the following functional forms:
$$
\sum_{\mathbf{n}\in[\mathcal{R}]^{d^*}}\sum_{\Vert\boldsymbol{\alpha}\Vert_1 < C_{\boldsymbol{\alpha}}}c_{\mathbf{n},\boldsymbol{\alpha}}p_{\mathbf{n},\boldsymbol{\alpha}}\left(\frac{\mathbf{y} + 1}{2}\right) \mathbf{I}_{\{ \Vert\mathbf{y}\Vert_{\infty}\leq C_{\mathbf{y}}\}},
$$
where $C_{\boldsymbol{\alpha}} \in \{\beta, 3\beta + 2\}$, $C_{\mathbf{y}} \in \{1-a_0, 1\}$ and $c_{\mathbf{n}, \boldsymbol{\alpha}} \in \{c_{\mathbf{n}, \boldsymbol{\alpha}}^{(0)}, c_{\mathbf{n}, \boldsymbol{\alpha}}^{(1)}\}$. 
By replacing $\widehat{p}_{data}^*$ with $p_{\mathcal{R}}$, we denote 
$$
g_1(t,\mathbf{x}) := \int_{\mathbb{R}^{d^*}}p_\mathcal{R}(\mathbf{y})\mathbf{I}_{\{\Vert\mathbf{y}\Vert_{\infty}\leq 1\}}\cdot
\frac{1}{\sigma_t^{d^*}(2\pi)^{d^*/2}}\exp\left(-\frac{\Vert\mathbf{x} - \mathbf{y}\Vert^2}{2\sigma_t^2}\right) \mathrm{d}\mathbf{y}.
$$
The difference between $q_t(\mathbf{x})$ and $g_1(t,\mathbf{x})$ can be bounded as
$$
\begin{aligned}
|g_1(t,\mathbf{x}) - q_t(\mathbf{x})| & \leq \int_{\mathbb{R}^{d^*}}|p_\mathcal{R}(\mathbf{y}) - \widehat{p}_{data}^*(\mathbf{y})|\mathbf{I}_{\{\Vert\mathbf{y}\Vert_{\infty}\leq 1\}}\cdot
\frac{1}{\sigma_t^{d^*}(2\pi)^{d^*/2}}\exp\left(-\frac{\Vert\mathbf{x} - \mathbf{y}\Vert^2}{2\sigma_t^2}\right) \mathrm{d}\mathbf{y} \\
& \lesssim \mathcal{R}^{-\beta}\cdot \int_{\mathbb{R}^{d^*}}
\frac{1}{\sigma_t^{d^*}(2\pi)^{d^*/2}}\exp\left(-\frac{\Vert\mathbf{x} - \mathbf{y}\Vert^2}{2\sigma_t^2}\right) \mathrm{d}\mathbf{y} \lesssim \mathcal{R}^{-\beta}.
\end{aligned}
$$
Therefore, we only need to approximate $g_1(t,\mathbf{x})$. By Lemma \ref{lem:integral_clipping}, for any $\epsilon > 0$, there exists a constant $C > 0$ such that for any $\mathbf{x}\in\mathbb{R}^{d^*}$,
$$
\left|q_t(\mathbf{x}) - \int_{A_{\mathbf{x}}}\widehat{p}_{data}^{*}(\mathbf{y})\mathbf{I}_{\{\Vert\mathbf{y}\Vert_{\infty}\leq 1\}}\cdot
\frac{1}{\sigma_t^{d^*}(2\pi)^{d^*/2}}\exp\left(-\frac{\Vert\mathbf{x} - \mathbf{y}\Vert^2}{2\sigma_t^2}\right) \mathrm{d}\mathbf{y}\right| \lesssim \epsilon,
$$
where $A_{\mathbf{x}} = \prod_{i=1}^{d^*}a_{i,\mathbf{x}}$ with $a_{i,\mathbf{x}} = [x_i - C\sigma_t\sqrt{\log\epsilon^{-1}}, x_i + C\sigma_t\sqrt{\log\epsilon^{-1}}]$. We denote
$$
g_2(t,\mathbf{x}):= \int_{A_{\mathbf{x}}}p_\mathcal{R}(\mathbf{y})\mathbf{I}_{\{\Vert\mathbf{y}\Vert_{\infty}\leq 1\}}\cdot
\frac{1}{\sigma_t^{d^*}(2\pi)^{d^*/2}}\exp\left(-\frac{\Vert\mathbf{x} - \mathbf{y}\Vert^2}{2\sigma_t^2}\right) \mathrm{d}\mathbf{y}.
$$
Since $\widehat{p}_{data}^*$ is bounded, $p_{\mathcal{R}}$ is also bounded. Replacing $\widehat{p}_{data}^*$ with $p_{\mathcal{R}}$ in Lemma \ref{lem:integral_clipping},
the difference between $g_1$ and $g_2$ can be bounded as
$$
|g_1(t,\mathbf{x}) - g_2(t,\mathbf{x})| \lesssim \epsilon.
$$

Note that $g_2(t,\mathbf{x})$ includes an integral involving the exponential function, which is challenging to handle.  To address this difficulty, we use polynomials to approximate the exponential function. 
For any $1\leq i \leq d^*$ and $\mathbf{y}\in A_{\mathbf{x}}$, we know that $\frac{|x_i - y_i|}{\sigma_t} \leq C\sqrt{\log\epsilon^{-1}}$. Thus, by Taylor expansions, we have
$$
\left|
\exp\left(-\frac{(x_i - y_i)^2}{2\sigma_t^2}\right) - \sum_{l=0}^{k-1}\frac{1}{l!}\left(-\frac{(x_i-y_i)^2}{2\sigma_t^2}\right)^l
\right| \leq \frac{C^{2k}\log^k\epsilon^{-1}}{k!2^k}, ~ \forall y_i\in[B_{l}(x_i), B_{u}(x_i)],
$$
where
$$
B_l(x_i) = \max\left\{x_i - C\sigma_t\sqrt{\log\epsilon^{-1}}, -1\right\},
$$
and
$$
B_u(x_i) = \min\left\{x_i + C\sigma_t\sqrt{\log\epsilon^{-1}}, 1\right\}.
$$
By setting $k \geq \frac{3}{2}C^2 u\log\epsilon^{-1}$ and using the inequality $k! \geq (k/3)^k$ when $k \geq 3$, we have
$$
\left|
\exp\left(-\frac{(x_i - y_i)^2}{2\sigma_t^2}\right) - \sum_{l=0}^{k-1}\frac{1}{l!}\left(-\frac{(x_i-y_i)^2}{2\sigma_t^2}\right)^l
\right| \leq \epsilon^{\frac{3}{2}C^2 u\log u}.
$$
Thus, we can set
$$
u = \max\left\{e, \frac{2}{3C^2}\left(1 + \frac{\log d^*}{\log \epsilon^{-1}}\right)\right\}
$$
such that
$$
\epsilon^{\frac{3}{2}C^2u\log u} \leq \frac{\epsilon}{d^*},
$$
where $k = \mathcal{O}\left(\log \epsilon^{-1}\right)$. 
By multiplying over the $d^*$ dimensions indexed by $i$, we have
$$
\left|
\exp\left(-\frac{\Vert\mathbf{x} - \mathbf{y}\Vert^2}{2\sigma_t^2}\right) - \prod_{i=1}^{d^*}\sum_{l=0}^{k-1}\frac{1}{l!}\left(-\frac{(x_i-y_i)^2}{2\sigma_t^2}\right)^l
\right| \leq d^*\left(1 + \frac{\epsilon}{d^*}\right)^{d^*-1}\cdot\frac{\epsilon}{d^*} \lesssim \epsilon.
$$
Therefore, we only need to approximate
$$
g_3(t,\mathbf{x}) := 
\int_{A_{\mathbf{x}}}p_{\mathcal{R}}(\mathbf{y})\mathbf{I}_{\{\Vert\mathbf{y}\Vert_{\infty}\leq 1\}}\cdot
\frac{1}{\sigma_t^{d^*}(2\pi)^{d^*/2}}
\prod_{i=1}^{d^*}\sum_{l=0}^{k-1}\frac{1}{l!}\left(-\frac{(x_i-y_i)^2}{2\sigma_t^2}\right)^l
\mathrm{d}\mathbf{y}.
$$
Notice that the difference between $g_2(t,\mathbf{x})$ and $g_3(t,\mathbf{x})$ can be bounded as
$$
\begin{aligned}
|g_2(t,\mathbf{x}) - g_3(t,\mathbf{x})| &\lesssim \epsilon \cdot \frac{1}{\sigma_t^{d^*}(2\pi)^{d^/2}} \int_{A_{\mathbf{x}}} p_{\mathcal{R}}(\mathbf{y})\mathbf{I}_{\{\Vert\mathbf{y}\Vert_\infty \leq 1\}} \mathrm{d}\mathbf{y} \\
& \lesssim \epsilon \cdot \frac{1}{\sigma_t^{d^*}(2\pi)^{d^/2}} \int_{A_{\mathbf{x}}} \left(\widehat{p}_{data}^*(\mathbf{y}) + {\mathcal{R}}^{-\beta}\right)\mathbf{I}_{\{\Vert\mathbf{y}\Vert_\infty \leq 1\}} \mathrm{d}\mathbf{y} \\
& \lesssim \epsilon \cdot (C_u + \mathcal{R}^{-\beta}) \left(2C\sigma_t\sqrt{\log\epsilon^{-1}}\right)^{d^*} / \sigma_t^{d^*} \\
& \lesssim \epsilon \log^{\frac{d^*}{2}} \epsilon^{-1}.
\end{aligned}
$$
This implies that we only need to approximate $g_3(t,\mathbf{x})$ using a sufficiently accurate ReLU neural network. 

Note that $p_{\mathcal{R}}(\mathbf{y})$ can be expressed as the combination of
$$
\sum_{\mathbf{n}\in[\mathcal{R}]^{d^*}}\sum_{\Vert\boldsymbol{\alpha}\Vert_1 < C_{\boldsymbol{\alpha}}}c_{\mathbf{n},\boldsymbol{\alpha}}p_{\mathbf{n},\boldsymbol{\alpha}}\left(\frac{\mathbf{\mathbf{y}} + 1}{2}\right) \mathbf{I}_{\{ \Vert\mathbf{\mathbf{y}}\Vert_{\infty}\leq C_{\mathbf{\mathbf{y}}}\}},
$$
where $C_{\boldsymbol{\alpha}} \in \{\beta, 3\beta + 2\}$, 
$C_{\mathbf{y}} \in \{1-a_0, 1\}$,
and $c_{\mathbf{n}, \boldsymbol{\alpha}} \in \{c_{\mathbf{n}, \boldsymbol{\alpha}}^{(0)}, c_{\mathbf{n}, \boldsymbol{\alpha}}^{(1)}\}$. Thus, $g_3(t,\mathbf{x})$ can be expressed as the combination of the following functional forms
\begin{equation} \label{eq:diffused_local_poly1}
\begin{aligned}
    &\sum_{\mathbf{n}\in[\mathcal{R}]^{d^*}}\sum_{\Vert\boldsymbol{\alpha}\Vert_1 < C_{\boldsymbol{\alpha}}}c_{\mathbf{n},\boldsymbol{\alpha}}
    \int_{A_{\mathbf{x}}}
    p_{\mathbf{n},\boldsymbol{\alpha}}\left(\frac{\mathbf{\mathbf{y}} + 1}{2}\right) \mathbf{I}_{\{ \Vert\mathbf{\mathbf{y}}\Vert_{\infty}\leq C_{\mathbf{\mathbf{y}}}\}} \cdot \frac{1}{\sigma_t^{d^*}(2\pi)^{d^*/2}}
    \prod_{i=1}^{d^*}\sum_{l=0}^{k-1}\frac{1}{l!}\left(-\frac{(x_i-y_i)^2}{2\sigma_t^2}\right)^l
    \mathrm{d}\mathbf{y} \\
    = &\sum_{\mathbf{n}\in[\mathcal{R}]^{d^*}}\sum_{\Vert\boldsymbol{\alpha}\Vert_1 < C_{\boldsymbol{\alpha}}}c_{\mathbf{n},\boldsymbol{\alpha}}
    \prod_{i=1}^{d^*}\frac{1}{\sigma_t(2\pi)^{1/2}} \Bigg(
    \int_{a_{i,\mathbf{x}}\cap \left(\frac{2(n_i-1)}{\mathcal{R}}-1, \frac{2n_i}{\mathcal{R}}-1 \right]} \mathbf{I}_{\{ \Vert\mathbf{\mathbf{y}}\Vert_{\infty}\leq C_{\mathbf{\mathbf{y}}}\}}
    \left(\frac{y_i + 1}{2} - \frac{n_i}{\mathcal{R}}\right)^{\alpha_i} \\
    & ~~~~~~~~~~~~~~~~~~~~~~~~~~~~~~~~~~~~~~~~~~~~~  \cdot \sum_{l=0}^{k-1}\frac{1}{l!}\left(-\frac{(x_i-y_i)^2}{2\sigma_t^2}\right)^l \mathrm{d}y_i \Bigg).
\end{aligned}
\end{equation}

Similarly, we can define $\mathbf{h}_1(t,\mathbf{x})$ to approximate $\sigma_t\nabla q_t(\mathbf{x})$, where
$$
\mathbf{h}_1(t,\mathbf{x}) :=  \int_{\mathbb{R}^{d^*}}p_{\mathcal{R}}(\mathbf{y})\mathbf{I}_{\{\Vert\mathbf{y}\Vert_{\infty}\leq 1\}}\cdot
\frac{\mathbf{x} - \mathbf{y}}{\sigma_t^{d^*+1}(2\pi)^{d^*/2}}\exp\left(-\frac{\Vert\mathbf{x} - \mathbf{y}\Vert^2}{2\sigma_t^2}\right) \mathrm{d}\mathbf{y}.
$$
The difference between $\mathbf{h}_1(t,\mathbf{x})$ and $\sigma_t\nabla q_t(\mathbf{x})$ can be bounded as
$$
\Vert \mathbf{h}_1(t,\mathbf{x}) - \sigma_t\nabla q_t(\mathbf{x})\Vert \lesssim \mathcal{R}^{-\beta}.
$$
We can also define $\mathbf{h}_2(t,\mathbf{x})$ and $\mathbf{h}_3(t,\mathbf{x})$ as follows:
$$
\mathbf{h}_2(t,\mathbf{x}) :=  \int_{A_{\mathbf{x}}}p_{\mathcal{R}}(\mathbf{y})\mathbf{I}_{\{\Vert\mathbf{y}\Vert_{\infty}\leq 1\}}\cdot
\frac{\mathbf{x} - \mathbf{y}}{\sigma_t^{d^*+1}(2\pi)^{d^*/2}}\exp\left(-\frac{\Vert\mathbf{x} - \mathbf{y}\Vert^2}{2\sigma_t^2}\right) \mathrm{d}\mathbf{y},
$$
$$
\mathbf{h}_3(t,\mathbf{x}) := \int_{A_{\mathbf{x}}}p_{\mathcal{R}}(\mathbf{y})\mathbf{I}_{\{\Vert\mathbf{y}\Vert_{\infty}\leq 1\}}\cdot
\frac{\mathbf{x} - \mathbf{y}}{\sigma_t^{d^*+1}(2\pi)^{d^*/2}}
\prod_{i=1}^{d^*}\sum_{l=0}^{k-1}\frac{1}{l!}\left(-\frac{(x_i-y_i)^2}{2\sigma_t^2}\right)^l
\mathrm{d}\mathbf{y}.
$$
Then,  we have
$$
\Vert \mathbf{h}_2(t,\mathbf{x}) - \mathbf{h}_1(t,\mathbf{x})\Vert \lesssim \epsilon,
$$
and
$$
\Vert \mathbf{h}_3(t,\mathbf{x}) - \mathbf{h}_2(t,\mathbf{x})\Vert \lesssim \epsilon\log^{\frac{d^*+1}{2}}\epsilon^{-1}.
$$
The $j$-th element of $\mathbf{h}_3(t,\mathbf{x})$ can be expressed as the combination of the following functional forms
\begin{equation} \label{eq:diffused_local_poly2}
\begin{aligned}
& \sum_{\mathbf{n}\in[\mathcal{R}]^{d^*}}\sum_{\Vert\boldsymbol{\alpha}\Vert_1 \leq r}c_{\mathbf{n},\boldsymbol{\alpha}} \Bigg(
\prod_{i\neq j}^{d^*}\frac{1}{\sigma_t(2\pi)^{1/2}} \int_{a_{i,\mathbf{x}}\cap \left(\frac{2(n_i-1)}{\mathcal{R}}-1, \frac{2n_i}{\mathcal{R}}-1 \right]}\mathbf{I}_{\{|y_i| \leq C_{\mathbf{y}}\}}
\left(\frac{y_i + 1}{2} - \frac{n_i}{\mathcal{R}}\right)^{\alpha_i} 
\sum_{l=0}^{k-1}\frac{1}{l!}\left(-\frac{(x_i-y_i)^2}{2\sigma_t^2}\right)^l \mathrm{d}y_i \\
& ~~~~ \cdot \frac{1}{\sigma_t(2\pi)^{1/2}} \int_{a_{j,\mathbf{x}}\cap \left(\frac{2(n_j-1)}{\mathcal{R}}-1, \frac{2n_j}{\mathcal{R}}-1 \right]} \mathbf{I}_{\{|y_j| \leq C_{\mathbf{y}}\}}
\left(\frac{y_j + 1}{2} - \frac{n_j}{\mathcal{R}}\right)^{\alpha_j} \left(\frac{x_j-y_j}{\sigma_t}\right)
\sum_{l=0}^{k-1}\frac{1}{l!}\left(\frac{(x_j-y_j)^2}{2\sigma_t^2}\right)^l \mathrm{d}y_j \Bigg).
\end{aligned}
\end{equation}

\subsection{Approximating the local polynomial integrals via ReLU neural networks}\label{subsec:c3}
In this section, we use ReLU neural networks to approximate the local polynomial integrals. 
Specifically, we focus on the approximation of \eqref{eq:diffused_local_poly1}
under the restriction  $\Vert\mathbf{x}\Vert_{\infty} \leq C_0$, where $C_0 > 0$ is a constant.
For convenience, we define
$$
f(t,x,n,\alpha,l):= \frac{1}{\sigma_t(2\pi)^{1/2}} \int_{a_{x}\cap \left(\frac{2(n-1)}{\mathcal{R}}-1, \frac{2n}{\mathcal{R}}-1 \right]} \mathbf{I}_{\{|y| \leq C_{\mathbf{y}}\}}
\left(\frac{y + 1}{2} - \frac{n}{\mathcal{R}}\right)^{\alpha} 
\frac{1}{l!}\left(-\frac{(x-y)^2}{2\sigma_t^2}\right)^l \mathrm{d}y,
$$
where $a_x = [x - C\sigma_t\sqrt{\log\epsilon^{-1}}, x + C\sigma_t\sqrt{\log\epsilon^{-1}}]$. 
Then,
\eqref{eq:diffused_local_poly1} can be expressed as
$$
\eqref{eq:diffused_local_poly1} = \sum_{\mathbf{n}\in[\mathcal{R}]^{d^*}}\sum_{\Vert\boldsymbol{\alpha}\Vert_1 < C_{\boldsymbol{\alpha}}}c_{\mathbf{n},\boldsymbol{\alpha}}\cdot\prod_{i=1}^{d^*}\sum_{l=0}^{k-1}f(t,x_i,n_i,\alpha_i,l).
$$
We first use the ReLU neural network to approximate $f(t,x,n,\alpha,l)$ for $|x|\leq C_0$.

\begin{lemma}\label{lem:approximate_h}
Given $\mathcal{R} \gg 1$, $C_0 > 0$, let $1 - T = \mathcal{R}^{-C_T}$, where $C_T > 0$ is a constant. For any $\epsilon_0 > 0$, $n \leq \mathcal{R}$, $\alpha < C_{\boldsymbol{\alpha}}$, and $l \leq k - 1$, there exists a ReLU neural network $\mathrm{s}_f \in \mathrm{NN}(L, M, J, \kappa)$ with
$$
\begin{aligned}
&L = \mathcal{O}\left(\log^2\epsilon^{-1}_0 + \log^2C_0 + \log^2 \mathcal{R} + \log^4 \epsilon^{-1} \right),\\ &M = \mathcal{O}\left(\log^3\epsilon^{-1}_0 + \log^3C_0 + \log^3 \mathcal{R} + \log^6 \epsilon^{-1} \right), \\
&J = \mathcal{O}\left(\log^4\epsilon^{-1}_0 + \log^4C_0 + \log^4 \mathcal{R} + \log^8 \epsilon^{-1} \right),\\ &\kappa = \exp\left(\mathcal{O}\left(\log^2\epsilon^{-1}_0 + \log^2C_0 + \log^2 \mathcal{R} + \log^4 \epsilon^{-1} \right)\right),
\end{aligned}
$$
such that 
$$
|\mathrm{s}_f(t, x, n, \alpha, l) - f(t, x, n, \alpha, l)| \lesssim \epsilon_0, ~ x\in[-C_0, C_0], ~ t\in[\mathcal{R}^{-C_T}, 1].
$$
\end{lemma}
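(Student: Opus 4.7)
The plan is to write $f(t,x,n,\alpha,l)$ in closed form and then realize the resulting expression by a ReLU network built out of standard approximation primitives. Observe first that
\[
\tfrac{1}{l!}\bigl(-\tfrac{(x-y)^2}{2\sigma_t^2}\bigr)^l
= \tfrac{(-1)^l}{l!\,2^l\sigma_t^{2l}}(x-y)^{2l},
\]
so the integrand is a polynomial in $y$ of degree $\alpha+2l=O(\log \epsilon^{-1})$, while the indicator $\mathbf{I}_{\{|y|\le C_{\mathbf{y}}\}}$ and the intersection with $\bigl(\tfrac{2(n-1)}{\mathcal R}-1,\tfrac{2n}{\mathcal R}-1\bigr]$ only restrict the integration range. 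The effective lower and upper limits are
\[
\underline{y}=\max\bigl\{x-C\sigma_t\sqrt{\log\epsilon^{-1}},\,\tfrac{2(n-1)}{\mathcal R}-1,\,-C_{\mathbf{y}}\bigr\},\quad
\overline{y}=\min\bigl\{x+C\sigma_t\sqrt{\log\epsilon^{-1}},\,\tfrac{2n}{\mathcal R}-1,\,C_{\mathbf{y}}\bigr\},
\]
with $f=0$ whenever $\underline{y}\ge\overline{y}$. Term-by-term antidifferentiation yields
\[
f(t,x,n,\alpha,l)=\tfrac{(-1)^l}{(2\pi)^{1/2}\,l!\,2^l\,\sigma_t^{2l+1}}\bigl[P(\overline{y};x,n)-P(\underline{y};x,n)\bigr]\cdot\mathbf{1}_{\{\overline{y}>\underline{y}\}},
\]
where $P(y;x,n)$ is an explicit polynomial of degree $\alpha+2l+1$ in $y$ whose coefficients are polynomials of degree $\le\alpha+2l$ in $x$ and $n/\mathcal{R}$.

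Next I would assemble this closed-form by composing ReLU primitives. The max/min operations defining $\underline{y}$ and $\overline{y}$ are implemented exactly via $\max(a,b)=\mathrm{ReLU}(a-b)+b$, and the truncation $\mathbf{1}_{\{\overline{y}>\underline{y}\}}\cdot(\cdot)$ is enforced by gating through $\mathrm{ReLU}(\overline{y}-\underline{y})$. For the polynomial and prefactor, I invoke standard ReLU approximation results: bivariate multiplication $(a,b)\mapsto ab$ on $[-B,B]^2$ to accuracy $\delta$ can be realized by a network of depth $O(\log(B/\delta))$ with a comparable number of nonzero weights; iterating this produces monomials $z\mapsto z^k$ with depth $O(\log k\cdot\log(B^k/\delta))$; and the reciprocal $\sigma_t\mapsto\sigma_t^{-(2l+1)}$ on $\sigma_t\in[\sqrt{1-T},1]=[\mathcal R^{-C_T/2},1]$ is approximable similarly. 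Composing these primitives according to the closed-form expression above yields a network whose output approximates $f$ pointwise.

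The main obstacle, and the step that dictates the stated complexity bounds, is the error accounting. The polynomial degree $\alpha+2l$ itself scales like $\log\epsilon^{-1}$; combined with the fact that $\underline{y},\overline{y}$ can be of size $C_0+C\sqrt{\log\epsilon^{-1}}$ and that the prefactor $\sigma_t^{-(2l+1)}$ can be as large as $\mathcal R^{C_T(l+1/2)}$, the intermediate quantities can reach magnitude $\bigl(C_0+\sqrt{\log\epsilon^{-1}}\bigr)^{O(\log\epsilon^{-1})}\cdot\mathcal R^{O(C_T\log\epsilon^{-1})}$. To keep the final error at $\epsilon_0$ after summing $O(\log\epsilon^{-1})$ monomial contributions, each primitive must be executed with relative accuracy $\delta=\epsilon_0/\mathrm{poly}(\log\epsilon^{-1},C_0,\mathcal R^{C_T})$. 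Tracking how this propagates through the $O(\log\epsilon^{-1})$ nested multiplications is where the quartic factor $\log^4\epsilon^{-1}$ appears in $L$ (and the quadratic factor in the other parameters), and where the exponential weight bound $\kappa=\exp(O(\log^2\epsilon_0^{-1}+\log^2 C_0+\log^2\mathcal R+\log^4\epsilon^{-1}))$ arises from the intermediate magnitudes.

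Finally, I would verify that the time-truncation $t\in[\mathcal R^{-C_T},1]$ implies $\sigma_t\ge\mathcal R^{-C_T/2}$, which is what makes the reciprocal $\sigma_t\mapsto\sigma_t^{-(2l+1)}$ admissible with the claimed budget; without this early-stopping condition the approximation would fail. Summing the depth, width, sparsity, and weight-norm contributions from each primitive, and noting that the overall architecture has a constant number of parallel branches (one for each monomial-exponent triple, of which there are at most $O(\log\epsilon^{-1})$), gives precisely the bounds
$L=O(\log^2\epsilon_0^{-1}+\log^2 C_0+\log^2\mathcal R+\log^4\epsilon^{-1})$,
$M=O(\log^3(\cdots))$, $J=O(\log^4(\cdots))$, and $\kappa=\exp(O(L))$ claimed in the lemma.
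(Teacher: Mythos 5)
Your closed-form representation of $f$ is correct, but you have omitted the step that makes the whole construction tractable: the change of variables $z=(x-y)/\sigma_t$. In the paper's proof this substitution is applied before anything else, after which the binomial expansion gives
$f=\frac{1}{l!(2\pi)^{1/2}}\sum_{j=0}^{\alpha}C_\alpha^j(x+1-2n/\mathcal R)^{\alpha-j}\frac{(-1)^{\alpha-j}}{(-2)^{\alpha+l}}\sigma_t^j\,\frac{D_U^{j+2l+1}-D_L^{j+2l+1}}{j+2l+1}$
with $D_L,D_U$ clipped to $[-C\sqrt{\log\epsilon^{-1}},C\sqrt{\log\epsilon^{-1}}]$. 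Notice that the only $\sigma_t$-dependence left inside the polynomial pieces is the benign factor $\sigma_t^j$ with $j\le\alpha=\mathcal O(1)$; the factor $\sigma_t^{-(2l+1)}$ that you carry around has disappeared. In your representation, $\sigma_t^{-(2l+1)}$ sits out front as a prefactor with $l\sim k-1=\mathcal O(\log\epsilon^{-1})$, and on $t\in[\mathcal R^{-C_T},1]$ this prefactor can reach $\sigma^{-(2l+1)}\mathcal R^{C_T(l+1/2)}=\exp\bigl(\mathcal O(\log\mathcal R\cdot\log\epsilon^{-1})\bigr)$, i.e.\ $\exp(\mathcal O(\log^2\mathcal R))$ once $\epsilon=\mathcal R^{-\Theta(1)}$ is plugged in later. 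This is a superpolynomial amplification factor, not a $\mathrm{poly}(\mathcal R^{C_T})$ one, so the statement that each primitive needs only relative accuracy $\epsilon_0/\mathrm{poly}(\log\epsilon^{-1},C_0,\mathcal R^{C_T})$ is wrong as written.

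The concrete consequence is a cancellation problem you never address. Your antiderivative $P(y;x,n)$ has coefficients that scale like $C_0^{2l}$, so $P(\overline y)$ and $P(\underline y)$ are each of size up to $(C_0+1)^{\alpha+2l}$, whereas the integral that $f$ actually represents is bounded: the giant prefactor multiplies a tiny difference. Computing $P(\overline y)$ and $P(\underline y)$ separately by approximate product networks and subtracting therefore requires absolute accuracy of order $\epsilon_0\,l!\,2^l\,\sigma_t^{2l+1}=\epsilon_0\exp(-\mathcal O(\log^2\mathcal R))$ on each evaluation. Whether the product, reciprocal, and square-root primitives of Lemmas \ref{lem:product}--\ref{lem:square_root} can reach this accuracy within the stated depth budget is exactly the question that needs a calculation, and you do not supply one; you simply assert that the quartic logarithmic factors ``appear.'' The paper's substitution sidesteps this entirely: $D_L,D_U$ are $\mathcal O(\sqrt{\log\epsilon^{-1}})$ and the remaining $\sigma_t$-power is $\sigma_t^j$ with $j\le\alpha$, so the required precision on each primitive stays at $\epsilon_0/\mathrm{polylog}$ levels and the complexity bounds follow from Lemmas \ref{lem:reciprocal} and \ref{lem:square_root} by a straightforward budget. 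Unless you perform the $z$-substitution (or carry out a much more careful conditioning argument for your formulation, including the square root needed to produce $\sigma_t$ from $t$), the proof has a genuine gap.
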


\begin{proof}
By the definition of $f(t, x,n,\alpha,l)$, we take the transformation $z = \frac{x - y}{\sigma_t}$, then
$$
\begin{aligned}
f(t,x,n,\alpha,l) & = \frac{1}{l!(2\pi)^{1/2}}\int_{D}\sum_{j=0}^{\alpha} C_{\alpha}^{j}\left(\frac{x + 1}{2} - \frac{n}{\mathcal{R}}\right)^{\alpha-j}\left(-\frac{\sigma_t z}{2}\right)^j\left(-\frac{z^2}{2}\right)^l \mathrm{d}z \\
& = \frac{1}{l!(2\pi)^{1/2}} \sum_{j=0}^{\alpha} C_{\alpha}^j \left(\frac{x + 1}{2} - \frac{n}{\mathcal{R}}\right)^{\alpha-j}  \frac{1}{(-2)^{j+l}}\cdot\sigma_t^j \int_D z^{j + 2l} \mathrm{d}z \\
& = \frac{1}{l!(2\pi)^{1/2}} \sum_{j=0}^{\alpha} C_{\alpha}^j \left(x + 1 - \frac{2n}{\mathcal{R}}\right)^{\alpha-j}  \frac{(-1)^{\alpha-j}}{(-2)^{\alpha+l}}\cdot\sigma_t^j \frac{D_U^{j + 2l + 1}(x) - D_L^{j + 2l + 1}(x)}{j + 2l + 1},
\end{aligned}
$$
where $D = [D_L(t,x), D_U(t,x)]$ with
$$
D_L(t,x) = \mathrm{s}_{\mathrm{clip}}\left(\frac{1}{\sigma_t}\cdot\mathrm{s}_{\mathrm{clip}}\left(x - \frac{2n}{\mathcal{R}} + 1, x-C_{\mathbf{y}}, x + C_{\mathbf{y}} \right), -C\sqrt{\log\epsilon^{-1}}, C\sqrt{\log\epsilon^{-1}} \right),
$$
and 
$$
D_U(t,x) = \mathrm{s}_{\mathrm{clip}}\left(\frac{1}{\sigma_t}\cdot\mathrm{s}_{\mathrm{clip}}\left(x - \frac{2(n-1)}{\mathcal{R}} + 1, x-C_{\mathbf{y}}, x + C_{\mathbf{y}} \right), -C\sqrt{\log\epsilon^{-1}}, C\sqrt{\log\epsilon^{-1}} \right).
$$
Thus, we only to approximate the function of the form
$$
f_{n,j,l}(t, x) = \left(x + 1 - \frac{2n}{\mathcal{R}} \right) ^{\alpha-j} \cdot \sigma_t^j \cdot \left(D_{U}^{j + 2l + 1}(t,x) - D_L^{j + 2l + 1}(t,x)\right).
$$
We first consider the approximation of $D_L^{j+2l+1}(t,x)$. We can choose a ReLU neural network
$$
\begin{aligned}
\mathrm{s}_{n,j,l}^{(1)}(t,x) & = \mathrm{s}_{\mathrm{prod},1}(\underbrace{\cdot, \cdots, \cdot}_{j + 2l + 1 ~ \text{times}}) \circ \mathrm{s}_{\mathrm{clip}}(\cdot, -C\sqrt{\log\epsilon^{-1}}, C\sqrt{\log\epsilon^{-1}}) \\
 & ~~~~~~ \circ \mathrm{s}_{\mathrm{prod},2}\left(
\mathrm{s}_{\mathrm{clip}}\left(x - \frac{2n}{\mathcal{R}} + 1, x-C_{\mathbf{y}}, x + C_{\mathbf{y}} \right), 
\mathrm{s}_{\mathrm{rec}}(\cdot) \circ \mathrm{s}_{\mathrm{root}}(\sigma^2 t)
\right).
\end{aligned}
$$
$D_U^{j + 2l + 1}(t,x)$ 
is  approximated by 
$$
\begin{aligned}
\mathrm{s}_{n,j,l}^{(2)}(t,x) & = \mathrm{s}_{\mathrm{prod},1}(\underbrace{\cdot, \cdots, \cdot}_{j + 2l + 1 ~ \text{times}}) \circ \mathrm{s}_{\mathrm{clip}}(\cdot, -C\sqrt{\log\epsilon^{-1}}, C\sqrt{\log\epsilon^{-1}}) \\
 & ~~~~~~ \circ \mathrm{s}_{\mathrm{prod},2}\left(
\mathrm{s}_{\mathrm{clip}}\left(x - \frac{2(n-1)}{\mathcal{R}} + 1, x-C_{\mathbf{y}}, x + C_{\mathbf{y}} \right), 
\mathrm{s}_{\mathrm{rec}}(\cdot) \circ \mathrm{s}_{\mathrm{root}}(\sigma^2 t)
\right).
\end{aligned}
$$
Therefore, $D_U^{j + 2l + 1}(t,x) - D_L^{j + 2l + 1}(t,x)$ can be approximated by
$$
\mathrm{s}_{n,j,l}^{(3)}(t,x) = \mathrm{s}_{n,j,l}^{(1)}(t,x) - \mathrm{s}_{n,j,l}^{(2)}(t,x).
$$
Next, we consider $\left(x + 1 - \frac{2n}{\mathcal{R}}\right)^{\alpha-j}$. Since $\left|x + 1 - \frac{2n}{\mathcal{R}}\right| \leq C_0 + 3$, we can take $C=C_0 + 3$ and $d=\alpha - j$ in Lemma \ref{lem:product}.
Then, there exists a ReLU neural network 
$$
\mathrm{s}_{n,j,l}^{(4)}(x) = \mathrm{s}_{\mathrm{prod},3}(\underbrace{\cdot,\cdots,\cdot}_{\alpha - j ~ \text{times}}) \circ \left(x + 1 - \frac{2n}{\mathcal{R}}\right)
$$
to approximate the term $\left(x + 1 - \frac{2n}{\mathcal{R}}\right)^{\alpha-j}$. 
For $\sigma_t^j$, we choose
$$
\mathrm{s}_{n,j,l}^{(5)}(t) = \mathrm{s}_{\mathrm{prod},4}(\underbrace{\cdot,\cdots,\cdot}_{j ~ \text{times}}) \circ \mathrm{s}_{\mathrm{root}}(\sigma^2 t).
$$
Finally, we can choose a ReLU neural network 
$$
\mathrm{s}_{n,j,l}^{(6)}(t,x) = \mathrm{s}_{\mathrm{prod},5}(\mathrm{s}_{n,j,l}^{(4)}(x), \mathrm{s}_{n,j,l}^{(5)}(t), \mathrm{s}_{n,j,l}^{(3)}(t,x))
$$
to approximate $f_{n,j,l}(t,x)$.

Next, we derive the error bound between $\mathrm{s}_{n,j,l}^{(6)}(t,x)$ and $f_{n,j,l}(t,x)$. For convenience, we denote $\epsilon^{(i)} (i=1,\cdots,6)$ as the approximation error of $\mathrm{s}_{n,j,l}^{(i)} (i=1,\cdots,6)$, and denote $\epsilon_{\mathrm{prod},i} (i=1,\cdots,5)$ as the approximation error of $\mathrm{s}_{\mathrm{prod},i} (i=1,\cdots,5)$. Since $\left|x + 1 - \frac{2n}{\mathcal{R}}\right|^{\alpha-j} \leq \left(C_0 + 3\right)^{\alpha-j}$, $|D_L(t,x)| \leq C\sqrt{\log\epsilon^{-1}}$, $|D_U(t,x)| \leq C\sqrt{\log\epsilon^{-1}}$, and $\sigma_t^j \leq \sigma^j$, we set
$$
C_1 = \max\left\{ \left(C_0 + 3\right)^{\alpha-j}, \sigma^j, (C\sqrt{\log\epsilon^{-1}})^{j + 2l + 1} \right\}.
$$
By Lemma \ref{lem:product}, we have
$$
|\mathrm{s}_{n,j,l}^{(6)}(t, x) - f_{n,j,l}(t,x)| \leq \epsilon_{\mathrm{prod},5} + 3C_1^2\cdot \max\{\epsilon^{(3)}, \epsilon^{(4)}, \epsilon^{(5)}\}.
$$
%By Lemma \ref{lem:reciprocal} and Lemma \ref{lem:square_root},
By Lemmas \ref{lem:reciprocal}-\ref{lem:square_root},
we can bound $\epsilon^{(1)}$ and $\epsilon^{(2)}$. Let $C_2 = \max\left\{C_0 + 1, \frac{1}{\sigma\sqrt{1-T}}\right\}$, then we have
$$
\epsilon^{(1)} \leq \epsilon_{\mathrm{prod},1} + (j + 2l + 1)(C\sqrt{\log\epsilon^{-1}})^{j + 2l}\left[\epsilon_{\mathrm{prod},2} + 2C_2\left(\epsilon_{\mathrm{rec}} + \frac{\sigma\epsilon_{\mathrm{root}}}{\epsilon_{\mathrm{rec}}^2}\right)\right]
$$
and
$$
\epsilon^{(2)} \leq \epsilon_{\mathrm{prod},1} + (j + 2l + 1)(C\sqrt{\log\epsilon^{-1}})^{j + 2l}\left[\epsilon_{\mathrm{prod},2} + 2C_2\left(\epsilon_{\mathrm{rec}} + \frac{\sigma\epsilon_{\mathrm{root}}}{\epsilon_{\mathrm{rec}}^2}\right)\right].
$$
Since
$$
\epsilon^{(3)} \leq \epsilon^{(1)} + \epsilon^{(2)}, 
$$ 
we obtain
$$
\epsilon^{(3)} \leq 2\epsilon_{\mathrm{prod},1} + 2(j + 2l + 1)(C\sqrt{\log\epsilon^{-1}})^{j + 2l}\left[\epsilon_{\mathrm{prod},2} + 2C_2\left(\epsilon_{\mathrm{rec}} + \frac{\sigma\epsilon_{\mathrm{root}}}{\epsilon_{\mathrm{rec}}^2}\right)\right].
$$
We  can  also bound $\epsilon^{(4)}$ and $\epsilon^{(5)}$ as 
$$
\epsilon^{(4)} \leq \epsilon_{\mathrm{prod},3},~ \epsilon^{(5)} \leq \epsilon_{\mathrm{prod},4} + j\sigma^j\epsilon_{\mathrm{root}}.
$$
To ensure that $\epsilon^{(6)} \leq \epsilon_0$, we take $\epsilon_{\mathrm{prod},5} = \frac{\epsilon_0}{2}$ and $\max\{\epsilon^{(3)}, \epsilon^{(4)}, \epsilon^{(5)}\} \leq \frac{\epsilon_0}{6C_1^2}=:\epsilon^*$. In detail, we take
$$
\epsilon_{\mathrm{prod},1} = \frac{\epsilon^*}{4}, \epsilon_{\mathrm{prod},2} = \frac{\epsilon^*}{8(j + 2l + 1)(C\sqrt{\log\epsilon^{-1}})^{j + 2l}}, \epsilon_{\mathrm{prod},3} = \epsilon^*, \epsilon_{\mathrm{prod},4} = \frac{\epsilon^*}{2}.
$$
Moreover, we take
$$
\epsilon_{\mathrm{rec}} = \frac{\epsilon^*}{32C_2(j + 2l + 1)(C\sqrt{\log\epsilon^{-1}})^{j + 2l}}, ~ \epsilon_{\mathrm{root}} = \min\left\{\frac{ \epsilon_{\mathrm{rec}}^3}{\sigma}, \frac{\epsilon^*}{2j\sigma^j}
\right\}.
$$
Then, it is easy to verify that $\max\{\epsilon^{(3)}, \epsilon^{(4)}, \epsilon^{(5)}\} \leq \epsilon^*$. Note that $j \leq \alpha \leq C_{\boldsymbol{\alpha}} = \mathcal{O}(1)$, $l \leq k - 1 = \mathcal{O}(\log \epsilon^{-1})$.  Subsequently, we can obtain the network structures of $\mathrm{s}_{n,j,l}^{(i)} (i=1,\cdots,6)$. For $\mathrm{s}_{n,j,l}^{(1)}, \mathrm{s}_{n,j,l}^{(2)}$,  and $\mathrm{s}_{n,j,l}^{(3)}$, we have
$$
\begin{aligned}
&L = \mathcal{O}\left(\log^2\epsilon^{-1}_0 + \log^2 \mathcal{R} + \log^2C_0 + \log^4 \epsilon^{-1}\right),\\ &M = \mathcal{O}\left(\log^3\epsilon^{-1}_0 + \log^3 \mathcal{R} + \log^3C_0 + \log^6 \epsilon^{-1}\right), \\
&J = \mathcal{O}\left(\log^4\epsilon^{-1}_0 + \log^4C_0 + \log^4 \mathcal{R} + \log^8 \epsilon^{-1}\right), \\ &\kappa = \exp\left(\mathcal{O}\left(\log^2\epsilon^{-1}_0 + \log^2 \mathcal{R} + \log^2C_0 + \log^4 \epsilon^{-1} \right)\right).
\end{aligned}
$$
For $\mathrm{s}_{n,j,l}^{(4)}$, we have
$$
\begin{aligned}
L = \mathcal{O}\left(\log\epsilon^{-1}_0 + \log C_0 + \log \mathcal{R} + \log^2 \epsilon^{-1}\right), ~ M = \mathcal{O}(1), \\
J = \mathcal{O}\left(\log\epsilon^{-1}_0 + \log C_0 + \log \mathcal{R} + \log^2 \epsilon^{-1} \right), ~ \kappa = \exp\left(\mathcal{O}(\log C_0)\right).
\end{aligned}
$$
For $\mathrm{s}_{n,j,l}^{(5)}$, we have
$$
\begin{aligned}
&L = \mathcal{O}\left(\log^2\epsilon^{-1}_0 + \log^2C_0 + \log^2 \mathcal{R} + \log^4 \epsilon^{-1} \right),\\
&M = \mathcal{O}\left(\log^3\epsilon^{-1}_0 + \log^3C_0 + \log^3 \mathcal{R} + \log^6 \epsilon^{-1} \right), \\
&J = \mathcal{O}\left(\log^4\epsilon^{-1}_0 + \log^4C_0 + \log^4 \mathcal{R} + \log^8 \epsilon^{-1} \right),\\ &\kappa = \exp\left(\mathcal{O}\left(\log\epsilon^{-1}_0 + \log C_0 + \log \mathcal{R} + \log^2 \epsilon^{-1} \right)\right).
\end{aligned}
$$
Therefore, by Lemma \ref{lem:concatenation} and Lemma \ref{lem:parallelization}, we finally obtain the network structure of $\mathrm{s}_{n,j,l}^{(6)}$ as follows:
$$
\begin{aligned}
&L = \mathcal{O}\left(\log^2\epsilon^{-1}_0 + \log^2C_0 + \log^2 \mathcal{R} + \log^4 \epsilon^{-1} \right), \\ 
&M = \mathcal{O}\left(\log^3\epsilon^{-1}_0 + \log^3C_0 + \log^3 \mathcal{R} + \log^6 \epsilon^{-1} \right), \\
&J = \mathcal{O}\left(\log^4\epsilon^{-1}_0 + \log^4C_0 + \log^4 \mathcal{R} + \log^8 \epsilon^{-1} \right),\\ &\kappa = \exp\left(\mathcal{O}\left(\log^2\epsilon^{-1}_0 + \log^2C_0 + \log^2 \mathcal{R} + \log^4 \epsilon^{-1} \right)\right).
\end{aligned}
$$

Now, we consider the following ReLU network
$$
\mathrm{s}_{f}(t,x,n,\alpha,l) = \frac{1}{l!(2\pi)^{1/2}}\sum_{j=0}^{\alpha} C_{\alpha}^j \frac{(-1)^{\alpha-j}}{(-2)^{\alpha + l}}\frac{1}{j + 2l + 1} s_{n,j,l}^{(6)}(t,x).
$$
Then, we have
$$
\begin{aligned}
|\mathrm{s}_f(t,x,n,\alpha,l) - f(t,x,n,\alpha,l)| & \leq \frac{1}{l!(2\pi)^{1/2}}\left(\sum_{j=0}^{\alpha} C_{\alpha}^j\frac{1}{2^{\alpha + l}(j + 2l + 1)}
\right) \cdot \epsilon_0 \\
& \leq \frac{\epsilon_0}{l!2^l(2\pi)^{1/2}} \lesssim \epsilon_0.
\end{aligned}
$$
By Lemma \ref{lem:parallelization}, the parameters of the network $\mathrm{s}_f$ satisfy
$$
\begin{aligned}
&L = \mathcal{O}\left(\log^2\epsilon^{-1}_0 + \log^2C_0 + \log^2 \mathcal{R} + \log^4 \epsilon^{-1} \right),\\ &M = \mathcal{O}\left(\log^3\epsilon^{-1}_0 + \log^3C_0 + \log^3 \mathcal{R} + \log^6 \epsilon^{-1} \right), \\
&J = \mathcal{O}\left(\log^4\epsilon^{-1}_0 + \log^4C_0 + \log^4 \mathcal{R} + \log^8 \epsilon^{-1} \right), \\ 
&\kappa = \exp\left(\mathcal{O}\left(\log^2\epsilon^{-1}_0 + \log^2C_0 + \log^2 \mathcal{R} + \log^4 \epsilon^{-1} \right)\right).
\end{aligned}
$$
The proof is complete.
\end{proof}

With Lemma \ref{lem:approximate_h}, we can construct a ReLU neural network to approximate \eqref{eq:diffused_local_poly1}.
%%%%%%%
\begin{lemma}\label{lem:approximate_poly1}
Given $\mathcal{R} \gg 1$, $C_0 > 0$, let $1 - T = \mathcal{R}^{-C_T}$, where $C_T > 0$ is a constant. For any $\epsilon_0 > 0$, there exists a ReLU neural network $\mathrm{s}_1\in\mathrm{NN}(L,M,J,\kappa)$ with
$$
L = \mathcal{O}\left(\log^2\epsilon^{-1}_0 + \log^2 C_0 + \log^2 \mathcal{R} + \log^4 \epsilon^{-1} \right), $$
$$
M = \mathcal{O}\left(\mathcal{R}^{d^*}\log \epsilon^{-1} (\log^3\epsilon^{-1}_0 + \log^3 C_0 + \log^3 \mathcal{R} + \log^6 \epsilon^{-1})\right),  $$
$$
J = \mathcal{O}\left(\mathcal{R}^{d^*}\log \epsilon^{-1} (\log^4\epsilon^{-1}_0 + \log^4 C_0 + \log^4 \mathcal{R} + \log^8 \epsilon^{-1}) \right), $$
$$
\kappa = \exp\left(\mathcal{O}\left(\log^2\epsilon^{-1}_0 + \log^2C_0 + \log^2 \mathcal{R} + \log^4 \epsilon^{-1} \right)\right),
$$
such that
$$
\left|\mathrm{s}_1(t,\mathbf{x}) - \sum_{\mathbf{n}\in[\mathcal{R}]^{d^*}}\sum_{\Vert\boldsymbol{\alpha}\Vert_1 < C_{\boldsymbol{\alpha}}}c_{\mathbf{n},\boldsymbol{\alpha}}\cdot\prod_{i=1}^{d^*}\sum_{l=0}^{k-1}f(t,x_i,n_i,\alpha_i,l) \right| \lesssim \epsilon_0, ~ \mathbf{x}\in[-C_0,C_0]^{d^*}, t\in[\mathcal{R}^{-C_T}, 1].
$$
\end{lemma}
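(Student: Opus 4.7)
The plan is to construct $\mathrm{s}_1$ by assembling the single-$f$ approximators of Lemma \ref{lem:approximate_h} via linear combinations and the product subnetworks $\mathrm{s}_{\mathrm{prod}}$ already used in its proof, while carefully balancing error propagation against network size. Writing the target as
$$
F(t,\mathbf{x}) = \sum_{\mathbf{n}\in[\mathcal{R}]^{d^*}}\sum_{\Vert\boldsymbol{\alpha}\Vert_1 < C_{\boldsymbol{\alpha}}} c_{\mathbf{n},\boldsymbol{\alpha}} \prod_{i=1}^{d^*} G_i(t,x_i; n_i, \alpha_i), \qquad G_i(t, x; n, \alpha) := \sum_{l=0}^{k-1} f(t, x, n, \alpha, l),
$$
I would approximate each $G_i$ first, then the product, then the outer linear combination, invoking Lemma \ref{lem:concatenation} and Lemma \ref{lem:parallelization} for the parameter bookkeeping.

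First, for each admissible triple $(n,\alpha,l)$ I would invoke Lemma \ref{lem:approximate_h} with precision $\epsilon_1 = \epsilon_0/(C_\star \mathcal{R}^{d^*})$ for a sufficiently large constant $C_\star$, which yields $\log \epsilon_1^{-1} = \mathcal{O}(\log \epsilon_0^{-1} + d^* \log \mathcal{R})$ and is absorbed into the existing polylog factors stated in that lemma. Summing the resulting $\mathrm{s}_f(t,x_i,n,\alpha,l)$ linearly over $l \in \{0,\ldots,k-1\}$ produces $\mathrm{s}_{G_i}(t,x_i;n,\alpha)$ with uniform error at most $k\epsilon_1$. Crucially, for fixed coordinate $i$ these subnetworks are indexed only by $(n,\alpha)$, so they can be shared across all multi-indices $\mathbf{n}$ that have the same $n_i$, keeping the number of $\mathrm{s}_f$ instantiations at $\mathcal{O}(d^* \mathcal{R} k) = \mathcal{O}(d^* \mathcal{R} \log \epsilon^{-1})$ rather than $\mathcal{O}(\mathcal{R}^{d^*} k)$.

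Second, for each $(\mathbf{n},\boldsymbol{\alpha})$ I would feed the outputs $\{\mathrm{s}_{G_i}\}_{i=1}^{d^*}$ into the product subnetwork $\mathrm{s}_{\mathrm{prod}}$ from Lemma \ref{lem:product}, tuning its internal precision so that the resulting product has error $\mathcal{O}(d^* M_G^{d^*-1} k\epsilon_1)$, where the telescoping identity $\prod_i a_i - \prod_i b_i = \sum_j (a_j - b_j)\prod_{i<j} a_i \prod_{i>j} b_i$ together with the uniform bound $|G_i| \leq M_G = \mathcal{O}(1)$ handles the product-error propagation. Assembling the final linear combination with coefficients $c_{\mathbf{n},\boldsymbol{\alpha}}$ of size $\mathcal{O}(1)$ over the $\mathcal{O}(\mathcal{R}^{d^*})$ index pairs then gives a total error $\lesssim \mathcal{R}^{d^*}\cdot k\epsilon_1 \lesssim \epsilon_0$ by the choice of $\epsilon_1$. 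The depth equals the maximum depth among $\mathrm{s}_f$ and $\mathrm{s}_{\mathrm{prod}}$, namely $\mathcal{O}(\log^2 \epsilon_0^{-1} + \log^2 C_0 + \log^2 \mathcal{R} + \log^4 \epsilon^{-1})$, while width and sparsity accumulate the $\mathcal{O}(\mathcal{R}^{d^*})$ parallel products on top of the shared $\mathrm{s}_{G_i}$ layer, yielding the claimed $\mathcal{R}^{d^*}\log\epsilon^{-1}$ prefactor.

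The main obstacle is the tension between the $\mathcal{O}(\mathcal{R}^{d^*})$ outer summands and the per-term accuracy required to keep the total error below $\epsilon_0$: this forces $\epsilon_1$ to be exponentially small in $d^*$, but because it enters the parameter bounds of Lemma \ref{lem:approximate_h} only through $\log \epsilon_1^{-1}$, it is harmlessly absorbed into the polylog factors. A secondary technical point is that the sharing of the $\mathrm{s}_{G_i}$ sublayer across all $\mathbf{n}$ with matching $n_i$ is essential for the stated width bound; without such routing one would incur an extra factor of $d^*$ in front of $\mathcal{R}^{d^*}$. Implementing this cleanly amounts to parallelizing the $\mathcal{O}(d^* \mathcal{R})$ base networks once via Lemma \ref{lem:parallelization} and wiring their outputs into the $\mathcal{O}(\mathcal{R}^{d^*})$ downstream product blocks.
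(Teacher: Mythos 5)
Your construction is the same as the paper's: sum the $\mathrm{s}_f$ networks over $l$ to get a $G_i$-approximator, feed those into a product network, and take the linear combination over $(\mathbf{n},\boldsymbol{\alpha})$. The accounting, however, contains a substantive error that the paper's proof makes visible.

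You assert that $|G_i|\leq M_G=\mathcal{O}(1)$. This is false. The paper computes explicitly that
$$
\sum_{l=0}^{k-1}|f(t,x_i,n_i,\alpha_i,l)|\ \lesssim\ \left(C\sqrt{\log\epsilon^{-1}}\right)^{\alpha_i+2k-1}(C_0+3+\sigma)^{\alpha_i},
$$
and since $k=\mathcal{O}(\log\epsilon^{-1})$ this bound (the paper's $C_3$) satisfies $\log C_3=\mathcal{O}\!\left(\log C_0+\log^2\epsilon^{-1}\right)$; it grows roughly like $\exp\!\left(c\,\log\epsilon^{-1}\log\log\epsilon^{-1}\right)$, not $\mathcal{O}(1)$. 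Consequently the product-error amplification factor $M_G^{\,d^*-1}$ is itself of size $\exp\!\left(\mathcal{O}(\log C_0+\log^2\epsilon^{-1})\right)$, and your tolerance $\epsilon_1=\epsilon_0/(C_\star\mathcal{R}^{d^*})$ with $C_\star$ a constant is not small enough: the total error $\mathcal{R}^{d^*}\,d^*M_G^{d^*-1}k\,\epsilon_1$ would then be larger than $\epsilon_0$ by both a $k$ factor and a $C_3^{d^*-1}$ factor. The paper repairs this by taking $\epsilon_f=\epsilon^*/(2d^*C_3^{d^*-1}k)$ with $\epsilon^*=\epsilon_0/((d^*)^{C_{\boldsymbol{\alpha}}}\mathcal{R}^{d^*})$. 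The reason your final parameter bounds nevertheless look right is that $\epsilon_f$ enters the network sizes of Lemma \ref{lem:approximate_h} only through $\log\epsilon_f^{-1}=\mathcal{O}(\log\epsilon_0^{-1}+\log\mathcal{R}+\log C_0+\log^2\epsilon^{-1})$, which is absorbed by the existing polylog terms; similarly $C_3$ feeds into $\kappa$ via $\kappa_{\mathrm{prod}}\sim C_3^{d^*}=\exp(\mathcal{O}(\log C_0+\log^2\epsilon^{-1}))$, which is dominated by the $\exp(\mathcal{O}(\log^4\epsilon^{-1}))$ already contributed by the inner $\mathrm{s}_f$ blocks. So the conclusion survives, but the step ``$M_G=\mathcal{O}(1)$'' is a genuine gap, and without quantifying $C_3$ you cannot actually verify the $\kappa$ bound or justify the chosen tolerance.

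A minor remark on the sharing scheme: it is correct that the $\mathrm{s}_{\mathrm{sum}}$-type subnetworks depend only on $(i,n_i,\alpha_i)$ and can in principle be shared. But this optimization is not needed and the paper does not use it — with $d^*$ and $C_{\boldsymbol{\alpha}}$ treated as constants, the naive parallelization of $\mathcal{O}(\mathcal{R}^{d^*})$ copies of $\mathrm{s}_{\mathbf{n},\boldsymbol{\alpha}}$ already gives $M,J=\mathcal{O}\!\left(\mathcal{R}^{d^*}\log\epsilon^{-1}\cdot\mathrm{polylog}\right)$, matching the claim. Your assertion that the sharing is ``essential for the stated width bound'' overstates its role.
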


\begin{proof}
We first construct a ReLU neural network $\mathrm{s}_{\mathrm{sum}}\in\mathrm{NN}(L,M,J,\kappa)$, which  satisfies
$$
\mathrm{s}_{\mathrm{sum}}(t,x,n,\alpha) = \sum_{l=0}^{k-1}\mathrm{s}_f(t,x,n,\alpha,l),
$$
with network parameters
$$
L = \mathcal{O}\left(\log^2\epsilon^{-1}_f + \log^2C_0 + \log^2 \mathcal{R} + \log^4 \epsilon^{-1} \right), 
$$
$$
M = \mathcal{O}\left(\log \epsilon^{-1}(\log^3\epsilon^{-1}_f + \log^3C_0 + \log^3 \mathcal{R} + \log^6 \epsilon^{-1})\right), 
$$
$$
J = \mathcal{O}\left(\log \epsilon^{-1}(\log^4\epsilon^{-1}_f + \log^4C_0 + \log^4 \mathcal{R} + \log^8 \epsilon^{-1} \right)), 
$$
$$
\kappa = \exp\left(\mathcal{O}\left(\log^2\epsilon^{-1}_f + \log^2C_0 + \log^2 \mathcal{R} + \log^4 \epsilon^{-1} \right)\right),
$$
where $\epsilon_f$ is the approximation error of $\mathrm{s}_f$. Then, for any $\mathbf{n}\in [\mathcal{R}]^{d^*}$ and $\boldsymbol{\alpha}\in\mathbb{N}^{d^*}, \Vert\boldsymbol{\alpha}\Vert_1 < C_{\boldsymbol{\alpha}}$, we can construct the following ReLU network:
$$
\mathrm{s}_{\mathbf{n}, \boldsymbol{\alpha}}(t,\mathbf{x}) = \mathrm{s}_{\mathrm{prod}}(\mathrm{s}_{\mathrm{sum}}(t,x_1,n_1,\alpha_1), \cdots, \mathrm{s}_{\mathrm{sum}}(t,x_{d^*}, n_{d^*}, \alpha_{d^*}))
$$
to approximate the term $\prod_{i=1}^{d^*}\sum_{l=0}^{k-1}f(t,x_i,n_i,\alpha_i,l)$.  The approximation error can be written as
$$
\epsilon_{\mathrm{s}_{\mathbf{n},\boldsymbol{\alpha}}} \leq  \epsilon_{\mathrm{prod}} + d^* C_3^{d^*-1} \epsilon_f k,
$$
where
$$
C_3 = \max_{|x_i| \leq C_0, 1\leq i \leq d^*} \sum_{l = 0}^{k-1} |f(t,x_i,n_i,s_i,l)|.
$$
The term $|f(t,x_i,n_i,\alpha_i,l)|$ ($1 \leq i \leq d^*$) can be bounded as follows:
$$
\begin{aligned}
|f(t,x_i,n_i,\alpha_i,l)| & \leq \frac{2}{2^{\alpha_i+l}l!(2\pi)^{1/2}}\sum_{j=0}^{\alpha_i} C_{\alpha_i}^j(C_0 + 3)^{\alpha_i-j} \sigma_t^j (C\sqrt{\log\epsilon^{-1}})^{j + 2l + 1} \\
& \leq \frac{2(C\sqrt{\log\epsilon^{-1}})^{\alpha_i + 2(k-1) + 1}}{2^{\alpha_i + l}l!(2\pi)^{1/2}}\sum_{j=0}^{\alpha_i} C_{\alpha_i}^j(C_0 + 3)^{\alpha_i-j}\sigma^j \\
& \leq \frac{2(C\sqrt{\log\epsilon^{-1}})^{\alpha_i + 2k - 1}}{2^{\alpha_i} l! (2\pi)^{1/2}} \cdot (C_0 + 3 + \sigma)^{\alpha_i}.
\end{aligned}
$$
It implies that
$$
\begin{aligned}
\sum_{l=0}^{k-1} |f(t,x_i,n_i,\alpha_i,l)| & \leq \frac{2(C\sqrt{\log\epsilon^{-1}})^{\alpha_i + 2k -1}}{2^{\alpha_i}(2\pi)^{1/2}} \cdot (C_0 + 3 + \sigma)^{\alpha_i} \cdot \sum_{l=0}^{k-1}\frac{1}{l!} \\
& \leq \frac{2e(C\sqrt{\log\epsilon^{-1}})^{\alpha_i + 2k -1}}{2^{\alpha_i}(2\pi)^{1/2}} \cdot (C_0 + 3 + \sigma)^{\alpha_i}.
\end{aligned}
$$
Therefore, $C_3$ satisfies
$$
\log C_3 = \mathcal{O}(\log C_0 + (\log \epsilon^{-1})\cdot(\log\log \epsilon^{-1})) \leq \mathcal{O}(\log C_0 + \log^2 \epsilon^{-1}),
$$
and the parameters of $\mathrm{s}_{\mathrm{prod}}$ satisfy
$$
L = \mathcal{O}\left(\log\epsilon_{\mathrm{prod}}^{-1} + \log C_0 + \log^2 \epsilon^{-1}\right), ~ M = \mathcal{O}(1),
$$
$$
J = \mathcal{O}\left(\log\epsilon_{\mathrm{prod}}^{-1} + \log C_0 + \log^2 \epsilon^{-1}\right), ~ \kappa = \exp\left(\mathcal{O}(\log C_0 + \log^2 \epsilon^{-1})\right).
$$
We denote $\epsilon^* = \frac{\epsilon_0}{(d^*)^{C_{\boldsymbol{\alpha}}} \mathcal{R}^{d^*}}$. By taking
$$
\epsilon_{\mathrm{prod}} = \frac{\epsilon^*}{2}, ~ \epsilon_f = \frac{\epsilon^*}{2d^*C_3^{d^*-1}k},
$$
we ensure that $\epsilon_{\mathrm{s}_{\mathbf{n},\boldsymbol{\alpha}}} \leq \epsilon^*$. Note that $k = \mathcal{O}(\log \epsilon^{-1})$. By substituting $\epsilon_{\mathrm{prod}}$ and $\epsilon_f$ into the network parameters of $\mathrm{s}_{\mathrm{prod}}$ and $\mathrm{s}_{\mathrm{sum}}$, we obtain the network parameters of $\mathrm{s}_{\mathbf{n},\boldsymbol{\alpha}}$, which satisfy
$$
L = \mathcal{O}\left(\log^2\epsilon^{-1}_0 + \log^2C_0 + \log^2 \mathcal{R} + \log^4 \epsilon^{-1} \right), 
$$
$$
M = \mathcal{O}\left(\log \epsilon^{-1}(\log^3\epsilon^{-1}_0 + \log^3C_0 + \log^3 \mathcal{R} + \log^6 \epsilon^{-1}) \right),
$$
$$
J = \mathcal{O}\left(\log \epsilon^{-1} (\log^4\epsilon^{-1}_0 + \log^4C_0 + \log^4 \mathcal{R}  + \log^8 \epsilon^{-1} \right)), 
$$
$$
\kappa = \exp\left(\mathcal{O}\left(\log^2\epsilon^{-1}_0 + \log^2C_0 + \log^2 \mathcal{R} + \log^4 \epsilon^{-1} \right)\right).
$$
Finally, we can construct a ReLU neural network 
$$
\mathrm{s}_1(t,\mathbf{x}) = \sum_{\mathbf{n}\in[\mathcal{R}]^{d^*}}\sum_{\Vert\boldsymbol{\alpha}\Vert_1 < C_{\boldsymbol{\alpha}}} c_{\mathbf{n},\boldsymbol{\alpha}}\mathrm{s}_{\mathbf{n},\boldsymbol{\alpha}}(t,\mathbf{x})
$$
with network parameters
$$
L = \mathcal{O}\left(\log^2\epsilon^{-1}_0 + \log^2 C_0 + \log^2 \mathcal{R} + \log^4 \epsilon^{-1} \right), $$
$$
M = \mathcal{O}\left(\mathcal{R}^{d^*}\log \epsilon^{-1} (\log^3\epsilon^{-1}_0 + \log^3 C_0 + \log^3 \mathcal{R} + \log^6 \epsilon^{-1})\right),  $$
$$
J = \mathcal{O}\left(\mathcal{R}^{d^*}\log \epsilon^{-1} (\log^4\epsilon^{-1}_0 + \log^4 C_0 + \log^4 \mathcal{R} + \log^8 \epsilon^{-1}) \right), $$
$$
\kappa = \exp\left(\mathcal{O}\left(\log^2\epsilon^{-1}_0 + \log^2C_0 + \log^2 \mathcal{R} + \log^4 \epsilon^{-1} \right)\right).
$$
The approximation error between $\mathrm{s}_1(t,\mathbf{x})$ and \eqref{eq:diffused_local_poly1} satisfies
$$
\begin{aligned}
\left|\mathrm{s}_1(t,\mathbf{x}) - \sum_{\mathbf{n}\in[\mathcal{R}]^{d^*}}\sum_{\Vert\boldsymbol{\alpha}\Vert_1 < C_{\boldsymbol{\alpha}}}c_{\mathbf{n},\boldsymbol{\alpha}}\cdot\prod_{i=1}^{d^*}\sum_{l=0}^{k-1}f(t,x_i,n_i,\alpha_i,l) \right| &\leq \sum_{\mathbf{n}\in[\mathcal{R}]^{d^*}}\sum_{\Vert\boldsymbol{\alpha}\Vert_1 < C_{\boldsymbol{\alpha}}}|c_{\mathbf{n},\boldsymbol{\alpha}}|\epsilon_{\mathrm{s}_{\mathbf{n},\boldsymbol{\alpha}}} \\
& \lesssim \sum_{\mathbf{n}\in[\mathcal{R}]^{d^*}}\sum_{\Vert\boldsymbol{\alpha}\Vert_1 < C_{\boldsymbol{\alpha}}}\frac{1}{\boldsymbol{\alpha}!} \cdot \epsilon^* \\
& \lesssim (d^*)^{C_{\boldsymbol{\alpha}}} \mathcal{R}^{d^*} \cdot \frac{\epsilon_0}{(d^*)^{C_{\boldsymbol{\alpha}}} \mathcal{R}^{d^*}} \lesssim \epsilon_0.
\end{aligned}
$$
The proof is complete.
\end{proof}

Similarly, the $j$-th element of \eqref{eq:diffused_local_poly2} can be rewritten as
$$
[\eqref{eq:diffused_local_poly2}]_j = \sum_{\mathbf{n}\in[\mathcal{R}]^{d^*}}\sum_{\Vert\boldsymbol{\alpha}\Vert_1 < C_{\boldsymbol{\alpha}}}c_{\mathbf{n},\boldsymbol{\alpha}}
\left[\left(
\prod_{i=1, i\neq j}^{d^*}\sum_{l=0}^{k-1}f(t,x_i,n_i,\alpha_i,l)
\right) \cdot \left(\sum_{l=0}^{k-1}f_1(t,x_j,n_j,\alpha_j,l)\right)
\right],
$$
where
$$
f_1(t,x,n,\alpha,l):= \frac{1}{\sigma_t(2\pi)^{1/2}} \int_{a_{x}\cap \left(\frac{2(n-1)}{\mathcal{R}}-1, \frac{2n}{\mathcal{R}}-1 \right]} \mathbf{I}_{\{|y| \leq C_{\mathbf{y}}\}} \left(\frac{x-y}{\sigma_t}\right)
\left(\frac{y + 1}{2} - \frac{n}{\mathcal{R}}\right)^{\alpha} 
\frac{1}{l!}\left(-\frac{(x-y)^2}{2\sigma_t^2}\right)^l \mathrm{d}y.
$$
In the same way, we can choose a ReLU neural network $\mathbf{s}_2(t,\mathbf{x})\in\mathbb{R}^{d^*}$ to approximate \eqref{eq:diffused_local_poly2}. We omit the proof and give the following lemma.
%%%%%%%
\begin{lemma}\label{lem:approximate_poly2}
Given $\mathcal{R} \gg 1$, $C_0 > 0$, let $1 - T = \mathcal{R}^{-C_T}$, where $C_T > 0$ is a constant. For any $\epsilon_0 > 0$, there exist ReLU neural networks $\mathrm{s}_{2,j}\in\mathrm{NN}(L,M,J,\kappa)$, $1\leq j \leq d^*$, with
$$
L_j = \mathcal{O}\left(\log^2\epsilon^{-1}_0 + \log^2 C_0 + \log^2 \mathcal{R} + \log^4 \epsilon^{-1} \right), $$
$$
M_j = \mathcal{O}\left(\mathcal{R}^{d^*}\log \epsilon^{-1} (\log^3\epsilon^{-1}_0 + \log^3 C_0 + \log^3 \mathcal{R} + \log^6 \epsilon^{-1})\right),  $$
$$
J_j = \mathcal{O}\left(\mathcal{R}^{d^*}\log \epsilon^{-1} (\log^4\epsilon^{-1}_0 + \log^4 C_0 + \log^4 \mathcal{R} + \log^8 \epsilon^{-1}) \right), $$
$$
\kappa_j = \exp\left(\mathcal{O}\left(\log^2\epsilon^{-1}_0 + \log^2C_0 + \log^2 \mathcal{R} + \log^4 \epsilon^{-1} \right)\right),
$$
such that for any $\mathbf{x}\in[-C_0,C_0]^{d^*}$ and $t\in[\mathcal{R}^{-C_T}, 1]$, the following holds:
$$
\left|\mathrm{s}_{2,j}(t,\mathbf{x}) - \sum_{\mathbf{n}\in[\mathcal{R}]^{d^*}}\sum_{\Vert\boldsymbol{\alpha}\Vert_1 < C_{\boldsymbol{\alpha}}}c_{\mathbf{n},\boldsymbol{\alpha}}
\left[\left(
\prod_{i=1, i\neq j}^{d^*}\sum_{l=0}^{k-1}f(t,x_i,n_i,\alpha_i,l)
\right) \cdot \left(\sum_{l=0}^{k-1}f_1(t,x_j,n_j,\alpha_j,l)\right)
\right] \right| \lesssim \epsilon_0.
$$
\end{lemma}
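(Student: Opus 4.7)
The plan is to mirror the construction of Lemma \ref{lem:approximate_poly1} almost verbatim, with the single modification that, for each coordinate $j$, the one-dimensional factor in the $j$-th direction is replaced by $\sum_{l=0}^{k-1} f_1(t,x_j,n_j,\alpha_j,l)$ instead of $\sum_{l=0}^{k-1} f(t,x_j,n_j,\alpha_j,l)$. First I would build a ReLU subnetwork $\mathrm{s}_{f_1}$ approximating $f_1(t,x,n,\alpha,l)$ uniformly on $[-C_0,C_0]\times[\mathcal{R}^{-C_T},1]$ with error $\epsilon_0$, following the template of Lemma \ref{lem:approximate_h}. Under the substitution $z=(x-y)/\sigma_t$ the additional factor $(x-y)/\sigma_t$ becomes simply $z$, so the integrand is still a polynomial in $z$ times powers of $(x+1-2n/\mathcal{R})$ and $\sigma_t$; the only change in the closed-form integration is that each power $z^{j+2l+1}/(j+2l+1)$ in the antiderivative becomes $z^{j+2l+2}/(j+2l+2)$. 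Hence one copies the $\mathrm{s}_{\mathrm{prod}}$-$\mathrm{s}_{\mathrm{clip}}$-$\mathrm{s}_{\mathrm{rec}}$-$\mathrm{s}_{\mathrm{root}}$ construction used for $\mathrm{s}_f$, only increasing the exponent parameter in $\mathrm{s}_{\mathrm{prod},1}$ by one, and obtains the same asymptotic depth, width, sparsity, and weight bound as in Lemma \ref{lem:approximate_h}.

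Next, for each fixed $j\in\{1,\dots,d^*\}$ I assemble the coordinate network $\mathrm{s}_{2,j}$ by forming, for every $(\mathbf{n},\boldsymbol{\alpha})$, a $d^*$-fold product via $\mathrm{s}_{\mathrm{prod}}$ whose $i$-th input is $\mathrm{s}_{\mathrm{sum}}(t,x_i,n_i,\alpha_i)$ for $i\neq j$ and $\mathrm{s}_{\mathrm{sum}}^{(1)}(t,x_j,n_j,\alpha_j)$ in the $j$-th slot, where $\mathrm{s}_{\mathrm{sum}}^{(1)}:=\sum_{l=0}^{k-1}\mathrm{s}_{f_1}(\cdot,\cdot,\cdot,\cdot,l)$ is the analogue of $\mathrm{s}_{\mathrm{sum}}$. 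Summing these products against the coefficients $c_{\mathbf{n},\boldsymbol{\alpha}}$ over $\mathbf{n}\in[\mathcal{R}]^{d^*}$ and $\|\boldsymbol{\alpha}\|_1<C_{\boldsymbol{\alpha}}$ yields $\mathrm{s}_{2,j}$, which by Lemma \ref{lem:concatenation} and Lemma \ref{lem:parallelization} inherits the parameter orders claimed in the statement.

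The error analysis is then a direct repetition of the one in Lemma \ref{lem:approximate_poly1}. The only change is in the uniform bound on a single $j$-th factor: using the same estimate for $|f(t,x_i,n_i,\alpha_i,l)|$ and picking up one extra $C\sqrt{\log\epsilon^{-1}}$ from the additional power of $z$, we get
$$
\sum_{l=0}^{k-1} |f_1(t,x_j,n_j,\alpha_j,l)| \;\leq\; C\sqrt{\log\epsilon^{-1}}\cdot \sum_{l=0}^{k-1} |f(t,x_j,n_j,\alpha_j,l)|.
$$
Hence the per-factor bound $C_3$ of Lemma \ref{lem:approximate_poly1} is replaced by $C_3':=C_3\cdot C\sqrt{\log\epsilon^{-1}}$, which still satisfies $\log C_3' = \mathcal{O}(\log C_0 + \log^2\epsilon^{-1})$. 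Choosing $\epsilon_{\mathrm{prod}}$ and the internal tolerances $\epsilon_f,\epsilon_{f_1}$ in the same way as before, with $\epsilon^{*}=\epsilon_0/((d^*)^{C_{\boldsymbol{\alpha}}}\mathcal{R}^{d^*})$, the overall approximation error is bounded by
$$
\sum_{\mathbf{n}\in[\mathcal{R}]^{d^*}}\sum_{\|\boldsymbol{\alpha}\|_1<C_{\boldsymbol{\alpha}}}|c_{\mathbf{n},\boldsymbol{\alpha}}|\,\epsilon^{*} \;\lesssim\; (d^*)^{C_{\boldsymbol{\alpha}}}\mathcal{R}^{d^*}\cdot\epsilon^{*} \;\lesssim\; \epsilon_0,
$$
as required.

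The main obstacle is not analytical but clerical: verifying that substituting $\mathrm{s}_{\mathrm{sum}}^{(1)}$ for one copy of $\mathrm{s}_{\mathrm{sum}}$ inside the $d^*$-fold product does not alter the parameter orders, and that the logarithmic inflation $C_3 \mapsto C_3'$ is absorbed into the already-present $\log^2\epsilon^{-1}$ term in $\log\kappa$, $L$, $M$, and $J$. Both facts follow immediately from the computations already carried out in the proof of Lemma \ref{lem:approximate_poly1}, so no new estimates are needed.
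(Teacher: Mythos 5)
Your proposal is correct and takes exactly the approach the paper intends: the paper explicitly omits the proof of this lemma, stating only that it is constructed ``in the same way'' as Lemma \ref{lem:approximate_poly1}, and your write-up supplies the details of that analogy — replacing $f$ by $f_1$ in the $j$-th slot, noting that the extra factor $(x-y)/\sigma_t$ raises the exponent in the antiderivative from $j+2l+1$ to $j+2l+2$, inflating the per-factor bound by a harmless $C\sqrt{\log\epsilon^{-1}}$ absorbed into the existing $\log^2\epsilon^{-1}$ term. Your observation that $\log C_3' = \mathcal{O}(\log C_0 + \log^2\epsilon^{-1})$ is the one point that needed checking, and it holds, so the network parameter orders are unchanged.
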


Therefore, according to Lemma \ref{lem:approximate_poly2}, there exists a network $\mathbf{s}_2(t,\mathbf{x}) = [\mathrm{s}_{2,1}(t,\mathbf{x}), \cdots, \mathrm{s}_{2, d^*}(t,\mathbf{x})]^{\top} \in \mathbb{R}^{d^*}$ that approximates \eqref{eq:diffused_local_poly2}. 
The $\ell_2$-norm error between $\mathbf{s}_2(t,\mathbf{x})$ and \eqref{eq:diffused_local_poly2} is bounded by $\mathcal{O}(\epsilon_0)$.

\subsection{Error bound of approximating $\nabla\log q_t(\mathbf{x})$ with ReLU neural network}\label{subsec:c4}
In this subsection, we construct two ReLU neural networks to approximate the true score function $\nabla\log q_t(\mathbf{x})$ on different time interval. We first introduce the following lemma.

\begin{lemma} \label{lem:approximate_interval_1}
Let $\mathcal{R} \gg 1$. There exists a ReLU neural network $\mathbf{s}_{\mathrm{score}}^{(1)}\in\mathrm{NN}(L,M,J,\kappa)$ with
$$
L = \mathcal{O}(\log^4 \mathcal{R}), M = \mathcal{O}(\mathcal{R}^{d^*}\log^7 \mathcal{R}), J = \mathcal{O}(\mathcal{R}^{d^*}\log^9 \mathcal{R}), \kappa = \exp\left(\mathcal{O}(\log^4 \mathcal{R})\right)
$$
that satisfies
$$
\int_{\mathbb{R}^{d^*}} q_t(\mathbf{x})\Vert \mathbf{s}_{\mathrm{score}}^{(1)}(t,\mathbf{x}) - \nabla\log q_t(\mathbf{x})\Vert^2 \mathrm{d}\mathbf{x} \lesssim \frac{\mathcal{R}^{-2\beta}\log \mathcal{R}}{\sigma_t^2}, ~~ t\in [\mathcal{R}^{-C_T}, 3 \mathcal{R}^{-C_T}].
$$
Moreover, we can take $\mathbf{s}_{\mathrm{score}}^{(1)}$ satisfying $\Vert \mathbf{s}_{\mathrm{score}}^{(1)} (t, \cdot)\Vert_{\infty} \lesssim \frac{\sqrt{\log \mathcal{R}}}{\sigma_t}$. 
\end{lemma}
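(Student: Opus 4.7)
The plan is to construct $\mathbf{s}_{\mathrm{score}}^{(1)}(t,\mathbf{x})$ by separately approximating the numerator and denominator in the identity $\nabla\log q_t(\mathbf{x}) = \nabla q_t(\mathbf{x})/q_t(\mathbf{x}) = \mathbf{s}_2^{\star}(t,\mathbf{x})/(\sigma_t q_t(\mathbf{x}))$, where $\mathbf{s}_2^{\star}(t,\mathbf{x}) := \sigma_t\nabla q_t(\mathbf{x})$, and then combining them via reciprocal and product sub-networks. First, I would invoke Lemma \ref{lem:approximate_poly1} to produce a ReLU network $s_1(t,\mathbf{x})$ approximating the diffused local polynomial surrogate of $q_t(\mathbf{x})$, and Lemma \ref{lem:approximate_poly2} to produce a vector-valued ReLU network $\mathbf{s}_2(t,\mathbf{x})$ approximating the surrogate of $\sigma_t\nabla q_t(\mathbf{x})$; by the discussion in Sections \ref{subsec:c1}--\ref{subsec:c2}, the gap between these surrogates and the true $q_t$, $\sigma_t\nabla q_t$ is $\mathcal{O}(\mathcal{R}^{-\beta} + \epsilon \log^{d^{*}/2}\epsilon^{-1})$. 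A leading clipping layer restricts the input to $[-C_0,C_0]^{d^{*}}$ with $C_0 = \mathcal{O}(\sqrt{\log\mathcal{R}})$, which renders both approximations valid uniformly on the region that carries essentially all mass of $q_t$.

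Next I would assemble the score network as
\begin{equation*}
\mathbf{s}_{\mathrm{score}}^{(1)}(t,\mathbf{x}) := \mathrm{s}_{\mathrm{clip}}\!\left( \mathrm{s}_{\mathrm{prod}}\bigl(\mathbf{s}_2(t,\mathbf{x}),\,\mathrm{s}_{\mathrm{rec}}(\mathrm{s}_{\mathrm{root}}(\sigma^2 t)\cdot s_1(t,\mathbf{x}))\bigr),\,-C_1\sqrt{\log\mathcal{R}}/\sigma_t,\,C_1\sqrt{\log\mathcal{R}}/\sigma_t\right),
\end{equation*}
using the reciprocal, square-root, product and clipping sub-networks from Section \ref{sec:app}. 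The outer clipping enforces the stated $L^\infty$ bound $\|\mathbf{s}_{\mathrm{score}}^{(1)}(t,\cdot)\|_\infty \lesssim \sqrt{\log\mathcal{R}}/\sigma_t$, which is consistent with the pointwise bound on the true score $\nabla\log q_t$ given by the high-probability estimates in Section \ref{sec:shb}. The reciprocal accuracy parameter is tuned using the pointwise lower bound $q_t(\mathbf{x}) \gtrsim e^{-C\log\mathcal{R}}$ that holds uniformly on $[-C_0,C_0]^{d^{*}}$ (again from Section \ref{sec:shb}), so that $\sigma_t s_1(t,\mathbf{x})$ stays bounded away from zero wherever the reciprocal is evaluated.

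The $L^2(q_t)$ error is then split as $\int_{\{\|\mathbf{x}\|_\infty \leq C_0\}} + \int_{\{\|\mathbf{x}\|_\infty > C_0\}}$. The tail integral is controlled by the Gaussian-tail bounds on $q_t$ and on $\nabla\log q_t$ in Section \ref{sec:shb}, together with the matching tail bound on $\mathbf{s}_{\mathrm{score}}^{(1)}$ induced by the outer clipping; choosing $C_0 = C'\sqrt{\log\mathcal{R}}$ for sufficiently large $C'$ makes this contribution $\mathcal{O}(\mathcal{R}^{-2\beta}/\sigma_t^2)$. On the compact region I would propagate the absolute errors of $s_1$ and $\mathbf{s}_2$ through the quotient using the elementary identity $a/b - a'/b' = (a-a')/b' + a'(b'-b)/(bb')$, so that choosing the base accuracy $\epsilon_0 = \mathcal{R}^{-C_{\ast}}$ for $C_{\ast} = C_{\ast}(\beta, C_T)$ large enough absorbs the factor $1/(\sigma_t q_t)^2 = \mathcal{O}(\mathrm{poly}(\mathcal{R}))$ while still producing the target rate $\mathcal{R}^{-2\beta}\log\mathcal{R}/\sigma_t^2$. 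Since $t \in [\mathcal{R}^{-C_T}, 3\mathcal{R}^{-C_T}]$ forces $\log\epsilon_0^{-1}, \log C_0, \log\sigma_t^{-1}, \log\epsilon^{-1} = \mathcal{O}(\log\mathcal{R})$, substituting these into the parameter counts of Lemmas \ref{lem:approximate_poly1}--\ref{lem:approximate_poly2} yields exactly the claimed $L = \mathcal{O}(\log^4\mathcal{R})$, $M = \mathcal{O}(\mathcal{R}^{d^{*}}\log^7\mathcal{R})$, $J = \mathcal{O}(\mathcal{R}^{d^{*}}\log^9\mathcal{R})$, $\kappa = \exp(\mathcal{O}(\log^4\mathcal{R}))$.

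The main obstacle is the quotient step: small absolute approximation errors in $s_1$ can cause relative errors in $1/s_1$ to blow up wherever $q_t$ is nearly zero, and on the early-time interval $[\mathcal{R}^{-C_T}, 3\mathcal{R}^{-C_T}]$ the kernel width $\sigma_t = \sigma\sqrt{t}$ is itself polynomially small in $\mathcal{R}$, amplifying every factor of $\sigma_t^{-1}$ that appears. Getting the error to collapse to $\mathcal{R}^{-2\beta}\log\mathcal{R}/\sigma_t^2$ instead of a worse polynomial rate requires a tight calibration between the clipping thresholds, the pointwise lower bound on $q_t$ supplied by Section \ref{sec:shb}, and the reciprocal accuracy parameter of $\mathrm{s}_{\mathrm{rec}}$; this is where the bulk of the technical work lies.
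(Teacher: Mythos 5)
Your architecture --- approximate $q_t$ and $\sigma_t\nabla q_t$ via ReLU networks built on the diffused local polynomials, form the quotient with reciprocal and product sub-networks, and clip the output at $\mathcal{O}(\sqrt{\log\mathcal{R}}/\sigma_t)$ --- matches the paper's at a high level, and your tail bookkeeping is fine. The gap is in the boundary annulus $1 < \Vert\mathbf{x}\Vert_\infty \leq C_0$. You claim that choosing $\epsilon_0 = \mathcal{R}^{-C_*}$ for $C_*$ large enough absorbs the propagation factor $1/(\sigma_t q_t)^2 = \mathcal{O}(\mathrm{poly}(\mathcal{R}))$; but $\epsilon_0$ only controls the network's error against the diffused local polynomial surrogate, not the surrogate's error against the true $q_t$, which is $\mathcal{O}(\mathcal{R}^{-\beta})$ by the $\beta$-H\"older assumption on $\widehat{p}_{data}^*$ and cannot be tuned away. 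In the annulus $q_t$ is only polynomially small, so propagating an irreducible $\mathcal{R}^{-\beta}$ error through $1/q_t$ yields a rate far worse than $\mathcal{R}^{-2\beta}$. Relatedly, your reciprocal $\mathrm{s}_{\mathrm{rec}}$ is applied to $\sigma_t s_1(t,\mathbf{x})$, which can fall below its valid domain or change sign wherever $q_t \lesssim \mathcal{R}^{-\beta}$, and your outer clipping only caps the worst-case contribution there at $\log\mathcal{R}/\sigma_t^2$, still a factor $\mathcal{R}^{2\beta}$ too large.

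The paper's proof fixes both issues at once. It thresholds the denominator approximation at $\mathcal{R}^{-(2\beta+1)}$ via $\mathrm{s}_5 = \mathrm{ReLU}(\mathrm{s}_3 - \mathcal{R}^{-(2\beta+1)}) + \mathcal{R}^{-(2\beta+1)}$, defining an intermediate target $\mathbf{h}'/\sigma_t$ with $\mathbf{h}' = \mathrm{clip}\bigl(\mathbf{h}_1/(g_1 \vee \mathcal{R}^{-(2\beta+1)})\bigr)$, so the reciprocal stays in range and the error splits into (A) network-vs-$\mathbf{h}'/\sigma_t$ and (B) $\mathbf{h}'/\sigma_t$-vs-true-score. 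For (B) on the annulus, since $t\in[\mathcal{R}^{-C_T},3\mathcal{R}^{-C_T}]$ forces $\sigma_t\sqrt{\log\mathcal{R}} < a_0$ for large $\mathcal{R}$, the $\mathbf{y}$ contributing to $q_t(\mathbf{x})$ there lies in $1-a_0 < \Vert\mathbf{y}\Vert_\infty \leq 1$, where Assumption \ref{ass:boundary_smoothness} (through the dual-degree construction in Lemma \ref{lem:approximate_pdata2}) gives surrogate error $\mathcal{R}^{-(3\beta+2)}$ rather than $\mathcal{R}^{-\beta}$, which precisely cancels the $\mathcal{R}^{2\beta+1}$ propagation factor. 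Without this two-regime analysis --- a uniform $\Omega(1)$ lower bound on $q_t$ on $\Vert\mathbf{x}\Vert_\infty\leq 1$ together with the sharpened boundary approximation in the annulus --- the stated rate is not reachable; your proposal neither thresholds the denominator nor invokes the boundary-regularity refinement, so the step propagating the quotient error through $1/q_t$ would fail.
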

\begin{proof}
The approximation error can be decomposed into three terms:
$$
\begin{aligned}
& \int_{\mathbb{R}^{d^*}} q_t(\mathbf{x})\Vert \mathbf{s}_{\mathrm{score}}^{(1)}(t,\mathbf{x}) - \nabla\log q_t(\mathbf{x})\Vert^2 \mathrm{d}\mathbf{x} \\
= & \underbrace{ \int_{\Vert\mathbf{x}\Vert_\infty > 1 + C\sigma_t\sqrt{\log\epsilon^{-1}_1}} q_t(\mathbf{x}) \Vert \mathbf{s}_{\mathrm{score}}^{(1)}(t,\mathbf{x}) - \nabla\log q_t(\mathbf{x})\Vert^2 \mathrm{d}\mathbf{x}. }_{(\mathrm{I})} \\
= & \underbrace{\int_{\Vert \mathbf{x}\Vert_{\infty} \leq 1 + C\sigma_t\sqrt{\log\epsilon^{-1}_1}} q_t(\mathbf{x}) \mathbf{I}_{\{q_t(\mathbf{x}) < \epsilon_1\}} \Vert \mathbf{s}_{\mathrm{score}}^{(1)}(t,\mathbf{x}) - \nabla\log q_t(\mathbf{x})\Vert^2 \mathrm{d}\mathbf{x} }_{(\mathrm{II})} \\
+ & \underbrace{\int_{\Vert \mathbf{x}\Vert_{\infty} \leq 1 + C\sigma_t\sqrt{\log\epsilon^{-1}_1}} q_t(\mathbf{x}) \mathbf{I}_{\{q_t(\mathbf{x}) \geq \epsilon_1\}}\Vert \mathbf{s}_{\mathrm{score}}^{(1)}(t,\mathbf{x}) - \nabla\log q_t(\mathbf{x})\Vert^2 \mathrm{d}\mathbf{x} }_{(\mathrm{III})}. 
\end{aligned}
$$
According to Lemma \ref{lem:bound_for_clipping}, there exists a constant $C > 0$ such that for any $\epsilon_1 > 0$, 
\begin{equation}\label{eq:appendix_eq1}
\underbrace{ \int_{\Vert\mathbf{x}\Vert_\infty > 1 + C\sigma_t\sqrt{\log\epsilon^{-1}_1}} q_t(\mathbf{x}) \Vert \mathbf{s}_{\mathrm{score}}^{(1)}(t,\mathbf{x}) - \nabla\log q_t(\mathbf{x})\Vert^2 \mathrm{d}\mathbf{x} }_{(\mathrm{I})} \lesssim \frac{\epsilon_1}{\sigma_t} + \sigma_t\epsilon_1\Vert \mathbf{s}_{\mathrm{score}}^{(1)}(t,\cdot)\Vert_{\infty}^2.
\end{equation}
Since $\Vert\nabla\log q_t(\mathbf{x})\Vert$ is bounded by $\frac{C\sqrt{\log\epsilon^{-1}_1}}{\sigma_t}$  for $\Vert\mathbf{x}\Vert_{\infty} > 1 + C\sigma_t\sqrt{\log\epsilon^{-1}_1}$ according to Lemma \ref{lem:derivatives_boundness}, 
we can choose $\mathbf{s}_{\mathrm{score}}^{(1)}$ such  that $\Vert\mathbf{s}_{\mathrm{score}}^{(1)}(t,\cdot)\Vert_{\infty} \lesssim \frac{\sqrt{\log\epsilon^{-1}_1}}{\sigma_t}$. Therefore,  \eqref{eq:appendix_eq1} is bounded by $\frac{\epsilon_1\log\epsilon^{-1}_1}{\sigma_t^2}$. Taking $\epsilon_1 = \mathcal{R}^{-(2\beta + 1)}$, then \eqref{eq:appendix_eq1} $\lesssim \frac{\mathcal{R}^{-(2\beta + 1)}\log \mathcal{R}}{\sigma_t^2}$, which is smaller than $\frac{\mathcal{R}^{-2\beta}\log \mathcal{R}}{\sigma_t^2}$. Moreover, by Lemma \ref{lem:bound_for_clipping}, we can bound $(\mathrm{II})$ as follows: 
$$
\begin{aligned}
& ~~~~ \underbrace{ \int_{\Vert\mathbf{x}\Vert_{\infty} \leq 1 + C\sigma_t\sqrt{\log\epsilon^{-1}_1}} q_t(\mathbf{x})\mathbf{I}_{\{q_t(\mathbf{x}) < \epsilon_1\}} \Vert 
\mathbf{s}_{\mathrm{score}}^{(1)}(t,\mathbf{x}) - \nabla\log q_t(\mathbf{x})\Vert^2 \mathrm{d}\mathbf{x} }_{(\mathrm{II})} \\
& \lesssim \frac{\epsilon_1}{\sigma_t^2} \cdot (\log\epsilon^{-1}_1)^{\frac{d^*+2}{2}} + \Vert\mathbf{s}_{\mathrm{score}}^{(1)}(t,\cdot)\Vert_{\infty}^2 \epsilon_1 \cdot (\log\epsilon^{-1}_1)^{\frac{d^*}{2}} \\
& \lesssim \frac{\epsilon_1}{\sigma_t^2} \cdot (\log\epsilon^{-1}_1)^{\frac{d^*+2}{2}} \lesssim \frac{\mathcal{R}^{-(2\beta + 1)}\log^{\frac{d^*+2}{2}}\mathcal{R}}{\sigma_t^2},
\end{aligned}
$$
which is smaller than $\frac{\mathcal{R}^{-2\beta}\log \mathcal{R}}{\sigma_t^2}$ for sufficiently large $\mathcal{R}$. Thus, we can only focus on bounding the term (III). 

Since $\Vert \nabla\log q_t(\mathbf{x}) \Vert \lesssim \frac{\sqrt{\log \mathcal{R}}}{\sigma_t}$, there exists a constant $C_4 > 0$ such that $\Vert\nabla\log q_t(\mathbf{x})\Vert_{\infty} \leq \Vert \nabla\log q_t(\mathbf{x}) \Vert \leq \frac{C_4\sqrt{\log \mathcal{R}}}{\sigma_t}$. Define $$
\mathbf{h}^{\prime}(t,\mathbf{x}):= \max\left\{\min\left\{ \frac{\mathbf{h}_1(t,\mathbf{x})}{ g_1(t,\mathbf{x}) \vee \mathcal{R}^{-(2\beta + 1)}}, C_4\sqrt{\log \mathcal{R}} \right\}, -C_4\sqrt{\log \mathcal{R}}\right\}.
$$
We can decompose the term (III) as
$$
\begin{aligned}
& \underbrace{\int_{\Vert \mathbf{x}\Vert_{\infty} \leq 1 + C\sigma_t\sqrt{\log\epsilon^{-1}_1}} q_t(\mathbf{x}) \mathbf{I}_{\{q_t(\mathbf{x}) \geq \epsilon_1\}}\Vert \mathbf{s}_{\mathrm{score}}^{(1)}(t,\mathbf{x}) - \nabla\log q_t(\mathbf{x})\Vert^2 \mathrm{d}\mathbf{x} }_{(\mathrm{III})} \\
\lesssim & 
\underbrace{ \int_{\Vert \mathbf{x}\Vert_{\infty} \leq 1 + C\sigma_t\sqrt{\log\epsilon^{-1}_1}} q_t(\mathbf{x}) \mathbf{I}_{\{q_t(\mathbf{x}) \geq \epsilon_1\}} \left\Vert \mathbf{s}_{\mathrm{score}}^{(1)}(t,\mathbf{x}) - \frac{\mathbf{h}^{\prime}(t,\mathbf{x})}{\sigma_t} \right\Vert^2 \mathrm{d}\mathbf{x} }_{(A)} \\
+ & \underbrace{ \int_{\Vert \mathbf{x}\Vert_{\infty} \leq 1 + C\sigma_t\sqrt{\log\epsilon^{-1}_1}} q_t(\mathbf{x}) \mathbf{I}_{\{q_t(\mathbf{x}) \geq \epsilon_1\}} \left\Vert \frac{\mathbf{h}^{\prime}(t,\mathbf{x})}{\sigma_t}  - 
\nabla\log q_t(\mathbf{x})
\right\Vert^2 \mathrm{d}\mathbf{x} }_{(B)}.
\end{aligned}
$$
Now, we construct the ReLU neural network $\mathbf{s}_{\mathrm{score}}^{(1)}$ to bound the approximation error $(A)$. The construction is straightforward. We take $C_0 = 1 + C\sigma \sqrt{(2\beta+1)\log N} = \mathcal{O}(\sqrt{\log N})$. By Lemma \ref{lem:approximate_poly1} and Lemma \ref{lem:approximate_poly2}, for any $\epsilon_0 > 0$, we can construct two ReLU neural networks  $\mathrm{s}_3(t,\mathbf{x})$ and $\mathbf{s}_4(t,\mathbf{x})$ such that
$$
\begin{aligned}
|\mathrm{s}_3(t,\mathbf{x}) \vee \mathcal{R}^{-(2\beta + 1)} - g_1(t,\mathbf{x}) \vee \mathcal{R}^{-(2\beta + 1)}| & \leq |\mathrm{s}_3(t,\mathbf{x}) - g_1(t,\mathbf{x})| \\ 
& \leq |\mathrm{s}_3(t,\mathbf{x}) - g_3(t,\mathbf{x})| + |g_1(t,\mathbf{x}) - g_3(t,\mathbf{x})| \\
& \lesssim \epsilon_0 + \epsilon\log^{\frac{d^*}{2}}\epsilon^{-1},
\end{aligned}
$$
and
$$
\begin{aligned}
\Vert \mathbf{s}_4(t,\mathbf{x}) - \mathbf{h}_1(t,\mathbf{x})\Vert & \leq \Vert \mathbf{s}_4(t,\mathbf{x}) - \mathbf{h}_3(t,\mathbf{x})\Vert + \Vert \mathbf{h}_1(t,\mathbf{x}) - \mathbf{h}_3(t,\mathbf{x})\Vert
 \\
& \lesssim \epsilon_0 + \epsilon\log^{\frac{d^*+1}{2}}\epsilon^{-1}.
\end{aligned}
$$
The term $g_1(t,\mathbf{x}) \vee \mathcal{R}^{-(2\beta + 1)}$ can be written as 
$$
g_1(t,\mathbf{x}) \vee \mathcal{R}^{-(2\beta + 1)} = \mathrm{ReLU}(g_1(t,\mathbf{x}) - \mathcal{R}^{-(2\beta + 1)}) + \mathcal{R}^{-(2\beta + 1)},
$$ 
which can be approximated by
$$
\mathrm{s}_5(t,\mathbf{x}):= \mathrm{ReLU}(\mathrm{s}_3(t,\mathbf{x}) - \mathcal{R}^{-(2\beta + 1)}) + \mathcal{R}^{-(2\beta + 1)}.
$$
Thus, we can define $\mathbf{s}_6(t,\mathbf{x})$ as
$$
[\mathbf{s}_6(t,\mathbf{x})]_i:= \mathrm{s}_{\mathrm{clip}}\left(\mathrm{s}_{\mathrm{prod},1}\left([\mathbf{s}_4(t,\mathbf{x})]_i, \mathrm{s}_{\mathrm{rec},1}(\mathrm{s}_5(t,\mathbf{x}))\right), -C_4\sqrt{\log \mathcal{R}}, C_4\sqrt{\log \mathcal{R}} \right), 
~ 1\leq i \leq d^*,
$$
to approximate $[\mathbf{h}^{\prime}(t,\mathbf{x})]_i, ~ 1\leq i \leq d^*$. Subsequently, we can define $\mathbf{s}_{\mathrm{score}}^{(1)}(t,\mathbf{x})$ as
$$
[\mathbf{s}_{\mathrm{score}}^{(1)}(t,\mathbf{x})]_i :=  \mathrm{s}_{\mathrm{clip}}\left(
\mathrm{s}_{\mathrm{prod},2}\left([\mathbf{s}_6(t,\mathbf{x})]_i, \mathrm{s}_{\mathrm{rec},2}(\mathrm{s}_{\mathrm{root}}(\sigma^2 t)) \right),
-\frac{C_4\sqrt{\log \mathcal{R}}}{\sigma_t},
\frac{C_4\sqrt{\log \mathcal{R}}}{\sigma_t}\right), ~ 1 \leq i \leq d^*,
$$
to approximate $\frac{[\mathbf{h}^{\prime}(t,\mathbf{x})]_i}{\sigma_t}, 1\leq i \leq d^*$.

Now, we analyse the error bound between $[\mathbf{s}_{\mathrm{score}}^{(1)}]_i$ and $\frac{[\mathbf{h}^{\prime}(t,\mathbf{x})]_i}{\sigma_t}, 1\leq i \leq d^*$. The approximation error can be bounded by
$$
\epsilon_{\mathbf{s}_{\mathrm{score}}^{(1)},i} \leq \epsilon_{\mathrm{prod},2} + 2C_5\max\left\{
\epsilon_{\mathbf{s}_{6},i}, \left| \mathrm{s}_{\mathrm{rec},2}(\mathrm{s}_{\mathrm{root}}(\sigma_ t)) - \frac{1}{\sigma_t}
\right|
\right\},
$$
where
$
C_5 = \max\{\sup_t\sigma_t, \sup_{t,\mathbf{x}}[\mathbf{h}^{\prime}(t,\mathbf{x})]_i\} = \mathcal{O}(\sqrt{\log \mathcal{R}})$ and $\epsilon_{\mathbf{s}_6,i} = \sup_{t,\mathbf{x}}\left|[\mathbf{s}_6(t,\mathbf{x})]_i - [\mathbf{h}^{\prime}(t,\mathbf{x})]_i\right|$. $\epsilon_{\mathbf{s}_{6},i}$ can be bounded as follows:
$$
\epsilon_{\mathbf{s}_{6},i} \lesssim \epsilon_{\mathrm{prod},1} + 2C_6\max \left\{\epsilon_0 + \epsilon\log^{\frac{d^*+1}{2}}\epsilon^{-1}, \epsilon_{\mathrm{rec},1} + \frac{\epsilon_0 + \epsilon\log^{\frac{d^*}{2}}\epsilon^{-1}}{\epsilon_{\mathrm{rec},1}^2} \right\},
$$
where $C_6 \lesssim \max \left\{ 
\sup_{t,\mathbf{x}}|[\mathbf{h}_1(t,\mathbf{x})]_i|, \sup_{t,\mathbf{x}}|g_1(t,\mathbf{x}) \vee \mathcal{R}^{-(2\beta + 1)}|
\right\} = \mathcal{O}(1)$. Similarly, we have
$$
\left| \mathrm{s}_{\mathrm{rec},2}(\mathrm{s}_{\mathrm{root}}(\sigma^2 t)) - \frac{1}{\sigma_t}
\right| \leq \epsilon_{\mathrm{rec},2} + \frac{\sigma \epsilon_{\mathrm{root}}}{\epsilon_{\mathrm{rec},2}^2}.
$$
By taking
$$
\epsilon = \mathcal{R}^{-(3\beta + 2)}, ~ \epsilon_{\mathrm{prod},1} = \frac{\mathcal{R}^{-\beta}}{8C_5}, ~ \epsilon_{\mathrm{prod},2} = \frac{\mathcal{R}^{-\beta}}{2}, ~ \epsilon_{\mathrm{rec},1} = \frac{\mathcal{R}^{-\beta}}{32C_5 C_6}, ~ \epsilon_{\mathrm{rec},2} = \frac{\mathcal{R}^{-\beta}}{8C_5},
$$
and
$$
\epsilon_0 = \min\left\{
\frac{\mathcal{R}^{-\beta}}{16C_5 C_6}, \left(\frac{\mathcal{R}^{-\beta}}{32C_5C_6}\right)^3
\right\}, ~ \epsilon_{\mathrm{root}} = \frac{1}{\sigma} \cdot  \left(\frac{\mathcal{R}^{-\beta}}{8C_5}\right)^3,
$$
we obtain
$$
\epsilon_{\mathbf{s}_{6},i} \lesssim \frac{\mathcal{R}^{-\beta}}{C_5}, ~ \epsilon_{\mathbf{s}_{\mathrm{score}}^{(1)},i} \lesssim \mathcal{R}^{-\beta}.
$$
Subsequently, we can obtain the network parameters of $[\mathbf{s}_{\mathrm{score}}^{(1)}]_i$, $1\leq i \leq d^*$:
$$
L_i = \mathcal{O}(\log^4 \mathcal{R}), M_i = \mathcal{O}(\mathcal{R}^{d^*}\log^7 \mathcal{R}), J_i = \mathcal{O}(\mathcal{R}^{d^*}\log^9 \mathcal{R}), \kappa_i = \exp\left(\mathcal{O}(\log^4 \mathcal{R})\right).
$$
Combining $[\mathbf{s}_{\mathrm{score}}^{(1)}]_i$, $1\leq i \leq d^*$, and by Lemma \ref{lem:parallelization}, we construct the ReLU neural network 
$$
\mathbf{s}_{\mathrm{score}}^{(1)}:=\left[ [\mathbf{s}_{\mathrm{score}}^{(1)}]_1, [\mathbf{s}_{\mathrm{score}}^{(1)}]_2, \cdots, [\mathbf{s}_{\mathrm{score}}^{(1)}]_{d^*} \right]^{\top},
$$
with network parameters
$$
L = \mathcal{O}(\log^4 \mathcal{R}), M = \mathcal{O}(\mathcal{R}^{d^*}\log^7 \mathcal{R}), J = \mathcal{O}(\mathcal{R}^{d^*}\log^9 \mathcal{R}), \kappa = \exp\left(\mathcal{O}(\log^4 \mathcal{R})\right),
$$
and it satisfies $\Vert\mathbf{s}_{\mathrm{score}}^{(1)}(t,\cdot)\Vert_{\infty} \lesssim \frac{\sqrt{\log \mathcal{R}}}{\sigma_t}$. Thus, the approximation error $(A)$ is bounded by
\begin{equation}\label{eq:appendix_boundA}
(A) \lesssim d^*\mathcal{R}^{-2\beta} \lesssim \mathcal{R}^{-2\beta}.
\end{equation}

Next, we bound the approximation error $(B)$. Recall that $\Vert\mathbf{x}\Vert_{\infty} \leq 1 + C\sigma_t\sqrt{(2\beta + 1)\log \mathcal{R}} \leq 1 + C\sigma\sqrt{(2\beta + 1)\log \mathcal{R}}$ and $q_t(\mathbf{x}) \geq \mathcal{R}^{-(2\beta + 1)}$ hold. In this case, we have $\Vert \nabla\log q_t(\mathbf{x})\Vert \lesssim \frac{\sqrt{\log \mathcal{R}}}{\sigma_t}.$ We first consider the case $\mathbf{x}\in [-1,1]^{d^*}$. For $1 \leq i \leq d^*$, we have
$$
\begin{aligned}
& ~~~ \left|
\frac{[\mathbf{h}^{\prime}(t,\mathbf{x})]_i}{\sigma_t} - [\nabla\log q_t(\mathbf{x})]_i
\right| \\
&= \left|\frac{[\mathbf{h}^{\prime}(t,\mathbf{x})]_i}{\sigma_t}\right|
\left|1- \frac{[\nabla\log q_t(\mathbf{x})]_i}{[\mathbf{h}^{\prime}(t,\mathbf{x})]_i/\sigma_t}
\right| \\
& \lesssim \frac{\sqrt{\log \mathcal{R}}}{\sigma_t} \frac{|[\mathbf{h}^{\prime}(t,\mathbf{x})]_i - \sigma_t[\nabla\log q_t(\mathbf{x})]_i|}{|[\mathbf{h}^{\prime}(t,\mathbf{x})]_i|} \\
& \lesssim \frac{\sqrt{\log \mathcal{R}}}{\sigma_t} \frac{
\left| \frac{[\mathbf{h}_1(t,\mathbf{x})]_i}{g_1(t,\mathbf{x}) \vee \mathcal{R}^{-(2\beta + 1)}} - \frac{\sigma_t[\nabla q_t(\mathbf{x})]_i}{q_t(\mathbf{x})}\right|
}{\left|\frac{[\mathbf{h}_1(t,\mathbf{x})]_i}{g_1(t,\mathbf{x}) \vee \mathcal{R}^{-(2\beta + 1)}} \right|} \\
& \lesssim \frac{\sqrt{\log \mathcal{R}}}{\sigma_t} 
\left| 
\frac{
[\mathbf{h}_1(t,\mathbf{x})]_i q_t(\mathbf{x}) - \sigma_t[\nabla q_t(\mathbf{x})]_i (g_1(t,\mathbf{x}) \vee \mathcal{R}^{-(2\beta + 1)})
}{[\mathbf{h}_1(t,\mathbf{x})]_i q_t(\mathbf{x})}
\right| \\
& \lesssim \frac{\sqrt{\log \mathcal{R}}}{\sigma_t} 
\frac{
|[\mathbf{h}_1(t,\mathbf{x})]_i| |q_t(\mathbf{x}) - g_1(t,\mathbf{x}) \vee \mathcal{R}^{(-2\beta + 1)}| + g_1(t,\mathbf{x}) \vee \mathcal{R}^{-(2\beta + 1)} |[\mathbf{h}_1(t,\mathbf{x})]_i - \sigma_t[\nabla q_t(\mathbf{x})]_i|
}{|[\mathbf{h}_1(t,\mathbf{x})]_i| q_t(\mathbf{x})}.
\end{aligned}
$$
When $\mathbf{x}\in [-1,1]^{d^*}$, by Lemma \ref{lem:bound_for_density}, we have $q_t(\mathbf{x}) \gtrsim \mathcal{O}(1)$. Therefore,
$$
\left|
\frac{[\mathbf{h}^{\prime}(t,\mathbf{x})]_i}{\sigma_t} - [\nabla\log q_t(\mathbf{x})]_i
\right| \lesssim \frac{\sqrt{\log \mathcal{R}}}{\sigma_t} |q_t(\mathbf{x}) - g_1(t,\mathbf{x}) \vee \mathcal{R}^{-(2\beta + 1)}| + \frac{1}{\sigma_t}\cdot |[\mathbf{h}_1(t,\mathbf{x})]_i - \sigma_t[\nabla q_t(\mathbf{x})]_i|,
$$
which implies that
\begin{equation} \label{eq:appendix_eq2}
\left\Vert 
\frac{\mathbf{h}^{\prime}(t,\mathbf{x})}{\sigma_t} - \nabla\log q_t(\mathbf{x})
\right\Vert \lesssim \frac{\sqrt{\log \mathcal{R}}}{\sigma_t} \left(
| q_t(\mathbf{x}) - g_1(t,\mathbf{x})| + \Vert \mathbf{h}_1(t,\mathbf{x}) - \sigma_t \nabla q_t(\mathbf{x})\Vert
\right).
\end{equation}
Here,  we use $q_t(\mathbf{x}) \geq \mathcal{R}^{-(2\beta + 1)}$ and $|q_t(\mathbf{x}) \vee \mathcal{R}^{-(2\beta + 1)} - g_1(t,\mathbf{x}) \vee \mathcal{R}^{-(2\beta + 1)}| \leq |q_t(\mathbf{x}) - g_1(t,\mathbf{x})|$.

Now, we consider the case that $1 < \Vert\mathbf{x}\Vert_{\infty} \leq 1 + C\sigma_t\sqrt{(2\beta + 1)\log \mathcal{R}}$. Then, for $1 \leq i \leq d^*$, we have
$$
\begin{aligned}
&\left| 
\frac{[\mathbf{h}^{\prime}(t,\mathbf{x})]_i}{\sigma_t} - [\nabla\log q_t(\mathbf{x})]_i
\right|\\
&\leq \left| 
\frac{[\mathbf{h}(t,\mathbf{x})]_i}{\sigma_t(g_3(t,\mathbf{x}) \vee \mathcal{R}^{-(2\beta + 1)})} - [\nabla\log q_t(\mathbf{x})]_i
\right| \\
& \lesssim \frac{\left|
[\mathbf{h}_1(t,\mathbf{x})]_i - \sigma_t [\nabla q_t(\mathbf{x})]_i
\right|}{\sigma_t(g_1t,\mathbf{x}) \vee \mathcal{R}^{-(2\beta + 1)})}  + |[\nabla q_t(\mathbf{x})]_i|\cdot \left|\frac{1}{g_1(t,\mathbf{x}) \vee \mathcal{R}^{-(2\beta + 1)}} - \frac{1}{q_t(\mathbf{x})}\right|.
\end{aligned}
$$
On the right-hand side of the above inequality, the first term is bounded by
$$
\frac{\left|
[\mathbf{h}_1(t,\mathbf{x})]_i - \sigma_t [\nabla q_t(\mathbf{x})]_i
\right|}{\sigma_t(g_1(t,\mathbf{x}) \vee \mathcal{R}^{-(2\beta + 1)})} \lesssim \frac{\left|
[\mathbf{h}_1(t,\mathbf{x})]_i - \sigma_t [\nabla q_t(\mathbf{x})]_i
\right|}{\sigma_t \mathcal{R}^{-(2\beta + 1)}}.
$$
And the second term can be bounded by
$$
\begin{aligned}
|[\nabla q_t(\mathbf{x})]_i|\cdot \left|\frac{1}{g_1(t,\mathbf{x}) \vee \mathcal{R}^{-(2\beta + 1)}} - \frac{1}{q_t(\mathbf{x})}\right| &\lesssim \frac{\sqrt{\log \mathcal{R}}}{\sigma_t} \cdot \frac{|q_t(\mathbf{x}) - g_1(t,\mathbf{x}) \vee \mathcal{R}^{-(2\beta + 1)}|}{g_1(t,\mathbf{x}) \vee \mathcal{R}^{-(2\beta + 1)}} \\
& \lesssim \frac{\mathcal{R}^{2\beta + 1}\sqrt{\log \mathcal{R}}}{\sigma_t} |q_t(\mathbf{x}) - g_1(t,\mathbf{x})|.
\end{aligned}
$$
Therefore, we have
\begin{equation} \label{eq:appendix_eq3}
\left\Vert
\frac{\mathbf{h}^{\prime}(t,\mathbf{x})}{\sigma_t} - \nabla\log q_t(\mathbf{x})
\right\Vert \lesssim \frac{\mathcal{R}^{2\beta + 1}\sqrt{\log \mathcal{R}}}{\sigma_t} \left(
|q_t(\mathbf{x}) - g_1(t,\mathbf{x})| + \Vert \mathbf{h}_1(t,\mathbf{x}) - \sigma_t \nabla q_t(\mathbf{x})\Vert
\right).
\end{equation}

We bound the approximation error $(B)$. We first consider the case when $\Vert \mathbf{x}\Vert_{\infty} \leq 1$, then according to \eqref{eq:appendix_eq2}, we have
$$
\begin{aligned}
& ~~~ \int_{\Vert\mathbf{x}\Vert_{\infty} \leq 1} q_t(\mathbf{x}) \mathbf{I}_{\{q_t(\mathbf{x}) \geq \mathcal{R}^{-(2\beta + 1)}\}} \left\Vert
\frac{\mathbf{h}^{\prime}(t,\mathbf{x})}{\sigma_t} - \nabla\log q_t(\mathbf{x})
\right\Vert^2 \mathrm{d}\mathbf{x} \\
& \lesssim \int_{\Vert\mathbf{x}\Vert_{\infty} \leq 1} \frac{\log \mathcal{R}}{\sigma_t^2} \left(
| q_t(\mathbf{x}) - g_1(t,\mathbf{x})|^2 + \Vert \mathbf{h}_1(t,\mathbf{x}) - \sigma_t \nabla q_t(\mathbf{x})\Vert^2
\right) \mathrm{d}\mathbf{x} \\
& \lesssim \frac{\log \mathcal{R}}{\sigma_t^2} \int_{\Vert \mathbf{x} \Vert_{\infty} \leq 1} \Bigg(\left|
\int_{\mathbb{R}^{d^*}}\frac{1}{\sigma_t^{d^*}(2\pi)^{d^*/2}} [\widehat{p}_{data}^*(\mathbf{y})-p_{\mathcal{R}}(\mathbf{y})]\exp\left(-\frac{\Vert\mathbf{x}-\mathbf{y}\Vert^2}{2\sigma_t^2}\right)\mathrm{d}\mathbf{y} \right|^2 \\
& ~~~~~~~~~ + \left\Vert \int_{\mathbb{R}^{d^*}}\frac{\mathbf{x}-\mathbf{y}}{\sigma_t^{d^*+1}(2\pi)^{d^*/2}} [\widehat{p}_{data}^*(\mathbf{y})-p_{\mathcal{R}}(\mathbf{y})]\exp\left(-\frac{\Vert\mathbf{x}-\mathbf{y}\Vert^2}{2\sigma_t^2}\right)\mathrm{d}\mathbf{y} \right\Vert^2 \Bigg) \mathrm{d}\mathbf{x} \\
& \lesssim \frac{\log \mathcal{R}}{\sigma_t^2} \int_{\Vert \mathbf{x} \Vert_{\infty} \leq 1} \Bigg(
\int_{\mathbb{R}^{d^*}}\frac{1}{\sigma_t^{d^*}(2\pi)^{d^*/2}} |\widehat{p}_{data}^*(\mathbf{y})-p_{\mathcal{R}}(\mathbf{y})|^2\exp\left(-\frac{\Vert\mathbf{x}-\mathbf{y}\Vert^2}{2\sigma_t^2}\right)\mathrm{d}\mathbf{y} \\
& ~~~~~~~~~ +  \int_{\mathbb{R}^{d^*}}\frac{\Vert\mathbf{x}-\mathbf{y}\Vert^2}{\sigma_t^{d^*+1}(2\pi)^{d^*/2}} |\widehat{p}_{data}^*(\mathbf{y})-p_{\mathcal{R}}(\mathbf{y})|^2\exp\left(-\frac{\Vert\mathbf{x}-\mathbf{y}\Vert^2}{2\sigma_t^2}\right)\mathrm{d}\mathbf{y}  \Bigg) \mathrm{d}\mathbf{x}\\
& \lesssim \frac{\log \mathcal{R}}{\sigma_t^2} \int_{\mathbb{R}^{d^*}}|\widehat{p}_{data}^*(\mathbf{y}) - p_{\mathcal{R}}(\mathbf{y})|^2 \mathrm{d}\mathbf{y} \lesssim \frac{\mathcal{R}^{-2\beta}\log \mathcal{R}}{\sigma_t^2}.
\end{aligned}
$$
We then consider the case of $1 < \Vert\mathbf{x}\Vert_{\infty} \leq 1 + C\sigma_t\sqrt{(2\beta + 1)\log \mathcal{R}}$. Then, we have
$$
\begin{aligned}
& ~~~ \int_{1 < \Vert\mathbf{x}\Vert_{\infty} \leq 1 + C\sigma_t\sqrt{(2\beta + 1)\log \mathcal{R}}} q_t(\mathbf{x}) \mathbf{I}_{\{q_t(\mathbf{x}) \geq \mathcal{R}^{-(2\beta + 1)}\}} \left\Vert
\frac{\mathbf{h}^{\prime}(t,\mathbf{x})}{\sigma_t} - \nabla\log q_t(\mathbf{x})
\right\Vert^2 \mathrm{d}\mathbf{x} \\
& \lesssim \int_{1 < \Vert\mathbf{x}\Vert_{\infty} \leq 1 + C\sigma_t\sqrt{(2\beta + 1)\log \mathcal{R}}} \frac{\mathcal{R}^{4\beta + 2}\log \mathcal{R}}{\sigma_t^2} \left(
| q_t(\mathbf{x}) - g_1(t,\mathbf{x})|^2 + \Vert \mathbf{h}_1(t,\mathbf{x}) - \sigma_t \nabla q_t(\mathbf{x})\Vert^2
\right) \mathrm{d}\mathbf{x} \\
& \lesssim \frac{\mathcal{R}^{4\beta + 2}\log \mathcal{R}}{\sigma_t^2} \int_{1 < \Vert \mathbf{x} \Vert_{\infty} \leq 1 + C\sigma_t\sqrt{(2\beta + 1)\log \mathcal{R}}} \Bigg(\left|
\int_{\mathbb{R}^{d^*}}\frac{1}{\sigma_t^{d^*}(2\pi)^{d^*/2}} [\widehat{p}_{data}^*(\mathbf{y})-p_{\mathcal{R}}(\mathbf{y})]\exp\left(-\frac{\Vert\mathbf{x}-\mathbf{y}\Vert^2}{2\sigma_t^2}\right)\mathrm{d}\mathbf{y} \right|^2 \\
& ~~~~~~~~~ + \left\Vert \int_{\mathbb{R}^{d^*}}\frac{\mathbf{x}-\mathbf{y}}{\sigma_t^{d^*+1}(2\pi)^{d^*/2}} [\widehat{p}_{data}^*(\mathbf{y})-p_{\mathcal{R}}(\mathbf{y})]\exp\left(-\frac{\Vert\mathbf{x}-\mathbf{y}\Vert^2}{2\sigma_t^2}\right)\mathrm{d}\mathbf{y} \right\Vert^2 \Bigg) \mathrm{d}\mathbf{x} \\
& \lesssim \frac{\mathcal{R}^{4\beta + 2}\log \mathcal{R}}{\sigma_t^2} \int_{1 < \Vert \mathbf{x} \Vert_{\infty} \leq 1 + C\sigma_t\sqrt{(2\beta + 1)\log \mathcal{R}}} \Bigg(
\int_{\mathbb{R}^{d^*}}\frac{1}{\sigma_t^{d^*}(2\pi)^{d^*/2}} |\widehat{p}_{data}^*(\mathbf{y})-p_{\mathcal{R}}(\mathbf{y})|^2\exp\left(-\frac{\Vert\mathbf{x}-\mathbf{y}\Vert^2}{2\sigma_t^2}\right)\mathrm{d}\mathbf{y} \\
& ~~~~~~~~~ +  \int_{\mathbb{R}^{d^*}}\frac{\Vert\mathbf{x}-\mathbf{y}\Vert^2}{\sigma_t^{d^*+1}(2\pi)^{d^*/2}} |\widehat{p}_{data}^*(\mathbf{y})-p_{\mathcal{R}}(\mathbf{y})|^2\exp\left(-\frac{\Vert\mathbf{x}-\mathbf{y}\Vert^2}{2\sigma_t^2}\right)\mathrm{d}\mathbf{y}  \Bigg) \mathrm{d}\mathbf{x} 
\end{aligned}
$$
Taking $\epsilon = \mathcal{R}^{-(6\beta + 2)}$ in Lemma \ref{lem:integral_clipping} and replacing $\widehat{p}_{data}^*$ with $|\widehat{p}_{data}^* - p_{\mathcal{R}}|^2$, we have
$$
\begin{aligned}
& \int_{1 < \Vert \mathbf{x} \Vert_{\infty} \leq 1 + C\sigma_t\sqrt{(2\beta + 1)\log \mathcal{R}}}
\int_{\mathbb{R}^{d^*}}\frac{|\widehat{p}_{data}^*(\mathbf{y})-p_{\mathcal{R}}(\mathbf{y})|^2}{\sigma_t^{d^*}(2\pi)^{d^*/2}}\exp\left(-\frac{\Vert\mathbf{x}-\mathbf{y}\Vert^2}{2\sigma_t^2}\right)\mathrm{d}\mathbf{y} \mathrm{d}\mathbf{x} \\
\lesssim &\int_{1 < \Vert \mathbf{x} \Vert_{\infty} \leq 1 + C\sigma_t\sqrt{(2\beta + 1)\log \mathcal{R}}}
\int_{\Vert\mathbf{x} - \mathbf{y}\Vert_{\infty} \leq \mathcal{O}(1)\sigma_t\sqrt{\log \mathcal{R}}}\frac{|\widehat{p}_{data}^*(\mathbf{y})-p_{\mathcal{R}}(\mathbf{y})|^2}{\sigma_t^{d^*}(2\pi)^{d^*/2}} \exp\left(-\frac{\Vert\mathbf{x}-\mathbf{y}\Vert^2}{2\sigma_t^2}\right)\mathrm{d}\mathbf{y}\mathrm{d}\mathbf{x} \\
& ~~~~~~~~ + \mathcal{R}^{-(6\beta + 2)},
\end{aligned}
$$
and
$$
\begin{aligned}
& \int_{1 < \Vert \mathbf{x} \Vert_{\infty} \leq 1 + C\sigma_t\sqrt{(2\beta + 1)\log \mathcal{R}}}
\int_{\mathbb{R}^{d^*}}\frac{\Vert\mathbf{x} - \mathbf{y}\Vert^2 |\widehat{p}_{data}^*(\mathbf{y})-p_{\mathcal{R}}(\mathbf{y})|^2}{\sigma_t^{d^*}(2\pi)^{d^*/2}}\exp\left(-\frac{\Vert\mathbf{x}-\mathbf{y}\Vert^2}{2\sigma_t^2}\right)\mathrm{d}\mathbf{y}\mathrm{d}\mathbf{x} \\
\lesssim &\int_{1 < \Vert \mathbf{x} \Vert_{\infty} \leq 1 + C\sigma_t\sqrt{(2\beta + 1)\log \mathcal{R}}}
\int_{\Vert\mathbf{x} - \mathbf{y}\Vert_{\infty} \leq \mathcal{O}(1)\sigma_t\sqrt{\log \mathcal{R}}}\frac{\log \mathcal{R} |\widehat{p}_{data}^*(\mathbf{y})-p_{\mathcal{R}}(\mathbf{y})|^2}{\sigma_t^{d^*}(2\pi)^{d^*/2}} \exp\left(-\frac{\Vert\mathbf{x}-\mathbf{y}\Vert^2}{2\sigma_t^2}\right)\mathrm{d}\mathbf{y}\mathrm{d}\mathbf{x} \\
& ~~~~~~~~~~~ + \mathcal{R}^{-(6\beta + 2)},
\end{aligned}
$$
where we use 
$\Vert\mathbf{x} - \mathbf{y}\Vert_{\infty}^2 \lesssim \sigma_t^2\log \mathcal{R} \lesssim \log \mathcal{R}$, $t\in [\mathcal{R}^{-C_T}, 3\mathcal{R}^{-C_T}]$ and $1 + C\sigma_t\sqrt{(2\beta + 1)\log \mathcal{R}} = \mathcal{O}(1)$ for sufficiently large $\mathcal{R}$. When $1 < \Vert\mathbf{x}\Vert_{\infty} \leq 1 + C\sigma_t\sqrt{(2\beta + 1)\log \mathcal{R}} = \mathcal{O}(1)$ and $\Vert\mathbf{x}-\mathbf{y}\Vert_{\infty} \leq \mathcal{O}(1)\sigma_t\sqrt{\log \mathcal{R}}$, then $1-\mathcal{O}(1)\sigma_t\sqrt{\log \mathcal{R}} \leq \Vert\mathbf{y}\Vert_{\infty} \leq 1$. Since $t\in [\mathcal{R}^{-C_T}, 3\mathcal{R}^{-C_T}]$, $\mathcal{O}(1)\sigma_t\sqrt{\log \mathcal{R}} \leq a_0$ holds for sufficiently large $\mathcal{R}$.
Therefore, by Lemma \ref{lem:approximate_pdata2}, it holds that
$$
\begin{aligned}
& ~~~ \int_{1 < \Vert \mathbf{x} \Vert_{\infty} \leq 1 + C\sigma_t\sqrt{(2\beta + 1)\log \mathcal{R}}}
\int_{\Vert\mathbf{x} - \mathbf{y}\Vert_{\infty} \leq \mathcal{O}(1)\sigma_t\sqrt{\log \mathcal{R}}}\frac{|\widehat{p}_{data}^*(\mathbf{y})-p_{\mathcal{R}}(\mathbf{y})|^2}{\sigma_t^{d^*}(2\pi)^{d^*/2}} \exp\left(-\frac{\Vert\mathbf{x}-\mathbf{y}\Vert^2}{2\sigma_t^2}\right)\mathrm{d}\mathbf{y}\mathrm{d}\mathbf{x} \\
& \lesssim \int_{1 < \Vert \mathbf{x} \Vert_{\infty} \leq 1 + C\sigma_t\sqrt{(2\beta + 1)\log \mathcal{R}}}
\int_{1-a_0 < \Vert \mathbf{y}\Vert_{\infty} \leq 1}\frac{|\widehat{p}_{data}^*(\mathbf{y})-p_{\mathcal{R}}(\mathbf{y})|^2}{\sigma_t^{d^*}(2\pi)^{d^*/2}} \exp\left(-\frac{\Vert\mathbf{x}-\mathbf{y}\Vert^2}{2\sigma_t^2}\right)\mathrm{d}\mathbf{y}\mathrm{d}\mathbf{x} \\
& \lesssim \int_{1-a_0<\Vert\mathbf{y}\Vert_\infty \leq 1}|\widehat{p}_{data}^*(\mathbf{y}) - p_{\mathcal{R}}(\mathbf{y})|^2 \mathrm{d}\mathbf{y} \lesssim \mathcal{R}^{-(6\beta + 4)}.
\end{aligned}
$$
Thus, we have
\begin{equation}\label{eq:appendix_boundB}
\begin{aligned}
& ~~~ \int_{1 < \Vert\mathbf{x}\Vert_{\infty} \leq 1 + C\sigma_t\sqrt{(2\beta + 1)\log \mathcal{R}}} q_t(\mathbf{x}) \mathbf{I}_{\{q_t(\mathbf{x}) \geq \mathcal{R}^{-(2\beta + 1)}\}} \left\Vert
\frac{\mathbf{h}^{\prime}(t,\mathbf{x})}{\sigma_t} - \nabla\log q_t(\mathbf{x})
\right\Vert^2 \mathrm{d}\mathbf{x} \\
& \lesssim  \frac{\mathcal{R}^{4\beta + 2}\log^2 \mathcal{R}}{\sigma_t^2} \cdot \mathcal{R}^{-(6\beta + 4)} + \frac{\mathcal{R}^{4\beta + 2}\log \mathcal{R}}{\sigma_t^2} \cdot \mathcal{R}^{-(6\beta + 2)} \\
& \lesssim \frac{\mathcal{R}^{-2\beta}\log \mathcal{R}}{\sigma_t^2}.
\end{aligned}
\end{equation}
Combining \eqref{eq:appendix_boundA} and \eqref{eq:appendix_boundB}, we finally obtain (III) $\lesssim \frac{\mathcal{R}^{-2\beta}\log \mathcal{R}}{\sigma_t^2}$.
The proof is complete.
\end{proof}

Next, we approximate $\nabla\log q_t(\mathbf{x})$ via ReLU neural networks on $[2\mathcal{R}^{-C_T}, 1]$. For any $2\mathcal{R}^{-C_T} \leq t_0 < t \leq 1$, by the Markov property, we have
$$
q_t(\mathbf{x}) = \frac{1}{(2\pi)^{d^*/2}\sigma_{t-t_0}^{d^*}}\int_{\mathbb{R}^{d^*}} q_{t_0}(\mathbf{y}) \exp\left(-\frac{\Vert\mathbf{x} - \mathbf{y}\Vert^2}{2\sigma_{t-t_0}^2}\right) \mathrm{d}\mathbf{y}.
$$
We first approximate $q_{t_0}$ and give the following lemma.

\begin{lemma}\label{lem:approximate_pt0}
Let $\mathcal{R} \gg 1$. For any $t_0 \in [2\mathcal{R}^{-C_T}, 1]$, there exists a constant $C_7 > 0$ and a function $p_{\mathcal{R}}$ such that
$$
\left(\int_{\mathbb{R}^{d^*}} \left|q_{t_0}(\mathbf{x}) - p_{\mathcal{R}}(\mathbf{x}) \right|^2 \mathrm{d}\mathbf{x} \right)^{\frac{1}{2}} \lesssim \mathcal{R}^{-(3\beta + 1)},
$$
where $p_{\mathcal{R}}(\mathbf{x})$ has the following form:
$$
p_{\mathcal{R}}(\mathbf{x}) = t_0^{-\frac{k}{2}} \sum_{\mathbf{n}\in[\mathcal{R}]^{d^*}}\sum_{\Vert\boldsymbol{\alpha}\Vert_1\leq k}c_{\mathbf{n},\boldsymbol{\alpha}}p_{\mathbf{n},\boldsymbol{\alpha}}\left(\frac{\mathbf{x}}{2(1+C_7\sigma\sqrt{\log \mathcal{R}})} 
 + \frac{1}{2} \right) \mathbf{I}_{\{\Vert \mathbf{x} \Vert_\infty \leq 1 + C_7\sigma\sqrt{\log \mathcal{R}}\}},
$$
with $k = \lfloor \frac{C_T}{2} + (3\beta + 2)\rfloor + 1$.
\end{lemma}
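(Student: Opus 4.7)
The plan is to carry out a two-step construction that mirrors the partition-of-unity Taylor approximation in Lemma \ref{lem:approximate_pdata1}, while accounting for the additional smoothing and the $t_0$-dependent derivative growth of the Gaussian convolution $q_{t_0} = \widehat{p}_{data}^{*} \ast \phi_{t_0}$, where $\phi_{t_0}$ is the density of $\mathcal{N}(\mathbf{0}, \sigma^2 t_0 \mathbf{I}_{d^*})$. The first step is to truncate the domain: since $\widehat{p}_{data}^{*}$ is supported on $[-1,1]^{d^*}$ with $\|\widehat{p}_{data}^{*}\|_{\infty} \leq C_u$, a standard Gaussian tail bound with scale $\sigma\sqrt{t_0} \leq \sigma$ gives $q_{t_0}(\mathbf{x}) \lesssim \exp(-c\|\mathbf{x}\|^2/\sigma^2)$ for $\|\mathbf{x}\|_{\infty} \gtrsim 1$. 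Choosing $C_7>0$ large enough ensures that the $L^2$ mass of $q_{t_0}$ outside $\{\|\mathbf{x}\|_{\infty} \leq 1 + C_7\sigma\sqrt{\log \mathcal{R}}\}$ is at most $\mathcal{R}^{-(3\beta+1)}$, which matches (and does not exceed) the target error. This reduces the task to approximating $q_{t_0}$ on this compact set.

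Next, I would rescale the truncated set to $[0,1]^{d^*}$ via $\mathbf{u} = \mathbf{x}/c + \tfrac{1}{2}$ with $c = 2(1+C_7\sigma\sqrt{\log \mathcal{R}})$, and apply the partition-of-unity Taylor construction of Lemma \ref{lem:approximate_pdata1} to $\tilde{q}_{t_0}(\mathbf{u}) := q_{t_0}(c(\mathbf{u}-\tfrac{1}{2}))$. The required derivative bounds follow from the Hermite-polynomial representation $\partial^{\boldsymbol{\alpha}} \phi_{t_0}(\mathbf{x}) = (-1)^{|\alpha|}(\sigma^2 t_0)^{-|\alpha|/2}H_{\boldsymbol{\alpha}}(\mathbf{x}/(\sigma\sqrt{t_0}))\phi_{t_0}(\mathbf{x})$ together with $\|H_{\boldsymbol{\alpha}}\|_{L^1(\phi)} \leq \sqrt{\boldsymbol{\alpha}!}$, yielding $\|\partial^{\boldsymbol{\alpha}} q_{t_0}\|_{\infty} \leq C_u \sqrt{\boldsymbol{\alpha}!}\,(\sigma\sqrt{t_0})^{-|\alpha|}$, hence by the chain rule $\|\partial^{\boldsymbol{\alpha}} \tilde{q}_{t_0}\|_{\infty} \lesssim c^{|\alpha|} t_0^{-|\alpha|/2}$. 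The Taylor coefficients $\partial^{\boldsymbol{\alpha}}\tilde{q}_{t_0}(\mathbf{n}/\mathcal{R})/\boldsymbol{\alpha}!$ therefore inherit the scale $c^{|\alpha|} t_0^{-|\alpha|/2}/\boldsymbol{\alpha}!$. Factoring out the uniform prefactor $t_0^{-k/2}$ produces the normalized coefficients
\[
c_{\mathbf{n},\boldsymbol{\alpha}} = \frac{c^{|\alpha|} t_0^{(k-|\alpha|)/2}}{\boldsymbol{\alpha}!}\,\partial^{\boldsymbol{\alpha}} q_{t_0}(\mathbf{x}_{\mathbf{n}}),
\]
which remain bounded (modulo polylog factors in $\mathcal{R}$ coming from $c^{|\alpha|}$) because $|\alpha| \leq k$ and $t_0 \leq 1$ imply the $t_0$-factor is at most $1$.

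Finally, the approximation error is controlled by the standard Taylor remainder on each of the $\mathcal{R}^{d^*}$ subcubes of side $c/\mathcal{R}$ in the original coordinate: using the derivative bound above on the $k$-th derivative, the $L^{\infty}$ error per cube is $\lesssim c^{k} t_0^{-k/2} \mathcal{R}^{-k}$, so summing the squared $L^2$ contributions over all cubes of volume $(c/\mathcal{R})^{d^*}$ gives an $L^2$ bound of order $c^{k + d^*/2} \mathcal{R}^{-k} t_0^{-k/2}$. Substituting $t_0 \geq 2\mathcal{R}^{-C_T}$ and $c = \mathcal{O}(\sqrt{\log \mathcal{R}})$ reduces this to $\widetilde{\mathcal{O}}\bigl(\mathcal{R}^{-k}\mathcal{R}^{C_T k/2}\bigr)$ up to logarithmic factors, and the specific choice $k = \lfloor C_T/2 + (3\beta+2)\rfloor + 1$ is precisely what is needed so that this term, combined with the truncation contribution, does not exceed $\mathcal{R}^{-(3\beta+1)}$. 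The main obstacle is the delicate balance between the Gaussian-convolution derivative growth $t_0^{-|\alpha|/2}$ (which is severe when $t_0$ is near its lower bound $\mathcal{R}^{-C_T}$) and the competing gains from grid spacing $1/\mathcal{R}$ and Taylor-order $k$; sharpening the derivative bound via the Hölder regularity of $\widehat{p}_{data}^{*}$ (i.e.\ $\|\partial^{\boldsymbol{\alpha}} q_{t_0}\|_{\infty} \lesssim t_0^{(\beta-|\alpha|)/2}$ for $|\alpha|\geq \beta$) is the key technical ingredient that makes the stated $k$ sufficient.
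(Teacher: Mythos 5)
Your construction mirrors the paper's: truncate to $\{\|\mathbf{x}\|_{\infty}\leq 1+C_7\sigma\sqrt{\log\mathcal{R}}\}$ via Gaussian tails, rescale to $[0,1]^{d^*}$, apply the partition-of-unity Taylor approximation of Lemma \ref{lem:approximate_pdata1} at order $k$, and factor out a power of $t_0$ so the coefficients stay bounded. Deriving $\|\partial^{\boldsymbol{\alpha}}q_{t_0}\|_\infty\lesssim (\sigma\sqrt{t_0})^{-|\boldsymbol{\alpha}|}$ via Hermite polynomials is an equivalent route to the bound the paper cites from Lemma \ref{lem:derivatives_boundness}, not a substantive difference.

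The gap is in the final verification step. You correctly arrive at the $L^2$ error $\widetilde{\mathcal{O}}(\mathcal{R}^{-k}\mathcal{R}^{C_Tk/2})$ after substituting $t_0\geq 2\mathcal{R}^{-C_T}$, i.e.\ exponent $-k(1-C_T/2)$. But the subsequent claim that $k=\lfloor C_T/2+3\beta+2\rfloor+1$ drives this below $\mathcal{R}^{-(3\beta+1)}$ is false: it requires $k(1-C_T/2)\geq 3\beta+1$, which with the downstream choice $C_T=2\beta$ becomes $k(1-\beta)\geq 3\beta+1$; this fails for every $\beta\geq 1$ (left side nonpositive) and already at $\beta=0.6$ (where $k=5$ and $k(1-\beta)=2<2.8$). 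Your algebra is in fact the careful one: the paper writes $|p_{\mathcal{R}}-q_{t_0}|\lesssim\mathcal{R}^{-(k-C_T/2)}\log^{k/2}\mathcal{R}$, but from its own intermediate bound $|p_{\mathcal{R},0}-t_0^{k/2}q_{t_0}|\lesssim\mathcal{R}^{-k}\log^{k/2}\mathcal{R}$, multiplying by $t_0^{-k/2}\lesssim\mathcal{R}^{kC_T/2}$ produces $\mathcal{R}^{-k(1-C_T/2)}$ --- the paper's exponent $\mathcal{R}^{-(k-C_T/2)}$ would only follow from $t_0\gtrsim\mathcal{R}^{-C_T/k}$, a strictly stronger assumption than the stated $t_0\geq 2\mathcal{R}^{-C_T}$. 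You exposed this discrepancy by carrying the $k$ through, but then asserted the wrong conclusion. The sharper derivative bound you float at the end, $\|\partial^{\boldsymbol{\alpha}}q_{t_0}\|_\infty\lesssim t_0^{(\beta-|\boldsymbol{\alpha}|)/2}$, is correct and helps (renormalizing by $t_0^{(k-\beta)/2}$ instead of $t_0^{k/2}$ improves the exponent to $-k(1-\beta)-\beta^2$), but this is still short of $-(3\beta+1)$ for $\beta\in[1,(3+\sqrt{13})/2)$. Closing the stated rate uniformly in $\beta$ would require an idea not present in your sketch (nor, as written, in the paper's own proof).
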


\begin{proof}
Let $k \geq 1$. According to Lemma \ref{lem:derivatives_boundness}, for any $\mathbf{x}\in\mathbb{R}^{d^*}$, we have
$$
\Vert \partial_{x_{i_1}x_{i_2}\cdots x_{i_k}} q_{t_0} \Vert \lesssim \frac{1}{\sigma_{t_0}^k} \lesssim t_0^{-\frac{k}{2}}.
$$
Thus $q_{t_0} t_0^{\frac{k}{2}}$ is $k$-H\"older continuous on $\mathbb{R}^{d^*}$ with some constant $C_k$. By replacing $q_{t_0}$ with $q_{t_0}^2$ in Lemma \ref{lem:bound_for_clipping}, we can claim that there exists a constant $C_7$ such that $\left(\int_{\Vert\mathbf{x}\Vert_\infty > 1 + C_7\sigma\sqrt{\log \mathcal{R}}}q_{t_0}^2(\mathbf{x}) \mathrm{d}\mathbf{x} \right)^{\frac{1}{2}} \lesssim \mathcal{R}^{-(3\beta + 1)}$. Similar to the proof of Lemma \ref{lem:approximate_pdata1}, we let $$
f(\mathbf{x}) = t_0^{\frac{k}{2}}q_{t_0}\left((1 + C_7\sigma\sqrt{\log \mathcal{R}})(2\mathbf{x} -1) \right), ~ \mathbf{x} \in [0,1]^{d^*}.
$$ 
Then, $\Vert f \Vert_{\mathcal{H}([0,1]^{d^*})} \lesssim 2^k(1 + C_7\sigma\sqrt{\log \mathcal{R}})^k = \mathcal{O}(\log^{\frac{k}{2}} N)$. Thus, there exists $p_{\mathcal{R},0}$ such that
$$
\left| p_{\mathcal{R},0}(\mathbf{x}) - t_0^{\frac{k}{2}}q_{t_0}(\mathbf{x}) \right| \lesssim \mathcal{R}^{-k}\log^{\frac{k}{2}}\mathcal{R}, ~ \Vert\mathbf{x}\Vert_\infty \leq 1 + C_7\sigma\sqrt{\log \mathcal{R}},
$$
where 
%$p_{N,0}$ has the following form:
$
p_{\mathcal{R},0}(\mathbf{x}) = \sum_{\mathbf{n}\in[\mathcal{R}]^{d^*}}\sum_{\Vert\boldsymbol{\alpha}\Vert_1\leq k}c_{\mathbf{n},\boldsymbol{\alpha}}p_{\mathbf{n},\boldsymbol{\alpha}}\left(\frac{\mathbf{x}}{2(1+C_7\sigma\sqrt{\log \mathcal{R}})} 
 + \frac{1}{2} \right) \mathbf{I}_{\{\Vert \mathbf{x} \Vert_\infty \leq 1 + C_7\sigma\sqrt{\log \mathcal{R}}\}}.
$
Next, we let $p_{\mathcal{R}}(\mathbf{x}) = t_0^{-\frac{k}{2}}p_{\mathcal{R},0}(\mathbf{x})$, then
$$
\left|p_{\mathcal{R}}(\mathbf{x}) - q_{t_0}(\mathbf{x})\right| \lesssim \mathcal{R}^{-\left(k-\frac{C_T}{2}\right)}\log^{\frac{k}{2}}\mathcal{R}.
$$
Let $k = \lfloor\frac{C_T}{2} + (3\beta + 2)\rfloor + 1 \geq \frac{C_T}{2} + 3\beta + 2$, then 
$$
\begin{aligned}
\left(\int_{\Vert\mathbf{x}\Vert_\infty \leq 1+C_7\sigma\sqrt{\log \mathcal{R}}}\left|p_{\mathcal{R}}(\mathbf{x}) - q_{t_0}(\mathbf{x})\right|^2 \mathrm{d}\mathbf{x} \right) ^{\frac{1}{2}} & \lesssim {\mathcal{R}}^{-\left(k - \frac{C_T}{2}\right)}\log^{\frac{k+d^*}{2}} \mathcal{R} \\
& \lesssim \mathcal{R}^{-(3\beta + 1)}
\end{aligned}
$$
for sufficient large $\mathcal{R}$. Subsequently, $p_{\mathcal{R}}$ satisfies that
$$
\begin{aligned}
&\left(\int_{\mathbb{R}^{d^*}} \left| q_{t_0}(\mathbf{x}) - p_{\mathcal{R}}(\mathbf{x}) \right|^2 \mathrm{d}\mathbf{x} \right)^{\frac{1}{2}}\\
& = \left(\int_{\Vert\mathbf{x}\Vert_\infty \leq 1+C_7\sigma\sqrt{\log \mathcal{R}}}\left|p_{\mathcal{R}}(\mathbf{x}) - q_{t_0}(\mathbf{x})\right|^2 \mathrm{d}\mathbf{x} + \int_{\Vert\mathbf{x}\Vert_\infty > 1 + C_7\sigma\sqrt{\log \mathcal{R}}}q_{t_0}^2(\mathbf{x}) \mathrm{d}\mathbf{x} \right)^{\frac{1}{2}} \\
&\lesssim {\mathcal{R}}^{-(3\beta + 1)}.
\end{aligned}
$$
The proof is complete.
\end{proof}

Based on Lemma \ref{lem:approximate_pt0}, we can construct a ReLU neural network to approximate $\nabla\log q_t(\mathbf{x})$ on $[2\mathcal{R}^{-C_T}, 1]$.

\begin{lemma} \label{lem:approximate_interval_2}
Let $N \gg 1$ and $t_0 = 2{\mathcal{R}}^{-C_T}$. There exists a ReLU neural network $\mathbf{s}_{\mathrm{score}}^{(2)}\in\mathrm{NN}(L,M,J,\kappa)$ with
$$
L = \mathcal{O}(\log^4 \mathcal{R}), M = \mathcal{O}(\mathcal{R}^{d^*}\log^7 \mathcal{R}), J = \mathcal{O}({\mathcal{R}}^{d^*}\log^9 \mathcal{R}), \kappa = \exp\left(\mathcal{O}(\log^4 \mathcal{R})\right)
$$
that satisfies
$$
\int_{\mathbb{R}^{d^*}} q_t(\mathbf{x})\Vert \mathbf{s}_{\mathrm{score}}^{(2)}(t,\mathbf{x}) - \nabla\log q_t(\mathbf{x})\Vert^2 \mathrm{d}\mathbf{x} \lesssim \frac{\mathcal{R}^{-2\beta}\log \mathcal{R}}{\sigma_t^2}, ~~ t\in [2{\mathcal{R}}^{-C_T}, 1].
$$
Moreover, we can take $\mathbf{s}_{\mathrm{score}}^{(2)}$ satisfying $\Vert \mathbf{s}_{\mathrm{score}}^{(2)} (t, \cdot)\Vert_{\infty} \lesssim \frac{\sqrt{\log \mathcal{R}}}{\sigma_t}$. 
\end{lemma}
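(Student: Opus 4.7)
The plan is to mirror the architecture of Lemma \ref{lem:approximate_interval_1}, but with the exact density $\widehat{p}_{data}^{*}$ replaced by the $L^2$-approximant $p_{\mathcal{R}}$ from Lemma \ref{lem:approximate_pt0}, and with the Gaussian transition kernel indexed by the residual variance $\sigma_{t-t_0}^{2}$ rather than $\sigma_t^{2}$. More precisely, writing $t_0 = 2\mathcal{R}^{-C_T}$ and invoking the Markov property $q_t(\mathbf{x}) = \int q_{t_0}(\mathbf{y})\,(2\pi\sigma_{t-t_0}^{2})^{-d^{*}/2}\exp(-\|\mathbf{x}-\mathbf{y}\|^{2}/(2\sigma_{t-t_0}^{2}))\,\mathrm{d}\mathbf{y}$, I would define the ``diffused polynomial'' surrogate $g_{1}(t,\mathbf{x})$ by substituting $p_{\mathcal{R}}$ for $q_{t_0}$ in the above convolution, together with the analogous $\mathbf{h}_{1}(t,\mathbf{x})$ for $\sigma_{t-t_0}\nabla q_t(\mathbf{x})$. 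Because $p_{\mathcal{R}}$ is again a local-polynomial combination of bump functions supported on $[-(1+C_{7}\sigma\sqrt{\log\mathcal{R}}),1+C_{7}\sigma\sqrt{\log\mathcal{R}}]^{d^{*}}$, the clip–Taylor–evaluate pipeline of Sections~\ref{subsec:c2}--\ref{subsec:c3} transfers essentially verbatim, producing ReLU networks $\mathrm{s}_{3}, \mathbf{s}_{4}$ of the stated size that approximate $g_{1}$ and $\mathbf{h}_{1}$ uniformly on $\{\|\mathbf{x}\|_{\infty}\le C_{0}\}$.

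Next I would assemble $\mathbf{s}_{\mathrm{score}}^{(2)}$ in exactly the manner of Lemma \ref{lem:approximate_interval_1}: form the clipped ratio $\mathbf{s}_{4}(t,\mathbf{x})/(\mathrm{ReLU}(\mathrm{s}_{3}(t,\mathbf{x})-\mathcal{R}^{-(2\beta+1)})+\mathcal{R}^{-(2\beta+1)})$, then divide by a ReLU approximation of $\sigma_{t-t_0}$ (via $\mathrm{s}_{\mathrm{root}}$ and $\mathrm{s}_{\mathrm{rec}}$) and clip the output at $\pm C_{4}\sqrt{\log\mathcal{R}}/\sigma_{t}$, which is legitimate since the high-probability pointwise bounds of Section~\ref{sec:shb} yield $\|\nabla\log q_t\|\lesssim \sqrt{\log\mathcal{R}}/\sigma_t$. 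The error decomposition proceeds as in the proof of Lemma \ref{lem:approximate_interval_1}: first discard the tail $\{\|\mathbf{x}\|_{\infty}>1+C_{7}\sigma\sqrt{\log\mathcal{R}}\}$ using Lemma \ref{lem:bound_for_clipping}, then separate the ``small-density'' region $\{q_t<\mathcal{R}^{-(2\beta+1)}\}$ using the uniform bound on $\mathbf{s}_{\mathrm{score}}^{(2)}$, and finally estimate $\|\mathbf{h}'(t,\mathbf{x})/\sigma_{t-t_0}-\nabla\log q_t(\mathbf{x})\|$ on the good set by the same two-term bound, $\lesssim \sigma_{t-t_0}^{-1}\sqrt{\log\mathcal{R}}(|q_t-g_1|+\|\sigma_{t-t_0}\nabla q_t-\mathbf{h}_1\|)$ inside $[-1,1]^{d^{*}}$ and the boundary-inflated variant $\lesssim \mathcal{R}^{2\beta+1}\sigma_{t-t_0}^{-1}\sqrt{\log\mathcal{R}}(\cdots)$ outside.

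To bound $|q_t-g_1|$ and $\|\sigma_{t-t_0}\nabla q_t-\mathbf{h}_1\|$ I would use Young's inequality for the convolution with the Gaussian heat kernel: since $\|q_{t_0}-p_{\mathcal{R}}\|_{L^{2}(\mathbb{R}^{d^{*}})}\lesssim \mathcal{R}^{-(3\beta+1)}$ by Lemma \ref{lem:approximate_pt0}, the $L^{2}$ norms of $q_t-g_1$ and $\sigma_{t-t_0}\nabla q_t-\mathbf{h}_1$ are both controlled by $\mathcal{R}^{-(3\beta+1)}$ after passing through a kernel of unit $L^{1}$ mass (the gradient pulls in an extra $\sigma_{t-t_0}^{-1}$ factor that is absorbed by the normalizing $\sigma_{t-t_0}$). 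Combining these with the dichotomy above, one obtains $\int q_t\|\mathbf{h}'/\sigma_{t-t_0}-\nabla\log q_t\|^{2} \lesssim \sigma_{t-t_0}^{-2}\mathcal{R}^{-2\beta}\log\mathcal{R}$ on the interior and $\lesssim \mathcal{R}^{4\beta+2}\sigma_{t-t_0}^{-2}\log\mathcal{R}\cdot\mathcal{R}^{-(6\beta+2)}$ on the boundary layer, both of the desired order $\sigma_t^{-2}\mathcal{R}^{-2\beta}\log\mathcal{R}$ since $\sigma_t^{2}\asymp \sigma_{t-t_0}^{2}$ uniformly on $[2\mathcal{R}^{-C_T},1]$ (because $t\ge 2t_0/2$ gives $t-t_0\ge t/2$, hence $\sigma_{t-t_0}\gtrsim \sigma_t$).

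The main obstacle I anticipate is verifying this last comparison $\sigma_t\asymp \sigma_{t-t_0}$ throughout $[2\mathcal{R}^{-C_T},1]$ with constants independent of $t$, and propagating the Taylor truncation of the Gaussian kernel (which produced a $\log^{d^{*}/2}\epsilon^{-1}$ multiplicative overhead in Section~\ref{subsec:c2}) through a kernel whose bandwidth $\sigma_{t-t_0}$ now ranges over an interval whose lower endpoint scales like $\mathcal{R}^{-C_T/2}$; this forces the Taylor order $k$ and the clipping constant $C_{0}$ to be chosen as $\mathcal{O}(\log\mathcal{R})$ uniformly in $t$, which is exactly what is needed to keep the network depth $L=\mathcal{O}(\log^{4}\mathcal{R})$, width $M=\mathcal{O}(\mathcal{R}^{d^{*}}\log^{7}\mathcal{R})$, sparsity $J=\mathcal{O}(\mathcal{R}^{d^{*}}\log^{9}\mathcal{R})$, and weight magnitude $\kappa=\exp(\mathcal{O}(\log^{4}\mathcal{R}))$ matching the bounds of Lemma \ref{lem:approximate_poly1}. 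Once this bookkeeping is done, gluing $\mathbf{s}_{\mathrm{score}}^{(1)}$ and $\mathbf{s}_{\mathrm{score}}^{(2)}$ via a smooth partition on $[\mathcal{R}^{-C_T},3\mathcal{R}^{-C_T}]\cap[2\mathcal{R}^{-C_T},1]$ then yields the full approximation required in Lemma \ref{lem:approximation}.
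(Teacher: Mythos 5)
Your overall route is the paper's route: fix $t_0 = 2\mathcal{R}^{-C_T}$, invoke the Markov factorization $q_t = q_{t_0} * \Phi_{\sigma_{t-t_0}}$, substitute the local-polynomial surrogate $p_\mathcal{R}$ of Lemma \ref{lem:approximate_pt0} for $q_{t_0}$, run the same clip/Taylor/evaluate/ratio/clip pipeline as in Lemma \ref{lem:approximate_interval_1}, control the $L^2$ discrepancies via $\Vert q_{t_0} - p_{\mathcal{R}}\Vert_{L^2} \lesssim \mathcal{R}^{-(3\beta+1)}$ and Young's inequality, and keep the same network-size bookkeeping. (Incidentally, the paper's own $g_1$ and $\mathbf{h}_1$ in this proof carry $\sigma_t$ where $\sigma_{t-t_0}$ should appear — a carry-over from the Lemma \ref{lem:approximate_interval_1} argument — so your explicit use of the residual bandwidth $\sigma_{t-t_0}$ is, if anything, a clarification.)

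The gap is in the last comparison. You assert $\sigma_t^2 \asymp \sigma_{t-t_0}^2$ uniformly on $[2\mathcal{R}^{-C_T}, 1]$ and justify it with ``$t \ge 2t_0/2$ gives $t - t_0 \ge t/2$.'' But $t \ge 2t_0/2 = t_0$ gives only $t - t_0 \ge 0$; the implication $t - t_0 \ge t/2$ requires $t \ge 2t_0$. Since the lemma's interval $[2\mathcal{R}^{-C_T},1] = [t_0,1]$ contains $t = t_0$ (where $\sigma_{t-t_0} = 0$ while $\sigma_t > 0$), the claimed two-sided comparison is simply false near the left endpoint, and your $\sigma_{t-t_0}^{-2}$ error bound does not convert to the stated $\sigma_t^{-2}$ bound there. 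This is a genuine mismatch that the paper glosses over (its proof, if read with the $\sigma_{t-t_0}$ correction, yields a $\sigma_{t-t_0}^{-2}$ bound, not $\sigma_t^{-2}$). The mismatch is harmless only because of what happens one level up: in the proof of Lemma \ref{lem:approximation}, the switching weight $\mathrm{s}_{\mathrm{switch},2}(t,t_2,s_1)$ vanishes at $t = t_2 = 2\mathcal{R}^{-C_T}$ and grows linearly, so $\mathrm{s}_{\mathrm{switch},2}^2 \cdot \sigma_{t-t_0}^{-2}$ stays bounded by $\mathcal{O}(\sigma_t^{-2})$. To make your proof of the present lemma correct on its own terms, you should either (i) restate the conclusion with $\sigma_{t-t_0}^{-2}$ in place of $\sigma_t^{-2}$ and let the gluing absorb the singularity, or (ii) restrict the comparison argument to $t \ge (1+c)t_0$ for a fixed $c > 0$ and note explicitly that the remaining sliver is covered by the vanishing of the switching weight — but do not claim a uniform $\sigma_t \asymp \sigma_{t-t_0}$ on $[t_0,1]$.
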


\begin{proof}
The proof is similar to the proof of Lemma \ref{lem:approximate_interval_1}.
We also decompose the approximation error into three terms.
$$
\begin{aligned}
& \int_{\mathbb{R}^{d^*}} q_t(\mathbf{x})\Vert \mathbf{s}_{\mathrm{score}}^{(2)}(t,\mathbf{x}) - \nabla\log q_t(\mathbf{x})\Vert^2 \mathrm{d}\mathbf{x} \\
= & \underbrace{ \int_{\Vert\mathbf{x}\Vert_\infty > 1 + C\sigma_t\sqrt{\log\epsilon^{-1}_2}} q_t(\mathbf{x}) \Vert \mathbf{s}_{\mathrm{score}}^{(2)}(t,\mathbf{x}) - \nabla\log q_t(\mathbf{x})\Vert^2 \mathrm{d}\mathbf{x}. }_{(\mathrm{I})} \\
= & \underbrace{\int_{\Vert \mathbf{x}\Vert_{\infty} \leq 1 + C\sigma_t\sqrt{\log\epsilon^{-1}_2}} q_t(\mathbf{x}) \mathbf{I}_{\{q_t(\mathbf{x}) < \epsilon_2\}} \Vert \mathbf{s}_{\mathrm{score}}^{(2)}(t,\mathbf{x}) - \nabla\log q_t(\mathbf{x})\Vert^2 \mathrm{d}\mathbf{x} }_{(\mathrm{II})} \\
+ & \underbrace{\int_{\Vert \mathbf{x}\Vert_{\infty} \leq 1 + C\sigma_t\sqrt{\log\epsilon^{-1}_2}} q_t(\mathbf{x}) \mathbf{I}_{\{q_t(\mathbf{x}) \geq \epsilon_2\}}\Vert \mathbf{s}_{\mathrm{score}}^{(2)}(t,\mathbf{x}) - \nabla\log q_t(\mathbf{x})\Vert^2 \mathrm{d}\mathbf{x} }_{(\mathrm{III})}. 
\end{aligned}
$$

According to Lemma \ref{lem:bound_for_clipping}, there exists a constant $C > 0$ such that for any $\epsilon_2 > 0$, 
\begin{equation}\label{eq:appendix_eq11}
\underbrace{ \int_{\Vert\mathbf{x}\Vert_\infty > 1 + C\sigma_t\sqrt{\log\epsilon^{-1}_2}} q_t(\mathbf{x}) \Vert \mathbf{s}_{\mathrm{score}}^{(2)}(t,\mathbf{x}) - \nabla\log q_t(\mathbf{x})\Vert^2 \mathrm{d}\mathbf{x} }_{(\mathrm{I})} \lesssim \frac{\epsilon_2}{\sigma_t} + \sigma_t\epsilon_2\Vert \mathbf{s}_{\mathrm{score}}^{(2)}(t,\cdot)\Vert_{\infty}^2.
\end{equation}
Since $\Vert\nabla\log q_t(\mathbf{x})\Vert$ is bounded by $\frac{C\sqrt{\log\epsilon^{-1}_2}}{\sigma_t}$ in $\Vert\mathbf{x}\Vert_{\infty} > 1 + C\sigma_t\sqrt{\log\epsilon^{-1}_2}$ due to Lemma \ref{lem:derivatives_boundness}, $\mathbf{s}_{\mathrm{score}}^{(2)}$ can be taken so that $\Vert\mathbf{s}_{\mathrm{score}}^{(2)}(t,\cdot)\Vert_{\infty} \lesssim \frac{\sqrt{\log\epsilon^{-1}_2}}{\sigma_t}$. Therefore \eqref{eq:appendix_eq11} is bounded by $\frac{\epsilon_2\log\epsilon^{-1}_2}{\sigma_t^2}$. Taking $\epsilon_2 = {\mathcal{R}}^{-(2\beta + 1)}$, then \eqref{eq:appendix_eq11} $\lesssim \frac{{\mathcal{R}}^{-(2\beta + 1)}\log \mathcal{R}}{\sigma_t^2}$, which is smaller than $\frac{{\mathcal{R}}^{-2\beta}\log \mathcal{R}}{\sigma_t^2}$. Moreover, by Lemma \ref{lem:bound_for_clipping}, we can bound the error
$$
\begin{aligned}
& ~~~~ \underbrace{ \int_{\Vert\mathbf{x}\Vert_{\infty} \leq 1 + C\sigma_t\sqrt{\log\epsilon^{-1}_2}} q_t(\mathbf{x})\mathbf{I}_{\{q_t(\mathbf{x}) < \epsilon_2\}} \Vert 
\mathbf{s}_{\mathrm{score}}^{(2)}(t,\mathbf{x}) - \nabla\log q_t(\mathbf{x})\Vert^2 \mathrm{d}\mathbf{x} }_{(\mathrm{II})} \\
& \lesssim \frac{\epsilon_2}{\sigma_t^2} \cdot (\log\epsilon^{-1}_2)^{\frac{d^*+2}{2}} + \Vert \mathbf{s}_{\mathrm{score}}^{(2)}(t,\cdot)\Vert_{\infty}^2 \epsilon_2 \cdot (\log\epsilon^{-1}_2)^{\frac{d^*}{2}} \\
& \lesssim \frac{\epsilon_2}{\sigma_t^2} \cdot (\log\epsilon^{-1}_2)^{\frac{d^*+2}{2}} \lesssim \frac{{\mathcal{R}}^{-(2\beta + 1)}\log^{\frac{d^*+2}{2}}\mathcal{R}}{\sigma_t^2},
\end{aligned}
$$
which is smaller than $\frac{\mathcal{R}^{-2\beta}\log \mathcal{R}}{\sigma_t^2}$ for sufficiently large $\mathcal{R}$. Thus, we can only focus on bounding the term (III).  

We first take $C_7 \geq C\sqrt{2\beta + 1}$ in Lemma \ref{lem:approximate_pt0} and $C_0 = 1 + C_7\sigma\sqrt{\log \mathcal{R}} = \mathcal{O}(\sqrt{\log \mathcal{R}})$. 
Then, we replace $\frac{y + 1}{2} - \frac{n}{\mathcal{R}}$ with
$\frac{y}{2(1 + C_7\sigma\sqrt{\log \mathcal{R}})} + \frac{1}{2} - \frac{n}{\mathcal{R}}$ in $f(t,x,n,\alpha,l)$, %$f_1(t,x,n,s,l)$ 
and replace $\Vert\boldsymbol{\alpha}\Vert_1 < C_{\boldsymbol{\alpha}}$ and $C_{\mathbf{y}}$ with $\Vert\boldsymbol{\alpha}\Vert_1 \leq k=\mathcal{O}(1)$ and $1 + C_7\sigma\sqrt{\log \mathcal{R}}$.  
We also use the notations $g_1(t,\mathbf{x})$, $\mathbf{h}_1(t,\mathbf{x})$ and $\mathbf{h}^{\prime}(t,\mathbf{x})$, where
$$
g_1(t,\mathbf{x}) := \int_{\mathbb{R}^{d^*}}p_{\mathcal{R}}(\mathbf{y})\mathbf{I}_{\{\Vert\mathbf{y}\Vert_{\infty}\leq 1+C_7\sigma\sqrt{\log \mathcal{R}}\}}\cdot
\frac{1}{\sigma_t^{d^*}(2\pi)^{d^*/2}}\exp\left(-\frac{\Vert\mathbf{x} - \mathbf{y}\Vert^2}{2\sigma_t^2}\right) \mathrm{d}\mathbf{y},
$$
$$
\mathbf{h}_1(t,\mathbf{x}):= \int_{\mathbb{R}^{d^*}}p_{\mathcal{R}}(\mathbf{y})\mathbf{I}_{\{\Vert\mathbf{y}\Vert_{\infty}\leq 1+C_7\sigma\sqrt{\log \mathcal{R}}\}}\cdot
\frac{\mathbf{x}-\mathbf{y}}{\sigma_t^{d^*+1}(2\pi)^{d^*/2}}\exp\left(-\frac{\Vert\mathbf{x} - \mathbf{y}\Vert^2}{2\sigma_t^2}\right) \mathrm{d}\mathbf{y},
$$
$$
\mathbf{h}^{\prime}(t,\mathbf{x}):= \max\left\{\min\left\{ \frac{\mathbf{h}_1(t,\mathbf{x})}{ g_1(t,\mathbf{x}) \vee {\mathcal{R}}^{-(2\beta + 1)}}, \mathcal{O}(\sqrt{\log \mathcal{R}}) \right\}, -\mathcal{O}(\sqrt{\log \mathcal{R}})\right\}.
$$

Applying the argument of Lemma \ref{lem:approximate_interval_1}, we can construct a ReLU neural network $\mathbf{s}_{\mathrm{score}}^{(2)}$ with network parameters
$$
L = \mathcal{O}(\log^4 \mathcal{R}), M = \mathcal{O}({\mathcal{R}}^{d^*}\log^7 \mathcal{R}), J = \mathcal{O}({\mathcal{R}}^{d^*}\log^9 \mathcal{R}), \kappa = \exp\left(\mathcal{O}(\log^4 \mathcal{R})\right)
$$
that satisfies 
$$  
\left\Vert \mathbf{s}_{\mathrm{score}}^{(2)}(t,\mathbf{x}) - \frac{\mathbf{h}^{\prime}(t,\mathbf{x})}{\sigma_t} \right\Vert_\infty \lesssim {\mathcal{R}}^{-\beta}, ~ \Vert\mathbf{x}\Vert_{\infty} \leq 1 + C_7\sigma\sqrt{\log \mathcal{R}},
$$
and
$$
\Vert \mathbf{s}_{\mathrm{score}}^{(2)}(t, \cdot) \Vert_{\infty} \lesssim \frac{\sqrt{\log \mathcal{R}}}{\sigma_t}.
$$
Therefore, we have
$$
\int_{\Vert\mathbf{x}\Vert_\infty\leq 1+C_7\sigma\sqrt{\log \mathcal{R}}} q_t(\mathbf{x})\mathbf{I}_{\{q_t(\mathbf{x})\geq {\mathcal{R}}^{-(2\beta + 1)}\}} \left\Vert \mathbf{s}_{\mathrm{score}}^{(2)}(t,\mathbf{x}) - \frac{\mathbf{h}^{\prime}(t,\mathbf{x})}{\sigma_t} \right\Vert^2 \mathrm{d} \mathbf{x} \lesssim d^*{\mathcal{R}}^{-\beta} \lesssim {\mathcal{R}}^{-\beta}.
$$
Next, we consider bound the term
$$
\int_{\Vert\mathbf{x}\Vert_\infty\leq 1+C_7\sigma\sqrt{\log \mathcal{R}}} q_t(\mathbf{x})\mathbf{I}_{\{q_t(\mathbf{x})\geq {\mathcal{R}}^{-(2\beta + 1)}\}} \left\Vert \frac{\mathbf{h}^{\prime}(t,\mathbf{x})}{\sigma_t} - \nabla\log q_t(\mathbf{x})\right\Vert^2 \mathrm{d} \mathbf{x}.
$$
By \eqref{eq:appendix_eq3} in Lemma \ref{lem:approximate_interval_1},
$$
\left\Vert
\frac{\mathbf{h}^{\prime}(t,\mathbf{x})}{\sigma_t} - \nabla\log q_t(\mathbf{x})
\right\Vert \lesssim \frac{{\mathcal{R}}^{2\beta + 1}\sqrt{\log \mathcal{R}}}{\sigma_t} \left(
|q_t(\mathbf{x}) - g_1(t,\mathbf{x})| + \Vert \mathbf{h}_1(t,\mathbf{x}) - \sigma_t \nabla q_t(\mathbf{x})\Vert
\right).
$$
Then, we have
$$
\begin{aligned}
& ~~~~ \int_{\Vert\mathbf{x}\Vert_\infty\leq 1+C_7\sigma\sqrt{\log \mathcal{R}}} q_t(\mathbf{x})\mathbf{I}_{\{q_t(\mathbf{x})\geq {\mathcal{R}}^{-(2\beta + 1)}\}} \left\Vert \frac{\mathbf{h}^{\prime}(t,\mathbf{x})}{\sigma_t} - \nabla\log q_t(\mathbf{x})\right\Vert^2 \mathrm{d} \mathbf{x} \\
& \lesssim \frac{{\mathcal{R}}^{4\beta + 2}\log \mathcal{R}}{\sigma_t^2} \int_{ \Vert \mathbf{x} \Vert_{\infty} \leq 1 + C_7\sigma\sqrt{\log \mathcal{R}}} \Bigg(
\int_{\mathbb{R}^{d^*}}\frac{1}{\sigma_t^{d^*}(2\pi)^{d^*/2}} |q_{t_0}^*(\mathbf{y})-p_{\mathcal{R}}(\mathbf{y})|^2\exp\left(-\frac{\Vert\mathbf{x}-\mathbf{y}\Vert^2}{2\sigma_t^2}\right)\mathrm{d}\mathbf{y} \\
& ~~~~~~~~~ +  \int_{\mathbb{R}^{d^*}}\frac{\Vert\mathbf{x}-\mathbf{y}\Vert^2}{\sigma_t^{d^*+1}(2\pi)^{d^*/2}} |q_{t_0}^*(\mathbf{y})-p_{\mathcal{R}}(\mathbf{y})|^2\exp\left(-\frac{\Vert\mathbf{x}-\mathbf{y}\Vert^2}{2\sigma_t^2}\right)\mathrm{d}\mathbf{y}  \Bigg) \mathrm{d}\mathbf{x} \\
& \lesssim \frac{{\mathcal{R}}^{4\beta + 2}\log \mathcal{R}}{\sigma_t^2} \int_{\mathbb{R}^{d^*}} \left|q_{t_0}(\mathbf{y}) - p_{\mathcal{R}}(\mathbf{y})  \right|^2 \mathrm{d}\mathbf{y} \lesssim \frac{{\mathcal{R}}^{4\beta + 2}\log \mathcal{R}}{\sigma_t^2} \cdot {\mathcal{R}}^{-(6\beta + 2)} \lesssim \frac{{\mathcal{R}}^{-2\beta}\log \mathcal{R}}{\sigma_t^2}.
\end{aligned}
$$
Therefore, we finally obtain
$$
(\mathrm{III}) \lesssim \frac{{\mathcal{R}}^{-2\beta}\log \mathcal{R}}{\sigma_t^2},
$$
which implies that
$$
\int_{\mathbb{R}^{d^*}} q_t(\mathbf{x})\Vert \mathbf{s}_{\mathrm{score}}^{(2)}(t,\mathbf{x}) - \nabla\log q_t(\mathbf{x})\Vert^2 \mathrm{d}\mathbf{x} \lesssim \frac{{\mathcal{R}}^{-2\beta}\log \mathcal{R}}{\sigma_t^2}.
$$
The proof is complete.
\end{proof}

Combining Lemma \ref{lem:approximate_interval_1} and Lemma \ref{lem:approximate_interval_2}, we immediately obtain Lemma \ref{lem:approximation}.

\begin{proof}[Proof of Lemma \ref{lem:approximation}]
According to Lemma \ref{lem:approximate_interval_1} and Lemma \ref{lem:approximate_interval_2}, there exist two ReLU neural networks $\mathbf{s}_{\mathrm{score}}^{(1)}(t,\mathbf{x})$ and $\mathbf{s}_{\mathrm{score}}^{(2)}(t,\mathbf{x})$ that approximate the score function $\nabla\log q_t(\mathbf{x})$ on $[{\mathcal{R}}^{-C_T},3{\mathcal{R}}^{-C_T}]$ and $[2{\mathcal{R}}^{-C_T},1]$, respectively. 
Therefore, setting $t_1={\mathcal{R}}^{-C_T}$, $t_2=2{\mathcal{R}}^{-C_T}$, $s_1=3{\mathcal{R}}^{-C_T}$ and $s_2=1$ in Lemma \ref{lem:switching}, we can construct a ReLU neural network 
$$\mathbf{s}(t,\mathbf{x}) := \mathrm{s}_{\mathrm{switch},1}(t,t_2,s_1)\mathbf{s}_{\mathrm{score}}^{(1)}(t,\mathbf{x}) + \mathrm{s}_{\mathrm{switch},2}(t,t_2,s_1)\mathbf{s}_{\mathrm{score}}^{(2)}(t,\mathbf{x})
$$ 
with network parameters
$$
L = \mathcal{O}(\log^4 N), M = \mathcal{O}(N^{d^*}\log^7 N), J = \mathcal{O}(N^{d^*}\log^9 N), \kappa = \exp\left(\mathcal{O}(\log^4 N)\right)
$$
that approximates $\nabla\log q_t(\mathbf{x})$ with the approximation error 
$$
\begin{aligned}
& ~~~~ \int_{\mathbb{R}^{d^*}} q_t(\mathbf{x})\Vert \mathbf{s}(t,\mathbf{x}) - \nabla\log q_t(\mathbf{x})\Vert^2 \mathrm{d}\mathbf{x} \\
& \lesssim \mathrm{s}_{\mathrm{switch},1}^2(t,t_2,s_1)\int_{\mathbb{R}^{d^*}} q_t(\mathbf{x})\Vert \mathbf{s}_{\mathrm{score}}^{(1)}(t,\mathbf{x}) - \nabla\log q_t(\mathbf{x})\Vert^2 \mathrm{d}\mathbf{x} \\
& ~~~~~~~~~ + 
\mathrm{s}_{\mathrm{switch},2}^2(t,t_2,s_1)\int_{\mathbb{R}^{d^*}} q_t(\mathbf{x})\Vert \mathbf{s}_{\mathrm{score}}^{(2)}(t,\mathbf{x}) - \nabla\log q_t(\mathbf{x})\Vert^2 \mathrm{d}\mathbf{x} \\
& \lesssim \mathrm{s}_{\mathrm{switch},1}^2(t,t_2,s_1) \frac{{\mathcal{R}}^{-2\beta}\log \mathcal{R}}{\sigma_t^2} + \mathrm{s}_{\mathrm{switch},2}^2(t,t_2,s_1) \frac{{\mathcal{R}}^{-2\beta}\log \mathcal{R}}{\sigma_t^2} \\
& \lesssim \frac{{\mathcal{R}}^{-2\beta}\log \mathcal{R}}{\sigma_t^2}.
\end{aligned}
$$
Here, we use 
$0\leq \mathrm{s}_{\mathrm{switch},1}, \mathrm{s}_{\mathrm{switch},2} \leq 1$. The proof is complete.
\end{proof}

\section{Statistical Error}\label{sec:se}
In this section, we bound the statistical error and prove Lemma \ref{lem:statistical_error}. Then, combining Lemma \ref{lem:approximation} and Lemma \ref{lem:statistical_error}, we prove Theorem \ref{thm:generalization}. We begin by providing an upper bound for $\ell_{\mathbf{s}}(\widehat{\mathbf{x}})$.

\begin{paragraph}{Upper bound for $\ell_{\mathbf{s}}(\widehat{\mathbf{x}})$.}
Now we derive the upper bound for $\ell_{\mathbf{s}}(\widehat{\mathbf{x}})$. It holds that
$$
\mathbb{E}_{\mathbf{z}}\left\Vert \mathbf{s}(1-t,\widehat{\mathbf{x}} + \sigma\sqrt{1-t}\mathbf{z}) + \frac{\mathbf{z}}{\sigma\sqrt{1-t}}\right\Vert^2 \lesssim 
\frac{\log \mathcal{R}}{\sigma^2(1-t)} + \frac{\mathbb{E}\Vert\mathbf{z}\Vert^2}{\sigma^2(1-t)}
\lesssim \frac{\log \mathcal{R}}{1-t}.
$$
Thus, we have
$$
\ell_{\mathbf{s}}(\widehat{\mathbf{x}}) \lesssim \frac{\log^2 \mathcal{R}}{1-{\mathcal{R}}^{-C_T}} \lesssim \log^2 \mathcal{R}
$$
for sufficiently large $\mathcal{R}$.
\end{paragraph}

\begin{paragraph}{Lipschitz continuity for $\ell_{\mathbf{s}}(\widehat{\mathbf{x}})$.}
Now we derive the Lipschitz continuity for $\ell_{\mathbf{s}}(\widehat{\mathbf{x}})$. Note that we restrict ReLU neural networks into class $\mathcal{C}$. For any $\mathbf{s}_1$, $\mathbf{s}_2 \in \mathcal{C}$, by the construction structure of $\mathbf{s}_1$, $\mathbf{s}_2$,
we have
\begin{equation*}
\begin{aligned}
|\ell_{\mathbf{s}_1}(\widehat{\mathbf{x}}) - \ell_{\mathbf{s}_2}(\widehat{\mathbf{x}})|
&\leq\frac{1}{T}\int_{0}^{T}
\mathbb{E}_{\mathbf{z}}\Vert \mathbf{s}_1 - \mathbf{s}_2\Vert\left\Vert \mathbf{s}_1 + \mathbf{s}_2 + \frac{2\mathbf{z}}{\sigma\sqrt{1-t}}\right\Vert \mathrm{d}t\\
&\leq\frac{1}{T}\int_{0}^{T}\left(\mathbb{E}_{\mathbf{z}}\Vert \mathbf{s}_1 - \mathbf{s}_2 \Vert^2\right)^{\frac{1}{2}}\left(\mathbb{E}_{\mathbf{z}}\left\Vert \mathbf{s}_1 + \mathbf{s}_2 + \frac{2\mathbf{z}}{\sigma
\sqrt{1-t}}\right\Vert^2\right)^{\frac{1}{2}}\mathrm{d}t\\
&\lesssim \frac{1}{T}\int_{0}^{T}\left(\mathbb{E}_{\mathbf{z}}\Vert \mathbf{s}_1 - \mathbf{s}_2 \Vert^2\right)^{\frac{1}{2}}\left(\frac{\log \mathcal{R}}{\sigma^2(1-t)} + \frac{d^*}{\sigma^2(1-t)}\right)^{\frac{1}{2}}\mathrm{d}t\\
&\lesssim \left(\frac{1}{T}\int_{0}^{T}\mathbb{E}_{\mathbf{z}}\left\Vert \mathbf{s}_1 - \mathbf{s}_2 \right\Vert^2 \mathrm{d}t\right)^{\frac{1}{2}}\left(\log \mathcal{R} \cdot \frac{1}{T}\int_{0}^{T}\frac{1}{1-t}\mathrm{d}t\right)^{\frac{1}{2}}\\
&\lesssim  \log \mathcal{R} \cdot \left(\frac{1}{T}\int_{0}^{T}\mathbb{E}_{\mathbf{z}}\Vert \mathbf{s}_1 - \mathbf{s}_2 \Vert^2 \mathrm{d}t\right)^{\frac{1}{2}}\\
&\lesssim  \log \mathcal{R} \cdot
\Vert \mathbf{s}_1 - \mathbf{s}_2 \Vert_{L^{\infty}([0,T]\times\mathbb{R}^{d^*})}\\
& \lesssim \log \mathcal{R} \cdot \Vert \mathbf{s}_1 - \mathbf{s}_2 \Vert_{L^{\infty}([0,1-{\mathcal{R}}^{-C_T}]\times[-\mathcal{O}(1)\sqrt{\log \mathcal{R}}, \mathcal{O}(1) \sqrt{\log \mathcal{R}}]^{d^*})}.
\end{aligned}
\end{equation*}
\end{paragraph}

\begin{paragraph}{Covering number evaluation.} 
The covering number of the neural network class is evaluated as follows:
\begin{equation*}
    \begin{aligned}
    \log\mathcal{N}\left(\mathcal{C}, \delta,\Vert\cdot\Vert_{L^{\infty}([0,T]\times\mathbb{R}^{d^*}}\right) 
    &= \log\mathcal{N}\left(\mathcal{C}, \delta,\Vert\cdot\Vert_{L^{\infty}([0,1-{\mathcal{R}}^{-C_T}]\times[-\mathcal{O}(1)\sqrt{\log \mathcal{R}}, \mathcal{O}(1) \sqrt{\log \mathcal{R}}]^{d^*})}\right)\\
    &\lesssim JL\log{\left(\frac{LM\kappa (1-{\mathcal{R}}^{-C_T} \vee \mathcal{O}(1)\sqrt{\log \mathcal{R}})}{\delta}\right)}\\
    & \lesssim {\mathcal{R}}^{d^*}\log^{13}\mathcal{R}\left(\log^4 \mathcal{R} + \log \frac{1}{\delta} \right),
    \end{aligned}
\end{equation*} 
where we used $
L = \mathcal{O}(\log^4 \mathcal{R}), M = \mathcal{O}({\mathcal{R}}^{d^*}\log^7 \mathcal{R}), J = \mathcal{O}(\mathcal{R}^{d^*}\log^9 \mathcal{R}), \kappa = \exp\left(\mathcal{O}(\log^4 \mathcal{R})\right)$. 
The details of this derivation
can be found in \cite[Lemma 5.3]{CJLZ2022nonparametric}.
\end{paragraph}

\begin{proof}[Proof of Lemma \ref{lem:statistical_error}]
Let $\ell(\mathbf{s},\widehat{\mathbf{x}}) = \ell_{\mathbf{s}}(\widehat{\mathbf{x}}) - \ell_{\mathbf{s}^*} (\widehat{\mathbf{x}})$ and $\widehat{\mathcal{X}}^{\prime} = \{\widehat{\mathbf{x}}_1^{\prime}, \cdots, \widehat{\mathbf{x}}_n^{\prime}\}$ be an independent copy of $\widehat{\mathcal{X}}$, 
then for any $\mathbf{s}_1$, $\mathbf{s}_2\in\mathcal{C}$, there exists a constant $C_8 > 0$ such that
$$
|\ell(\mathbf{s}_1,\widehat{\mathbf{x}}) - \ell(\mathbf{s}_2,\widehat{\mathbf{x})}| = |\ell_{\mathbf{s}_1}(\widehat{\mathbf{x}}) - \ell_{\mathbf{s}_2}(\widehat{\mathbf{x}})|\leq C_8\log \mathcal{R} \Vert \mathbf{s}_1 - \mathbf{s}_2 \Vert_{L^{\infty}([0,T]\times\mathbb{R}^{d^*})}. 
$$
We first estimate $\mathbb{E}_{\widehat{\mathcal{X}},\mathcal{T},\mathcal{Z}}\left(\mathcal{L}(\widehat{\mathbf{s}}) - 2\overline{\mathcal{L}}_{\widehat{\mathcal{X}}}(\widehat{\mathbf{s}}) + \mathcal{L}(\mathbf{s}^*)\right)$. 
It follows that
\begin{equation*}
    \begin{aligned}
&\mathbb{E}_{\widehat{\mathcal{X}},\mathcal{T},\mathcal{Z}}\left(\mathcal{L}(\widehat{\mathbf{s}}) - 2\overline{\mathcal{L}}_{\widehat{\mathcal{X}}}(\widehat{\mathbf{s}}) + \mathcal{L}(\mathbf{s}^*)\right)\\ 
&= \mathbb{E}_{\widehat{\mathcal{X}},\mathcal{T},\mathcal{Z}}\left(\mathbb{E}_{\widehat{\mathcal{X}}^{\prime}}\left[\frac{1}{n}\sum_{i=1}^{n}\left(\ell_{\widehat{\mathbf{s}}}(\widehat{\mathbf{x}}_i^{\prime}) - \ell_{\mathbf{s}^*}(\widehat{\mathbf{x}}_i^{\prime})\right)\right]-\frac{2}{n}\sum_{i=1}^{n}\left(\ell_{\widehat{\mathbf{s}}}(\widehat{\mathbf{x}}_i) - \ell_{\mathbf{s}^*}(\widehat{\mathbf{x}}_i)\right)
        \right)\\
        &=\mathbb{E}_{\widehat{\mathcal{X}},\mathcal{T},\mathcal{Z}}\left[\frac{1}{n}\sum_{i=1}^{n}G(\widehat{\mathbf{s}},\widehat{\mathbf{x}}_i)\right],
    \end{aligned}
\end{equation*}
where
$
G(\widehat{\mathbf{s}},\widehat{\mathbf{x}}) = \mathbb{E}_{\widehat{\mathcal{X}}^{\prime}}\left[\ell(\widehat{\mathbf{s}},\widehat{\mathbf{x}}^{\prime}) - 2\ell(\widehat{\mathbf{s}},\widehat{\mathbf{x}})\right].
$

Let $\mathcal{C}_{\delta}$ be the $\delta$-covering of $\mathcal{C}$ with minimum cardinality $\mathcal{N}_\delta:=\mathcal{N}(\mathrm{NN},\delta,\Vert\cdot\Vert_{L^{\infty}([0,T]\times\mathbb{R}^{d^*})})$, then for any $\mathbf{s}\in\mathcal{C}$, there exists a $\mathbf{s}_{\delta}\in\mathcal{C}_{\delta}$ such that 
$$
|\ell(\mathbf{s},\widehat{\mathbf{x}}) - \ell(\mathbf{s}_{\delta},\widehat{\mathbf{x}})|\leq C_8\log \mathcal{R} \Vert \mathbf{s} - \mathbf{s}_{\delta}\Vert_{L^{\infty}([0,T]\times\mathbb{R}^{d^*})}\leq C_8\delta\log \mathcal{R}.
$$
Therefore, 
$$
G(\widehat{\mathbf{s}},\widehat{\mathbf{x}})\leq G(\mathbf{s}_{\delta},\widehat{\mathbf{x}}) + 3C_8\delta\log \mathcal{R}.
$$
Since $|\ell(\mathbf{s}_{\delta},\widehat{\mathbf{x}}_i)|\lesssim \log^2 \mathcal{R}$, there exists a constant $C_9 > 0$ such that
$|\ell(\mathbf{s}_{\delta},\widehat{\mathbf{x}}_i)| \leq C_9\log^2 \mathcal{R}$.
We have that $|\ell(\mathbf{s}_{\delta}, \widehat{\mathbf{x}}_i) - \mathbb{E}\ell(\mathbf{s}_{\delta},\widehat{\mathbf{x}}_i)|\leq 2C_9\log^2 \mathcal{R}$. Let $V^2 = \mathrm{Var}[\ell(\mathbf{s}_{\delta},\widehat{\mathbf{x}}_i)]$, then 
\begin{equation*}
    \begin{aligned}
V^2&\leq\mathbb{E}_{\widehat{\mathcal{X}}}[\ell(\mathbf{s}_{\delta},\widehat{\mathbf{x}}_i)]^2\\
    &\leq C_8^2\log^2 \mathcal{R} \mathbb{E}_{\widehat{\mathcal{X}}}\left[\ell_{\mathbf{s}_{\delta}}(\widehat{\mathbf{x}}_i) - \ell_{\mathbf{s}^*}(\widehat{\mathbf{x}}_i)\right]\\
    &= C_8^2\log^2 \mathcal{R} \mathbb{E}_{\widehat{\mathcal{X}}}[\ell(\mathbf{s}_{\delta}, \widehat{\mathbf{x}}_i)].
    \end{aligned}
\end{equation*}
We obtain
$$
\mathbb{E}_{\widehat{\mathcal{X}}}[\ell(\mathbf{s}_{\delta}, \widehat{\mathbf{x}}_i)]\geq\frac{V^2}{C_8^2\log^2 \mathcal{R}}.
$$
By Bernstein's inequality, we have
\begin{equation*}
    \begin{aligned}
    &~\mathbb{P}_{\widehat{\mathcal{X}},\mathcal{T},\mathcal{Z}}\left[\frac{1}{n}\sum_{i=1}^{n}G(\mathbf{s}_{\delta},\widehat{\mathbf{x}}_i) > t\right]\\
    =&~\mathbb{P}_{\widehat{\mathcal{X}},\mathcal{T},\mathcal{Z}}\left(\mathbb{E}_{\widehat{\mathcal{X}}^{\prime}}\left[\frac{1}{n}\sum_{i=1}^{n}\ell(\mathbf{s}_{\delta}, \widehat{\mathbf{x}}_i^{\prime})\right] - \frac{1}{n}\sum_{i=1}^{n}\ell(\mathbf{s}_{\delta},\widehat{\mathbf{x}}_i) > \frac{t}{2} + \mathbb{E}_{{\widehat{\mathcal{X}}}^{\prime}}\left[\frac{1}{2n}\sum_{i=1}^{n}\ell(\mathbf{s}_{\delta}, \widehat{\mathbf{x}}_i^{\prime})\right]\right)\\
    =&~\mathbb{P}_{\widehat{\mathcal{X}},\mathcal{T},\mathcal{Z}}\left(\mathbb{E}_{\widehat{\mathcal{X}}}\left[\frac{1}{n}\sum_{i=1}^{n}\ell(\mathbf{s}_{\delta}, \widehat{\mathbf{x}}_i)\right] - \frac{1}{n}\sum_{i=1}^{n}\ell(\mathbf{s}_{\delta},\widehat{\mathbf{x}}_i) > \frac{t}{2} + \mathbb{E}_{\widehat{\mathcal{X}}}\left[\frac{1}{2n}\sum_{i=1}^{n}\ell(\mathbf{s}_{\delta}, \widehat{\mathbf{x}}_i)\right]\right)\\
    \leq&~\mathbb{P}_{\widehat{\mathcal{X}},\mathcal{T},\mathcal{Z}}\left(\mathbb{E}_{\widehat{\mathcal{X}}}\left[\frac{1}{n}\sum_{i=1}^{n}\ell(\mathbf{s}_{\delta}, \widehat{\mathbf{x}}_i)\right] - \frac{1}{n}\sum_{i=1}^{n}\ell(\mathbf{s}_{\delta},\widehat{\mathbf{x}}_i) > \frac{t}{2} + \frac{V^2}{2C_8^2\log^2 \mathcal{R}}\right)\\
    \leq&~\exp\left(-\frac{nu^2}{2V^2 + \frac{4uC_9\log^2 \mathcal{R}}{3}}\right)\\
    \leq&\exp\left(-\frac{nt}{8\left(C_8^2 + \frac{C_9}{3}\right)\log^2 \mathcal{R}}\right),
    \end{aligned}
\end{equation*}
where $u = \frac{t}{2} + \frac{V^2}{2C_8^2\log^2 \mathcal{R}}$, and we use $u\geq\frac{t}{2}$ and $V^2\leq 2uC_8^2\log^2 \mathcal{R} $. 
Hence,  for any $t > 3C_8\delta\log \mathcal{R}$, we have
\begin{equation*}
    \begin{aligned}
        \mathbb{P}_{\widehat{\mathcal{X}},\mathcal{T},\mathcal{Z}}\left[\frac{1}{n}\sum_{i=1}^{n}G(\widehat{\mathbf{s}}, \widehat{\mathbf{x}}_i) > t\right]
        &\leq\mathbb{P}_{\widehat{\mathcal{X}},\mathcal{T},\mathcal{Z}}\left[\mathop{\mathrm{sup}}_{\mathbf{s}\in\mathcal{C}}\frac{1}{n}\sum_{i=1}^{n}G(\mathbf{s}, \widehat{\mathbf{x}}_i) > t\right]\\
        &\leq\mathbb{P}_{\widehat{\mathcal{X}},\mathcal{T},\mathcal{Z}}\left[\mathop{\mathrm{max}}_{\mathbf{s}_{\delta}\in \mathcal{C}_{\delta}}\frac{1}{n}\sum_{i=1}^{n}G(\mathbf{s}_{\delta}, \widehat{\mathbf{x}}_i) > t- 3C_8\delta\log \mathcal{R}\right]\\
        &\leq\mathcal{N}_{\delta}\mathop{\mathrm{max}}_{\mathbf{s}_{\delta}\in \mathcal{C}_{\delta}}\mathbb{P}_{\widehat{\mathcal{X}},\mathcal{T},\mathcal{Z}}\left[\frac{1}{n}\sum_{i=1}^{n}G(\mathbf{s}_{\delta}, \widehat{\mathbf{x}}_i) > t- 3C_8\delta\log \mathcal{R}\right]\\
        &\leq\mathcal{N}_{\delta}\exp\left(-\frac{n(t-3C_8\delta\log \mathcal{R})}{8\left(C_8^2 + \frac{C_9}{3}\right)\log^2 \mathcal{R}}\right).
    \end{aligned}
\end{equation*}
By setting $a=\left[3C_8\log \mathcal{R} + 8\left(C_8^2 + \frac{C_9}{3} \right)\log^2 \mathcal{R}\log\mathcal{N}_{\delta} \right] \delta$ and $\delta = \frac{1}{n}$, then we obtain
\begin{equation}\label{eq:statsitical_error1}
    \begin{aligned}
        \mathbb{E}_{\widehat{\mathcal{X}},\mathcal{T},\mathcal{Z}}\left[\frac{1}{n}\sum_{i=1}^{n}G(\widehat{\mathbf{s}},\widehat{\mathbf{x}}_i)\right]
        &\leq a + \mathcal{N}_{\delta}\int_{a}^{\infty}\exp\left(-\frac{n(t-3C_8\delta\log \mathcal{R})}{8\left(C_8^2 + \frac{C_9}{3}\right)\log^2 \mathcal{R}}\right)\mathrm{d}t\\
        & \leq \left[3C_8\log \mathcal{R} + 8\left(C_8^2 + \frac{C_9}{3} \right)\log^2 \mathcal{R}\log\mathcal{N}_{\delta} \right] \delta + \frac{8\left(C_8^2 + \frac{C_9}{3} \right)\log^2 \mathcal{R}}{n} \\
        & \lesssim {\mathcal{R}}^{d^*}\log^{15} \mathcal{R}\left(\log^4 \mathcal{R} + \log\frac{1}{\delta}\right) \delta + \frac{\log^2 \mathcal{R}}{n} \\
        & \lesssim \frac{{\mathcal{R}}^{d^*}\log^{15} \mathcal{R}\left(\log^4 \mathcal{R} + \log n\right)}{n}.
    \end{aligned}
\end{equation}

Next, we estimate $\mathbb{E}_{\widehat{\mathcal{X}},\mathcal{T},\mathcal{Z}}\left(\overline{\mathcal{L}}_{\widehat{\mathcal{X}}}(\widehat{\mathbf{s}}) - \widehat{\mathcal{L}}_{\widehat{\mathcal{X}},\mathcal{T},\mathcal{Z}}(\widehat{\mathbf{s}})\right)$.
Recall that
$$
\overline{\mathcal{L}}_{\widehat{\mathcal{X}}}(\widehat{\mathbf{s}}) - \widehat{\mathcal{L}}_{\widehat{\mathcal{X}},\mathcal{T},\mathcal{Z}}(\widehat{\mathbf{s}}) = \frac{1}{n}\sum_{i=1}^{n}\left(\ell_{\widehat{\mathbf{s}}}(\widehat{\mathbf{x}}_i) - \widehat{\ell}_{\widehat{\mathbf{s}}}(\widehat{\mathbf{x}}_i)\right).
$$
We decompose $\frac{1}{n}\sum_{i=1}^{n}\left(\ell_{\widehat{\mathbf{s}}}(\widehat{\mathbf{x}}_i) - \widehat{\ell}_{\widehat{\mathbf{s}}}(\widehat{\mathbf{x}}_i)\right)$ into 
the following three terms:
$$
\underbrace{\frac{1}{n}\sum_{i=1}^{n}(\ell_{\widehat{\mathbf{s}}}(\widehat{\mathbf{x}}_i) - \ell_{\widehat{\mathbf{s}}}^{\mathrm{trunc}}(\widehat{\mathbf{x}}_i)
)}_{(A)} +\underbrace{\frac{1}{n}\sum_{i=1}^{n}(\ell_{\widehat{\mathbf{s}}}^{\mathrm{trunc}}(\widehat{\mathbf{x}}_i) - \widehat{\ell}_{\widehat{\mathbf{s}}}^{\mathrm{trunc}}(\widehat{\mathbf{x}}_i)
)}_{(B)}
+\underbrace{\frac{1}{n}\sum_{i=1}^{n}
(\widehat{\ell}_{\widehat{\mathbf{s}}}^{\mathrm{trunc}}(\widehat{\mathbf{x}}_i) - \widehat{\ell}_{\widehat{\mathbf{s}}}(\widehat{\mathbf{x}}_i)
)}_{(C)},
$$
where 
$$
\ell_{\widehat{\mathbf{s}}}^{\mathrm{trunc}}(\widehat{\mathbf{x}}_i) = \mathbb{E}_{\mathbf{z}}\left(\frac{1}{T}\int_{0}^{T}\left\Vert\widehat{\mathbf{s}}(1-t,\widehat{\mathbf{x}}_i + \sigma\sqrt{1-t}\mathbf{z}) + \frac{\mathbf{z}}{\sigma\sqrt{1-t}}\right\Vert^2\mathrm{d}t\cdot\mathbf{I}_{\{\Vert\mathbf{z}\Vert_{\infty} \leq r\}}\right),
$$
and
$$
\widehat{\ell}_{\widehat{\mathbf{s}}}^{\mathrm{trunc}}(\widehat{\mathbf{x}}_i) = \frac{1}{m}\sum_{j=1}^{m}\left\Vert\widehat{\mathbf{s}}(1-t_j,\widehat{\mathbf{x}}_i + \sigma\sqrt{1-t_j}\mathbf{z}_j) + \frac{\mathbf{z}_j}{\sigma\sqrt{1-t_j}}\right\Vert^2\mathbf{I}_{\{\Vert\mathbf{z}_j\Vert_{\infty} \leq r\}}.
$$
We estimate these three terms separately.
Firstly, 
\begin{equation*}
    \begin{aligned}
        (A) &= \frac{1}{n}\sum_{i=1}^{n}\mathbb{E}_{\mathbf{z}}\left(\frac{1}{T}\int_{0}^{T}\left\Vert\widehat{\mathbf{s}}(1-t,\widehat{\mathbf{x}}_i + \sigma\sqrt{1-t}\mathbf{z}) + \frac{\mathbf{z}}{\sigma\sqrt{1-t}}\right\Vert^2\mathrm{d}t\cdot\mathbf{I}_{\{\Vert\mathbf{z}\Vert_{\infty} > r\}}\right)\\
        &\lesssim \left(\log \mathcal{R}\mathbb{P}(\Vert\mathbf{z}\Vert_{\infty} > r) + \mathbb{E}\left(
        \Vert\mathbf{z}\Vert^2\mathbf{I}_{\{\Vert\mathbf{z}\Vert_{\infty}>r\}}\right)\right) \cdot \frac{1}{T}\int_{0}^{T}\frac{1}{\sigma^2(1-t)}\mathrm{d}t\\
        &\lesssim \log \mathcal{R}\left(\log \mathcal{R} \mathbb{P}(\Vert\mathbf{z}\Vert_{\infty} > r) + [\mathbb{E}
        (\Vert\mathbf{z}\Vert^4)]^{\frac{1}{2}}\mathbb{P}({\Vert\mathbf{z}\Vert_{\infty}>r})^{\frac{1}{2}} \right)\\
        &\lesssim \log^2 \mathcal{R} \mathbb{P}({\Vert\mathbf{z}\Vert_{\infty}>r})^{\frac{1}{2}}\\
        &\lesssim \log^2 \mathcal{R}\exp\left(-\frac{r^2}{4}\right).
    \end{aligned}
\end{equation*}
Therefore, there exists a constant $C_{10} > 0$ such that
$$
\mathbb{E}_{\widehat{\mathcal{X}},\mathcal{T},\mathcal{Z}}\left(\frac{1}{n}\sum_{i=1}^{n}\left(\ell_{\widehat{\mathbf{s}}}(\widehat{\mathbf{x}}_i) - \ell_{\widehat{\mathbf{s}}}^{\mathrm{trunc}}(\widehat{\mathbf{x}}_i)\right)\right) \leq C_{10}\log^2 \mathcal{R}\exp\left(-\frac{r^2}{4}\right).
$$
Let $h_{\mathbf{s}}(t,\widehat{\mathbf{x}}_i,\mathbf{z}) = \Vert \mathbf{s}(1-t,\widehat{\mathbf{x}}_i + \sigma\sqrt{1-t}\mathbf{z}) + \frac{\mathbf{z}}{\sigma\sqrt{1-t}}\Vert^2\mathbf{I}_{\{\Vert\mathbf{z}\Vert_{\infty}\leq r\}}$, then there exists a constant $C_{11} > 0$ such that
$$
0 \leq h_{\mathbf{s}}(t,\widehat{\mathbf{x}}_i,\mathbf{z})\lesssim \frac{r^2 + \log \mathcal{R}}{1-T} \leq C_{11} {\mathcal{R}}^{C_T}(r^2 + \log \mathcal{R}):= E_{\mathcal{R}}(r).
$$
For any $\delta_1 > 0$, $\mathbf{s}\in\mathrm{NN}$, there exists a $\mathbf{s}_{\delta_1}\in \mathcal{C}_{\delta_1}$ and a constant $C_{12} > 0$ such that
\begin{equation*}
\begin{aligned}
|h_{\mathbf{s}}(t,\widehat{\mathbf{x}}_i,\mathbf{z}) - h_{\mathbf{s}_{\delta_1}}(t,\widehat{\mathbf{x}}_i, \mathbf{z})|
&\lesssim \delta\left\Vert \mathbf{s} + \mathbf{s}_{\delta_1} + \frac{2\mathbf{z}}{\sigma\sqrt{1-t}}\right\Vert\mathbf{I}_{\{\Vert\mathbf{z}\Vert_{\infty}\leq r\}}\\
& \lesssim \frac{r + \sqrt{\log \mathcal{R}}}{\sqrt{1-T}} \cdot \delta_1 \\
&\leq C_{12} {\mathcal{R}}^{\frac{C_T}{2}}(r + \sqrt{\log \mathcal{R}})\delta_1.
\end{aligned}
\end{equation*}
Then, for fixed $\widehat{\mathbf{x}}_i$, we have
\begin{equation*}
\begin{aligned}
&\frac{1}{T}\int_{0}^{T}\mathbb{E}_{\mathbf{z}}h_{\widehat{\mathbf{s}}}(t,\widehat{\mathbf{x}}_i, \mathbf{z})\mathrm{d}t - \frac{1}{m}\sum_{j=1}^{m}h_{\widehat{\mathbf{s}}}(t_j,\mathbf{x}_i,\mathbf{z}_j)\\
\leq&\mathop{\mathrm{sup}}_{\mathbf{s}\in\mathcal{C}}\left(\frac{1}{T}\int_{0}^{T}\mathbb{E}_{\mathbf{z}}h_{\mathbf{s}}(t,\widehat{\mathbf{x}}_i, \mathbf{z})\mathrm{d}t - \frac{1}{m}\sum_{j=1}^{m}h_{\mathbf{s}}(t_j,\widehat{\mathbf{x}}_i,\mathbf{z}_j)\right)\\
\leq&\mathop{\mathrm{max}}_{\mathbf{s}_{\delta}\in \mathcal{C}_{\delta_1}}\left(\frac{1}{T}\int_{0}^{T}\mathbb{E}_{\mathbf{z}}h_{\mathbf{s}_{\delta_1}}(t,\mathbf{x}_i, \mathbf{z})\mathrm{d}t - \frac{1}{m}\sum_{j=1}^{m}h_{\mathbf{s}_{\delta_1}}(t_j,\mathbf{x}_i,\mathbf{z}_j)\right) + 2C_{12}{\mathcal{R}}^{\frac{C_T}{2}}(r + \sqrt{\log \mathcal{R}})\delta_1.
\end{aligned}
\end{equation*}
Let $b = 2C_{12}{\mathcal{R}}^{\frac{C_T}{2}}(r + \sqrt{\log \mathcal{R}})\delta_1$. For $t > b$,  using Hoeffding's inequality implies
$$
\begin{aligned}
&~\mathbb{P}_{\mathcal{T},\mathcal{Z}}\left(\frac{1}{T}\int_{0}^{T}\mathbb{E}_{\mathbf{z}}h_{\widehat{\mathbf{s}}}(t,\widehat{\mathbf{x}}_i, \mathbf{z})\mathrm{d}t - \frac{1}{m}\sum_{j=1}^{m}h_{\widehat{\mathbf{s}}}(t_j,\widehat{\mathbf{x}}_i,\mathbf{z}_j) > t
\right)\\
\leq&~\mathbb{P}_{\mathcal{T},\mathcal{Z}}\left(\mathop{\mathrm{max}}_{\mathbf{s}_{\delta_1}\in \mathcal{C}_{\delta_1}}\left(\frac{1}{T}\int_{0}^{T}\mathbb{E}_{\mathbf{z}}h_{\mathbf{s}_{\delta_1}}(t,\widehat{\mathbf{x}}_i, \mathbf{z})\mathrm{d}t - \frac{1}{m}\sum_{j=1}^{m}h_{\mathbf{s}_{\delta_1}}(t_j,\widehat{\mathbf{x}}_i,\mathbf{z}_j)\right) > t - b
\right)\\
\leq&~\mathcal{N}_{\delta_1}\mathop{\mathrm{max}}_{\mathbf{s}_{\delta_1}\in \mathcal{C}_{\delta_1}}\mathbb{P}_{\mathcal{T},\mathcal{Z}}\left(\frac{1}{T}\int_{0}^{T}\mathbb{E}_{\mathbf{z}}h_{\mathbf{s}_{\delta_1}}(t,\widehat{\mathbf{x}}_i, \mathbf{z})\mathrm{d}t - \frac{1}{m}\sum_{j=1}^{m}h_{\mathbf{s}_{\delta_1}}(t_j,\widehat{\mathbf{x}}_i,\mathbf{z}_j) > t - b
\right)\\
\leq&~\mathcal{N}_{\delta_1}\exp{\left(-\frac{2m(t-b)^2}{E_{\mathcal{R}}^2(r)}\right)}.
\end{aligned}
$$
Therefore, by taking  expectation  over 
$\mathcal{T},\mathcal{Z}$,  for any 
$c_0 > 0$, we deduce that $(B)$ satisfies
$$
\begin{aligned}
\mathbb{E}_{\mathcal{T},\mathcal{Z}}\left(\ell_{\widehat{\mathbf{s}}}^{\mathrm{trunc}}(\widehat{\mathbf{x}}_i) - \widehat{\ell}_{\widehat{\mathbf{s}}}^{\mathrm{trunc}}(\widehat{\mathbf{x}}_i)\right)  &=\int_{0}^{+\infty}\mathbb{P}_{\mathcal{T},\mathcal{Z}}\left(
\ell_{\widehat{\mathbf{s}}}^{\mathrm{trunc}}(\widehat{\mathbf{x}}_i) - \widehat{\ell}_{\widehat{\mathbf{s}}}^{\mathrm{trunc}}(\widehat{\mathbf{x}}_i) > t
\right)\mathrm{d}t\\
&\leq b + c_0 + \mathcal{N}_{\delta_1}\int_{c_0}^{+\infty}\exp{\left(-\frac{2mt^2}{E_{\mathcal{R}}^2(r)}\right)}\mathrm{d}t\\
&\leq b + c_0 + \frac{\sqrt{\pi}}{2}\mathcal{N}_{\delta_1}\exp{\left(-\frac{2mc_0^2}{E_{\mathcal{R}}^2(r)}\right)}\frac{E_{\mathcal{R}}(r)}{\sqrt{2m}}.
\end{aligned}
$$
Thus, we have
$$
\begin{aligned}
&~\mathbb{E}_{\widehat{\mathcal{X}},\mathcal{T},\mathcal{Z}}\left(\frac{1}{n}\sum_{i=1}^{n}\left(\ell_{\widehat{\mathbf{s}}}^{\mathrm{trunc}}(\widehat{\mathbf{x}}_i) - \widehat{\ell}_{\widehat{\mathbf{s}}}^{\mathrm{trunc}}(\widehat{\mathbf{x}}_i)\right)\right) \\
=&~\frac{1}{n}\sum_{i=1}^{n}\mathbb{E}_{\widehat{\mathcal{X}}}\left[\mathbb{E}_{\mathcal{T},\mathcal{Z}}\left(\ell_{\widehat{\mathbf{s}}}^{\mathrm{trunc}}(\widehat{\mathbf{x}}_i) - \widehat{\ell}_{\widehat{\mathbf{s}}}^{\mathrm{trunc}}(\widehat{\mathbf{x}}_i)\right)\right]\\
\leq&~b + c_0 + \frac{\sqrt{\pi}}{2}\mathcal{N}_{\delta_1}\exp{\left(-\frac{2mc_0^2}{E_{\mathcal{R}}^2(r)}\right)}\frac{E_{\mathcal{R}}(r)}{\sqrt{2m}}.
\end{aligned}
$$

The last term can be expressed as
$$
(C) = -\frac{1}{mn}\sum_{i=1}^{n}\sum_{j=1}^{m}\left\Vert\widehat{\mathbf{s}}(1-t_j,\widehat{\mathbf{x}}_i + \sigma\sqrt{1-t_j}\mathbf{z}_j) + \frac{\mathbf{z}_j}{\sigma\sqrt{1-t_j}}\right\Vert^2\mathbf{I}_{\{\Vert\mathbf{z}_j\Vert_{\infty} > r\}}\leq 0,
$$
which implies
$$
\mathbb{E}_{\widehat{\mathcal{X}},\mathcal{T},\mathcal{Z}}\left(
\frac{1}{n}\sum_{i=1}^{n}\left(\widehat{\ell}_{\widehat{\mathbf{s}}}^{\mathrm{trunc}}(\widehat{\mathbf{x}}_i) - \widehat{\ell}_{\widehat{\mathbf{s}}}(\widehat{\mathbf{x}}_i)\right)
\right)\leq 0.
$$
Combining the above inequalities, we have
\begin{equation*}
\mathbb{E}_{\widehat{\mathcal{X}},\mathcal{T},\mathcal{Z}}\left[\frac{1}{n}\sum_{i=1}^{n}\left(\ell_{\widehat{\mathbf{s}}}(\widehat{\mathbf{x}}_i) - \widehat{\ell}_{\widehat{\mathbf{s}}}(\widehat{\mathbf{x}}_i)\right)\right]
\leq b_0 + c_0 + \frac{\sqrt{\pi}}{2}\mathcal{N}_{\delta_1}\exp{\left(-\frac{2mc_0^2}{E_{\mathcal{R}}^2(r)}\right)}\frac{E_{\mathcal{R}}(r)}{\sqrt{2m}},
\end{equation*}
where $b_0 = C_{10}\log^2 \mathcal{R}\exp\left(-\frac{r^2}{4}\right) + 2C_{12}{\mathcal{R}}^{\frac{C_T}{2}}(r + \sqrt{\log \mathcal{R}})\delta_1$. 
By setting $c_0 = E_{\mathcal{R}}(r)\sqrt{\frac{\log{\mathcal{N}_{\delta_1}}}{2m}}$, $r = 2\sqrt{\log{m}}$, and $\delta_1 = \frac{1}{m}$, we obtain
\begin{equation}\label{eq:statsitical_error2}
\begin{aligned}
& ~~~~ \mathbb{E}_{\widehat{\mathcal{X}},\mathcal{T}, \mathcal{Z}}\left[\frac{1}{n}\sum_{i=1}^{n}\left(\ell_{\widehat{\mathbf{s}}}(\widehat{\mathbf{x}}_i) - \widehat{\ell}_{\widehat{\mathbf{s}}}(\widehat{\mathbf{x}}_i)\right)\right]\\
& \leq b_0 + E_{\mathcal{R}}(r)\cdot\frac{\sqrt{\log\mathcal{N}_{1/m}} + 1}{\sqrt{2m}}\\
& \lesssim \frac{\log^2 \mathcal{R} + {\mathcal{R}}^{\frac{C_T}{2}}(\sqrt{\log m} + \sqrt{\log \mathcal{R}})}{m} + {\mathcal{R}}^{C_T}(\log m + \log \mathcal{R}) \cdot \frac{{\mathcal{R}}^{\frac{d^*}{2}}\log^{\frac{13}{2}}\mathcal{R}(\log^2 \mathcal{R} + \sqrt{\log m})}{\sqrt{m}} \\
& \lesssim {\mathcal{R}}^{C_T}(\log m + \log \mathcal{R}) \cdot \frac{{\mathcal{R}}^{\frac{d^*}{2}}\log^{\frac{13}{2}}\mathcal{R}(\log^2 \mathcal{R} + \sqrt{\log m})}{\sqrt{m}}.
\end{aligned}
\end{equation}
The proof is complete.
\end{proof}

Combining Lemma \ref{lem:approximation} and Lemma \ref{lem:statistical_error}, we can prove Theorem \ref{thm:generalization}.

\begin{proof}[Proof of Theorem \ref{thm:generalization}]
By choosing $\mathcal{R} = \lfloor n^{\frac{1}{d^* + 2\beta}} \rfloor + 1 \lesssim n^{\frac{1}{d^* + 2\beta}}$ in Lemma \ref{lem:approximation}, the approximation error can be bounded as
$$
\begin{aligned}
\inf_{\mathbf{s}\in\mathcal{C}}\left(\mathcal{L}(\mathbf{s}) - \mathcal{L}(\mathbf{s}^*)\right) 
&\leq \frac{1}{T}\int_{0}^{T}\left(\int_{\mathbf{x}\sim q_{1-t}(\mathbf{x})}\Vert \mathbf{s}(1-t,\mathbf{x}) - \nabla\log q_{1-t}(\mathbf{x}) \Vert^2 \mathrm{d}\mathbf{x}\right) \mathrm{d} t \\
& \lesssim \frac{1}{T}\int_0^T\frac{{\mathcal{R}}^{-2\beta}\log \mathcal{R}}{\sigma^2(1-t)} \mathrm{d} t \lesssim {\mathcal{R}}^{-2\beta} \log^2 \mathcal{R} \lesssim n^{-\frac{2\beta}{d^* + 2\beta}} \log^2 n.
\end{aligned}
$$
Substituting $\mathcal{R} = \lfloor n^{\frac{1}{d^* + 2\beta}} \rfloor + 1 \lesssim n^{\frac{1}{d^* + 2\beta}}$, $m = n^{\frac{d^* + 8\beta}{d^* + 2\beta}}$ and $C_T = 2\beta$ into \eqref{eq:statsitical_error1} and \eqref{eq:statsitical_error2}, we have
$$
\eqref{eq:statsitical_error1} \lesssim n^{-\frac{2\beta}{d^* + 2\beta}}\log^{19}n, ~~ \eqref{eq:statsitical_error2} \lesssim n^{-\frac{2\beta}{d^* + 2\beta}}\log^{\frac{19}{2}}n,
$$
which implies that the statistical error can be bounded as
$$
\mathbb{E}_{\widehat{\mathcal{X}},\mathcal{T},\mathcal{Z}}\left(\mathcal{L}(\widehat{\mathbf{s}}) - 2\overline{\mathcal{L}}_{\widehat{\mathcal{X}}}(\widehat{\mathbf{s}}) + \mathcal{L}(\mathbf{s}^{*})\right) + 2\mathbb{E}_{\widehat{\mathcal{X}},\mathcal{T},\mathcal{Z}}\left(\overline{\mathcal{L}}_{\widehat{\mathcal{X}}}(\widehat{\mathbf{s}}) - \widehat{\mathcal{L}}_{\widehat{\mathcal{X}},\mathcal{T},\mathcal{Z}}(\widehat{\mathbf{s}})\right) \lesssim n^{-\frac{2\beta}{d^* + 2\beta}}\log^{19}n.
$$
Thus, we finally obtain
$$
\mathbb{E}_{\widehat{\mathcal{X}},\mathcal{T},\mathcal{Z}}\left(\frac{1}{T}\int_{0}^{T}\mathbb{E}_{\mathbf{x}_t}\Vert\widehat{\mathbf{s}}(1-t,\mathbf{x}_t) - \nabla\log q_{1-t}(\mathbf{x}_t)\Vert^2 \mathrm{d}t\right) \lesssim n^{-\frac{2\beta}{d^* + 2\beta}} \log^{19}n.
$$
The proof is complete.
\end{proof}

\section{Bound $\mathbb{E}_{\mathcal{X}, \mathcal{Y},\mathcal{T},\mathcal{Z}}\left[W_2(\widetilde{\pi}_T^L, \widehat{p}_{data}^*)\right]$}\label{sec:bb}
In this section, we bound $\mathbb{E}_{\mathcal{X}, \mathcal{Y},\mathcal{T},\mathcal{Z}}[W_2(\widetilde{\pi}_T^L, \widehat{p}_{data}^*)]$. We can decompose $\mathbb{E}_{\mathcal{X}, \mathcal{Y},\mathcal{T},\mathcal{Z}}\left[W_2(\widetilde{\pi}_T^L, \widehat{p}_{data}^*)\right]$ into two terms:
$$
\mathbb{E}_{\mathcal{X},\mathcal{Y},\mathcal{T},\mathcal{Z}}[W_2({\widetilde{\pi}_T^L}, \widehat{p}^*_{data})]
 \leq 
  \mathbb{E}_{\mathcal{X},\mathcal{Y},\mathcal{T},\mathcal{Z}}[W_2( \widetilde{\pi}_T^L, \pi_T^L)] + 
  \mathbb{E}_{\mathcal{Y}}[W_2(\pi_T^L, \widehat{p}_{data}^*)]
$$
In the following two subsections, we bound $\mathbb{E}_{\mathcal{X},\mathcal{Y},\mathcal{T},\mathcal{Z}}[W_2( \widetilde{\pi}_T^L, \pi_T^L)]$ and $\mathbb{E}_{\mathcal{Y}}[W_2(\pi_T^L, \widehat{p}_{data}^*)]$ separately.
\subsection{Bound $\mathbb{E}_{\mathcal{Y}}[W_2(\pi_T^L, \widehat{p}^{*}_{data})]$} 
In this subsection, we bound the term $\mathbb{E}_{\mathcal{Y}}[W_2(\pi_T^L, \widehat{p}^{*}_{data})]$
and prove Lemma \ref{lem:early_stopping}.

\begin{proof}[Proof of Lemma \ref{lem:early_stopping}]
$\mathbb{E}_{\mathcal{Y}}[W_2(\pi_T^L, \widehat{p}^{*}_{data})]$ can be decomposed into following two terms:
$$
\mathbb{E}_{\mathcal{Y}}[W_2(\pi_T^L, \widehat{p}^{*}_{data})] \leq \mathbb{E}_{\mathcal{Y}}[W_2(\pi_T^L, \pi_T)] + \mathbb{E}_{\mathcal{Y}}[W_2(\pi_T, \widehat{p}^{*}_{data})].
$$
The first term $\mathbb{E}_{\mathcal{Y}}[W_2(\pi_T^L, \pi_T)]$ satisfies that
$$
\begin{aligned}
\mathbb{E}_{\mathcal{Y}}[W_2(\pi_T^L, \pi_T)] 
&\leq \mathbb{E}_{\mathcal{Y}}\left(\mathbb{E}\Vert \mathbf{x}_T - \mathbf{x}_T\mathbf{I}_{\{\Vert\mathbf{x}_T\Vert_{\infty} \leq L\}}\Vert^2\right)^{\frac{1}{2}} \\
& \leq \mathbb{E}_{\mathcal{Y}}\left(
\mathbb{E}\Vert\mathbf{x}_T\Vert^2\mathbf{I}_{\{\Vert \mathbf{x}_T \Vert_\infty > L\}}
\right)^{\frac{1}{2}} \\
& \leq \mathbb{E}_{\mathcal{Y}} \left( \mathbb{E}\Vert\mathbf{x}_T\Vert^4 \cdot \mathbb{P}(\Vert \mathbf{x}_T\Vert_\infty > L)
\right)^{\frac{1}{4}}.
\end{aligned}
$$
Now, we consider the process 
$$
\bar{\mathbf{x}}_t = \bar{\mathbf{x}}_0 + \sigma\bar{\mathbf{w}}_t,~ \bar{\mathbf{x}}_0\sim {\widehat{p}}^{*}_{data}(\mathbf{x}),~ \bar{\mathbf{x}}_1\sim q(\sigma,\mathbf{x}),
$$
where $\bar{\mathbf{w}}_t$ is a standard Brownian motion. Then $\bar{\mathbf{x}}_t\sim q(\sqrt{t}\sigma, \mathbf{x})$ and $\bar{\mathbf{x}}_{1-t}$ satisfies the reverse SDE
$$
\mathrm{d}\mathbf{x}_t = \sigma^2\nabla\log{q}_{1-t}(\mathbf{x}_t)\mathrm{d}t + \sigma \mathrm{d}\mathbf{w}_t.
$$
Thus, $\bar{\mathbf{x}}_{1-t}\overset{d}{=}\mathbf{x}_{t}$, which implies that
$$
\mathbb{E}\Vert\mathbf{x}_T\Vert^4 = \mathbb{E}\Vert \bar{\mathbf{x}}_0 + \sigma\bar{\mathbf{w}}_{1-T}\Vert^4 \lesssim 1 + \sigma^4(1-T)^2d^*(d^* + 2) = \mathcal{O}(1)
$$
and
$$
\begin{aligned}
\mathbb{P}\left(\Vert\mathbf{x}_T\Vert_\infty > L\right) & = \mathbb{P}\left(\Vert \bar{\mathbf{x}}_0 + \sigma\bar{\mathbf{w}}_{1-T}\Vert_\infty > L\right) \\
& \leq 2d^*\exp\left(-\frac{(L-1)^2}{2\sigma^2(1-T)}\right)
\end{aligned}
$$
Using $e^{-x} \leq \frac{1}{x}$ for $x > 0$, we have
$$
\mathbb{E}_{\mathcal{Y}}[W_2(\pi_T^L, \pi_T)]\lesssim \left[\exp\left(-\frac{(L-1)^2}{4\sigma^2(1-T)}\right)\right]^{\frac{1}{2}}  \lesssim \sqrt{1-T} \lesssim n^{-\frac{\beta}{d^* + 2\beta}}.
$$
The second term satisfies that
$$
\mathbb{E}_{\mathcal{Y}}[W_2(\pi_T, \widehat{p}_{data}^*)]  \leq \mathbb{E}_{\mathcal{Y}}\left(
\mathbb{E}\Vert \mathbf{x}_T - \mathbf{x}_1 \Vert^2
\right)^{\frac{1}{2}} 
\leq \mathbb{E}_{\mathcal{Y}}\left(\Vert \sigma\bar{\mathbf{w}}_{1-T}\Vert^2\right)^{\frac{1}{2}} \lesssim \sqrt{1-T} \lesssim n^{-\frac{\beta}{d^* + 2\beta}}.
$$
Therefore, combining the above two inequalities, we finally obtain
\begin{equation}\label{eq:sub_bound1}
\mathbb{E}_{\mathcal{Y}}[W_2(\pi_T^L, \widehat{p}^{*}_{data})] \lesssim n^{-\frac{\beta}{d^* + 2\beta}}.
\end{equation}
The proof is complete.
\end{proof}

\subsection{Bound $\mathbb{E}_{\mathcal{X}, \mathcal{Y},\mathcal{T},\mathcal{Z}}[W_2(\widetilde{\pi}_T^L, \pi_T^L)]$
}
In this subsection, we bound the term $\mathbb{E}_{\mathcal{X}, \mathcal{Y},\mathcal{T},\mathcal{Z}}[W_2(\widetilde{\pi}_T^L, \pi_T^L)]$. Since $\widetilde{\mathbf{x}}_T^L$ and $\mathbf{x}_T^L$ are bounded by $L$, $\widetilde{\pi}_T^L$ and $\pi_T^L$ are supported on bounded region $[-L, L]^{d^*}$. Then, we obtain the inequality $\mathbb{E}_{\mathcal{X}, \mathcal{Y},\mathcal{T},\mathcal{Z}}[W_2(\widetilde{\pi}_T^L, \pi_T^L)] \lesssim \mathbb{E}_{\mathcal{X}, \mathcal{Y},\mathcal{T},\mathcal{Z}}[\mathrm{TV}(\widetilde{\pi}_T^L, \pi_T^L)] \lesssim \mathbb{E}_{\mathcal{X}, \mathcal{Y},\mathcal{T},\mathcal{Z}}[\mathrm{TV}(\widetilde{\pi}_T, \pi_T)]$. Therefore, we only need to bound $\mathbb{E}_{\mathcal{X}, \mathcal{Y},\mathcal{T},\mathcal{Z}}[\mathrm{TV}(\widetilde{\pi}_T, \pi_T)]$. We first introduce the following lemma.

\begin{lemma}[Proposition D.1 in \cite{oko2023diffusion}]
\label{lem:Girsanov}
Let $p_0$ be any probability distribution, and $\mathbf{z}=(\mathbf{z}_t)_{t\in [0,T]}$, $\mathbf{z}^{\prime}=(\mathbf{z}^{\prime}_t)_{t\in [0,T]}$ be two different processes satisfying
$$
\begin{aligned}
& \mathrm{d}\mathbf{z}_t = \mathbf{b}(t,\mathbf{z}_t)\mathrm{d}t + \sigma(t)\mathrm{d}\mathbf{w}_t, ~ \mathbf{z}_0 \sim p_0, \\
& \mathrm{d}\mathbf{z}^{\prime}_t = \mathbf{b}^{\prime}(t,\mathbf{z}^{\prime}_t) \mathrm{d}t + \sigma(t)\mathrm{d}\mathbf{w}_t, ~ \mathbf{z}^{\prime}_0 \sim p_0.
\end{aligned}
$$
We define the distributions of $\mathbf{z}_t$ and $\mathbf{z}^{\prime}_t$ as $p_t$ and $p^{\prime}_t$, and the path measures of $\mathbf{z}$ and $\mathbf{z}^{\prime}$ as $\mathbb{P}$ and $\mathbb{P}^{\prime}$, respectively. Suppose that the Novikov's condition holds, i.e.,
\begin{equation}\label{eq:novikov}
\mathbb{E}_{\mathbb{P}}\left[
\exp\left(\frac{1}{2}\int_0^T \frac{\Vert \mathbf{b}(t,\mathbf{z}_t) - \mathbf{b}^{\prime}(t,\mathbf{z}_t)\Vert^2}{\sigma^2(t)}\mathrm{d}t \right)
\right] < +\infty.
\end{equation}
Then, the Radon-Nikodym derivative of $\mathbb{P}^{\prime}$ with respect to $\mathbb{P}$ is
$$
\frac{\mathrm{d}\mathbb{P}^{\prime}}{\mathrm{d}\mathbb{P}} = \exp\left(-\int_{0}^T\frac{\mathbf{b}(t,\mathbf{z}_t) - \mathbf{b}^{\prime}(t,\mathbf{z}_t)}{\sigma(t)}\mathrm{d}\mathbf{w}_t -\int_0^T \frac{\Vert \mathbf{b}(t,\mathbf{z}_t) - \mathbf{b}^{\prime}(t,\mathbf{z}_t)\Vert^2}{2\sigma^2(t)}\mathrm{d}t\right),
$$
and therefore we have 
$$
\mathrm{KL}(p_T|p_T^{\prime}) \leq \mathrm{KL}(\mathbb{P}|\mathbb{P}^{\prime}) = \mathbb{E}_{\mathbb{P}} \left[\frac{1}{2}\int_0^T\frac{\Vert \mathbf{b}(t,\mathbf{z}_t) - \mathbf{b}^{\prime}(t,\mathbf{z}_t)\Vert^2}{\sigma^2(t)} \mathrm{d}t \right].
$$

\end{lemma}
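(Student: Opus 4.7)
The plan is to prove the lemma by applying Girsanov's theorem to change measure between the two path laws, and then invoking the data processing inequality for KL divergence on the terminal marginals.

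First, I would set up the change of measure. Under $\mathbb{P}$, $\mathbf{z}_t$ satisfies the SDE driven by $\mathbf{b}$ and the Brownian motion $\mathbf{w}_t$. Define the candidate Radon-Nikodym derivative
$$
Z_T := \exp\left(-\int_{0}^{T}\frac{\mathbf{b}(t,\mathbf{z}_t) - \mathbf{b}^{\prime}(t,\mathbf{z}_t)}{\sigma(t)}\mathrm{d}\mathbf{w}_t - \frac{1}{2}\int_{0}^{T}\frac{\Vert \mathbf{b}(t,\mathbf{z}_t) - \mathbf{b}^{\prime}(t,\mathbf{z}_t)\Vert^2}{\sigma^2(t)}\mathrm{d}t\right).
$$
The Novikov condition \eqref{eq:novikov} guarantees that $Z_T$ is a true $\mathbb{P}$-martingale with $\mathbb{E}_{\mathbb{P}}[Z_T] = 1$, so we may define a new probability measure $\mathbb{Q}$ on the path space by $\mathrm{d}\mathbb{Q}/\mathrm{d}\mathbb{P} = Z_T$. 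Girsanov's theorem then implies that the process
$$
\widetilde{\mathbf{w}}_t := \mathbf{w}_t + \int_0^t \frac{\mathbf{b}(s,\mathbf{z}_s) - \mathbf{b}^{\prime}(s,\mathbf{z}_s)}{\sigma(s)} \mathrm{d}s
$$
is a standard Brownian motion under $\mathbb{Q}$.

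Next, I would identify $\mathbb{Q}$ with $\mathbb{P}^{\prime}$. Substituting $\mathrm{d}\mathbf{w}_t = \mathrm{d}\widetilde{\mathbf{w}}_t - \sigma(t)^{-1}(\mathbf{b}(t,\mathbf{z}_t) - \mathbf{b}^{\prime}(t,\mathbf{z}_t))\mathrm{d}t$ into the SDE for $\mathbf{z}_t$ gives $\mathrm{d}\mathbf{z}_t = \mathbf{b}^{\prime}(t,\mathbf{z}_t)\mathrm{d}t + \sigma(t)\mathrm{d}\widetilde{\mathbf{w}}_t$ with $\mathbf{z}_0 \sim p_0$. By strong (or at least weak) uniqueness of the SDE for $\mathbf{z}^{\prime}$, the law of $(\mathbf{z}_t)_{t\in [0,T]}$ under $\mathbb{Q}$ coincides with the path measure $\mathbb{P}^{\prime}$. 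Hence $\mathrm{d}\mathbb{P}^{\prime}/\mathrm{d}\mathbb{P} = Z_T$, as claimed.

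Finally, I would compute the KL divergence on path space:
$$
\mathrm{KL}(\mathbb{P}\Vert \mathbb{P}^{\prime}) = \mathbb{E}_{\mathbb{P}}\left[-\log Z_T\right] = \mathbb{E}_{\mathbb{P}}\left[\int_0^T \frac{\mathbf{b}(t,\mathbf{z}_t) - \mathbf{b}^{\prime}(t,\mathbf{z}_t)}{\sigma(t)}\mathrm{d}\mathbf{w}_t + \frac{1}{2}\int_0^T \frac{\Vert \mathbf{b}(t,\mathbf{z}_t) - \mathbf{b}^{\prime}(t,\mathbf{z}_t)\Vert^2}{\sigma^2(t)}\mathrm{d}t\right].
$$
The Novikov condition ensures the stochastic integral is a genuine $\mathbb{P}$-martingale (not merely a local martingale), so its expectation vanishes, leaving only the second term. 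To pass from the path-measure KL to the marginal KL at time $T$, I would apply the data processing inequality to the projection $(\mathbf{z}_t)_{t\in[0,T]} \mapsto \mathbf{z}_T$, which gives $\mathrm{KL}(p_T \Vert p_T^{\prime}) \leq \mathrm{KL}(\mathbb{P}\Vert \mathbb{P}^{\prime})$, completing the bound. The main obstacle is the justification of the martingale property and the identification $\mathbb{Q} = \mathbb{P}^{\prime}$; both rely decisively on \eqref{eq:novikov} together with well-posedness of the two SDEs, and standard verification of these is the one step that cannot be shortened.
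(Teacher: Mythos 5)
The paper states this lemma as a quoted result (Proposition D.1 of \cite{oko2023diffusion}) and supplies no proof of its own, so there is nothing in the paper to compare against; your proposal is, in effect, supplying the deferred proof. Your argument is correct and is the standard one: define the exponential martingale $Z_T$, use Novikov to ensure $\mathbb{E}_{\mathbb{P}}[Z_T]=1$ and that $\mathrm{d}\mathbb{Q}=Z_T\,\mathrm{d}\mathbb{P}$ is a probability measure, apply Girsanov to identify $\mathbb{Q}$ with $\mathbb{P}'$ via uniqueness in law for the second SDE, compute $\mathrm{KL}(\mathbb{P}\Vert\mathbb{P}')=\mathbb{E}_{\mathbb{P}}[-\log Z_T]$, and finish with the data processing inequality for the projection onto the time-$T$ marginal. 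One small point worth making explicit: the vanishing of the expectation of the stochastic integral does not follow directly from Novikov in one step. What Novikov gives you is $\mathbb{E}_{\mathbb{P}}\bigl[\exp\bigl(\tfrac12\int_0^T \Vert h_t\Vert^2\,\mathrm{d}t\bigr)\bigr]<\infty$ with $h_t=\sigma(t)^{-1}(\mathbf{b}-\mathbf{b}')(t,\mathbf{z}_t)$; since $x\le e^x$ this implies $\mathbb{E}_{\mathbb{P}}\bigl[\int_0^T\Vert h_t\Vert^2\,\mathrm{d}t\bigr]<\infty$, which is the square-integrability condition that upgrades $\int_0^{\cdot}h_t\,\mathrm{d}\mathbf{w}_t$ from a local martingale to a true $L^2$ martingale with zero mean. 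Spelling out that one-line deduction would close the only gap; otherwise the proposal is complete and matches the standard literature proof.
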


Based on Lemma \ref{lem:Girsanov}, we can prove Theorem \ref{thm:sampling_error}.

\begin{proof}[Proof of Theorem \ref{thm:sampling_error}]
For any $t\in[0, T]$, $\mathbf{x}_t\sim q_{1-t}(\mathbf{x})$ and $\Vert\widehat{\mathbf{s}}(1-t,\mathbf{x})\Vert$, $\Vert\nabla\log q_{1-t}(\mathbf{x})\Vert \lesssim \frac{\sqrt{\log \mathcal{R}}}{\sigma_{1-t}} \lesssim \frac{\sqrt{\log n}}{\sqrt{1-T}}$.  Therefore, it holds that
$$
\begin{aligned}
&\mathbb{E}_{\mathbb{P}} \left[\exp\left(\sum_{i=0}^{K-1} \int_{t_i}^{t_{i+1}} \frac{\Vert \widehat{\mathbf{s}}(1-t_i, \mathbf{x}_{t_i}) - \nabla\log q_{1-t}(\mathbf{x}_t)\Vert^2}{2\sigma^2} \mathrm{d}t \right)\right]\\
& \lesssim \exp\left(\mathcal{O}\left(\frac{T\log n}{1-T}\right)\right) \\
& \lesssim \exp\left(\mathcal{O}\left(n^{\frac{2\beta}{d^* + 2\beta}} \log n\right)\right) < +\infty.
\end{aligned}
$$
We know that condition \eqref{eq:novikov} is satisfied. Therefore, by Lemma \ref{lem:Girsanov} and inequality $\mathrm{TV}(\widetilde{\pi}_T, \pi_T) = \mathrm{TV}(\pi_T, \widetilde{\pi}_T) \lesssim \sqrt{\mathrm{KL}(\pi_T, \widetilde{\pi}_T)}$, we obtain
$$
\begin{aligned}
\mathbb{E}_{\mathcal{X}, \mathcal{Y},\mathcal{T},\mathcal{Z}}[\mathrm{TV}^2(\widetilde{\pi}_T, \pi_T)] & \lesssim \mathbb{E}_{\mathcal{X}, \mathcal{Y},\mathcal{T},\mathcal{Z}} \left[\mathbb{E}_{\mathbb{P}}\left(\sum_{i=0}^{K-1}\int_{t_i}^{t_{i + 1}} \Vert \widehat{\mathbf{s}}(1-t_i, \mathbf{x}_{t_i}) - \nabla\log q_{1-t}(\mathbf{x}_t)\Vert^2 \mathrm{d}t \right) \right] \\
& \lesssim \mathbb{E}_{\mathcal{X}, \mathcal{Y},\mathcal{T},\mathcal{Z}} \left[\sum_{i=0}^{K-1}\int_{t_i}^{t_{i + 1}} \mathbb{E}_{\mathbb{P}} \Vert \widehat{\mathbf{s}}(1-t_i, \mathbf{x}_{t_i}) - \nabla\log q_{1-t}(\mathbf{x}_t)\Vert^2 \mathrm{d}t  \right].
\end{aligned}
$$
Moreover, the term $\mathbb{E}_{\mathcal{X}, \mathcal{Y},\mathcal{T},\mathcal{Z}}[\mathrm{TV}^2(\widetilde{\pi}_T, \pi_T)]$ can be bounded as
$$
\begin{aligned}
\mathbb{E}_{\mathcal{X}, \mathcal{Y},\mathcal{T},\mathcal{Z}}[\mathrm{TV}^2(\widetilde{\pi}_T, \pi_T)] &\lesssim \underbrace{\mathbb{E}_{\mathcal{X}, \mathcal{Y},\mathcal{T},\mathcal{Z}} \left[\sum_{i=0}^{K-1}\int_{t_i}^{t_{i + 1}} \mathbb{E}_{\mathbb{P}} \Vert \widehat{\mathbf{s}}(1-t_i, \mathbf{x}_{t_i}) - \nabla\log q_{1-t_i}(\mathbf{x}_{t_i})\Vert^2 \mathrm{d}t  \right]}_{\mathrm{(I)}} \\
& ~~~ + \underbrace{\mathbb{E}_{\mathcal{X}, \mathcal{Y},\mathcal{T},\mathcal{Z}} \left[\sum_{i=0}^{K-1}\int_{t_i}^{t_{i + 1}} \mathbb{E}_{\mathbb{P}} \Vert  \nabla\log q_{1-t_i}(\mathbf{x}_{t_i}) - \nabla\log q_{1-t_i}(\mathbf{x}_t)\Vert^2 \mathrm{d}t  \right]}_{\mathrm{(II)}} \\
& ~~~ + \underbrace{\mathbb{E}_{\mathcal{X}, \mathcal{Y},\mathcal{T},\mathcal{Z}} \left[\sum_{i=0}^{K-1}\int_{t_i}^{t_{i + 1}} \mathbb{E}_{\mathbb{P}} \Vert \nabla\log q_{1-t_i}(\mathbf{x}_t) - \nabla\log q_{1-t}(\mathbf{x}_t)\Vert^2 \mathrm{d}t  \right]}_{\mathrm{(III)}}.
\end{aligned}
$$
We bound the above three terms separately. The first term yields that
$$
\mathrm{(I)} = \mathbb{E}_{\widehat{\mathcal{X}}, \mathcal{T},\mathcal{Z}} \left[\sum_{i=0}^{K-1} \mathbb{E}_{\mathbb{P}} \Vert \widehat{\mathbf{s}}(1-t_i, \mathbf{x}_{t_i}) - \nabla\log q_{1-t_i}(\mathbf{x}_{t_i})\Vert^2 (t_{i+1} - t_i)  \right].
$$
Therefore, for any $\epsilon > 0$, there exists $\Delta_{\epsilon} > 0$ such that if $\max_{0\leq i \leq K-1}
(t_{i + 1} - t_i) \leq \Delta_{\epsilon}$, then it holds that
$$
\left|\mathrm{(I)} - \mathbb{E}_{\widehat{\mathcal{X}}, \mathcal{T},\mathcal{Z}}\left[
\int_0^T\mathbb{E}_{\mathbb{P}}\Vert \widehat{\mathbf{s}}(1-t, \mathbf{x}_t) - \nabla\log q_{1-t}(\mathbf{x}_t)\Vert^2 \mathrm{d}t
\right] \right| \leq \epsilon.
$$
Taking $\epsilon = n^{-\frac{2\beta}{d^* + 2\beta}}\log^{19}n$, then there exists $\Delta_n > 0$ such that if $\max_{0\leq i \leq K-1}(t_{i+1} - t_i) \leq \Delta_n$, then
$$
(\mathrm{I}) \lesssim n^{-\frac{2\beta}{d^* + 2\beta}}\log^{19}n.
$$

Next, we bound the second term $\mathrm{(II)}$. For any $\epsilon > 0$, there exists a constant $C > 0$ such that
$$
\begin{aligned}
& ~~~~  \mathbb{E}_{\mathbb{P}} \Vert  \nabla\log q_{1-t_i}(\mathbf{x}_{t_i}) - \nabla\log q_{1-t_i}(\mathbf{x}_t)\Vert^2 \\
& \lesssim \mathbb{E}_{\mathbb{P}} \Vert  \nabla\log q_{1-t_i}(\mathbf{x}_{t_i}) - \nabla\log q_{1-t_i}(\mathbf{x}_t)\Vert^2\mathbf{I}_{\{\Vert\mathbf{x}_{t_i}\Vert_{\infty} > 1 + C\sigma_{1-t_i}\sqrt{\log\epsilon^{-1}}\}} \\
& ~~~~ + \mathbb{E}_{\mathbb{P}} \Vert  \nabla\log q_{1-t_i}(\mathbf{x}_{t_i}) - \nabla\log q_{1-t_i}(\mathbf{x}_t)\Vert^2\mathbf{I}_{\{\Vert\mathbf{x}_{t}\Vert_{\infty} > 1 + C\sigma_{1-t}\sqrt{\log\epsilon^{-1}}\}} \\
& ~~~~ +  \mathbb{E}_{\mathbb{P}} \Vert  \nabla\log q_{1-t_i}(\mathbf{x}_{t_i}) - \nabla\log q_{1-t_i}(\mathbf{x}_t)\Vert^2\mathbf{I}_{\{\Vert\mathbf{x}_{t_i}\Vert_{\infty} \leq 1 + C\sigma_{1-t_i}\sqrt{\log\epsilon^{-1}}, \Vert\mathbf{x}_{t}\Vert_{\infty} \leq 1 + C\sigma_{1-t}\sqrt{\log\epsilon^{-1}}\}} \\
& \lesssim \frac{\log n}{1-t_i} \mathbb{P}\left( \Vert\mathbf{x}_{t_i}\Vert_{\infty} > 1 + C\sigma_{1-t_i}\sqrt{\log\epsilon^{-1}}\right) + \frac{\log n}{1-t_i} \mathbb{P}\left( \Vert\mathbf{x}_{t}\Vert_{\infty} > 1 + C\sigma_{1-t}\sqrt{\log\epsilon^{-1}}\right) \\
& ~~~~ + \mathbb{E}_{\mathbb{P}} \Vert  \nabla\log q_{1-t_i}(\mathbf{x}_{t_i}) - \nabla\log q_{1-t_i}(\mathbf{x}_t)\Vert^2\mathbf{I}_{\{\Vert\mathbf{x}_{t_i}\Vert_{\infty} \leq 1 + C\sigma_{1-t_i}\sqrt{\log\epsilon^{-1}}, \Vert\mathbf{x}_{t}\Vert_{\infty} \leq 1 + C\sigma_{1-t}\sqrt{\log\epsilon^{-1}}\}}.
\end{aligned}
$$
By Lemma \ref{lem:bound_for_clipping}, we have
$$
\mathbb{P}\left( \Vert\mathbf{x}_{t_i}\Vert_{\infty} > 1 + C\sigma_{1-t_i}\sqrt{\log\epsilon^{-1}}\right) \lesssim \sigma_{1-t_i}\epsilon, ~ \mathbb{P}\left( \Vert\mathbf{x}_{t}\Vert_{\infty} > 1 + C\sigma_{1-t}\sqrt{\log\epsilon^{-1}}\right) \lesssim \sigma_{1-t}\epsilon.
$$
According to Lagrange's Mean Value Theorem and Lemma \ref{lem:derivatives_boundness}, there exists $\mathbf{x}^{\prime}$ between $\mathbf{x}_{t_i}$ and $\mathbf{x}_t$ such that
$$
\begin{aligned}
& ~~~ \Vert \nabla\log q_{1-t_i}(\mathbf{x}_{t_i}) - \nabla\log q_{1-t_i}(\mathbf{x}_t)\Vert \mathbf{I}_{\{\Vert\mathbf{x}_{t_i}\Vert_{\infty} \leq 1 + C\sigma_{1-t_i}\sqrt{\log\epsilon^{-1}}, \Vert\mathbf{x}_{t}\Vert_{\infty} \leq 1 + C\sigma_{1-t}\sqrt{\log\epsilon^{-1}}\}} \\
& = \Vert \partial_{\mathbf{x}}\nabla\log q_{1-t_i}(\mathbf{x}^{\prime})\Vert \Vert\mathbf{x}_{t_i} - \mathbf{x}_t\Vert \mathbf{I}_{\{\Vert\mathbf{x}_{t_i}\Vert_{\infty} \leq 1 + C\sigma_{1-t_i}\sqrt{\log\epsilon^{-1}}, \Vert\mathbf{x}_{t}\Vert_{\infty} \leq 1 + C\sigma_{1-t}\sqrt{\log\epsilon^{-1}}\}} \\
& \lesssim \frac{\log\epsilon^{-1}}{\sigma^2_{1-t_i}} \cdot \Vert \mathbf{x}_{t_i} - \mathbf{x}_t\Vert.
\end{aligned}
$$
Then, we have
$$
\begin{aligned}
& ~~~~ \mathbb{E}_{\mathbb{P}}\Vert \nabla\log q_{1-t_i}(\mathbf{x}_{t_i}) - \nabla\log q_{1-t_i}(\mathbf{x}_t)\Vert^2 \mathbf{I}_{\{\Vert\mathbf{x}_{t_i}\Vert_{\infty} \leq 1 + C\sigma_{1-t_i}\sqrt{\log\epsilon^{-1}}, \Vert\mathbf{x}_{t}\Vert_{\infty} \leq 1 + C\sigma_{1-t}\sqrt{\log\epsilon^{-1}}\}} \\
& \lesssim \frac{\log^2\epsilon^{-1}}{\sigma^4_{1-t_i}} \mathbb{E}_{\mathbb{P}}\Vert \mathbf{x}_{t_i} - \mathbf{x}_t \Vert^2 \lesssim \frac{\log^2\epsilon^{-1}}{(1-t_i)^2} \cdot (t-t_i).
\end{aligned}
$$
Therefore, we obtain
$$
\begin{aligned}
& ~~~~ \sum_{i=0}^{K-1}\int_{t_i}^{t_{i + 1}}\mathbb{E}_{\mathbb{P}} \Vert  \nabla\log q_{1-t_i}(\mathbf{x}_{t_i}) - \nabla\log q_{1-t_i}(\mathbf{x}_t)\Vert^2 \mathrm{d}t  \\
& \lesssim \sum_{i=0}^{K-1}\left[\frac{(t_{i+1} - t_i)\epsilon\log n}{\sqrt{1-t_i}} + \frac{(t_{i + 1} - t_i)\epsilon\log n\sqrt{1-t_i}}{1-t_i} + \frac{\log^2\epsilon^{-1}}{(1-t_i)^2} \cdot (t_{i+1}-t_i)^2 \right] \\
& \lesssim \frac{T\epsilon\log n}{\sqrt{1-T}} + \frac{T\log^2\epsilon^{-1}}{(1-T)^2}\max_{0\leq i\leq K-1}(t_{i+1}-t_i) \\
& \lesssim \epsilon n^{\frac{\beta}{d^* + 2\beta}} \log n + n^{\frac{4\beta}{d^* + 2\beta}} \log^2\epsilon^{-1} \max_{0 \leq i \leq K-1}(t_{i+1}-t_i).
\end{aligned}
$$
Taking $\epsilon = n^{-\frac{3\beta}{d^* + 2\beta}}$ and $\max_{0 \leq i \leq K-1}(t_{i+1} - t_i) = \mathcal{O}(n^{-\frac{6\beta}{d^* + 2\beta}})$, we obtain
$$
\mathrm{(II)} \lesssim n^{-\frac{2\beta}{d^* + 2\beta}} \log^2 n. 
$$

Finally, we bound the last term $\mathrm{(III)}$. By Lemma \ref{lem:bound_for_clipping}, for any $\epsilon > 0$, there exists a constant $C > 0$ such that
$$
\begin{aligned} & ~~~~\mathbb{E}_{\mathbb{P}}\Vert\nabla\log q_{1-t_i}(\mathbf{x}_t) - \nabla\log q_{1-t}(\mathbf{x}_t)\Vert^2 \\ & \lesssim \mathbb{E}_{\mathbb{P}}\Vert\nabla\log q_{1-t_i}(\mathbf{x}_t) - \nabla\log q_{1-t}(\mathbf{x}_t)\Vert^2\mathbf{I}_{\{\Vert\mathbf{x}_t\Vert_\infty \leq 1 + C\sigma_{1-t}\sqrt{\log\epsilon^{-1}}\}} 
\\
& ~~~~ +\mathbb{E}_{\mathbb{P}}\Vert\nabla\log q_{1-t_i}(\mathbf{x}_t) - \nabla\log q_{1-t}(\mathbf{x}_t)\Vert^2\mathbf{I}_{\{\Vert\mathbf{x}_t\Vert_\infty > 1 + C\sigma_{1-t}\sqrt{\log\epsilon^{-1}}\}} \\
& \lesssim \mathbb{E}_{\mathbb{P}}\Vert\nabla\log q_{1-t_i}(\mathbf{x}_t) - \nabla\log q_{1-t}(\mathbf{x}_t)\Vert^2\mathbf{I}_{\{\Vert\mathbf{x}_t\Vert_\infty \leq 1 + C\sigma_{1-t}\sqrt{\log\epsilon^{-1}}\}} \\ & ~~~~ + \left(\frac{\log n}{1-t_i} + \frac{\log n}{1-t}\right) \mathbb{P}\left(\Vert\mathbf{x}_t\Vert_\infty > 1 + C\sigma_{1-t}\sqrt{\log \epsilon^{-1}}\right) \\
& \lesssim \mathbb{E}_{\mathbb{P}}\Vert\nabla\log q_{1-t_i}(\mathbf{x}_t) - \nabla\log q_{1-t}(\mathbf{x}_t)\Vert^2\mathbf{I}_{\{\Vert\mathbf{x}_t\Vert_\infty \leq 1 + C\sigma_{1-t}\sqrt{\log\epsilon^{-1}}\}}\\
& ~~~~ + \left(\frac{\log n}{1-t_i} + \frac{\log n}{1-t}\right)\sqrt{1-t}\cdot\epsilon.
\end{aligned}
$$
According to Lagrange’s Mean Value Theorem and Lemma \ref{lem:derivatives_boundness}, there exists $t^{\prime}$ between $t_i$ and $t$ such that
$$
\begin{aligned}
& ~~~~ \mathbb{E}_{\mathbb{P}}\Vert\nabla\log q_{1-t_i}(\mathbf{x}_t) - \nabla\log q_{1-t}(\mathbf{x}_t)\Vert^2\mathbf{I}_{\{\Vert\mathbf{x}_t\Vert_\infty \leq 1 + C\sigma_{1-t}\sqrt{\log\epsilon^{-1}}\}} \\ & = \mathbb{E}_{\mathbb{P}} \Vert\nabla\log q_{1-t^{\prime}}(\mathbf{x}_t)\mathbf{I}_{\{\Vert\mathbf{x}_t\Vert_\infty \leq 1 + C\sigma_{1-t}\sqrt{\log\epsilon^{-1}}\}}(\mathbf{x}_t)\Vert \cdot (t-t_i)^2 \\
& \lesssim \frac{1}{(1-t^{\prime})^{\frac{3}{2}}}\cdot\log^{\frac{3}{2}}\epsilon^{-1} \cdot (t - t_i)^2 \\
& \lesssim \frac{1}{(1-T)^{\frac{3}{2}}}\cdot\log^{\frac{3}{2}}\epsilon^{-1} \cdot (t - t_i)^2.
\end{aligned}
$$
Therefore, we have
$$
\begin{aligned}
& ~~~~ \sum_{i=0}^{K-1}\int_{t_i}^{t_{i + 1}}\mathbb{E}_{\mathbb{P}}\Vert\nabla\log q_{1-t_i}(\mathbf{x}_t) - \nabla\log q_{1-t}(\mathbf{x}_t)\Vert^2 \mathrm{d}t \\
& \lesssim \sum_{i=0}^{K-1}\left[\frac{(t_{i + 1} - t_i)^3}{(1-T)^{\frac{3}{2}}}\cdot\log^{\frac{3}{2}}\epsilon^{-1} + \left(\frac{1}{\sqrt{
1-t_i}} + \frac{1}{\sqrt{1-t_{i + 1}}}\right)\epsilon\log n \right] \\
& \lesssim \frac{T}{(1-T)^{\frac{3}{2}}}\max_{0\leq i \leq K-1}(t_{i + 1} - t_i)\cdot \log^{\frac{3}{2}}\epsilon^{-1} + \frac{T}{\sqrt{1-T}}\cdot\epsilon\log n \\
& \lesssim n^{\frac{3\beta}{d^* + 2\beta}}\max_{0\leq i \leq K-1}(t_{i + 1} - t_i)\cdot \log^{\frac{3}{2}}\epsilon^{-1} + \epsilon\cdot n^{\frac{\beta}{d^* + 2\beta}}\log n.
\end{aligned}
$$
Taking $\epsilon = n^{-\frac{3\beta}{d^* + 2\beta}}$ and $\max_{0\leq i \leq K-1}(t_{i+1} - t_i) = \mathcal{O}(n^{-\frac{5\beta}{d^* + 2\beta}})$, we can bound the term $\mathrm{(III)}$ as
$$
\mathrm{(III)} \lesssim n^{-\frac{2\beta}{d^* + 2\beta}} \log^{\frac{3}{2}}n.
$$
Finally, taking $\max_{0\leq i \leq K-1}(t_{i+1}-t_i) = \mathcal{O}\left(\min\{\Delta_n, n^{-\frac{6\beta}{d^* + 2\beta}}\}\right)$, we can bound $\mathbb{E}_{\mathcal{X},\mathcal{Y},\mathcal{T},\mathcal{Z}}[\mathrm{TV}^2(\widetilde{\pi}_T,\pi_T)]$ as
$$
\mathbb{E}_{\mathcal{X},\mathcal{Y},\mathcal{T}, \mathcal{Z}}[\mathrm{TV}^2(\widetilde{\pi}_T,\pi_T)] \lesssim \mathrm{(I)} + \mathrm{(II)} + \mathrm{(III)} \lesssim n^{-\frac{2\beta}{d^* + 2\beta}}\log^{19}n,
$$
which implies that
\begin{equation} \label{eq:sub_bound2}
\mathbb{E}_{\mathcal{X}, \mathcal{Y},\mathcal{T},\mathcal{Z}}[W_2(\widetilde{\pi}_T^L, \pi_T^L)] \lesssim n^{-\frac{\beta}{d^* + 2\beta}} \log^{\frac{19}{2}}n.
\end{equation}
The proof is complete.
\end{proof}

Based on Theorem \ref{thm:sampling_error} and Lemma \ref{lem:early_stopping}, we can bound $\mathbb{E}_{\mathcal{X}, \mathcal{Y},\mathcal{T},\mathcal{Z}}[W_2(\widetilde{\pi}_T^L, \widehat{p}_{data}^*)]$.
\begin{proof}[Proof of Theorem \ref{thm:oracle_inequality}]
Combining \eqref{eq:sub_bound1} and \eqref{eq:sub_bound2}, we obtain
$$
\begin{aligned}
\mathbb{E}_{\mathcal{X},\mathcal{Y},\mathcal{T},\mathcal{Z}}[W_2({\widetilde{\pi}_T^L}, \widehat{p}^*_{data})]
& \leq 
\mathbb{E}_{\mathcal{X},\mathcal{Y},\mathcal{T},\mathcal{Z}}[W_2( \widetilde{\pi}_T^L, \pi_T^L)] + 
\mathbb{E}_{\mathcal{Y}}[W_2(\pi_T^L, \widehat{p}_{data}^*)] \\
& \lesssim n^{-\frac{\beta}{d^* + 2\beta}}\log^{\frac{19}{2}}n.
\end{aligned}
$$
The proof is complete.
\end{proof}

\section{Main Result}\label{sec:appG}
In this section, we prove our main result 
(Theorem \ref{thm:main_result}).
\begin{proof}[Proof of Theorem \ref{thm:main_result}]
In our framework, we have the following total error decomposition:
\begin{align*}
&~~~~\mathbb{E}_{\mathcal{X},\mathcal{Y},\mathcal{T},\mathcal{Z}} [W_2(\widehat{\boldsymbol{D}}_{\#}\widetilde{\pi}_T^L, p_{data})] \\
&\leq \mathbb{E}_{\mathcal{X},\mathcal{Y}, \mathcal{T}, \mathcal{Z}} [W_2(\widehat{\boldsymbol{D}}_\#\widetilde{\pi}_T^L, (\widehat{\boldsymbol{D}} \circ \widehat{\boldsymbol{E}})_\#p_{data})] + \mathbb{E}_{\mathcal{Y}} [W_2((\widehat{\boldsymbol{D}} \circ \widehat{\boldsymbol{E}})_\#p_{data}, (\widehat{\boldsymbol{D}} \circ \widehat{\boldsymbol{E}})_\#\widetilde{p}_{data})] \\
&~~~~ + \mathbb{E}_{\mathcal{Y}} [W_2((\widehat{\boldsymbol{D}} \circ \widehat{\boldsymbol{E}})_\#\widetilde{p}_{data}, \widetilde{p}_{data})] + W_2(\widetilde{p}_{data},{p_{data}}) \\
&\leq \gamma_{\boldsymbol{D}}\mathbb{E}_{\mathcal{X},\mathcal{Y},\mathcal{T},\mathcal{Z}} [W_2(\widetilde{\pi}_T^L, \widehat{p}^{*}_{data})] + (\gamma_{\boldsymbol{D}}\gamma_{\boldsymbol{E}}+1)W_2(\widetilde{p}_{data}, p_{data}) + \mathbb{E}_{\mathcal{Y}} [\mathcal{H}(\widehat{\boldsymbol{\boldsymbol{E}}}, \widehat{\boldsymbol{D}})]^{1/2},
\end{align*}
where the second inequality follows from $$\mathbb{E}_{\mathcal{Y}} [W_2((\widehat{\boldsymbol{D}} \circ \widehat{\boldsymbol{E}})_\#\widetilde{p}_{data}, \widetilde{p}_{data})] \leq \mathbb{E}_{\mathcal{Y}} [\mathcal{H}(\widehat{\boldsymbol{E}}, \widehat{\boldsymbol{D}})^{1/2}] \leq \mathbb{E}_{\mathcal{Y}} [\mathcal{H}(\widehat{\boldsymbol{E}}, \widehat{\boldsymbol{D}})]^{1/2}.
$$ 
Therefore, under Assumptions \ref{ass:bounded_support}-\ref{ass: distribution drift}, we obtain
$$    
\mathbb{E}_{\mathcal{X},\mathcal{Y},\mathcal{T},\mathcal{Z}}[W_2(\widehat{\boldsymbol{D}}_{\#}\widetilde{\pi}_T^L, p_{data})] = \widetilde{\mathcal{O}}\left(
n^{-\frac{\beta}{d^* + 2\beta}} + \epsilon_{p_{data},\widetilde{p}_{data}} + \mathcal{M}^{-\frac{1}{2(d + 2)}} + \delta_0^{1/2}
\right).
$$
Moreover, if $\mathcal{M}>n^{\frac{2\beta(d+2)}{d^*+2\beta}}$, then
$$
\mathbb{E}_{\mathcal{X},\mathcal{Y}, \mathcal{T}, \mathcal{Z}}[W_2(\widehat{\boldsymbol{D}}_{\#}\widetilde{\pi}_T^L, p_{data})] = \widetilde{\mathcal{O}}\left(
n^{-\frac{\beta}{d^{*} + 2\beta}} +  \epsilon_{p_{data},\widetilde{p}_{data}} + \delta_0^{1/2}
\right).
$$
The proof is complete.
\end{proof}

%%%%%%%%%%%%%%%%%%%%%%%%%%%%%%%%%%
\section{Auxiliary Lemmas}
\subsection{Several High-Probability Bounds}\label{sec:shb}
Following \cite{oko2023diffusion},
we provide several high-probability bounds in this section.  In the following, 
we denote $q_t(\mathbf{x}):= q(\sqrt{t}\sigma, \mathbf{x})$, and $\sigma_t := \sqrt{t}\sigma$.

\subsubsection{Bounds on $q_t(\mathbf{x})$}

In this section, we give the upper and lower bounds 
on $q_t(\mathbf{x})$.

\begin{lemma}\label{lem:bound_for_density}
For any $\mathbf{x}\in\mathbb{R}^{d^*}$, the following upper and lower bounds on $q_t(\mathbf{x})$ hold: 
\begin{equation}\label{eq:upper_lower_bound_q}
\exp\left(-\frac{d^*(\Vert \mathbf{x} \Vert_{\infty} - 1)_{+}^{2}}{\sigma_t^2}\right) \lesssim q_t(\mathbf{x}) \lesssim \exp\left(-\frac{(\Vert \mathbf{x} \Vert_{\infty} - 1)_{+}^{2}}{2\sigma_t^2}\right).
\end{equation}
\end{lemma}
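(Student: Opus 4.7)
The plan is to exploit the convolution representation of $q_t$, namely
\[
q_t(\mathbf{x}) = \int_{[-1,1]^{d^*}} \widehat{p}^{\,*}_{data}(\mathbf{y})\,\frac{1}{\sigma_t^{d^*}(2\pi)^{d^*/2}}\exp\!\left(-\frac{\|\mathbf{x}-\mathbf{y}\|^2}{2\sigma_t^2}\right)\mathrm{d}\mathbf{y},
\]
together with the uniform bounds $C_l \le \widehat{p}^{\,*}_{data}(\mathbf{y}) \le C_u$ on $[-1,1]^{d^*}$ supplied by Assumption~\ref{ass:bounded_density}. The two inequalities in \eqref{eq:upper_lower_bound_q} will be handled separately; the trivial case is $\|\mathbf{x}\|_\infty \le 1$, where $(\|\mathbf{x}\|_\infty-1)_+ = 0$ and we only need constants on both sides, so throughout the rest we focus on $\|\mathbf{x}\|_\infty > 1$.

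For the upper bound, I would first replace $\widehat{p}^{\,*}_{data}$ by $C_u$ and factorise the Gaussian kernel over coordinates, reducing the problem to bounding each one-dimensional integral
\[
I_i(x_i)\;:=\;\int_{-1}^{1}\frac{1}{\sigma_t\sqrt{2\pi}}\exp\!\left(-\frac{(x_i-y_i)^2}{2\sigma_t^2}\right)\mathrm{d}y_i
\]
by $1$. For the coordinate $i^\ast$ achieving $|x_{i^\ast}| = \|\mathbf{x}\|_\infty$ I would tighten this bound via the standard Mill's-ratio estimate $\mathbb{P}(Z \ge a) \le \frac{\sigma_t}{a\sqrt{2\pi}}\,e^{-a^2/(2\sigma_t^2)}$ applied to $a=\|\mathbf{x}\|_\infty-1$, while using the trivial bound $I_{i^\ast}(x_{i^\ast}) \le 1$ when $\|\mathbf{x}\|_\infty-1 \le 2\sigma_t$. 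Splitting into these two cases, in both regimes one gets $I_{i^\ast}(x_{i^\ast}) \lesssim \exp(-(\|\mathbf{x}\|_\infty-1)^2/(2\sigma_t^2))$, and multiplying by the trivial $1$'s from the remaining coordinates yields the desired upper bound.

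For the lower bound I would replace $\widehat{p}^{\,*}_{data}$ by $C_l$ and pick the coordinate-wise projection $\mathbf{y}^\ast$ of $\mathbf{x}$ onto $[-1,1]^{d^*}$, defined by $y^\ast_i = \mathrm{sign}(x_i)$ if $|x_i|>1$ and $y^\ast_i = x_i$ otherwise, so that $\|\mathbf{x}-\mathbf{y}^\ast\|^2 \le d^*(\|\mathbf{x}\|_\infty-1)_+^2$. Choosing $r = \min\{\sigma_t,1\}$, the hyperrectangle $\prod_i[y_i^\ast - r,\,y_i^\ast + r] \cap [-1,1]^{d^*}$ has volume $\gtrsim r^{d^*}$ (worst case is $\mathbf{y}^\ast$ at a corner). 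On this set the triangle inequality gives $\|\mathbf{x}-\mathbf{y}\|^2 \le 2d^*(\|\mathbf{x}\|_\infty-1)_+^2 + 2d^* r^2$, so restricting the integral to this set produces the factor $\frac{r^{d^*}}{\sigma_t^{d^*}}\exp(-d^*(\|\mathbf{x}\|_\infty-1)_+^2/\sigma_t^2)\exp(-d^* r^2/\sigma_t^2)$. Since $r \le \sigma_t$ and $\sigma_t \le \sigma$ is bounded, both $r^{d^*}/\sigma_t^{d^*}$ and $\exp(-d^* r^2/\sigma_t^2)$ are bounded below by constants depending only on $d^*$ and $\sigma$, and the lower bound follows.

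The main obstacle is the upper bound calculation for $I_{i^\ast}$: one cannot get $\lesssim \exp(-(\|\mathbf{x}\|_\infty-1)^2/(2\sigma_t^2))$ directly from pointwise bounding the exponential on $[-1,1]$ because that leaves an uncompensated $1/\sigma_t$ prefactor from the Gaussian density. The case split (small versus large $\|\mathbf{x}\|_\infty-1$ relative to $\sigma_t$) combined with Mill's ratio is the crucial device that absorbs this prefactor.
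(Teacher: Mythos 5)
Your proposal is correct, and the overall strategy (factor the Gaussian kernel, bound the coordinate achieving $\|\mathbf{x}\|_\infty$ via a Gaussian tail estimate, use the uniform bounds $C_l \le \widehat{p}^*_{data} \le C_u$) matches the paper's. The differences are in the details, and your lower bound is arguably cleaner. For the upper bound, you use Mill's ratio with an explicit case split on whether $\|\mathbf{x}\|_\infty - 1 \lessgtr 2\sigma_t$; the paper instead makes the substitution $z = (x_{i^*}-y_{i^*})/(\sqrt{2}\sigma_t)$ and uses the uniform bound $\int_a^\infty \pi^{-1/2}e^{-z^2}\,\mathrm{d}z \lesssim e^{-a^2}$ for $a \ge 0$, which hides the same case split inside that one inequality; the content is identical. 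For the lower bound the routes genuinely diverge: the paper argues coordinate-by-coordinate, noting each one-dimensional $g(x_i)$ is monotone away from $[-1,1]$, bounding $g(x_i) \ge g(\|\mathbf{x}\|_\infty) \gtrsim \exp(-(\|\mathbf{x}\|_\infty-1)^2/\sigma_t^2)$ by restricting to an interval of length $\mathcal{O}(1/\sigma)$, and multiplying the $d^*$ factors. You instead project $\mathbf{x}$ onto the cube, restrict the joint integral to a hyperrectangle of side $r = \min\{\sigma_t,1\}$ around the projection, and use $\|\mathbf{x}-\mathbf{y}\|^2 \le 2\|\mathbf{x}-\mathbf{y}^*\|^2 + 2d^*r^2$. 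This is the standard "small ball near the closest boundary point" device; it handles $\|\mathbf{x}\|_\infty \le 1$ and $\|\mathbf{x}\|_\infty > 1$ in one stroke, whereas the paper treats those as separate cases (Case I and Case II). What the paper's version buys is a slightly more elementary per-coordinate computation (no need to control the joint distance $\|\mathbf{x}-\mathbf{y}^*\|$); what yours buys is uniformity and the natural appearance of the $d^*$ in the exponent from $\|\mathbf{x}-\mathbf{y}^*\|^2 \le d^*(\|\mathbf{x}\|_\infty-1)_+^2$ rather than from multiplying $d^*$ identical one-dimensional bounds.
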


\begin{proof}
This proof can be divided into two cases: $\mathbf{x}\in [-1, 1]^{d^*}$ and  $\mathbf{x}\notin [-1,1]^{d^*}$.\\
\textbf{
Case I ($\mathbf{x}\in [-1, 1]^{d^*}$): 
}
Given that $\widehat{p}_{data}^{*} \leq C_u$, we have
$$
\begin{aligned}
q_t(\mathbf{x}) &= \int_{\mathbb{R}^{d^*}}\frac{1}{\sigma_t^{d^*}(2\pi)^{d^*/2}} \widehat{p}_{data}^{*}(\mathbf{y})\exp\left(-\frac{\Vert \mathbf{x} - \mathbf{y}\Vert^2}{2\sigma_t^2}\right)\mathrm{d}\mathbf{y}\\
&\leq C_u \int_{\mathbb{R}^{d^*}}\frac{\mathbf{I}_{\{\mathbf{y}\in [-1,1]^{d^*}\}}}{\sigma_t^{d^*}(2\pi)^{d^*/2}} \exp\left(-\frac{\Vert \mathbf{x} - \mathbf{y}\Vert^2}{2\sigma_t^2}\right)\mathrm{d}\mathbf{y}\\
&\leq \frac{C_u 2^{d^*}}{\sigma_t^{d^*}(2\pi)^{d^*/2}}.
\end{aligned}
$$
Additionally, we have
$$
q_t(\mathbf{x}) \leq C_u \int_{\mathbb{R}^{d^*}}\frac{1}{\sigma_t^{d^*}(2\pi)^{d^*/2}} \exp\left(-\frac{\Vert \mathbf{x} - \mathbf{y}\Vert^2}{2\sigma_t^2}\right)\mathrm{d}\mathbf{y} \leq C_u.
$$
Thus, $q_t(\mathbf{x})$ is bounded by $\min\left\{\frac{C_u2^{d^*}}{\sigma_t^{d^*}(2\pi)^{d^*/2}}, C_u\right\}$, which is a constant that depends on $C_u$, $\sigma$, and $d^*$.

The lower bound can be derived as follows:
$$
q_t(\mathbf{x}) \geq C_l\int_{\mathbb{R}^{d^*}}\frac{\mathbf{I}_{\{\mathbf{y}\in[-1,1]^{d^*}\}}}{\sigma_t^{d^*}(2\pi)^{d^*/2}}\exp\left(-\frac{\Vert\mathbf{x}-\mathbf{y}\Vert^2}{2\sigma_t^2}\right)\mathrm{d}\mathbf{y}.
$$
Let $\mathbf{z} = \frac{\mathbf{x} - \mathbf{y}}{\sigma_t}$, then we have
$$
\begin{aligned}
q_t(\mathbf{x}) &\geq C_l\int_{\mathbb{R}^{d^*}}\frac{\mathbf{I}_{\left\{\mathbf{z}\in\left[\frac{\mathbf{x} - 1}{\sigma_t},\frac{\mathbf{x} + 1}{\sigma_t}\right]\right\}}}{(2\pi)^{d^*/2}}\exp\left(-\frac{\Vert\mathbf{z}\Vert^2}{2}\right)\mathrm{d}\mathbf{z} \\
&\geq C_l\int_{\mathbb{R}^{d^*}}\frac{\mathbf{I}_{\left\{\mathbf{z}\in\left[\frac{\mathbf{x} - 1}{\sigma},\frac{\mathbf{x} + 1}{\sigma}\right]\right\}}}{(2\pi)^{d^*/2}}\exp\left(-\frac{\Vert\mathbf{z}\Vert^2}{2}\right)\mathrm{d}\mathbf{z} \\
&\geq \frac{C_l2^{d^*}}{\sigma^{d^*}(2\pi)^{d^*/2}}\exp\left(-\frac{2d^*}{\sigma^2}\right),
\end{aligned}
$$
which is a constant that depends on $C_l$, $\sigma$ and $d^*$.

\noindent
\textbf{
Case II ($\mathbf{x}\notin [-1,1]^{d^*}$): 
}
Let $r = \frac{\Vert \mathbf{x} \Vert_{\infty} - 1}{\sigma_t}$ and $|x_{i^*}| = \Vert \mathbf{x} \Vert_{\infty}$. For $i\neq i^*$, if $x_i > 1$, then
$$
\begin{aligned}
\int_{\mathbb{R}}\frac{\mathbf{I}_{\{y_i\in[-1,1]\}}}{\sigma_t(2\pi)^{1/2}}\exp\left(-\frac{(x_i - y_i)^2}{2\sigma_t^2}\right)\mathrm{d}y_i
&\leq \int_{-1}^{1}\frac{1}{\sigma_t(2\pi)^{1/2}}\exp\left(-\frac{(1-y_i)^2}{2\sigma_t^2}\right)\mathrm{d}y_i \\
&\leq \int_{0}^{\infty}\frac{1}{\sigma_t(2\pi)^{1/2}}\exp\left(-\frac{y_i^2}{2\sigma_t^2}\right)\mathrm{d}y_i \leq \frac{1}{2}. 
\end{aligned}
$$
If $x_i < -1$, then
$$
\begin{aligned}
\int_{\mathbb{R}}\frac{\mathbf{I}_{\{y_i\in[-1,1]\}}}{\sigma_t(2\pi)^{1/2}}\exp\left(-\frac{(x_i - y_i)^2}{2\sigma_t^2}\right)\mathrm{d}y_i
&\leq \int_{-1}^{1}\frac{1}{\sigma_t(2\pi)^{1/2}}\exp\left(-\frac{(-1-y_i)^2}{2\sigma_t^2}\right)\mathrm{d}y_i \\
&\leq \int_{0}^{\infty}\frac{1}{\sigma_t(2\pi)^{1/2}}\exp\left(-\frac{y_i^2}{2\sigma_t^2}\right)\mathrm{d}y_i \leq \frac{1}{2}.
\end{aligned}
$$
Therefore, we have
$$
q_t(\mathbf{x}) \leq \frac{C_u}{2^{d^*-1}}\int_{-1}^{1}\frac{1}{\sigma_t(2\pi)^{1/2}}\exp\left(-\frac{(x_{i^*} - y_{i^*})^2}{2\sigma_t^2}\right)\mathrm{d}y_{i^*}.
$$
Let $z_{i^*} = \frac{x_{i^*} - y_{i^*}}{\sqrt{2}\sigma_t}$, then
$$
q_t(\mathbf{x}) \leq \frac{C_u}{2^{d^*-1}}\int_{\frac{x_{i^*} - 1}{\sqrt{2}\sigma_t}}^{\frac{x_{i^*} + 1}{\sqrt{2}\sigma_t}}\frac{1}{\sqrt{\pi}}\exp(-z_{i^*}^2)\mathrm{d}z_{i^*}.
$$
If $x_{i^*} > 1$, then
$$
q_t(\mathbf{x})\leq \frac{C_u}{2^{d^*-1}}\int_{\frac{\Vert\mathbf{x}\Vert_{\infty} - 1}{\sqrt{2}\sigma_t}}^{\infty}\frac{1}{\sqrt{\pi}}\exp(-z_{i^*}^2)\mathrm{d}z_{i^*}\lesssim \exp\left(-\frac{(\Vert \mathbf{x} \Vert_{\infty} - 1)^2}{2\sigma_t^2}\right).
$$
If $x_i^* < -1$, $x_i^* = -\Vert \mathbf{x}\Vert_{\infty}$, then
$$
\begin{aligned}
q_t(\mathbf{x}) &\leq \frac{C_u}{2^{d^*-1}}\int_{-\frac{x_{i^*} + 1}{\sqrt{2}\sigma_t}}^{-\frac{x_{i^*} - 1}{\sqrt{2}\sigma_t}}\frac{1}{\sqrt{\pi}}\exp(-z_{i^*}^2)\mathrm{d}z_{i^*}\\
&\leq \frac{C_u}{2^{d^*-1}}\int_{\frac{\Vert\mathbf{x}\Vert_{\infty} - 1}{\sqrt{2}\sigma_t}}^{\infty}\frac{1}{\sqrt{\pi}}\exp(-z_{i^*}^2)\mathrm{d}z_{i^*}\\
&\lesssim \exp\left(-\frac{(\Vert \mathbf{x} \Vert_{\infty} - 1)^2}{2\sigma_t^2}\right).
\end{aligned}
$$

On the other hand, for $1\leq i\leq d^*$, let $z_i = \frac{x_i-y_i}{\sqrt{2}\sigma_t}$, then we have
$$
g(x_i) := \int_{-1}^{1}\frac{1}{\sigma_t(2\pi)^{1/2}}\exp\left(-\frac{(x_i - y_i)^2}{2\sigma_t^2}\right)\mathrm{d}y_i = \int_{\frac{x_{i} - 1}{\sqrt{2}\sigma_t}}^{\frac{x_{i} + 1}{\sqrt{2}\sigma_t}}\frac{1}{\sqrt{\pi}}\exp(-z_{i}^2)\mathrm{d}z_{i}.
$$
If $x_i > 1$, then $g(x_i)$ is a decreasing function. Therefore,
\begin{align*}
g(x_i)\geq g(\Vert\mathbf{x}\Vert_{\infty}) &= \int_{\frac{\Vert\mathbf{x}\Vert_{\infty} - 1}{\sqrt{2}\sigma_t}}^{\frac{\Vert\mathbf{x}\Vert_{\infty} + 1}{\sqrt{2}\sigma_t}}\frac{1}{\sqrt{\pi}}\exp(-z_{i}^2)\mathrm{d}z_{i} \\
&\geq \int_{\frac{r}{\sqrt{2}}}^{\frac{r}{\sqrt{2}} + \frac{\sqrt{2}}{\sigma}}\frac{1}{\sqrt{\pi}}\exp(-z_i^2)\mathrm{d}z_i \\
&\geq \frac{\sqrt{2}}{\sigma\sqrt{\pi}}\exp\left(-\left(r^2 + \frac{4}{\sigma^2}\right)\right)\\
&\gtrsim\exp\left(-\frac{(\Vert\mathbf{x}\Vert_{\infty} - 1)^2}{\sigma_t^2}\right).
\end{align*}
If $x_i<-1$, then
$$
g(x_i) = \int_{\frac{-x_{i} - 1}{\sqrt{2}\sigma_t}}^{\frac{-x_{i} + 1}{\sqrt{2}\sigma_t}}\frac{1}{\sqrt{\pi}}\exp(-z_{i}^2)\mathrm{d}z_{i}.
$$
It is easy to check that $g(x_i)$ is a increasing function. Therefore,
$$
g(x_i)\geq g(-\Vert\mathbf{x}\Vert_{\infty}) = \int_{\frac{\Vert\mathbf{x}\Vert_{\infty} - 1}{\sqrt{2}\sigma_t}}^{\frac{\Vert\mathbf{x}\Vert_{\infty} + 1}{\sqrt{2}\sigma_t}}\frac{1}{\sqrt{\pi}}\exp(-z_{i}^2)\mathrm{d}z_{i}
\gtrsim\exp\left(-\frac{(\Vert\mathbf{x}\Vert_{\infty} - 1)^2}{\sigma_t^2}\right).
$$
The above discussion implies that
$$
q_t(\mathbf{x}) \gtrsim \exp\left(-\frac{d^*(\Vert\mathbf{x}\Vert_{\infty} - 1)^2}{\sigma_t^2}\right).
$$

Based on the above discussion of the two cases,
we obtain
$$
\exp\left(-\frac{d^*(\Vert \mathbf{x} \Vert_{\infty} - 1)_{+}^{2}}{\sigma_t^2}\right) \lesssim q_t(\mathbf{x}) \lesssim \exp\left(-\frac{(\Vert \mathbf{x} \Vert_{\infty} - 1)_{+}^{2}}{2\sigma_t^2}\right).
$$
The proof is complete.
\end{proof}

\subsubsection{Bounds on the derivatives of $q_t(\mathbf{x})$ and $\nabla\log q_t(\mathbf{x})$}
In this section, we give the upper bounds on the derivatives of $q_t(\mathbf{x})$ and $\nabla\log q_t(\mathbf{x})$. We first state the following lemma.
\begin{lemma}[Integral Clipping]\label{lem:integral_clipping} Let $\mathbf{x}\in\mathbb{R}^{d^*}$ and $\boldsymbol{\alpha}\in\mathbb{N}^{d^*}$. For any $0 < \epsilon < 1$, there exists a constant $C > 0$ such that
$$
\begin{aligned}
&\Bigg|
\int_{\mathbb{R}^{d^*}}\prod_{i=1}^{d^*}\left(\frac{x_i-y_i}{\sigma_t}\right)^{\alpha_i}\frac{1}{\sigma_t^{d^*}(2\pi)^{d^*/2}}\widehat{p}_{data}^*(\mathbf{y})\exp\left(-\frac{\Vert\mathbf{x}-\mathbf{y}\Vert}{2\sigma_t^2}\right)\mathrm{d}\mathbf{y} \\ & ~~~~~~~~~ -  \int_{A_{\mathbf{x}}}\prod_{i=1}^{d^*}\left(\frac{x_i-y_i}{\sigma_t}\right)^{\alpha_i}\frac{1}{\sigma_t^{d^*}(2\pi)^{d^*/2}}\widehat{p}_{data}^*(\mathbf{y})\exp\left(-\frac{\Vert\mathbf{x}-\mathbf{y}\Vert}{2\sigma_t^2}\right)\mathrm{d}\mathbf{y}
\Bigg| \lesssim \epsilon,
\end{aligned}
$$
where $A_{\mathbf{x}} = \prod_{i=1}^{d^*}a_{i,\mathbf{x}}$ with $a_{i,\mathbf{x}} = [x_i - C\sigma_t\sqrt{\log\epsilon^{-1}}, x_i + C\sigma_t\sqrt{\log\epsilon^{-1}}].$ 
    
\end{lemma}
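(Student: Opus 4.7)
} The plan is to rewrite the difference as an integral over the complement $\mathbb{R}^{d^*}\setminus A_{\mathbf{x}}$, substitute $\mathbf{z} = (\mathbf{x}-\mathbf{y})/\sigma_t$ to turn the exponential factor into a standard Gaussian density, and then invoke Gaussian tail bounds together with Assumption \ref{ass:bounded_density} (which gives $\widehat{p}_{data}^{*}\leq C_u$ on its support, hence $\widehat{p}_{data}^{*}\leq C_u$ everywhere after extension by zero). After the change of variables, the quantity to bound becomes
\begin{equation*}
\int_{\{\mathbf{z}\in\mathbb{R}^{d^*}\,:\,\Vert\mathbf{z}\Vert_\infty > C\sqrt{\log\epsilon^{-1}}\}} \prod_{i=1}^{d^*}|z_i|^{\alpha_i}\cdot \frac{1}{(2\pi)^{d^*/2}}\widehat{p}_{data}^{*}(\mathbf{x}-\sigma_t\mathbf{z})\exp\!\left(-\frac{\Vert\mathbf{z}\Vert^2}{2}\right)\mathrm{d}\mathbf{z},
\end{equation*}
so after bounding $\widehat{p}_{data}^{*}\leq C_u$, it suffices to control the corresponding Gaussian moment integral over the set $\{\Vert\mathbf{z}\Vert_\infty>R\}$ with $R=C\sqrt{\log\epsilon^{-1}}$.

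Next, I would use a union bound to decompose $\{\Vert\mathbf{z}\Vert_\infty>R\}\subseteq\bigcup_{j=1}^{d^*}\{|z_j|>R\}$. For each $j$, since the standard Gaussian is a product measure, the resulting integral factorizes into (i) the full Gaussian moments $\int_{\mathbb{R}}|z_i|^{\alpha_i}e^{-z_i^2/2}\mathrm{d}z_i$ for $i\ne j$, each of which is a finite constant depending only on $\alpha_i$, and (ii) the one-dimensional tail integral $\int_{|z_j|>R}|z_j|^{\alpha_j}e^{-z_j^2/2}\mathrm{d}z_j$. The standard tail estimate gives
\begin{equation*}
\int_{|z|>R}|z|^{\alpha}e^{-z^2/2}\mathrm{d}z \;\lesssim\; R^{\alpha-1}e^{-R^2/2} \qquad (R\geq 1),
\end{equation*}
which follows from integration by parts (or the crude bound $e^{-z^2/2}\leq (z/R)e^{-z^2/2}$ for $z\geq R$).

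Substituting $R=C\sqrt{\log\epsilon^{-1}}$, the tail is bounded by a constant multiple of $(\log\epsilon^{-1})^{(\alpha_j-1)/2}\,\epsilon^{C^2/2}$. Summing over $j=1,\dots,d^*$ and multiplying by $C_u\sigma_t^{-d^*}\cdot\sigma_t^{d^*}=C_u$ (the $\sigma_t$-factors from the change of variables cancel exactly against the $\sigma_t^{-d^*}$ prefactor), the total bound is $\lesssim (\log\epsilon^{-1})^{(\max_i\alpha_i-1)/2}\epsilon^{C^2/2}$. Since $\boldsymbol{\alpha}$ is a fixed multi-index, choosing $C$ sufficiently large (depending only on $\boldsymbol{\alpha}$ and $d^*$) makes the polylogarithmic factor absorbed by a fractional power of $\epsilon$, yielding the claimed bound $\lesssim \epsilon$. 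No significant obstacle is anticipated here: the argument is a routine Gaussian tail estimate, and the only care needed is to track that the constant $C$ can be chosen uniformly in $t$ and $\mathbf{x}$, which is clear because the change-of-variables completely removes the $\sigma_t$ and $\mathbf{x}$ dependence from the relevant tail integral.
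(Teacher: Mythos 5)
Your proposal is correct and follows essentially the same route as the paper's own proof: reduce to the integral over $\mathbb{R}^{d^*}\setminus A_{\mathbf{x}}$, bound the density by $C_u$, union-bound over coordinates, factorize the Gaussian product measure into full one-dimensional moments for $i\neq j$ and a one-dimensional tail integral in the $j$-th coordinate, and take $C$ large enough to swallow the polylogarithmic factor. One small remark: your exponent $\epsilon^{C^2/2}$ from $e^{-R^2/2}$ with $R=C\sqrt{\log\epsilon^{-1}}$ is the correct one; the paper's displayed $\epsilon^{C/2}$ appears to be a typo, and its subsequent choice $C\geq 4$ is consistent with the corrected exponent.
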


\begin{proof}
It follows that
$$
\begin{aligned}
&\Bigg|
\int_{\mathbb{R}^{d^*}\backslash A_{\mathbf{x}}} \prod_{i=1}^{d^*}\left(\frac{x_i-y_i}{\sigma_t}\right)^{\alpha_i}\frac{1}{\sigma_t^{d^*}(2\pi)^{d^*/2}}\widehat{p}_{data}^*(\mathbf{y})\exp\left(-\frac{\Vert\mathbf{x}-\mathbf{y}\Vert}{2\sigma_t^2}\right)\mathrm{d}\mathbf{y}
\Bigg| \\
\leq & \frac{C_u}{\sigma_t^{d^*}(2\pi)^{d^*/2}}\int_{\mathbb{R}^{d^*}\backslash A_{\mathbf{x}}}\prod_{i=1}^{d^*} \left|\frac{x_i-y_i}{\sigma_t}\right|^{\alpha_i} \mathbf{I}_{\{\Vert\mathbf{y}\Vert_\infty\leq 1\}}\exp\left(-\frac{\Vert\mathbf{x}-\mathbf{y}\Vert^2}{2\sigma_t^2}\right)\mathrm{d}\mathbf{y} \\
\leq & \frac{C_u}{\sigma_t^{d^*}(2\pi)^{d^*/2}} \cdot \sum_{j=1}^{d^*}
\int_{\mathbb{R}\times\cdots\times\mathbb{R}\times (\mathbb{R}\backslash a_{j,\mathbf{x}})\times\mathbb{R}\times\cdots\times\mathbb{R}} \prod_{i=1}^{d^*} \left|\frac{x_i-y_i}{\sigma_t}\right|^{\alpha_i} \mathbf{I}_{\{|y_i|\leq 1\}}\exp\left(-\frac{\Vert\mathbf{x}-\mathbf{y}\Vert^2}{2\sigma_t^2}\right)\mathrm{d}\mathbf{y} \\
\leq & C_u\cdot\sum_{j=1}^{d^*} \Bigg[ \left(\prod_{i=1, i\neq j}^{d^*} \frac{1}{\sigma_t\sqrt{2\pi}} \int_{\mathbb{R}}\left|\frac{x_i-y_i}{\sigma_t}\right|^{\alpha_i} \mathbf{I}_{\{|y_i|\leq 1\}} \exp\left(-\frac{(x_i-y_i)^2}{2\sigma_t^2}\right) \mathrm{d}y_i
\right) \\
& ~~~~~~~~~~~~~~~~~~~~~~~~~~ \cdot  \frac{1}{\sigma_t\sqrt{2\pi}} \int_{\mathbb{R}\backslash a_{j,\mathbf{x}}}\left|\frac{x_j-y_j}{\sigma_t}\right|^{\alpha_j} \mathbf{I}_{\{|y_j|\leq 1\}} \exp\left(-\frac{(x_j-y_j)^2}{2\sigma_t^2}\right) \mathrm{d}y_j \Bigg].
\end{aligned}
$$
For $i \neq j$, let $z_i = \frac{x_i - y_i}{\sigma_t}$.
Then, it holds that
%the term 
$$
\frac{1}{\sigma_t\sqrt{2\pi}}\int_{\mathbb{R}}\left|\frac{x_i-y_i}{\sigma_t}\right|^{\alpha_i} \mathbf{I}_{\{|y_i|\leq 1\}} \exp\left(-\frac{(x_i-y_i)^2}{2\sigma_t^2}\right) \mathrm{d}y_i \leq \frac{1}{\sqrt{2\pi}}\int_{\mathbb{R}} |z_i|^{\alpha_i}\exp\left(-\frac{z_i^2}{2}\right)\mathrm{d}z_i,
$$
which is upper bounded by $\mathcal{O}(1)$. 
For $i=j$, we have
%the term
$$
\begin{aligned}
&~\frac{1}{\sigma_t\sqrt{2\pi}}\int_{\mathbb{R}\backslash a_{j,\mathbf{x}}}\left|\frac{x_j-y_j}{\sigma_t}\right|^{\alpha_j} \mathbf{I}_{\{|y_j|\leq 1\}} \exp\left(-\frac{(x_j-y_j)^2}{2\sigma_t^2}\right) \mathrm{d}y_j \\ \leq &~\frac{1}{\sqrt{2\pi}}\int_{|z_j|\geq C\sqrt{\log\epsilon^{-1}}} |z_j|^{\alpha_j}\exp\left(-\frac{z_j^2}{2}\right)\mathrm{d}z_j \\
\leq &~ \frac{2}{\sqrt{2\pi}}\int_{z_j\geq C\sqrt{\log\epsilon^{-1}}} z_j^{\alpha_j}\exp\left(-\frac{z_j^2}{2}\right)\mathrm{d}z_j \\
\lesssim &~ \epsilon^{\frac{C}{2}}(\log\epsilon^{-1})^{\frac{\alpha_j}{2}}.
\end{aligned}
$$
Taking $C \geq 4$, we obtain
$$
\frac{1}{\sigma_t\sqrt{2\pi}}\int_{\mathbb{R}\backslash a_{j,\mathbf{x}}}\left|\frac{x_j-y_j}{\sigma_t}\right|^{\alpha_j} \mathbf{I}_{\{|y_j|\leq 1\}} \exp\left(-\frac{(x_j-y_j)^2}{2\sigma_t^2}\right) \mathrm{d}y_j \lesssim \epsilon,
$$
which implies that
$$
\Bigg|
\int_{\mathbb{R}^{d^*}\backslash A_{\mathbf{x}}} \prod_{i=1}^{d^*}\left(\frac{x_i-y_i}{\sigma_t}\right)^{\alpha_i}\frac{1}{\sigma_t^{d^*}(2\pi)^{d^*/2}}\widehat{p}_{data}^*(\mathbf{y})\exp\left(-\frac{\Vert\mathbf{x}-\mathbf{y}\Vert}{2\sigma_t^2}\right)\mathrm{d}\mathbf{y}
\Bigg| \lesssim \sum_{j=1}^{d^*} (\mathcal{O}(1))^{d^*-1}\cdot\epsilon \lesssim \epsilon.
$$
The proof is complete.
\end{proof}

\begin{lemma}[Boundedness of Derivatives]\label{lem:derivatives_boundness}
For any $k\in\mathbb{N}^{+}$, the following upper bounds hold:
\begin{equation}\label{eq:derivative_q}
|\partial_{x_{i_1}}\partial_{x_{i_2}}\cdots\partial_{x_{i_k}}q_t(\mathbf{x})| \lesssim \frac{1}{\sigma_t^k},
\end{equation}
and 
\begin{equation}\label{eq:bound_score}
\Vert \nabla\log q_t(\mathbf{x}) \Vert \lesssim \frac{1}{\sigma_t}\cdot\left(\frac{(\Vert\mathbf{x}\Vert_{\infty} - 1)_{+}}{\sigma_t} \vee 1\right).
\end{equation}
Moreover, we have 
\begin{equation}\label{eq:derivative_x_score}
\Vert\partial_{x_i}\nabla\log q_t(\mathbf{x})\Vert \lesssim \frac{1}{\sigma_t^2}\cdot\left(\frac{(\Vert\mathbf{x}\Vert_{\infty} - 1)_{+}^2}{\sigma_t^2} \vee 1\right), ~~~ 1\leq i\leq d^*,
\end{equation}
and 
\begin{equation}\label{eq:derivative_t_score}
\Vert\partial_t\nabla\log q_t(\mathbf{x})\Vert \lesssim \frac{\partial_t\sigma_t}{\sigma_t^2}\cdot \left(\frac{(\Vert\mathbf{x}\Vert_{\infty} - 1)_{+}^2}{\sigma_t^2} \vee 1\right)^{\frac{3}{2}}.
\end{equation}
\end{lemma}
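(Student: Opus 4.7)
The plan is to differentiate the convolution $q_t(\mathbf{x}) = \int\widehat{p}_{data}^{*}(\mathbf{y})\Phi_{\sigma_t}(\mathbf{x}-\mathbf{y})\,\mathrm{d}\mathbf{y}$ under the integral sign, combine with the Hermite-polynomial identity for derivatives of a Gaussian and with Tweedie's formula $\nabla\log q_t(\mathbf{x}) = \sigma_t^{-2}(\mathbb{E}[Y\mid X{=}\mathbf{x}]-\mathbf{x})$ (with $Y\sim\widehat{p}_{data}^{*}$ and $X=Y+\sigma_t Z$), and then use the pointwise bounds on $q_t$ from Lemma \ref{lem:bound_for_density} to control the ratios that appear. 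I plan to handle \eqref{eq:derivative_q} directly, \eqref{eq:bound_score}--\eqref{eq:derivative_x_score} via case analysis on $\|\mathbf{x}\|_\infty$, and \eqref{eq:derivative_t_score} via the heat equation for $q_t$.

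For \eqref{eq:derivative_q}, the key identity is $\partial^{\boldsymbol{\alpha}}_{\mathbf{x}}\Phi_{\sigma_t}(\mathbf{x}-\mathbf{y}) = (-1)^{|\boldsymbol{\alpha}|}\sigma_t^{-|\boldsymbol{\alpha}|} H_{\boldsymbol{\alpha}}((\mathbf{x}-\mathbf{y})/\sigma_t)\Phi_{\sigma_t}(\mathbf{x}-\mathbf{y})$, where $H_{\boldsymbol{\alpha}}$ is a product of one-dimensional Hermite polynomials. Together with the change of variables $\mathbf{z}=(\mathbf{x}-\mathbf{y})/\sigma_t$ and $\widehat{p}_{data}^{*}\leq C_u$, this yields $|\partial_{x_{i_1}}\cdots\partial_{x_{i_k}}q_t(\mathbf{x})|\leq C_u\sigma_t^{-k}\int|H_{\boldsymbol{\alpha}}(\mathbf{z})|\Phi_1(\mathbf{z})\,\mathrm{d}\mathbf{z}\lesssim\sigma_t^{-k}$, with the last integral being an absolute constant depending only on $k$ and $d^{*}$.

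For \eqref{eq:bound_score}, I would split on whether $\|\mathbf{x}\|_\infty\leq 1$ or $\|\mathbf{x}\|_\infty>1$. In the inner case, \eqref{eq:derivative_q} with $k=1$ gives $\|\nabla q_t(\mathbf{x})\|\lesssim\sigma_t^{-1}$ and Lemma \ref{lem:bound_for_density} gives $q_t(\mathbf{x})\gtrsim 1$. In the outer case I would mimic Case II of the proof of Lemma \ref{lem:bound_for_density}: isolate the extremal coordinate $i^{*}$ with $|x_{i^{*}}|=\|\mathbf{x}\|_\infty$, factor the Gaussian kernel across coordinates so that the other coordinates contribute $\mathcal{O}(1)$, and estimate the remaining one-dimensional integral in that coordinate by a Mills-ratio bound; combined with the lower bound for $q_t$ this delivers the prefactor $\sigma_t^{-1}\bigl[(\|\mathbf{x}\|_\infty-1)_+/\sigma_t\vee 1\bigr]$. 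The bound \eqref{eq:derivative_x_score} then follows by differentiating $\nabla\log q_t=\nabla q_t/q_t$ once more and controlling the two resulting terms using \eqref{eq:derivative_q} up to $k=2$, the bound \eqref{eq:bound_score}, and Lemma \ref{lem:bound_for_density}; each power of $(\|\mathbf{x}\|_\infty-1)_+/\sigma_t$ in the outer regime comes from one ratio $\nabla q_t/q_t$. For \eqref{eq:derivative_t_score}, I would use the heat equation $\partial_t q_t=(\sigma^2/2)\Delta_{\mathbf{x}}q_t$ (immediate from $\partial_{\sigma_t^2}\Phi_{\sigma_t}=\tfrac{1}{2}\Delta\Phi_{\sigma_t}$) to rewrite $\partial_t\nabla\log q_t$ as $(\sigma^2/2)[\nabla\Delta q_t/q_t - (\Delta q_t)(\nabla q_t)/q_t^2]$, and bound the two pieces with \eqref{eq:derivative_q} up to $k=3$ together with the score estimates just proved; the prefactor $\partial_t\sigma_t=\sigma^2/(2\sigma_t)$ converts the overall $\sigma^2$ into the form stated in the lemma.

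The main obstacle will be the borderline regime $1<\|\mathbf{x}\|_\infty\leq 1+\sigma_t$: there the crude Tweedie bound $|x_i-\mathbb{E}[Y_i\mid X=\mathbf{x}]|\leq|x_i|+1$ is too lossy to yield $\|\nabla\log q_t\|\lesssim \sigma_t^{-1}$, while Lemma \ref{lem:bound_for_density}'s lower bound on $q_t$ already begins to degrade. I plan to resolve this by a direct truncated-Gaussian computation on the extremal coordinate $i^{*}$: the posterior of $Y_{i^{*}}$ given $X=\mathbf{x}$ is a $\mathcal{N}(x_{i^{*}},\sigma_t^2)$-law restricted to $[-1,1]$ and reweighted by $\widehat{p}_{data}^{*}\in[C_l,C_u]$, and the Mills-ratio estimate shows that its conditional mean lies within $\mathcal{O}(\sigma_t)$ of the projection of $x_{i^{*}}$ onto $[-1,1]$ uniformly over such reweightings, producing exactly the cancellation that keeps the score estimate at $\sigma_t^{-1}$ throughout this regime and propagates to \eqref{eq:derivative_x_score}--\eqref{eq:derivative_t_score}.
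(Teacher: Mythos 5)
Your treatment of \eqref{eq:derivative_q} is fine and essentially the paper's (Hermite polynomials versus an explicit multi-index expansion, same content). For \eqref{eq:bound_score} you take a genuinely different route: the paper avoids any case split by working with the unified conditional-moment ratio
\[
\frac{\int\prod_i\bigl(\tfrac{x_i-y_i}{\sigma_t}\bigr)^{\alpha_i}\widehat p_{data}^*(\mathbf y)\,K_t(\mathbf x,\mathbf y)\,\mathrm d\mathbf y}{\int\widehat p_{data}^*(\mathbf y)K_t(\mathbf x,\mathbf y)\,\mathrm d\mathbf y},
\]
applying Lemma~\ref{lem:integral_clipping} to get a bound of the form $(\log\epsilon^{-1})^{|\boldsymbol{\alpha}|/2}$ whenever $q_t(\mathbf x)\geq\epsilon$, and then substituting $\epsilon=C_{l,1}\exp(-d^*(\|\mathbf x\|_\infty-1)_+^2/\sigma_t^2)$ from Lemma~\ref{lem:bound_for_density}. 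Your Tweedie/Mills-ratio route for the score, including the care in the regime $1<\|\mathbf x\|_\infty\leq 1+\sigma_t$, is a workable alternative provided you first integrate out the non-extremal coordinates and use $C_l\leq\widehat p_{data}^*\leq C_u$ to reduce to a one-dimensional truncated Gaussian up to a fixed multiplicative constant.

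However, for \eqref{eq:derivative_x_score} and \eqref{eq:derivative_t_score} as written there is a genuine gap. After differentiating once more you obtain, among others, the term $\partial_{x_i}\nabla q_t/q_t$ (and, via the heat equation, $\nabla\Delta q_t/q_t$ and $\Delta q_t/q_t$). You propose to bound these using ``\eqref{eq:derivative_q} up to $k=2$ (resp.~$k=3$) and Lemma~\ref{lem:bound_for_density}'', i.e.\ $|\partial^k q_t|\lesssim\sigma_t^{-k}$ in the numerator together with the pointwise lower bound $q_t(\mathbf x)\gtrsim\exp(-d^*(\|\mathbf x\|_\infty-1)_+^2/\sigma_t^2)$ in the denominator. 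That combination yields $\sigma_t^{-k}\exp(d^*r^2)$ with $r=(\|\mathbf x\|_\infty-1)_+/\sigma_t$, which is exponentially worse than the target $\sigma_t^{-k}(r^2\vee 1)^{k/2}$; the cancellation between numerator and denominator is precisely what is lost by bounding them separately, and the heuristic that ``each power of $r$ comes from one ratio $\nabla q_t/q_t$'' only covers the cross term $(\nabla q_t/q_t)(\partial_{x_i}q_t/q_t)$, not the pure second-derivative ratio. The fix is to apply your conditional-moment (Mills-ratio) argument to those ratios as well --- $\partial^k q_t/q_t$ equals $\sigma_t^{-k}$ times a posterior expectation of a degree-$k$ polynomial in $z=(\mathbf x-\mathbf y)/\sigma_t$ over $\mathbf y\in[-1,1]^{d^*}$, which is $\mathcal O((r\vee 1)^k)$ --- or equivalently to apply Lemma~\ref{lem:integral_clipping} to these higher-moment numerators exactly as the paper does through \eqref{eq:eq_unified}. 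Without this, the outer-region bounds for \eqref{eq:derivative_x_score}--\eqref{eq:derivative_t_score} do not follow from the listed tools.
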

%%%%%
\begin{proof}
We first prove \eqref{eq:derivative_q}. Let $f_1(\mathbf{x}) = q_t(\mathbf{x})$. For a multi-index $\boldsymbol{\alpha}\in\mathbb{N}^{d^*}$, we denote $f_1^{(\boldsymbol{\alpha})}(\mathbf{x}):= \partial_{x_1}^{\alpha_1}\partial_{x_2}^{\alpha_2}\cdots\partial_{x_{d^*}}^{\alpha_{d^*}}f_1(\mathbf{x})$. For $\boldsymbol{\alpha}\in\mathbb{N}^{d^*}$, we define $B_{\boldsymbol{\alpha}}:=\{\mathbf{u}\in\mathbb{N}^{d^*}|u_i\leq \alpha_i ~ (i=1,2,\cdots,d^*)\}$. Then, it holds that 
$$
\partial_{x_1}^{\alpha_1}\partial_{x_2}^{\alpha_2}\cdots\partial_{x_{d^*}}^{\alpha_{d^*}} e^{-\Vert\mathbf{x}\Vert^2/2} = \sum_{\mathbf{u}\in B_{\boldsymbol{\alpha}}}C_{\mathbf{u}}\partial_{x_1}^{u_1}\partial_{x_2}^{u_2}\cdots\partial_{x_{d^*}}^{u_{d^*}} e^{-\Vert\mathbf{x}\Vert^2/2}
$$
with some constants $C_{\mathbf{u}}$. Then, we can write $f_1^{(\mathbf{s})}(\mathbf{x})$ as
$$
f_1^{(\boldsymbol{\alpha})}(\mathbf{x}) = \frac{1}{\sigma_t^{\sum_{i=1}^{d^*}\alpha_i}}\cdot\sum_{\mathbf{u}\in B_{\boldsymbol{\alpha}}}C_{\mathbf{u}}\int\prod_{i=1}^{d^*}\left(\frac{x_i-y_i}{\sigma_t}\right)^{u_i}\frac{\widehat{p}_{data}^*(\mathbf{y})}{\sigma_t^{d^*}(2\pi)^{d^*/2}}\exp\left(-\frac{\Vert\mathbf{x}-\mathbf{y}\Vert^2}{2\sigma_t^2}\right)\mathrm{d}\mathbf{y}.
$$
Taking $\alpha_{i_1},\cdots,\alpha_{i_k}=1$ and the others as 0, i.e, $\sum_{i=1}^{d^*}\alpha_i = k$, and since $\widehat{p}_{data}^*(\mathbf{y})\leq C_u$, then the term 
$$
\sum_{\mathbf{u}\in B_{\boldsymbol{\alpha}}}C_{\mathbf{u}}\int\prod_{i=1}^{d^*}\left|\frac{x_i-y_i}{\sigma_t}\right|^{u_i}\frac{\widehat{p}_{data}^*(\mathbf{y})}{\sigma_t^{d^*}(2\pi)^{d^*/2}}\exp\left(-\frac{\Vert\mathbf{x}-\mathbf{y}\Vert^2}{2\sigma_t^2}\right)\mathrm{d}\mathbf{y}
$$
is bounded by a constant, which implies that
$$
|\partial_{x_{i_1}}\partial_{x_{i_2}}\cdots\partial_{x_{i_k}}q_t(\mathbf{x})| \lesssim \frac{1}{\sigma_t^k}.
$$

Next, we prove \eqref{eq:bound_score} and \eqref{eq:derivative_x_score}. For convenience, we focus on the first coordinate of $\nabla\log q_t(\mathbf{x})$, and all  other coordinates of $\nabla\log q_t(\mathbf{x})$ are bounded in the same manner. Let $f_2(\mathbf{x}):= [\sigma_t\nabla q_t(\mathbf{x})]_1$. 
Then, we have
$$
[\nabla\log q_t(\mathbf{x})]_1 = \frac{1}{\sigma_t}\cdot\frac{f_2(\mathbf{x})}{f_1(\mathbf{x})}, ~ [\partial_{x_i}\nabla\log q_t(\mathbf{x})]_1 = \frac{1}{\sigma_t}\cdot\left(\frac{\partial_{x_i}f_2(\mathbf{x})}{f_1(\mathbf{x})} - \frac{f_2(\mathbf{x})(\partial_{x_i}f_1(\mathbf{x}))}{f_1^2(\mathbf{x})}\right).
$$
Moreover,
$$
\frac{f_2(\mathbf{x})}{f_1(\mathbf{x})} = \frac{
-\int\left(\frac{x_1-y_1}{\sigma_t}\right)\cdot\frac{1}{\sigma_t^{d^*}(2\pi)^{d^*/2}}\widehat{p}_{data}^*(\mathbf{y})\exp\left(-\frac{\Vert\mathbf{x}-\mathbf{y}\Vert}{2\sigma_t^2}\right)\mathrm{d}\mathbf{y}}
{\int\frac{1}{\sigma_t^{d^*}(2\pi)^{d^*/2}}\widehat{p}_{data}^*(\mathbf{y})\exp\left(-\frac{\Vert\mathbf{x}-\mathbf{y}\Vert}{2\sigma_t^2}\right)\mathrm{d}\mathbf{y}},
$$
$$
\frac{\partial_{x_i}f_1(\mathbf{x})}{f_1(\mathbf{x})} = \frac{1}{\sigma_t}\cdot\frac{
-\int\left(\frac{x_i-y_i}{\sigma_t}\right)\cdot\frac{1}{\sigma_t^{d^*}(2\pi)^{d^*/2}}\widehat{p}_{data}^*(\mathbf{y})\exp\left(-\frac{\Vert\mathbf{x}-\mathbf{y}\Vert}{2\sigma_t^2}\right)\mathrm{d}\mathbf{y}}
{\int\frac{1}{\sigma_t^{d^*}(2\pi)^{d^*/2}}\widehat{p}_{data}^*(\mathbf{y})\exp\left(-\frac{\Vert\mathbf{x}-\mathbf{y}\Vert}{2\sigma_t^2}\right)\mathrm{d}\mathbf{y}},
$$
$$
\frac{\partial_{x_i}f_2(\mathbf{x})}{f_1(\mathbf{x})} = -\frac{1}{\sigma_t} \cdot \frac{
-\int \left(\mathbf{I}_{\{i=1\}} - \frac{x_1-y_1}{\sigma_t}\cdot\frac{x_i-y_i}{\sigma_t}\right)\cdot\frac{1}{\sigma_t^{d^*}(2\pi)^{d^*/2}}\widehat{p}_{data}^*(\mathbf{y})\exp\left(-\frac{\Vert\mathbf{x}-\mathbf{y}\Vert}{2\sigma_t^2}\right)\mathrm{d}\mathbf{y}}
{\int\frac{1}{\sigma_t^{d^*}(2\pi)^{d^*/2}}\widehat{p}_{data}^*(\mathbf{y})\exp\left(-\frac{\Vert\mathbf{x}-\mathbf{y}\Vert}{2\sigma_t^2}\right)\mathrm{d}\mathbf{y}}.
$$
The integral terms in the three equations above can be expressed in the following unified form:
\begin{equation}\label{eq:eq_unified}
\frac{
\int\prod_{i=1}^{d^*}\left(\frac{x_i-y_i}{\sigma_t}\right)^{\alpha_i}\frac{1}{\sigma_t^{d^*}(2\pi)^{d^*/2}}\widehat{p}_{data}^*(\mathbf{y})\exp\left(-\frac{\Vert\mathbf{x}-\mathbf{y}\Vert}{2\sigma_t^2}\right)\mathrm{d}\mathbf{y}}
{\int\frac{1}{\sigma_t^{d^*}(2\pi)^{d^*/2}}\widehat{p}_{data}^*(\mathbf{y})\exp\left(-\frac{\Vert\mathbf{x}-\mathbf{y}\Vert}{2\sigma_t^2}\right)\mathrm{d}\mathbf{y}}
\end{equation}
with $\sum_{i=1}^{d^*}\alpha_i\leq 2$.
According to Lemma \ref{lem:integral_clipping}, for any $0 < \epsilon < 1$, there exists a constant $C > 0$ such that
$$
\begin{aligned}
&\Bigg|
\int_{\mathbb{R}^{d^*}}\prod_{i=1}^{d^*}\left(\frac{x_i-y_i}{\sigma_t}\right)^{\alpha_i}\frac{1}{\sigma_t^{d^*}(2\pi)^{d^*/2}}\widehat{p}_{data}^*(\mathbf{y})\exp\left(-\frac{\Vert\mathbf{x}-\mathbf{y}\Vert}{2\sigma_t^2}\right)\mathrm{d}\mathbf{y} \\ & ~~~~~~~~~ -  \int_{A_{\mathbf{x}}}\prod_{i=1}^{d^*}\left(\frac{x_i-y_i}{\sigma_t}\right)^{\alpha_i}\frac{1}{\sigma_t^{d^*}(2\pi)^{d^*/2}}\widehat{p}_{data}^*(\mathbf{y})\exp\left(-\frac{\Vert\mathbf{x}-\mathbf{y}\Vert}{2\sigma_t^2}\right)\mathrm{d}\mathbf{y}
\Bigg| \lesssim \epsilon,
\end{aligned}
$$
where $A_{\mathbf{x}} = \prod_{i=1}^{d^*}a_{i,\mathbf{x}}$ with $a_{i,\mathbf{x}} = [x_i - C\sigma_t\sqrt{\log\epsilon^{-1}}, x_i + C\sigma_t\sqrt{\log\epsilon^{-1}}].$ Therefore, when $q_t(\mathbf{x})\geq \epsilon$,
$$
\begin{aligned}
|\eqref{eq:eq_unified}| &\leq \frac{
\int_{A_{\mathbf{x}}}\prod_{i=1}^{d^*}\left|\frac{x_i-y_i}{\sigma_t}\right|^{\alpha_i}\frac{1}{\sigma_t^{d^*}(2\pi)^{d^*/2}}\widehat{p}_{data}^*(\mathbf{y})\exp\left(-\frac{\Vert\mathbf{x}-\mathbf{y}\Vert}{2\sigma_t^2}\right)\mathrm{d}\mathbf{y}}
{\int_{A_{\mathbf{x}}}\frac{1}{\sigma_t^{d^*}(2\pi)^{d^*/2}}\widehat{p}_{data}^*(\mathbf{y})\exp\left(-\frac{\Vert\mathbf{x}-\mathbf{y}\Vert}{2\sigma_t^2}\right)\mathrm{d}\mathbf{y}} + \mathcal{O}(1)\\
&\leq\max_{\mathbf{y}\in A_{\mathbf{x}}}\left[\prod_{i=1}^{d^*}\left|\frac{x_i-y_i}{\sigma_t}\right|^{\alpha_i}\right] + \mathcal{O}(1)  \\
&\leq\left(C^2\log\epsilon^{-1}\right)^{\frac{\sum_{i=1}^{d^*}\alpha_i}{2}} + \mathcal{O}(1) \\
&\lesssim \left(\log\epsilon^{-1}\right)^{\frac{\sum_{i=1}^{d^*}\alpha_i}{2}},
\end{aligned}
$$
which implies that
$$
\left| \frac{f_2(\mathbf{x})}{f_1(\mathbf{x})} \right| \lesssim \sqrt{\log\epsilon^{-1}},~
\left| \frac{\partial_{x_i}f_1(\mathbf{x})}{f_1(\mathbf{x})} \right| \lesssim \frac{\sqrt{\log\epsilon^{-1}}}{\sigma_t},~
\left| \frac{\partial_{x_i}f_2(\mathbf{x})}{f_1(\mathbf{x})} \right| \lesssim \frac{\log\epsilon^{-1}}{\sigma_t}.
$$
Then, we obtain
$$
\Vert\nabla\log q_t(\mathbf{x})\Vert \lesssim \frac{\sqrt{\log\epsilon^{-1}}}{\sigma_t}, ~ \Vert\partial_{x_i}\nabla\log q_t(\mathbf{x})\Vert \lesssim \frac{\log\epsilon^{-1}}{\sigma_t^2}.
$$
By Lemma \ref{lem:bound_for_density}, there exists a constant $0 < C_{l,1} < 1$ such that $q_t(\mathbf{x}) \geq C_{l,1}\exp\left(-\frac{d^*(\Vert\Vert_{\infty}-1)_{+}^2}{\sigma_t^2}\right)$.
Replacing $\epsilon$ with $C_{l,1}\exp\left(-\frac{d^*(\Vert\Vert_{\infty}-1)_{+}^2}{\sigma_t^2}\right)$, we have
$$
\Vert \nabla\log q_t(\mathbf{x}) \Vert \lesssim \frac{1}{\sigma_t}\cdot\left(\frac{(\Vert\mathbf{x}\Vert_{\infty} - 1)_{+}}{\sigma_t} \vee 1\right)
$$
and
$$
\Vert\partial_{x_i}\nabla\log q_t(\mathbf{x})\Vert \lesssim \frac{1}{\sigma_t^2}\cdot\left(\frac{(\Vert\mathbf{x}\Vert_{\infty} - 1)_{+}^2}{\sigma_t^2} \vee 1\right), ~~~ 1\leq i\leq d^*.
$$

Finally, we prove \eqref{eq:derivative_t_score}. By a simple calculation, we have
$$
\begin{aligned}
&\partial_t[\nabla\log q_t(\mathbf{x})]_1\\
&= \partial_t\left(\frac{1}{\sigma_t}\cdot\frac{f_2(\mathbf{x})}{f_1(\mathbf{x})}\right) = \left(\partial_t\frac{1}{\sigma_t}\right)\frac{f_2(\mathbf{x})}{f_1(\mathbf{x})} - \frac{1}{\sigma_t} \cdot \frac{\partial_t f_1(\mathbf{x})}{f_1(\mathbf{x})} \cdot \frac{f_2(\mathbf{x})}{f_1(\mathbf{x})} + \frac{1}{\sigma_t} \cdot \frac{\partial_t f_2(\mathbf{x})}{f_1(\mathbf{x})}\\
&= -\frac{\partial_t\sigma_t}{\sigma_t}\cdot[\nabla\log q_t(\mathbf{x})]_1 - \frac{1}{\sigma_t}\cdot
\frac{
\int\frac{\partial_t\sigma_t\left(\Vert\mathbf{x} - \mathbf{y}\Vert^2\sigma_t^{-3}-d\sigma_t^{-1}\right)}{\sigma_t^{d^*}(2\pi)^{d^*/2}}\widehat{p}_{data}^*(\mathbf{y})\exp\left(-\frac{\Vert\mathbf{x}-\mathbf{y}\Vert}{2\sigma_t^2}\right)\mathrm{d}\mathbf{y}}
{\int\frac{1}{\sigma_t^{d^*}(2\pi)^{d^*/2}}\widehat{p}_{data}^*(\mathbf{y})\exp\left(-\frac{\Vert\mathbf{x}-\mathbf{y}\Vert}{2\sigma_t^2}\right)\mathrm{d}\mathbf{y}} \cdot \frac{f_2(\mathbf{x})}{f_1(\mathbf{x})} \\
&~~~-\frac{1}{\sigma_t} \cdot \frac{
\int\frac{1}{\sigma_t^{d^*+1}(2\pi)^{d^*/2}}(x_1-y_1)\widehat{p}_{data}^*(\mathbf{y})\exp\left(-\frac{\Vert\mathbf{x}-\mathbf{y}\Vert}{2\sigma_t^2}\right)\partial_t\sigma_t[\Vert\mathbf{x} - \mathbf{y}\Vert^2\sigma_t^{-3} - (d^* + 1)\sigma_t^{-1}]\mathrm{d}\mathbf{y}}
{\int\frac{1}{\sigma_t^{d^*}(2\pi)^{d^*/2}}\widehat{p}_{data}^*(\mathbf{y})\exp\left(-\frac{\Vert\mathbf{x}-\mathbf{y}\Vert}{2\sigma_t^2}\right)\mathrm{d}\mathbf{y}}.
\end{aligned}
$$
According to \eqref{eq:eq_unified}, we obtain
$$
\Vert\partial_t\nabla\log q_t(\mathbf{x})\Vert \lesssim \frac{\partial_t\sigma_t}{\sigma_t^2}\cdot \left(\frac{(\Vert\mathbf{x}\Vert_{\infty} - 1)_{+}^2}{\sigma_t^2} \vee 1\right)^{\frac{3}{2}}.
$$
The proof is complete.
\end{proof}

\begin{lemma}[Error Bounds for Clipping]\label{lem:bound_for_clipping} Let $0 < \epsilon < 1$. For all $0 < t \leq 1$, there exists a constant $C > 0$ such that 
\begin{equation}\label{eq:clipping_bound_1}
\int_{\Vert\mathbf{x}\Vert_{\infty}\geq 1 + C\sigma_t\sqrt{\log\epsilon^{-1}}} q_t(\mathbf{x})\Vert\nabla\log q_t(\mathbf{x})\Vert^2\mathrm{d}\mathbf{x} \lesssim \frac{\epsilon}{\sigma_t},
\end{equation}

\begin{equation}\label{eq:clipping_bound_2}
\int_{\Vert\mathbf{x}\Vert_{\infty}\geq 1 + C\sigma_t\sqrt{\log\epsilon^{-1}}} q_t(\mathbf{x})\mathrm{d}\mathbf{x} \lesssim \sigma_t\epsilon.
\end{equation}
Moreover, for $\Vert\mathbf{x}\Vert_{\infty} \leq 1 + C\sigma_t\sqrt{\log\epsilon^{-1}}$, it holds that
\begin{equation}\label{eq:clipping_bound_3}
\int_{\Vert\mathbf{x}\Vert_{\infty}\leq 1 + C\sigma_t\sqrt{\log\epsilon^{-1}}} q_t(\mathbf{x})\mathbf{I}_{\{q_t(\mathbf{x})\leq\epsilon\}}\Vert\nabla\log q_t(\mathbf{x})\Vert^2\mathrm{d}\mathbf{x} \lesssim \frac{\epsilon}{\sigma_t^2}\cdot(\log\epsilon^{-1})^{\frac{d^*+2}{2}},
\end{equation}

\begin{equation}\label{eq:clipping_bound_4}
\int_{\Vert\mathbf{x}\Vert_{\infty}\leq 1 + C\sigma_t\sqrt{\log\epsilon^{-1}}} q_t(\mathbf{x})\mathbf{I}_{\{q_t(\mathbf{x})\leq\epsilon\}}\mathrm{d}\mathbf{x} \lesssim \epsilon\cdot(\log\epsilon^{-1})^{\frac{d^*}{2}}.
\end{equation}
\end{lemma}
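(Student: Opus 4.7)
The plan is to derive all four bounds from two ingredients already at hand: the pointwise decay of $q_t$ from Lemma \ref{lem:bound_for_density}, namely $q_t(\mathbf{x})\lesssim\exp(-(\Vert\mathbf{x}\Vert_\infty-1)_+^2/(2\sigma_t^2))$, and the score bound $\Vert\nabla\log q_t(\mathbf{x})\Vert\lesssim\sigma_t^{-1}(((\Vert\mathbf{x}\Vert_\infty-1)_+/\sigma_t)\vee 1)$ from Lemma \ref{lem:derivatives_boundness}. The geometric ingredient is that $\{\mathbf{x}:\Vert\mathbf{x}\Vert_\infty\le 1+r\}=[-1-r,1+r]^{d^*}$ has volume $(2+2r)^{d^*}$, which yields the co-area-type identity
$$
\int_{\Vert\mathbf{x}\Vert_\infty\ge 1+R} g\bigl(\Vert\mathbf{x}\Vert_\infty-1\bigr)\,\mathrm{d}\mathbf{x} = 2d^{*}\int_{R}^{\infty} g(r)\,(2+2r)^{d^{*}-1}\,\mathrm{d}r
$$
for any nonnegative $g$, and similarly $\mathrm{vol}(\{\Vert\mathbf{x}\Vert_\infty\le 1+R\})=(2+2R)^{d^{*}}$.

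For \eqref{eq:clipping_bound_1}, I substitute the two pointwise bounds to get an integrand dominated by $\sigma_t^{-4}(\Vert\mathbf{x}\Vert_\infty-1)_+^{2}\exp(-(\Vert\mathbf{x}\Vert_\infty-1)_+^{2}/(2\sigma_t^{2}))$ in the tail region, then apply the shell identity and substitute $u=r/\sigma_t$ to obtain
$$
\int_{\Vert\mathbf{x}\Vert_\infty\ge 1+C\sigma_t\sqrt{\log\epsilon^{-1}}} q_t\Vert\nabla\log q_t\Vert^{2}\,\mathrm{d}\mathbf{x} \lesssim \frac{1}{\sigma_t}\int_{C\sqrt{\log\epsilon^{-1}}}^{\infty} u^{2}e^{-u^{2}/2}(1+u\sigma_t)^{d^{*}-1}\,\mathrm{d}u.
$$
A standard Gaussian tail estimate, choosing $C$ large enough that $\epsilon^{C^{2}/2}(\log\epsilon^{-1})^{(d^{*}+1)/2}\lesssim\epsilon$, yields $\epsilon/\sigma_t$. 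For \eqref{eq:clipping_bound_2} the identical scheme applied to the integrand $\exp(-(\Vert\mathbf{x}\Vert_\infty-1)_+^{2}/(2\sigma_t^{2}))$ produces an extra factor of $\sigma_t$ from the change of variables, giving $\sigma_t\epsilon$.

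For \eqref{eq:clipping_bound_3} and \eqref{eq:clipping_bound_4}, inside the box $\Vert\mathbf{x}\Vert_\infty\le 1+C\sigma_t\sqrt{\log\epsilon^{-1}}$ Lemma \ref{lem:derivatives_boundness} gives the uniform score bound $\Vert\nabla\log q_t(\mathbf{x})\Vert^{2}\lesssim\sigma_t^{-2}\log\epsilon^{-1}$ (using $C\sqrt{\log\epsilon^{-1}}\ge 1$). Combining this with the crude inequality $q_t\,\mathbf{I}_{\{q_t\le\epsilon\}}\le\epsilon$, both integrals collapse to $\epsilon$ times the volume of the box, which by the $L^{\infty}$ volume formula is at most $(2+2C\sigma_t\sqrt{\log\epsilon^{-1}})^{d^{*}}\lesssim(\log\epsilon^{-1})^{d^{*}/2}$. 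This immediately delivers \eqref{eq:clipping_bound_3} and \eqref{eq:clipping_bound_4}.

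The only real subtlety is that the constant $C$ in the definition of the tail region must be chosen once and for all, independently of $t$ and $\epsilon$, so that simultaneously the Gaussian factor $\epsilon^{C^{2}/2}$ dominates the polynomial factor $(\log\epsilon^{-1})^{(d^{*}+1)/2}$ arising from the $L^{\infty}$-shell volume; this is the only nontrivial step and is handled by a standard large-$C$ argument that is valid for every $0<\epsilon<1$ and $0<t\le 1$.
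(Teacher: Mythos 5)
Your proof is correct and follows essentially the same route as the paper's: you reduce \eqref{eq:clipping_bound_1}--\eqref{eq:clipping_bound_2} to a one-dimensional Gaussian tail integral via the $\ell^\infty$ shell-volume identity (the paper obtains the same $r^{2}(1+r\sigma_t)^{d^*-1}e^{-r^2/2}$ integrand by fixing the coordinate achieving $\Vert\mathbf{x}\Vert_\infty$ and integrating out the others), and you handle \eqref{eq:clipping_bound_3}--\eqref{eq:clipping_bound_4} with the crude bound $q_t\mathbf{I}_{\{q_t\le\epsilon\}}\le\epsilon$, the uniform score bound $\sigma_t^{-2}\log\epsilon^{-1}$, and the box volume $\lesssim(\log\epsilon^{-1})^{d^*/2}$, exactly as the paper does.
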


\begin{proof}
We first prove \eqref{eq:clipping_bound_1}. For $\Vert\mathbf{x}\Vert_\infty \geq 1 + C\sigma_t\sqrt{\log\epsilon^{-1}}$, according to Lemma \ref{lem:bound_for_density} and Lemma \ref{lem:derivatives_boundness}, we have
$$
q_t(\mathbf{x})\Vert\nabla\log q_t(\mathbf{x})\Vert^2 \lesssim \frac{1}{\sigma_t^2}\exp\left(-\frac{(\Vert\mathbf{x}\Vert_\infty - 1)^2}{2\sigma_t^2}\right) \cdot \frac{(\Vert\mathbf{x}\Vert_\infty - 1)^2}{\sigma_t^2}.
$$
Without loss of generality, we set $|x_1| = \Vert\mathbf{x}\Vert$. Then, we have
$$
\begin{aligned}
&\int_{\Vert\mathbf{x}\Vert_{\infty}\geq 1 + C\sigma_t\sqrt{\log\epsilon^{-1}}} q_t(\mathbf{x})\Vert\nabla\log q_t(\mathbf{x})\Vert^2\mathrm{d}\mathbf{x} \\
\lesssim &\int_{|x_1|\geq 1 + C\sigma_t\sqrt{\log\epsilon^{-1}}}\int_{|x_2|\leq |x_1|}\cdots\int_{|x_{d^*}|\leq |x_1|}\frac{1}{\sigma_t^2}\exp\left(-\frac{(|x_1| - 1)^2}{2\sigma_t^2}\right) \cdot \frac{(|x_1| - 1)^2}{\sigma_t^2}\mathrm{d}x_2\cdots\mathrm{d}x_{d^*}\mathrm{d}x_1\\
\lesssim &\int_{|x_1|\geq 1 + C\sigma_t\sqrt{\log\epsilon^{-1}}}\frac{1}{\sigma_t^2}\exp\left(-\frac{(|x_1| - 1)^2}{2\sigma_t^2}\right) \cdot \frac{(|x_1| - 1)^2}{\sigma_t^2}\cdot |x_1|^{d^*-1}\mathrm{d}x_1\\
\lesssim &\int_{x_1\geq 1 + C\sigma_t\sqrt{\log\epsilon^{-1}}}\frac{1}{\sigma_t^2}\exp\left(-\frac{(x_1 - 1)^2}{2\sigma_t^2}\right) \cdot \frac{(x_1 - 1)^2}{\sigma_t^2}\cdot x_1^{d^*-1}\mathrm{d}x_1\\
\lesssim &\int_{C\sqrt{\log\epsilon^{-1}}}^{\infty}\frac{1}{\sigma_t}\exp\left(-\frac{r^2}{2}\right)\cdot r^2(1+r\sigma_t)^{d^*-1}\mathrm{d}r\\
\lesssim & \int_{C\sqrt{\log\epsilon^{-1}}}^{\infty}\frac{1}{\sigma_t}\exp\left(-\frac{r^2}{2}\right)\cdot r^{d^*+1}\mathrm{d}r \\
\lesssim & \frac{1}{\sigma_t}\epsilon^{\frac{C^2}{2}}\cdot(\log\epsilon^{-1})^{\frac{d^*}{2}},
\end{aligned}
$$
where we let $r = \frac{x_1 - 1}{\sigma_t}$. Taking $C \geq 2$, it holds that
$$
\int_{\Vert\mathbf{x}\Vert_{\infty}\geq 1 + C\sigma_t\sqrt{\log\epsilon^{-1}}} q_t(\mathbf{x})\Vert\nabla\log q_t(\mathbf{x})\Vert^2\mathrm{d}\mathbf{x} \lesssim \frac{1}{\sigma_t}\epsilon^2\cdot(\log\epsilon^{-1})^{\frac{d^*}{2}} \lesssim \frac{\epsilon}{\sigma_t}.
$$
Additionally,
\eqref{eq:clipping_bound_2} can be derived in the same manner,  that is,
$$
\begin{aligned}
\int_{\Vert\mathbf{x}\Vert_{\infty}\geq 1 + C\sigma_t\sqrt{\log\epsilon^{-1}}} q_t(\mathbf{x})\mathrm{d}\mathbf{x} 
& \lesssim  \int_{C\sqrt{\log\epsilon^{-1}}}^{\infty}\sigma_t\exp\left(-\frac{r^2}{2}\right)\cdot r^{d^*-1}\mathrm{d}r \\
& \lesssim \sigma_t\epsilon.
\end{aligned}
$$

Now, we consider the second part of this lemma. For $\Vert\mathbf{x}\Vert_\infty \leq 1 + C\sigma_t\sqrt{\log\epsilon^{-1}}$, by Lemma \ref{lem:bound_for_density} and Lemma \ref{lem:derivatives_boundness},  we have
$$
q_t(\mathbf{x})\mathbf{I}_{\{q_t(\mathbf{x})\leq\epsilon\}}\Vert\nabla\log q_t(\mathbf{x})\Vert^2 \lesssim \epsilon\cdot\frac{1}{\sigma_t^2}\log\epsilon^{-1}.
$$
Therefore, 
$$
\begin{aligned}
\int_{\Vert\mathbf{x}\Vert_{\infty}\leq 1 + C\sigma_t\sqrt{\log\epsilon^{-1}}} q_t(\mathbf{x})\mathbf{I}_{\{q_t(\mathbf{x})\leq\epsilon\}}\Vert\nabla\log q_t(\mathbf{x})\Vert^2\mathrm{d}\mathbf{x} &\lesssim \left(1 + C\sigma_t\sqrt{\log\epsilon^{-1}}\right)^{d^*} \cdot \frac{\epsilon}{\sigma_t^2}\log\epsilon^{-1} \\
&\lesssim \frac{\epsilon}{\sigma_t^2}\cdot(\log\epsilon^{-1})^{\frac{d^*+2}{2}}.
\end{aligned}
$$
In the same manner, we also have
$$
\begin{aligned}
\int_{\Vert\mathbf{x}\Vert_{\infty}\leq 1 + C\sigma_t\sqrt{\log\epsilon^{-1}}} q_t(\mathbf{x})\mathbf{I}_{\{q_t(\mathbf{x})\leq\epsilon\}}\mathrm{d}\mathbf{x} &\lesssim \left(1 + C\sigma_t\sqrt{\log\epsilon^{-1}}\right)^{d^*} \cdot \epsilon \\ 
&\lesssim \epsilon\cdot(\log\epsilon^{-1})^{\frac{d^*}{2}}.
\end{aligned}
$$
The proof is complete.
\end{proof}
%%%%%%%%%%%%%%%%%%%%%%%%%%%%%%%%
\subsection{Auxiliary Lemmas on ReLU Network Approximation}\label{sec:app}
In this section, we summarize existing results and fundamental tools for function approximation using neural networks. See \cite{nakada2020adaptive,oko2023diffusion,Fu2024UnveilCD} for more details.

\subsubsection{Construction of a large ReLU network}

\begin{lemma}[Neural Network Concatenation]\label{lem:concatenation}
For a series of ReLU networks $\mathbf{s}_1: \mathbb{R}^{d_1}\rightarrow\mathbb{R}^{d_2}$, $\mathbf{s}_2:\mathbb{R}^{d_2}\rightarrow\mathbb{R}^{d_3}$, $\cdots$, $\mathbf{s}_k:\mathbb{R}^{d_k}\rightarrow\mathbb{R}^{d_{k+1}}$ with $\mathbf{s}_i\in\mathrm{NN}(L_i,M_i,J_i,\kappa_i)~(i=1,2,\cdots,k)$, there exists a neural network $\mathbf{s}\in\mathrm{NN}(L,M,J,\kappa)$ satisfying $\mathbf{s}(\mathbf{x}) = \mathbf{s}_k\circ\mathbf{s}_{k-1}\circ\cdots\circ\mathbf{s}_1(\mathbf{x})$ for all $\mathbf{x}\in\mathbb{R}^{d_1}$, with
$$
L = \sum_{i=1}^{k}L_i, ~~ M\leq 2\sum_{i=1}^{k}M_i, ~~ J\leq 2\sum_{i=1}^{k}J_i, ~~\text{and} ~ \kappa\leq\max_{1\leq i \leq k}\kappa_i.
$$

\end{lemma}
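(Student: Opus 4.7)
The plan is to induct on $k$, reducing the problem to the case $k=2$, and then iterating. For the base case, given $\mathbf{s}_1 \in \mathrm{NN}(L_1,M_1,J_1,\kappa_1)$ mapping $\mathbb{R}^{d_1}\to\mathbb{R}^{d_2}$ and $\mathbf{s}_2 \in \mathrm{NN}(L_2,M_2,J_2,\kappa_2)$ mapping $\mathbb{R}^{d_2}\to\mathbb{R}^{d_3}$, I would construct a single ReLU network implementing $\mathbf{s}_2 \circ \mathbf{s}_1$. The central tool is the identity $x = \mathrm{ReLU}(x) - \mathrm{ReLU}(-x)$, which allows a linear layer to recover the signed value of any input after a ReLU has been applied, at the cost of doubling the width of the junction.

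Concretely, I would proceed as follows. Denote the layers of $\mathbf{s}_1$ by affine maps $A_i(\cdot) = \mathbf{W}_i^{(1)}\,\mathrm{ReLU}(\cdot) + \mathbf{b}_i^{(1)}$ for $i=1,\ldots,L_1$ and those of $\mathbf{s}_2$ by $B_j(\cdot) = \mathbf{W}_j^{(2)}\,\mathrm{ReLU}(\cdot) + \mathbf{b}_j^{(2)}$ for $j=1,\ldots,L_2$. A naive serial composition fails because $B_1$ applies $\mathrm{ReLU}$ directly to $\mathbf{s}_1(\mathbf{x})$, which need not be nonnegative. To remedy this, I would replace the last affine map $A_{L_1}$ by its stacked version
$$
\widetilde{A}_{L_1}(\cdot) = \begin{pmatrix} \mathbf{W}_{L_1}^{(1)} \\ -\mathbf{W}_{L_1}^{(1)} \end{pmatrix} \mathrm{ReLU}(\cdot) + \begin{pmatrix} \mathbf{b}_{L_1}^{(1)} \\ -\mathbf{b}_{L_1}^{(1)} \end{pmatrix},
$$
so its output is $(\mathbf{s}_1(\mathbf{x}), -\mathbf{s}_1(\mathbf{x}))$. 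The next layer in the concatenated network is the first layer of $\mathbf{s}_2$, modified to read $\widetilde{B}_1(\cdot) = [\mathbf{W}_1^{(2)},\, -\mathbf{W}_1^{(2)}]\,\mathrm{ReLU}(\cdot) + \mathbf{b}_1^{(2)}$. Since applying $\mathrm{ReLU}$ componentwise to $(\mathbf{s}_1,-\mathbf{s}_1)$ yields $(\mathrm{ReLU}(\mathbf{s}_1),\mathrm{ReLU}(-\mathbf{s}_1))$, the map $\widetilde{B}_1$ produces $\mathbf{W}_1^{(2)}(\mathrm{ReLU}(\mathbf{s}_1)-\mathrm{ReLU}(-\mathbf{s}_1)) + \mathbf{b}_1^{(2)} = \mathbf{W}_1^{(2)}\mathbf{s}_1(\mathbf{x}) + \mathbf{b}_1^{(2)}$, which is exactly what the un‑composed $B_1$ would compute on an admissible input. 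The remaining layers $B_2,\ldots,B_{L_2}$ are appended unchanged.

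The parameter bookkeeping is now straightforward. The depth is $L_1+L_2$ since no additional layers are inserted. The width of the concatenated network equals $\max(M_1,M_2)$ at every layer except the junction, where it is at most $2d_2 \le 2M_1$; in particular $M \le 2\max_i M_i \le 2\sum_i M_i$. The sparsity grows only because of the doubled output at $\widetilde{A}_{L_1}$ and doubled input at $\widetilde{B}_1$, each of which at worst doubles the nonzero count of its affected layer, so the total is bounded by $2(J_1+J_2)$. The weight magnitudes are preserved (entries are merely duplicated with possible sign flips), so $\kappa \le \max(\kappa_1,\kappa_2)$. For general $k$, iterating this construction to successively concatenate $\mathbf{s}_k,\mathbf{s}_{k-1},\ldots,\mathbf{s}_1$ yields depth $\sum_i L_i$, width $2\max_i M_i \le 2\sum_i M_i$, sparsity $2\sum_i J_i$, and weight bound $\max_i \kappa_i$, as claimed.

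The only genuine subtlety — and the point where care is required — is that the definition of $\mathrm{NN}$ used in this paper places a $\mathrm{ReLU}$ at the \emph{input} of every affine layer, including the first one. This forces us to use the identity trick at every junction (rather than being able to absorb $A_{L_1}$ into $B_1$ as a single affine map), and it is what drives the factor of $2$ in the width and sparsity bounds. Once the junction construction above is verified for $k=2$, the extension to arbitrary $k$ is a routine induction and presents no further difficulty.
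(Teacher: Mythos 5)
The paper does not actually prove Lemma~\ref{lem:concatenation}; it is presented in the appendix as a known auxiliary fact cited from the literature. So I am evaluating the proposal on its own merits rather than against an in-paper proof.

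Your construction at the junction computes the wrong function, and the worry that motivates it is unfounded under this paper's definition of $\mathrm{NN}$. By Definition~\ref{relufnns}, each layer has the form $\mathbf{W}_i\,\mathrm{ReLU}(\cdot)+\mathbf{b}_i$, i.e.\ $\mathrm{ReLU}$ is applied to the \emph{input} of every affine map, including the very first one. Hence $\mathbf{s}_2(\mathbf{y}) = \bigl(\mathbf{W}_{L_2}^{(2)}\mathrm{ReLU}(\cdot)+\mathbf{b}_{L_2}^{(2)}\bigr)\circ\cdots\circ\bigl(\mathbf{W}_1^{(2)}\mathrm{ReLU}(\cdot)+\mathbf{b}_1^{(2)}\bigr)(\mathbf{y})$, and the first layer applied to $\mathbf{s}_1(\mathbf{x})$ produces $\mathbf{W}_1^{(2)}\mathrm{ReLU}(\mathbf{s}_1(\mathbf{x}))+\mathbf{b}_1^{(2)}$. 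This is precisely what the literal stack of the two networks (layers of $\mathbf{s}_1$ followed by layers of $\mathbf{s}_2$, unchanged) yields. So the ``naive serial composition'' does not fail; it already realizes $\mathbf{s}_2\circ\mathbf{s}_1$ as a network of depth $L_1+L_2$, width $\max\{M_1,M_2\}$, sparsity $J_1+J_2$, and weight bound $\max\{\kappa_1,\kappa_2\}$ --- all of which satisfy the claimed bounds with room to spare (the factor of $2$ in the statement is pure slack), and the induction for general $k$ is immediate.

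The doubling trick you introduce, by contrast, builds a different function. After $\widetilde{A}_{L_1}$ and $\widetilde{B}_1$ you obtain $\mathbf{W}_1^{(2)}\mathbf{s}_1(\mathbf{x})+\mathbf{b}_1^{(2)}$, i.e.\ you have removed the $\mathrm{ReLU}$ that $\mathbf{s}_2$ is defined to apply to its argument. Unless $\mathbf{s}_1(\mathbf{x})\geq 0$ componentwise for every $\mathbf{x}$ (not assumed), this differs from $\mathbf{W}_1^{(2)}\mathrm{ReLU}(\mathbf{s}_1(\mathbf{x}))+\mathbf{b}_1^{(2)}$, and the resulting network does not compute $\mathbf{s}_2\circ\mathbf{s}_1$. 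The identity $x=\mathrm{ReLU}(x)-\mathrm{ReLU}(-x)$ is the right tool when the target is a network class that does \emph{not} $\mathrm{ReLU}$ its input and one must nevertheless pass a signed intermediate value through an interposed $\mathrm{ReLU}$; that is not the situation here. The fix is simply to delete the trick and stack the layers directly.
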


\begin{lemma}[Identity Function]\label{lem:identity_function}
Given $d\in\mathbb{N}_{+}$ and $L\geq 2$, there exists a neural network $\mathbf{s}_{\mathrm{Id}, L} \in \mathrm{NN}(L,M,J,\kappa)$ that realizes $d$-dimensional identity function $\mathbf{s}_{\mathrm{Id},L}(\mathbf{x}) = \mathbf{x}$, $\mathbf{x}\in\mathbb{R}^{d}$. Here 
$$
M = 2d, ~~ J = 2dL, ~~ \kappa = 1.
$$
\end{lemma}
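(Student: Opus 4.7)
The plan is to exploit the elementary identity $x = \mathrm{ReLU}(x) - \mathrm{ReLU}(-x)$ coordinate-wise, which lets a ReLU network reproduce $\mathbf{x} \in \mathbb{R}^d$ with hidden width $2d$. First I would choose $\mathbf{W}_1 = \begin{pmatrix} \mathbf{I}_d \\ -\mathbf{I}_d \end{pmatrix} \in \mathbb{R}^{2d \times d}$ and $\mathbf{b}_1 = \mathbf{0}$, so that after the first linear map followed by a ReLU the hidden vector equals $\bigl(\mathrm{ReLU}(\mathbf{x}),\, \mathrm{ReLU}(-\mathbf{x})\bigr)^{\top} \in \mathbb{R}^{2d}_{\geq 0}$. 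For each of the interior layers $i = 2, \dots, L-1$, I would set $\mathbf{W}_i = \mathbf{I}_{2d}$ and $\mathbf{b}_i = \mathbf{0}$; the key observation is that the propagated vector is coordinate-wise non-negative, so the subsequent ReLU activations act as the identity and the vector passes through unchanged. Finally, I would take $\mathbf{W}_L = (\mathbf{I}_d,\, -\mathbf{I}_d) \in \mathbb{R}^{d \times 2d}$ with $\mathbf{b}_L = \mathbf{0}$, producing the output $\mathrm{ReLU}(\mathbf{x}) - \mathrm{ReLU}(-\mathbf{x}) = \mathbf{x}$.

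It then remains to verify the advertised parameter bounds. The layer widths are $(d, 2d, 2d, \dots, 2d, d)$, so $M = 2d$. Each weight matrix $\mathbf{W}_i$ has exactly $2d$ nonzero entries, and every bias vector is zero, so the total sparsity is $J = \sum_{i=1}^{L} 2d = 2dL$. All nonzero weights lie in $\{-1,+1\}$, hence $\kappa = 1$. These are precisely the quantities asserted in the statement.

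There is no genuine obstacle here: the construction is entirely explicit, and the only substantive point is the non-negativity of the intermediate representation, which is what allows the $L-2$ interior layers to act trivially via identity weights and keeps the total sparsity linear in $L$.
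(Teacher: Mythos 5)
Your construction is correct and is the standard one; the paper itself states this lemma without proof in the ``Auxiliary Lemmas on ReLU Network Approximation'' section, deferring to the cited references (Nakada--Imaizumi, Oko et al., Fu et al.), which give essentially this same argument. The decomposition $x = \mathrm{ReLU}(x) - \mathrm{ReLU}(-x)$, the split into positive and negative parts in the first layer, the pass-through interior layers (valid because the hidden representation is non-negative, so ReLU acts as the identity there), and the recombination in the last layer are exactly what is needed, and your bookkeeping of $M = 2d$, $J = 2dL$, $\kappa = 1$ is right (including the edge case $L=2$, where the interior block is empty). One minor presentational caveat: you are implicitly using the standard linear--ReLU--$\cdots$--linear architecture ($L$ affine maps interleaved with $L-1$ ReLUs). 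The paper's Definition 2.1, read literally, places $\mathrm{ReLU}$ \emph{before} each weight matrix including $\mathbf{W}_1$, under which a ReLU would be applied directly to $\mathbf{x}$ and the identity on $\mathbb{R}^d$ could not be realized at all; this is almost certainly an imprecision in the paper's definition rather than an error in your proof, but it is worth flagging if you want to be scrupulous about matching the stated conventions.
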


\begin{lemma}[Neural Network Parallelization]\label{lem:parallelization}
For a series of ReLU networks $\mathbf{s}_i: \mathbb{R}^{d_i}\rightarrow\mathbb{R}^{d_i^{\prime}}$ with $\mathbf{s}_i\in\mathrm{NN}(L_i,M_i,J_i,\kappa_i)~(i=1,2,\cdots,k)$, there exists a neural network $\phi\in\mathrm{NN}(L,M,J,\kappa)$ satisfying $\mathbf{s}(\mathbf{x}) = [\mathbf{s}_1^{\top}(\mathbf{x}_1), \mathbf{s}_2^{\top}(\mathbf{x}_2),\cdots,\mathbf{s}_{k}^{\top}(\mathbf{x}_k)]:\rightarrow \mathbb{R}^{d_1+d_2+\cdots+d_k}\rightarrow\mathbb{R}^{d_1^{\prime} + d_2^{\prime} + \cdots + d_k^{\prime}}$ for all $\mathbf{x} = (\mathbf{x}_1^{\top},\mathbf{x}_2^{\top},\cdots,\mathbf{x}_k^{\top})^{\top}\in\mathbb{R}^{d_1+d_2+\cdots+d_k}$($\mathbf{x}_i$ can be shared), with
$$
\begin{aligned}
L = L, ~~ M\leq 2\sum_{i=1}^{k}M_i, ~~ J\leq 2\sum_{i=1}^{k}J_i, ~~\text{and} ~ \kappa\leq\max_{1\leq i \leq k}\kappa_i ~~ (\text{when} ~ L = L_i ~ \text{holds  for all} ~ i), \\
L = \max_{1\leq i\leq k} L_i, ~~ M \leq 2\sum_{i=1}^{k} M_i, ~~ J \leq 2\sum_{i=1}^{k} (J_i + Ld_i^{\prime}),~~ \text{and} ~ \kappa\leq \max{\{\max_{1\leq i\leq k}\kappa_i, 1\}} ~ (\text{otherwise}). 
\end{aligned}
$$    
Moreover, for $\mathbf{x}_1 = \mathbf{x}_2 = \cdots = \mathbf{x}_k = \mathbf{x}\in\mathbb{R}^d$ and $d_1^{\prime} = d_2^{\prime} = \cdots = d_k^{\prime} = d^{\prime}$, there exists a neural network $\mathbf{s}_{\mathrm{sum}}\in\mathrm{NN}(L,M,J,\kappa)$ that realizes $\mathbf{s}_{\mathrm{sum}}(\mathbf{x}) = \sum_{i=1}^{k}\mathbf{s}_i(\mathbf{x})$ with
$$
L = \max_{1\leq i\leq k} L_i + 1, ~~ M \leq 4\sum_{i=1}^{k} M_i, ~~ J \leq 4\sum_{i=1}^{k} (J_i + Ld_i^{\prime}) + 2M,~~ \text{and} ~ \kappa\leq \max{\{\max_{1\leq i\leq k}\kappa_i, 1\}}.
$$
\end{lemma}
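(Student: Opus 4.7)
The plan is to construct the parallel network explicitly by a block-diagonal stacking of the weight matrices of the $\mathbf{s}_i$, treating the equal-depth case first and reducing the unequal-depth and the summation cases to it. First I would split the input $\mathbf{x}=(\mathbf{x}_1^\top,\dots,\mathbf{x}_k^\top)^\top$ (or broadcast a common $\mathbf{x}$ into $k$ copies by a single affine layer) and, writing each $\mathbf{s}_i$ as a sequence of affine maps $(\mathbf{W}_i^{(l)},\mathbf{b}_i^{(l)})_{l=1}^{L_i}$ interleaved with ReLU, define the layerwise weights of $\mathbf{s}$ as the block-diagonal matrices $\mathbf{W}^{(l)}=\mathrm{diag}(\mathbf{W}_1^{(l)},\dots,\mathbf{W}_k^{(l)})$ and the concatenated biases $\mathbf{b}^{(l)}=(\mathbf{b}_1^{(l)\top},\dots,\mathbf{b}_k^{(l)\top})^\top$. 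Because the blocks do not interact, the output is exactly the concatenation $[\mathbf{s}_1(\mathbf{x}_1)^\top,\dots,\mathbf{s}_k(\mathbf{x}_k)^\top]^\top$. The depth is the common $L$; the width of layer $l$ is $\sum_i d^{(l)}_i\le\sum_i M_i$; the operator $\ell_\infty$-norm of each block-diagonal weight (and the $\ell_\infty$-norm of each stacked bias) is the maximum of the corresponding norms of the blocks, so $\kappa\le \max_i\kappa_i$; and the number of nonzero parameters sums.

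The factor of $2$ in $M$ and $J$ comes from the need to honor the class definition in Definition~\ref{relufnns}, where every hidden layer is of the form $\mathbf{W}\mathrm{ReLU}(\cdot)+\mathbf{b}$: to carry signed affine outputs into the next ReLU, each scalar channel is represented through the identity $x=\mathrm{ReLU}(x)-\mathrm{ReLU}(-x)$, which at most doubles the width and sparsity per layer while leaving weight magnitudes unchanged. This handles the equal-depth case exactly as stated.

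For the unequal-depth case, I would pad each $\mathbf{s}_i$ to depth $L:=\max_i L_i$ by post-composing it with the identity network $\mathbf{s}_{\mathrm{Id},L-L_i}$ of Lemma~\ref{lem:identity_function}, which acts on $\mathbb{R}^{d_i'}$ with width $2d_i'$, sparsity $2d_i'(L-L_i)\le 2d_i' L$, weight bound $1$, and depth $L-L_i$. Applying Lemma~\ref{lem:concatenation} to each $\mathbf{s}_i$ and its identity tail gives a depth-$L$ network realising $\mathbf{s}_i$ with width $\le M_i+2d_i'$, sparsity $\le J_i+2d_i' L$, and weight bound $\max\{\kappa_i,1\}$; the equal-depth construction above then yields the claimed parameters $L=\max_i L_i$, $M\le 2\sum_i M_i$, $J\le 2\sum_i(J_i+L d_i')$, and $\kappa\le\max\{\max_i\kappa_i,1\}$, absorbing the $2d_i'$ width contribution into the overall factor of $2$.

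Finally, for the summation network with shared input, I would first build the parallel network $\mathbf{s}(\mathbf{x})=[\mathbf{s}_1(\mathbf{x})^\top,\dots,\mathbf{s}_k(\mathbf{x})^\top]^\top\in\mathbb{R}^{kd'}$ by the unequal-depth construction and then append one additional affine layer applying the $d'\times kd'$ matrix $[\mathbf{I}_{d'},\dots,\mathbf{I}_{d'}]$ with zero bias, which adds $+1$ to the depth and at most $2 M = 4\sum_i M_i$ to width and $2M$ to sparsity (again via the $\mathrm{ReLU}(x)-\mathrm{ReLU}(-x)$ encoding so the final layer sees nonnegative inputs), and leaves the weight bound unchanged since the new weights are in $\{0,1\}$. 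Combining these gives the parameters stated for $\mathbf{s}_{\mathrm{sum}}$. The main bookkeeping obstacle will be accounting, at each reduction step, for the doubling induced by the $\mathrm{ReLU}(x)-\mathrm{ReLU}(-x)$ trick and for the identity-padding overhead so that the final constants match the bounds $2\sum_i M_i$, $2\sum_i(J_i+Ld_i')$, and $4\sum_i M_i$, $4\sum_i(J_i+Ld_i')+2M$ as stated.
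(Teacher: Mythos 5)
The paper does not actually prove this lemma: it is stated in Section \ref{sec:app} as a known result imported from \cite{nakada2020adaptive,oko2023diffusion,Fu2024UnveilCD}, so there is no in-paper argument to compare against. Your block-diagonal construction, with identity padding to equalize depths and a final summing layer, is exactly the standard proof used in those references, and the idea is correct. Two bookkeeping cautions, neither of which affects the validity of the approach. First, if you equalize depths by post-composing each $\mathbf{s}_i$ with an identity tail \emph{via} Lemma~\ref{lem:concatenation}, that lemma itself contributes a factor of $2$ to $M$ and $J$, so the per-block width becomes roughly $2(M_i+2d_i^{\prime})$ and the parallel total exceeds $2\sum_i M_i$; to hit the stated constants you should build the padded network directly (layer-by-layer block-diagonal stacking, with the identity channels appended only in the missing layers) rather than routing through the concatenation lemma. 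Second, your attribution of the factor of $2$ to a $\mathrm{ReLU}(x)-\mathrm{ReLU}(-x)$ channel doubling is not quite the right accounting: under Definition~\ref{relufnns} every layer has the form $\mathbf{W}\,\mathrm{ReLU}(\cdot)+\mathbf{b}$, i.e.\ the activation precedes each affine map, so the block-diagonal network with signed inter-layer values is already admissible in the class without doubling; the slack of $2$ in the equal-depth bounds is there precisely to absorb the $d_i^{\prime}$-sized identity-padding and input-sharing overheads (note $d_i^{\prime}\leq M_i$), which is also why the unequal-depth sparsity bound carries the extra $L d_i^{\prime}$ terms.
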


\subsubsection{Approximation of basic functions with ReLU network}

\begin{lemma}[Approximating the Multiple Products]\label{lem:product}
Let $d\geq 2$, $C\geq 1$. For any $\epsilon > 0$, there exists a neural network $\mathrm{s}_{\mathrm{prod}}\in\mathrm{NN}(L,M,J,\kappa)$ with $L = \mathcal{O}(\log d(\log d + \log\epsilon^{-1} + d\log C))$, $M = 48d$, $J = \mathcal{O}(d(\log d + \log\epsilon^{-1} + d\log C)$, $\kappa = C^d$ such that
$$
\left|\mathrm{s}_{\mathrm{prod}}(x_1^{\prime},x_2^{\prime},\cdots,x_d^{\prime}) - \prod_{i=1}^d x_i\right| \leq \epsilon + dC^{d-1}\epsilon_0,
$$
for all $\mathbf{x}\in[-C,C]^d$ and $\mathbf{x}^{\prime}\in\mathbb{R}^d$ with $\Vert\mathbf{x} - \mathbf{x}^{\prime}\Vert_{\infty} \leq \epsilon_0$. Moreover, $|\mathrm{s}_{\mathrm{prod}}(\mathbf{x}^{\prime})|\leq C^d$ for all $\mathbf{x}^{\prime}\in\mathbb{R}^d$, and $\mathrm{s}_{\mathrm{prod}}(x_1^{\prime},x_2^{\prime},\cdots,x_d^{\prime}) = 0$ if at least one of $x_i^{\prime}$ is 0.
\end{lemma}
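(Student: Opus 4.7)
The plan is to prove Lemma~\ref{lem:product} by the classical Yarotsky-style construction: first approximate squaring, then bilinear multiplication via a polarization identity, then combine pairwise multiplications in a binary tree of depth $\lceil \log_2 d\rceil$.

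First, I would construct a ReLU subnetwork $\mathrm{s}_{\mathrm{sq}}$ that approximates $x \mapsto x^2$ on $[-1,1]$ with error at most $\delta$, using $\mathcal{O}(\log\delta^{-1})$ layers and $\mathcal{O}(\log\delta^{-1})$ nonzero parameters (the standard sawtooth construction; see Yarotsky). A crucial property of this construction is that $\mathrm{s}_{\mathrm{sq}}(0)=0$ exactly, because every sawtooth iterate vanishes at $0$. From this I would build a two-variable multiplier on $[-C,C]^2$ via the polarization identity
\begin{equation*}
xy = \tfrac{C^{2}}{2}\Bigl[\bigl(\tfrac{x+y}{C}\bigr)^{2} - \bigl(\tfrac{x}{C}\bigr)^{2} - \bigl(\tfrac{y}{C}\bigr)^{2}\Bigr]/... ,
\end{equation*}
i.e.\ $\widetilde{\times}(x,y) := \tfrac{C^{2}}{2}\bigl[\mathrm{s}_{\mathrm{sq}}(\tfrac{x+y}{2C}) - \mathrm{s}_{\mathrm{sq}}(\tfrac{x}{2C}) - \mathrm{s}_{\mathrm{sq}}(\tfrac{y}{2C})\bigr]\cdot 2$, tuned so that $|\widetilde{\times}(x,y)-xy| \le C^{2}\delta$ on $[-C,C]^2$ and $\widetilde{\times}(x,0)=\widetilde{\times}(0,y)=0$ (inherited from $\mathrm{s}_{\mathrm{sq}}(0)=0$ and the symmetry of the identity).

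Next I would iterate: arrange the $d$ inputs as leaves of a balanced binary tree and apply $\widetilde{\times}$ at each internal node, producing a network of depth $\mathcal{O}(\log d \cdot \log\delta^{-1})$. Between levels I insert the identity subnetwork of Lemma~\ref{lem:identity_function} to align depths, and I truncate intermediate outputs to the interval $[-C^{2^{k}},C^{2^{k}}]$ at level $k$ (via an explicit ReLU clipping) to keep weights bounded by $\kappa = C^{d}$. Error propagation is controlled by the Lipschitz constant of $(u,v)\mapsto uv$ on $[-C^{d/2},C^{d/2}]^{2}$ together with the per-node approximation error $C^{2^{k+1}}\delta$; a telescoping argument yields total approximation error $\mathcal{O}(dC^{d}\delta)$, so choosing $\delta = \epsilon/(dC^{d})$ gives the $\epsilon$ term in the final bound and the stated parameter counts $L=\mathcal{O}(\log d(\log d+\log\epsilon^{-1}+d\log C))$, $M=48d$, $J=\mathcal{O}(d(\log d+\log\epsilon^{-1}+d\log C))$, $\kappa=C^{d}$. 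Concatenation and parallelization follow from Lemmas~\ref{lem:concatenation}--\ref{lem:parallelization}.

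Finally, for the perturbation estimate: given $\mathbf{x}'$ with $\|\mathbf{x}-\mathbf{x}'\|_{\infty}\le \epsilon_{0}$, a standard telescoping with the product identity
\begin{equation*}
\prod_{i=1}^{d} x_i - \prod_{i=1}^{d} x_i' = \sum_{j=1}^{d}\Bigl(\prod_{i<j}x_i'\Bigr)(x_j-x_j')\Bigl(\prod_{i>j}x_i\Bigr)
\end{equation*}
together with $|x_i|,|x_i'|\le C+\epsilon_0\le 2C$ (for small $\epsilon_0$, else absorb into $\epsilon$) gives $|\prod x_i-\prod x_i'|\le dC^{d-1}\epsilon_{0}$, and combining with the approximation error yields $|\mathrm{s}_{\mathrm{prod}}(\mathbf{x}')-\prod x_i|\le \epsilon+dC^{d-1}\epsilon_{0}$. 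The vanishing property $\mathrm{s}_{\mathrm{prod}}(\mathbf{x}')=0$ whenever some $x_i'=0$ is inherited from $\widetilde{\times}(\cdot,0)=\widetilde{\times}(0,\cdot)=0$ propagating up the tree. The main obstacle I expect is careful bookkeeping of the rescaling constants and the clipping at each level so that the weight bound $\kappa=C^{d}$ and the depth bound $L=\mathcal{O}(\log d(\log d+\log \epsilon^{-1}+d\log C))$ are simultaneously achieved; this requires choosing the per-level target accuracy as a geometric sequence rather than a constant $\delta$.
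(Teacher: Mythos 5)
The paper does not prove this lemma; it is stated as an imported tool with a pointer to \cite{nakada2020adaptive,oko2023diffusion,Fu2024UnveilCD}, and those references use exactly the Yarotsky route you describe: sawtooth squaring, polarization for the two-input product, a balanced binary tree of $\lceil \log_2 d \rceil$ levels, and clipping to keep intermediate values and weights in range. Your plan therefore matches the standard argument in structure and the parameter accounting is right; the polarization prefactor you wrote evaluates to $xy/2$ rather than $xy$ with an exact squarer, but you flag this, and once the prefactor is corrected (e.g.\ to $2C^2$) the vanishing property still holds since $\mathrm{s}_{\mathrm{sq}}(0)=0$ forces $\widetilde{\times}(u,0)$ to telescope to $-2C^2\,\mathrm{s}_{\mathrm{sq}}(0)=0$.

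The one genuine gap is in the perturbation estimate and the global boundedness. You bound $\bigl|\prod x_i - \prod x_i'\bigr|$ via the telescoping identity with $|x_i|,|x_i'|\le C+\epsilon_0\le 2C$, which produces $d(2C)^{d-1}\epsilon_0$, a factor $2^{d-1}$ worse than the stated $dC^{d-1}\epsilon_0$; since $\epsilon$ and $\epsilon_0$ are free parameters, this excess cannot be absorbed into $\epsilon$. The fix — which also delivers $|\mathrm{s}_{\mathrm{prod}}(\mathbf{x}')|\le C^d$ on all of $\mathbb{R}^d$, something your ``truncate intermediate outputs'' step alone does not guarantee for off-domain inputs — is to clip the raw inputs to $[-C,C]$ \emph{before} the tree. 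With $\bar x_i=\mathrm{s}_{\mathrm{clip}}(x_i',-C,C)$, one has $|\bar x_i-x_i|\le|x_i'-x_i|\le\epsilon_0$ because $x_i\in[-C,C]$ and clipping contracts toward that interval; all telescoping factors then lie in $[-C,C]$ and the stated bound follows, and the squaring subnetworks are evaluated only on the intended domain. Separately, your closing worry about needing a geometric per-level accuracy schedule is unnecessary: with constant per-squaring accuracy $\delta$, the error recursion $e_{k+1}\le 2C^{2^k}e_k + C^{2^{k+1}}\delta$ yields $e_{\log_2 d}\lesssim d\,C^d\delta$, so $\delta=\epsilon/(dC^d)$ already produces the stated depth $\mathcal{O}\bigl(\log d\,(\log d + \log\epsilon^{-1} + d\log C)\bigr)$, width $\mathcal{O}(d)$, sparsity $\mathcal{O}\bigl(d(\log d+\log\epsilon^{-1}+d\log C)\bigr)$, and weight bound $C^d$.
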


\begin{remark}
We note that some of $x_i$, $x_j$ $(i\neq j)$ can be shared. For $\prod_{i=1}^{I}x_i^{u_i}$ with $u_i\in\mathbb{N}_{+} (i=1,2,\cdots,I)$ and $\sum_{i=1}^{I}u_i = d$, there exists a neural network satisfying the same bounds as above.    
\end{remark}

\begin{lemma}[Approximating the Reciprocal Function]\label{lem:reciprocal}
For any $0 < \epsilon < 1$, there exists a neural network $\mathrm{s}_{\mathrm{rec}}\in\mathrm{NN}(L,M,J,\kappa)$ with $L = \mathcal{O}(\log^2\epsilon^{-1})$, $M = \mathcal{O}(\log^3\epsilon^{-1})$, $J = \mathcal{O}(\log^4\epsilon^{-1})$ and $\kappa = \mathcal{O}(\epsilon^{-2})$ such that
$$
\left|\mathrm{s}_{\mathrm{rec}}(x^{\prime}) - \frac{1}{x}\right| \leq \epsilon + \frac{|x^{\prime} - x|}{\epsilon^2},
$$
for all $x \in [\epsilon, \epsilon^{-1}]$ and $x^{\prime}\in\mathbb{R}$.
\end{lemma}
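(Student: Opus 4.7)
The plan is to approximate $1/x$ on $[\epsilon,\epsilon^{-1}]$ by a piecewise polynomial built from a dyadic Taylor expansion, implement it using the multiplication network of Lemma \ref{lem:product}, and precompose with a ReLU clipping gadget so that the global Lipschitz behaviour matches that of $1/x$ on $[\epsilon,\epsilon^{-1}]$.

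First I would cover $[\epsilon,\epsilon^{-1}]$ by $K=\mathcal{O}(\log\epsilon^{-1})$ overlapping dyadic intervals $I_k=[2^{k-1},2^{k+1}]$. On $I_k$, writing $x=2^{k}(1+z_k)$ with $z_k:=2^{-k}x-1\in[-1/2,1/2]$ gives
$$
\frac{1}{x}=2^{-k}\sum_{j=0}^{J}(-z_k)^{j}+R_{k,J}(x),\qquad |R_{k,J}(x)|\lesssim 2^{-k}\,2^{-J},
$$
so choosing $J=\mathcal{O}(\log\epsilon^{-1})$ forces $|R_{k,J}|\leq\epsilon/4$ uniformly in $k$. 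Each monomial $z_k^{j}$ is produced by the product network of Lemma \ref{lem:product} with $d=j\leq J$, $C=1$, and per-coordinate tolerance $\epsilon_0\asymp\epsilon/J$, which costs $L=\mathcal{O}(\log^{2}\epsilon^{-1})$, $M=\mathcal{O}(\log\epsilon^{-1})$, $J=\mathcal{O}(\log^{2}\epsilon^{-1})$, $\kappa=\mathcal{O}(1)$. Summing $J$ such monomials gives the scale-$k$ polynomial $P_k$ at cost one polylog factor more; then parallelizing (Lemma \ref{lem:parallelization}) over the $K=\mathcal{O}(\log\epsilon^{-1})$ scales picks up one further factor, yielding the advertised $L=\mathcal{O}(\log^{2}\epsilon^{-1})$, $M=\mathcal{O}(\log^{3}\epsilon^{-1})$, $J=\mathcal{O}(\log^{4}\epsilon^{-1})$. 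The global factors $2^{-k}$ drive the weight bound to $\kappa=\mathcal{O}(\epsilon^{-2})$.

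To assemble the scale-$k$ pieces into a single network I would use a ReLU partition of unity $\{\psi_{k}\}$ with $\mathrm{supp}(\psi_{k})\subset I_{k}$, each $\psi_{k}$ being a trapezoid built from four ReLUs, so that at most two $\psi_{k}$ are nonzero at any point of $[\epsilon,\epsilon^{-1}]$ and $\sum_{k}\psi_{k}(x)\equiv1$ on that interval. Letting $\mathrm{clip}(x'):=\epsilon+\mathrm{ReLU}(x'-\epsilon)-\mathrm{ReLU}(x'-\epsilon^{-1})$ map $\mathbb{R}$ into $[\epsilon,\epsilon^{-1}]$ in a $1$-Lipschitz way, I set
$$
\mathrm{s}_{\mathrm{rec}}(x'):=\sum_{k}\psi_{k}\bigl(\mathrm{clip}(x')\bigr)\cdot P_{k}\bigl(\mathrm{clip}(x')\bigr),
$$
where the product $\psi_{k}\cdot P_{k}$ is realised by one additional call to Lemma \ref{lem:product}. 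For $x\in[\epsilon,\epsilon^{-1}]$ and arbitrary $x'\in\mathbb{R}$, setting $\widetilde{x}':=\mathrm{clip}(x')\in[\epsilon,\epsilon^{-1}]$, the triangle inequality yields
$$
\bigl|\mathrm{s}_{\mathrm{rec}}(x')-\tfrac{1}{x}\bigr|\leq\bigl|\mathrm{s}_{\mathrm{rec}}(\widetilde{x}')-\tfrac{1}{\widetilde{x}'}\bigr|+\bigl|\tfrac{1}{\widetilde{x}'}-\tfrac{1}{x}\bigr|\leq\epsilon+\tfrac{|\widetilde{x}'-x|}{\epsilon^{2}}\leq\epsilon+\tfrac{|x'-x|}{\epsilon^{2}},
$$
since $1/y$ is $\epsilon^{-2}$-Lipschitz on $[\epsilon,\epsilon^{-1}]$ and clipping is $1$-Lipschitz toward any point of the clip interval.

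The main obstacle will be controlling the weight bound $\kappa$ and the intermediate slopes without blowing past $\mathcal{O}(\epsilon^{-2})$. The outermost scaling by $2^{-k}$ can be as small as $\epsilon$, forcing some internal weight to compensate by a factor up to $\epsilon^{-1}$; combined with the $\epsilon^{-1}$ Lipschitz slope of the trapezoid $\psi_{k}$ and the $\epsilon^{-2}$ clip from Lemma \ref{lem:product}, naive bookkeeping gives $\kappa$ much larger than $\epsilon^{-2}$. I would address this by (i) absorbing the $2^{-k}$ factor into the final output layer rather than distributing it across intermediate layers, (ii) using the fact that only two $\psi_{k}$ overlap at each $x$ so the summation introduces no extra multiplicative blow-up, and (iii) verifying that the effective slope of the whole network agrees with that of $1/x$ to leading order, which caps the largest weight at $\mathcal{O}(\epsilon^{-2})$. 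Everything else—the error balancing among truncation ($\lesssim\epsilon$), product approximation ($\lesssim\epsilon$) and partition rounding ($\lesssim\epsilon$)—is routine and just repeats the style of argument used in Lemma \ref{lem:approximate_h}.
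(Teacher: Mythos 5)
The paper does not prove this lemma; it states it as an imported auxiliary result, pointing to \cite{nakada2020adaptive,oko2023diffusion,Fu2024UnveilCD}. So your job was to reconstruct a proof from scratch, and your dyadic-localization + truncated-geometric-series + partition-of-unity + clip-precomposition route is a legitimate and essentially standard way to do it. The error decomposition is sound: the intrinsic approximation error (truncation of the series plus the product-network tolerance, propagated through the $2^{-k}$ rescaling) can be driven below $\epsilon$; the slack term $|x'-x|/\epsilon^2$ falls out of the $\epsilon^{-2}$-Lipschitz bound of $1/y$ on $[\epsilon,\epsilon^{-1}]$ together with the contractivity of the clip toward any $x$ in that interval; and the size counts ($L,M,J$) follow from Lemma \ref{lem:product} and Lemma \ref{lem:parallelization} as you indicate.

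Two places need tightening. First, the bookkeeping of the dyadic blocks is internally inconsistent: you take $I_k=[2^{k-1},2^{k+1}]$ but assert $z_k = 2^{-k}x - 1 \in [-1/2,1/2]$, which actually corresponds to $x\in[2^{k-1},\tfrac{3}{2}2^k]$, not to all of $I_k$; for $x$ near $2^{k+1}$ you would get $z_k$ close to $1$, where the truncated geometric series does not converge. The fix is trivial (shrink each block to, say, $[2^{k-1},\tfrac{3}{2}2^k]$, which still tile $[\epsilon,\epsilon^{-1}]$ with $\mathcal{O}(\log\epsilon^{-1})$ overlapping pieces and keep $|z_k|\leq 1/2$), but as written the remainder estimate $|R_{k,J}|\lesssim 2^{-k}2^{-J}$ is not justified over the full $I_k$. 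Second, the weight-bound accounting is correctly flagged as the delicate point but remains schematic: because you must compensate the outer $2^{-k}=\mathcal{O}(\epsilon^{-1})$ factor, the internal accuracy targets must be $\mathcal{O}(\epsilon^2)$ (which adds log factors, still within budget), the clip gadget carries biases of size $\epsilon^{-1}$, and the trapezoid slopes near the finest scale are also of size $\epsilon^{-1}$; you should state explicitly that these contributions multiply to at most $\mathcal{O}(\epsilon^{-2})$ rather than only promise to "verify" it. Neither issue affects the final asymptotic bounds, so the proposal is a valid reconstruction once these points are made precise.
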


\begin{lemma}[Approximating the Square Root Function]\label{lem:square_root}
For any $0 < \epsilon < 1$, there exists a neural network $\mathrm{s}_{\mathrm{root}}\in\mathrm{NN}(L,M,J,\kappa)$ with $L = \mathcal{O}(\log^2\epsilon^{-1})$, $M = \mathcal{O}(\log^3\epsilon^{-1})$, $J = \mathcal{O}(\log^4\epsilon^{-1})$ and $\kappa = \mathcal{O}(\epsilon^{-1})$ such that
$$
|\mathrm{s}_{\mathrm{root}}(x^{\prime}) - \sqrt{x}| \leq \epsilon + \frac{x^{\prime} - x}{\sqrt{\epsilon}},
$$
for all $x\in[\epsilon, \epsilon^{-1}]$ and $x^{\prime}\in\mathbb{R}$.
\end{lemma}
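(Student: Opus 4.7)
The plan is to build $\mathrm{s}_{\mathrm{root}}$ via a Newton-style fixed-point iteration for the inverse square root that uses only additions and multiplications, so that the entire construction can be assembled from the multiplication gate of Lemma \ref{lem:product} together with the concatenation and parallelization operations of Lemmas \ref{lem:concatenation}--\ref{lem:parallelization}. The square root itself is then recovered through the identity $\sqrt{x}=x\cdot(1/\sqrt{x})$, which is a single additional multiplication.

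First I would reduce the problem from the wide interval $[\epsilon,\epsilon^{-1}]$ to a fixed base interval such as $[1,2]$. Writing $x=2^k u$ with $u\in[1,2]$ and invoking $\sqrt{2^k u}=2^{k/2}\sqrt{u}$ turns the task into approximating $\sqrt{u}$ on a compact interval and then applying a single scaling factor. A ReLU partition of unity over $\mathcal{O}(\log\epsilon^{-1})$ dyadic pieces selects the correct $k$ in one layer; the largest scaling constant $2^{k/2}=\mathcal{O}(\epsilon^{-1/2})$ together with the internal tolerances below will dictate the final weight bound $\kappa=\mathcal{O}(\epsilon^{-1})$.

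On the base interval $[1,2]$ I would run the multiplication-only recursion $y_{n+1}=\tfrac{1}{2}y_n(3-u y_n^2)$ starting from $y_0=1$, which converges quadratically to $1/\sqrt{u}$. After $N=\mathcal{O}(\log\log\epsilon^{-1})$ iterations the true iterate is within $\epsilon$ of $1/\sqrt{u}$. Each iteration is realized by Lemma \ref{lem:product} applied twice (once for $u y_n^2$, once for $y_n\cdot(3-u y_n^2)$) at an internal product tolerance $\tilde\epsilon$ that I set to $\epsilon/\mathrm{poly}(\log\epsilon^{-1})$, so that the compounded approximation error after $N$ iterations remains $\mathcal{O}(\epsilon)$. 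Lemma \ref{lem:product} then contributes depth $\mathcal{O}(\log\epsilon^{-1})$ per iteration, and summing over $N$ iterations yields total depth $\mathcal{O}(\log^2\epsilon^{-1})$, width $\mathcal{O}(\log^3\epsilon^{-1})$, and sparsity $\mathcal{O}(\log^4\epsilon^{-1})$. One final product produces $\sqrt{u}=u\cdot(1/\sqrt{u})$, and composition with the dyadic selector and the $2^{k/2}$ rescaling assembles $\mathrm{s}_{\mathrm{root}}$.

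For the Lipschitz-in-input term $|x'-x|/\sqrt{\epsilon}$, I would first clip the input through $\mathrm{s}_{\mathrm{clip}}(\cdot,0,\epsilon^{-1})$ and then bound the composite network's Lipschitz constant piece by piece: on $[2^k,2^{k+1}]$ the derivative of $\sqrt{\,\cdot\,}$ is $\Theta(2^{-k/2})\leq\mathcal{O}(\epsilon^{-1/2})$, and the product and scaling gates inherit the same modulus through Lemma \ref{lem:product}, so the triangle inequality $|\mathrm{s}_{\mathrm{root}}(x')-\sqrt{x}|\leq|\mathrm{s}_{\mathrm{root}}(x')-\mathrm{s}_{\mathrm{root}}(x)|+|\mathrm{s}_{\mathrm{root}}(x)-\sqrt{x}|$ closes the bound. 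The main obstacle is the error-propagation bookkeeping across the Newton iterates: the derivative of the update map is bounded by $3/2$ near the fixed point, but that amplification compounds $\mathcal{O}(\log\log\epsilon^{-1})$ times and is further multiplied by the dyadic factor $2^{k/2}$ at the end, so the per-gate tolerance must be chosen tight enough that the cumulative error remains $\mathcal{O}(\epsilon)$ while keeping the depth, sparsity, and maximum weight exactly within the prescribed $\mathcal{O}(\log^2\epsilon^{-1})$, $\mathcal{O}(\log^4\epsilon^{-1})$, and $\mathcal{O}(\epsilon^{-1})$ budgets.
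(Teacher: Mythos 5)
The paper does not prove this lemma itself: Section \ref{sec:app} explicitly frames Lemmas \ref{lem:product}--\ref{lem:square_root} as results imported from \cite{oko2023diffusion,Fu2024UnveilCD,nakada2020adaptive}, so there is no in-paper proof to compare against. What you propose is a constructive alternative that appears sound in outline and fits within the claimed budgets. The skeleton is right: dyadic range reduction to $[1,2]$, a multiplication-only iteration implemented with $\mathrm{s}_{\mathrm{prod}}$, one final rescale by $2^{k/2}$, and an $\mathrm{s}_{\mathrm{clip}}$ composition plus a triangle-inequality argument to produce the propagated-error form $\epsilon + |x'-x|/\sqrt{\epsilon}$. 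The referenced constructions more commonly proceed by a Yarotsky/Chebyshev-style local polynomial approximation of $\sqrt{u}$ on each dyadic piece rather than by a Newton fixed-point scheme, but the two routes deliver the same complexity: $\mathcal{O}(\log\log\epsilon^{-1})$ Newton steps each of depth $\mathcal{O}(\log\epsilon^{-1})$ from $\mathrm{s}_{\mathrm{prod}}$, together with $\mathcal{O}(\log\epsilon^{-1})$ parallel copies (one per octave), stay inside $L=\mathcal{O}(\log^2\epsilon^{-1})$, $M=\mathcal{O}(\log^3\epsilon^{-1})$, $J=\mathcal{O}(\log^4\epsilon^{-1})$, and the largest weights are the $2^{-k}$ rescalings and selector slopes, of order $\epsilon^{-1}$, matching $\kappa=\mathcal{O}(\epsilon^{-1})$.

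Two details in your sketch deserve tightening. First, you state that the derivative of the update map $\phi(y)=\tfrac12 y(3-uy^2)$ is ``bounded by $3/2$ near the fixed point,'' but near $y^*=1/\sqrt{u}$ the derivative $\phi'(y)=\tfrac32(1-uy^2)$ vanishes — that is exactly what makes the convergence quadratic. The problematic region is instead the early iterates (e.g.\ $y_0=1$ with $u$ near $2$), where $|\phi'|$ can reach $3/2$ and errors can amplify for the first few steps. Your conclusion — that the internal product tolerance $\tilde\epsilon$ must be taken as $\epsilon/\mathrm{poly}(\log\epsilon^{-1})$ so that the compounded amplification $(3/2)^{\mathcal{O}(\log\log\epsilon^{-1})}=\mathrm{poly}(\log\epsilon^{-1})$ is absorbed — is still correct, but the justification should locate the bad region correctly and verify that the iterates remain inside a compact $u$-independent band so that the $C$ in Lemma \ref{lem:product} stays $\mathcal{O}(1)$. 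Second, your Lipschitz-in-input step asserts that ``the product and scaling gates inherit the same modulus through Lemma \ref{lem:product}''; more precisely, Lemma \ref{lem:product} gives only a propagated-error bound $\epsilon+dC^{d-1}\epsilon_0$, which yields an \emph{approximate} Lipschitz property (a slope bound plus an additive $2\epsilon$ slack), and that slack must be accounted for when you chain $\mathcal{O}(\log\log\epsilon^{-1})$ gates and then multiply by $2^{k/2}\leq\epsilon^{-1/2}$. Carrying both of these through carefully is exactly the ``bookkeeping'' you flag at the end; it is not a conceptual gap, but the proof is not complete until that accounting is actually done.
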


\subsubsection{Clipping and switching functions}
\begin{lemma}[Clipping Function]
\label{lem:clipping}
For any $\mathbf{a}$, $\mathbf{b}\in\mathbb{R}^d$ with $a_i \leq b_i$ 
$(i=1,2,\cdots,d)$, there exists a clipping function $\mathrm{s}_{\mathrm{clip}}(\mathbf{x}, \mathbf{a},\mathbf{b})\in\mathrm{NN}(L,M,J,\kappa)$ with
$$
L = 2, ~ M = 2d, ~ J = 7d, ~ \kappa = \left(\max_{1\leq i \leq d}\max\{|a_i|, |b_i|\}\right) \vee 1, 
$$
such that
$$
\mathrm{s}_{\mathrm{clip}}(\mathbf{x}, \mathbf{a}, \mathbf{b})_i = \min\{b_i, \max\{x_i, a_i\}\} ~~ (i=1,2,\cdots,d).
$$
When $a_i=c_{min}$ and $b_i=c_{max}$ for all $i$, 
we sometimes denote $\mathrm{s}_{\mathrm{clip}}(\mathbf{x},\mathbf{a},\mathbf{b})$ as $\mathrm{s}_{\mathrm{clip}}(\mathbf{x},c_{min},c_{max})$ using scalar values $c_{min}$ and $c_{max}$.
\end{lemma}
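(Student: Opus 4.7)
The plan is to give an explicit two-layer ReLU construction that computes the coordinate-wise clip, then read off the four network-size parameters $L,M,J,\kappa$ from that construction. The starting point is the identity
\begin{equation*}
\min\{b,\max\{x,a\}\}\;=\;\mathrm{ReLU}(x-a)\;-\;\mathrm{ReLU}(x-b)\;+\;a,
\end{equation*}
valid for all real $x$ and $a\le b$, which I would verify in the three cases $x\le a$, $a\le x\le b$, $x\ge b$. Applied coordinate-wise, this expresses $\mathrm{s}_{\mathrm{clip}}(\mathbf{x},\mathbf{a},\mathbf{b})_i=\mathrm{ReLU}(x_i-a_i)-\mathrm{ReLU}(x_i-b_i)+a_i$, so the entire clip is realized by a single hidden layer of $2d$ ReLU units followed by an affine read-out, i.e.\ a network of depth $L=2$ and width $M=2d$ as required.

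Next I would spell out the weight matrices and bias vectors to read off the sparsity $J$ and weight bound $\kappa$. Write $\mathbf{W}_1\in\mathbb{R}^{2d\times d}$ as the block-diagonal matrix whose $i$-th block is the column vector $(1,1)^{\top}$ (so that the two hidden units associated with coordinate $i$ see only $x_i$), with bias $\mathbf{b}_1\in\mathbb{R}^{2d}$ whose two entries in the $i$-th block are $-a_i$ and $-b_i$. Take $\mathbf{W}_2\in\mathbb{R}^{d\times 2d}$ block-diagonal with $i$-th block $(1,-1)$ and bias $\mathbf{b}_2=(a_1,\dots,a_d)^{\top}$. Counting nonzero entries, $\mathbf{W}_1$ and $\mathbf{b}_1$ contribute $2d+2d=4d$ parameters, while $\mathbf{W}_2$ and $\mathbf{b}_2$ contribute at most $2d+d=3d$ parameters, giving the claimed bound $J\le 7d$. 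Since every nonzero weight is $\pm 1$ and every bias entry has absolute value at most $\max_i\max\{|a_i|,|b_i|\}$, the weight-bound requirement $\Vert\mathbf{W}_i\Vert_{\infty},\Vert\mathbf{b}_i\Vert_{\infty}\le\kappa$ is met with $\kappa=\bigl(\max_{1\le i\le d}\max\{|a_i|,|b_i|\}\bigr)\vee 1$.

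The only point requiring a little care is to reconcile this construction with the paper's definition of $\mathrm{NN}(L,M,J,\kappa)$, which writes the composition as $(\mathbf{W}_L\mathrm{ReLU}(\cdot)+\mathbf{b}_L)\circ\cdots\circ(\mathbf{W}_1\mathrm{ReLU}(\cdot)+\mathbf{b}_1)$ and thus formally applies a ReLU to the input. I would address this either by prepending a pass-through layer using Lemma~\ref{lem:identity_function} (which doubles a coordinate into its positive and negative parts and recombines them, so that $x_i$ is faithfully represented after an initial ReLU) or by viewing the first affine map as absorbing this preprocessing; in either case the depth, width, sparsity, and weight-bound counts above are preserved up to absolute constants. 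This bookkeeping — showing the construction fits the paper's formalism on the nose rather than just up to constants — is the only nontrivial step; the approximation itself is exact and requires no error analysis.
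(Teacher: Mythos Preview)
The paper does not supply its own proof of this lemma; it appears in an appendix subsection that explicitly ``summarize[s] existing results and fundamental tools'' and defers to \cite{nakada2020adaptive,oko2023diffusion,Fu2024UnveilCD}. Your explicit construction via the identity $\min\{b,\max\{x,a\}\}=\mathrm{ReLU}(x-a)-\mathrm{ReLU}(x-b)+a$ is the standard one and is correct, and the parameter counts $L=2$, $M=2d$, $J=7d$, $\kappa=(\max_i\max\{|a_i|,|b_i|\})\vee 1$ follow exactly as you compute them.

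Your caveat about the initial $\mathrm{ReLU}$ in the paper's formal definition of $\mathrm{NN}(L,M,J,\kappa)$ is the right thing to flag. The cited sources (in particular Oko et al.) state this lemma under a convention where the first layer is affine rather than $\mathrm{ReLU}$-then-affine, and the paper is evidently importing the result under that reading. If you want the construction to match the definition here literally, note that prepending the identity network of Lemma~\ref{lem:identity_function} would already inflate the width and sparsity beyond the stated $M=2d$, $J=7d$, so ``up to constants'' is the honest statement under the paper's formalism; the exact constants in the lemma are inherited from the references rather than derived within the paper.
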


\begin{lemma}[Switching Function]\label{lem:switching}
Let $t_1 < t_2 < s_1 < s_2$, and $f(t,\mathbf{x})$ be a scalar-valued function (for a vector-valued function, we just apply this coordinate-wise). Assume that $|\phi_1(t,\mathbf{x}) - f(t,\mathbf{x})| \leq \epsilon$ on $[t_1, s_1]$ and $|\phi_2(t,\mathbf{x}) - f(t,\mathbf{x})| \leq \epsilon$ on $[t_2, s_2]$. Then, there exist two neural networks $\mathrm{s}_{\mathrm{switch},1}(t, t_2, s_1)$ and $\mathrm{s}_{\mathrm{switch},2}(t, t_2, s_1)\in\mathrm{NN}(L,M,J,\kappa)$ with 
$$
L = 3, ~~ M = 2, ~~ S = 8, ~ \text{and}~ \kappa = \max\{s_1, (s_1 - t_2)^{-1}\}
$$
such that
$$
|\mathrm{s}_{\mathrm{switch},1}(t,t_2,s_1)\phi_1(t,\mathbf{x}) + \mathrm{s}_{\mathrm{switch},2}(t,t_2,s_1)\phi_2(t,\mathbf{x}) - f(t,\mathbf{x})| \leq \epsilon
$$
holds for any $t\in[t_1, s_2]$, where
$$
\mathrm{s}_{\mathrm{switch},1}(t,t_2,s_1) = \frac{1}{s_1 - t_2}\mathrm{ReLU}\left(s_1 - \mathrm{s}_{\mathrm{clip}}(t,t_2,s_1)\right),
$$
$$
\mathrm{s}_{\mathrm{switch},2}(t,t_2,s_1) = \frac{1}{s_1 - t_2}\mathrm{ReLU}\left( \mathrm{s}_{\mathrm{clip}}(t,t_2,s_1)-t_2\right).
$$
\end{lemma}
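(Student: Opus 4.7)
The plan is to verify the two claims of the lemma separately: first, that the two weight functions $\mathrm{s}_{\mathrm{switch},1}(\cdot,t_2,s_1)$ and $\mathrm{s}_{\mathrm{switch},2}(\cdot,t_2,s_1)$ form a non-negative partition of unity on $[t_1,s_2]$ (piecewise linear, with the switching occurring exactly on $[t_2,s_1]$), and second, that the stated ReLU network configuration realizes these weights. The approximation bound then follows from a convexity argument.

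For the partition-of-unity claim, I would break $[t_1,s_2]$ into the three natural sub-intervals determined by the clipping function. On $[t_1,t_2]$ we have $\mathrm{s}_{\mathrm{clip}}(t,t_2,s_1)=t_2$, giving $\mathrm{s}_{\mathrm{switch},1}=1$ and $\mathrm{s}_{\mathrm{switch},2}=0$; on $[t_2,s_1]$ we have $\mathrm{s}_{\mathrm{clip}}(t,t_2,s_1)=t$, so $\mathrm{s}_{\mathrm{switch},1}(t)=\tfrac{s_1-t}{s_1-t_2}$ and $\mathrm{s}_{\mathrm{switch},2}(t)=\tfrac{t-t_2}{s_1-t_2}$, both non-negative and summing to $1$; on $[s_1,s_2]$ we have $\mathrm{s}_{\mathrm{clip}}(t,t_2,s_1)=s_1$, giving $\mathrm{s}_{\mathrm{switch},1}=0$ and $\mathrm{s}_{\mathrm{switch},2}=1$. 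In each case, the $\mathrm{ReLU}$ in the definitions acts as the identity because its argument is already non-negative, so it can be dropped in the case analysis.

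Given the partition of unity, the error bound is immediate. On $[t_1,t_2]$ the combination equals $\phi_1(t,\mathbf{x})$, which satisfies $|\phi_1-f|\leq\epsilon$ because $[t_1,t_2]\subset[t_1,s_1]$. On $[s_1,s_2]$ the combination equals $\phi_2(t,\mathbf{x})$, with $|\phi_2-f|\leq\epsilon$ since $[s_1,s_2]\subset[t_2,s_2]$. On the overlap $[t_2,s_1]$, both $\phi_1$ and $\phi_2$ satisfy $|\phi_i-f|\leq\epsilon$, so
\begin{align*}
&\bigl|\mathrm{s}_{\mathrm{switch},1}\phi_1+\mathrm{s}_{\mathrm{switch},2}\phi_2-f\bigr|\\
&=\bigl|\mathrm{s}_{\mathrm{switch},1}(\phi_1-f)+\mathrm{s}_{\mathrm{switch},2}(\phi_2-f)\bigr|\leq(\mathrm{s}_{\mathrm{switch},1}+\mathrm{s}_{\mathrm{switch},2})\epsilon=\epsilon,
\end{align*}
where I used non-negativity of the weights and the fact that they sum to $1$.

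Finally, I would verify the network configuration. By Lemma \ref{lem:clipping}, $\mathrm{s}_{\mathrm{clip}}(t,t_2,s_1)$ is realized by a network of depth $2$, width $2$, sparsity $7$, and weight bound $\max\{|t_2|,|s_1|\}\vee 1$. Appending one more hidden layer to implement $\mathrm{ReLU}(s_1-\mathrm{s}_{\mathrm{clip}})$ (respectively $\mathrm{ReLU}(\mathrm{s}_{\mathrm{clip}}-t_2)$) and absorbing the scalar factor $1/(s_1-t_2)$ into the output weights raises the depth to $L=3$ while keeping the width at $M=2$; a direct count of the non-zero weights and biases gives $S=8$, and the weight bound becomes $\max\{s_1,(s_1-t_2)^{-1}\}$. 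This is essentially a bookkeeping step with no real obstacle; the only point requiring slight care is checking that the outer $\mathrm{ReLU}$ is redundant on the relevant input range so that the network output exactly matches the piecewise-linear weight formulas analyzed above.
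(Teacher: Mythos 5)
Your proof is correct. The paper itself gives no proof of this lemma — it appears in the ``Auxiliary Lemmas on ReLU Network Approximation'' section, which explicitly defers to \cite{nakada2020adaptive,oko2023diffusion,Fu2024UnveilCD} — and your argument (three-interval case analysis showing $\mathrm{s}_{\mathrm{switch},1}+\mathrm{s}_{\mathrm{switch},2}\equiv 1$ with both weights non-negative, followed by the convex-combination bound on the overlap $[t_2,s_1]$ and the direct bounds on $[t_1,t_2]$ and $[s_1,s_2]$, plus the observation that the outer ReLUs act as the identity since $\mathrm{s}_{\mathrm{clip}}(t,t_2,s_1)\in[t_2,s_1]$) is exactly the standard construction used in those references. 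The network-size bookkeeping is stated somewhat loosely but presents no obstacle.
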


%%%%%%%%%%%%%%%%%%%%%%%%%Bibliography
\bibliographystyle{alpha}
\bibliography{biblio}

\newcommand{\etalchar}[1]{$^{#1}$}
\begin{thebibliography}{SDWMG15}

\bibitem[AJH{\etalchar{+}}21]{austin2021structured}
Jacob Austin, Daniel~D Johnson, Jonathan Ho, Daniel Tarlow, and Rianne Van
  Den~Berg.
\newblock Structured denoising diffusion models in discrete state-spaces.
\newblock {\em Advances in Neural Information Processing Systems},
  34:17981--17993, 2021.

\bibitem[BDBDD23]{benton2023linear}
Joe Benton, Valentin De~Bortoli, Arnaud Doucet, and George Deligiannidis.
\newblock Linear convergence bounds for diffusion models via stochastic
  localization.
\newblock {\em arXiv preprint arXiv:2308.03686}, 2023.

\bibitem[CCL{\etalchar{+}}23]{chen2023probability}
Sitan Chen, Sinho Chewi, Holden Lee, Yuanzhi Li, Jianfeng Lu, and Adil Salim.
\newblock The probability flow ode is provably fast.
\newblock {\em arXiv preprint arXiv:2305.11798}, 2023.

\bibitem[CDD23]{chen2023restoration}
Sitan Chen, Giannis Daras, and Alex Dimakis.
\newblock Restoration-degradation beyond linear diffusions: A non-asymptotic
  analysis for ddim-type samplers.
\newblock In {\em International Conference on Machine Learning}, pages
  4462--4484. PMLR, 2023.

\bibitem[CDJ{\etalchar{+}}24]{chang2024deep}
Jinyuan Chang, Zhao Ding, Yuling Jiao, Ruoxuan Li, and Jerry~Zhijian Yang.
\newblock Deep conditional generative learning: Model and error analysis.
\newblock {\em arXiv preprint arXiv:2402.01460}, 2024.

\bibitem[CDS23]{conforti2023score}
Giovanni Conforti, Alain Durmus, and Marta~Gentiloni Silveri.
\newblock Score diffusion models without early stopping: finite fisher
  information is all you need.
\newblock {\em arXiv preprint arXiv:2308.12240}, 2023.

\bibitem[CGP21]{chen2021stochastic}
Yongxin Chen, Tryphon~T Georgiou, and Michele Pavon.
\newblock Stochastic control liaisons: Richard sinkhorn meets gaspard monge on
  a schrodinger bridge.
\newblock {\em Siam Review}, 63(2):249--313, 2021.

\bibitem[CHZ{\etalchar{+}}23]{chen2023schrodinger}
Zehua Chen, Guande He, Kaiwen Zheng, Xu~Tan, and Jun Zhu.
\newblock Schr{\"o}dinger bridges beat diffusion models on text-to-speech
  synthesis.
\newblock {\em arXiv preprint arXiv:2312.03491}, 2023.

\bibitem[CHZW23]{chen2023score}
Minshuo Chen, Kaixuan Huang, Tuo Zhao, and Mengdi Wang.
\newblock Score approximation, estimation and distribution recovery of
  diffusion models on low-dimensional data.
\newblock pages 4672--4712, 2023.

\bibitem[CJLZ22]{CJLZ2022nonparametric}
Minshuo Chen, Haoming Jiang, Wenjing Liao, and Tuo Zhao.
\newblock Nonparametric regression on low-dimensional manifolds using deep relu
  networks: function approximation and statistical recovery.
\newblock {\em Information and Inference: A Journal of the IMA},
  11(4):1203--1253, 03 2022.

\bibitem[CLL23]{chen2023improved}
Hongrui Chen, Holden Lee, and Jianfeng Lu.
\newblock Improved analysis of score-based generative modeling: User-friendly
  bounds under minimal smoothness assumptions.
\newblock In {\em International Conference on Machine Learning}, pages
  4735--4763. PMLR, 2023.

\bibitem[CLZZ20]{chen2022distribution}
Minshuo Chen, Wenjing Liao, Hongyuan Zha, and Tuo Zhao.
\newblock Distribution approximation and statistical estimation guarantees of
  generative adversarial networks.
\newblock {\em arXiv preprint arXiv:2002.03938}, 2020.

\bibitem[DBTHD21]{de2021diffusion}
Valentin De~Bortoli, James Thornton, Jeremy Heng, and Arnaud Doucet.
\newblock Diffusion schr{\"o}dinger bridge with applications to score-based
  generative modeling.
\newblock {\em Advances in Neural Information Processing Systems},
  34:17695--17709, 2021.

\bibitem[DJK{\etalchar{+}}23]{dai2023global}
Yin Dai, Yuling Jiao, Lican Kang, Xiliang Lu, and Jerry~Zhijian Yang.
\newblock Global optimization via schr{\"o}dinger--f{\"o}llmer diffusion.
\newblock {\em SIAM Journal on Control and Optimization}, 61(5):2953--2980,
  2023.

\bibitem[DN21]{dhariwal2021diffusion}
Prafulla Dhariwal and Alexander Nichol.
\newblock Diffusion models beat gans on image synthesis.
\newblock {\em Advances in neural information processing systems},
  34:8780--8794, 2021.

\bibitem[DP91]{dai1991stochastic}
Paolo Dai~Pra.
\newblock A stochastic control approach to reciprocal diffusion processes.
\newblock {\em Applied mathematics and Optimization}, 23:313--329, 1991.

\bibitem[DSDB16]{dinh2016density}
Laurent Dinh, Jascha Sohl-Dickstein, and Samy Bengio.
\newblock Density estimation using real nvp.
\newblock {\em arXiv preprint arXiv:1605.08803}, 2016.

\bibitem[FYWC24]{Fu2024UnveilCD}
Hengyu Fu, Zhuoran Yang, Mengdi Wang, and Minshuo Chen.
\newblock Unveil conditional diffusion models with classifier-free guidance: A
  sharp statistical theory.
\newblock {\em ArXiv}, abs/2403.11968, 2024.

\bibitem[GHJZ24]{gao2024convergence2}
Yuan Gao, Jian Huang, Yuling Jiao, and Shurong Zheng.
\newblock Convergence of continuous normalizing flows for learning probability
  distributions.
\newblock {\em arXiv preprint arXiv:2404.00551}, 2024.

\bibitem[GNZ23]{gao2023wasserstein}
Xuefeng Gao, Hoang~M Nguyen, and Lingjiong Zhu.
\newblock Wasserstein convergence guarantees for a general class of score-based
  generative models.
\newblock {\em arXiv preprint arXiv:2311.11003}, 2023.

\bibitem[GPAM{\etalchar{+}}14]{goodfellow2014generative}
Ian Goodfellow, Jean Pouget-Abadie, Mehdi Mirza, Bing Xu, David Warde-Farley,
  Sherjil Ozair, Aaron Courville, and Yoshua Bengio.
\newblock Generative adversarial nets.
\newblock {\em Advances in neural information processing systems}, 27, 2014.

\bibitem[HD05]{hyvarinen2005estimation}
Aapo Hyv{\"a}rinen and Peter Dayan.
\newblock Estimation of non-normalized statistical models by score matching.
\newblock {\em Journal of Machine Learning Research}, 6(4), 2005.

\bibitem[HHLP23]{hamdouche2023generative}
Mohamed Hamdouche, Pierre Henry-Labordere, and Huy{\^e}n Pham.
\newblock Generative modeling for time series via schr{\"o}dinger bridge.
\newblock {\em arXiv preprint arXiv:2304.05093}, 2023.

\bibitem[HJA20]{ho2020denoising}
Jonathan Ho, Ajay Jain, and Pieter Abbeel.
\newblock Denoising diffusion probabilistic models.
\newblock {\em Advances in neural information processing systems},
  33:6840--6851, 2020.

\bibitem[HJK{\etalchar{+}}21]{huang2021schrodinger}
Jian Huang, Yuling Jiao, Lican Kang, Xu~Liao, Jin Liu, and Yanyan Liu.
\newblock Schr{\"o}dinger-f{\"o}llmer sampler: sampling without ergodicity.
\newblock {\em arXiv preprint arXiv:2106.10880}, 2021.

\bibitem[HJL{\etalchar{+}}22]{huang2022error}
Jian Huang, Yuling Jiao, Zhen Li, Shiao Liu, Yang Wang, and Yunfei Yang.
\newblock An error analysis of generative adversarial networks for learning
  distributions.
\newblock {\em The Journal of Machine Learning Research}, 23(1):5047--5089,
  2022.

\bibitem[HS23]{hasan2023error}
Mahmud Hasan and Hailin Sang.
\newblock Error analysis of generative adversarial network.
\newblock {\em arXiv preprint arXiv:2310.15387}, 2023.

\bibitem[HSC{\etalchar{+}}22]{ho2022cascaded}
Jonathan Ho, Chitwan Saharia, William Chan, David~J Fleet, Mohammad Norouzi,
  and Tim Salimans.
\newblock Cascaded diffusion models for high fidelity image generation.
\newblock {\em The Journal of Machine Learning Research}, 23(1):2249--2281,
  2022.

\bibitem[HZZ22]{han2022card}
Xizewen Han, Huangjie Zheng, and Mingyuan Zhou.
\newblock Card: Classification and regression diffusion models.
\newblock {\em Advances in Neural Information Processing Systems},
  35:18100--18115, 2022.

\bibitem[Jam75]{jamison1975markov}
Benton Jamison.
\newblock The markov processes of schr{\"o}dinger.
\newblock {\em Zeitschrift f{\"u}r Wahrscheinlichkeitstheorie und verwandte
  Gebiete}, 32(4):323--331, 1975.

\bibitem[JKLZ21]{jiao2021convergence}
Yuling Jiao, Lican Kang, Yanyan Liu, and Youzhou Zhou.
\newblock Convergence analysis of schr{\"o}dinger-f{\"o}llmer sampler without
  convexity.
\newblock {\em arXiv preprint arXiv:2107.04766}, 2021.

\bibitem[JLWY24]{jiao2024convergence}
Yuling Jiao, Yanming Lai, Yang Wang, and Bokai Yan.
\newblock Convergence analysis of flow matching in latent space with
  transformers.
\newblock {\em arXiv preprint arXiv:2404.02538}, 2024.

\bibitem[JWY23]{jiao2023approximation}
Yuling Jiao, Yang Wang, and Yunfei Yang.
\newblock Approximation bounds for norm constrained neural networks with
  applications to regression and gans.
\newblock {\em Applied and Computational Harmonic Analysis}, 65:249--278, 2023.

\bibitem[KW13]{kingma2013auto}
Diederik~P Kingma and Max Welling.
\newblock Auto-encoding variational bayes.
\newblock {\em arXiv preprint arXiv:1312.6114}, 2013.

\bibitem[Leo14]{leonard2014survey}
Christian Leonard.
\newblock A survey of the schr{\"o}dinger problem and some of its connections
  with optimal transport.
\newblock {\em Discrete and Continuous Dynamical Systems-Series A},
  34(4):1533--1574, 2014.

\bibitem[LLT22]{lee2022convergence}
Holden Lee, Jianfeng Lu, and Yixin Tan.
\newblock Convergence for score-based generative modeling with polynomial
  complexity.
\newblock {\em Advances in Neural Information Processing Systems},
  35:22870--22882, 2022.

\bibitem[LLT23]{lee2023convergence}
Holden Lee, Jianfeng Lu, and Yixin Tan.
\newblock Convergence of score-based generative modeling for general data
  distributions.
\newblock In {\em International Conference on Algorithmic Learning Theory},
  pages 946--985. PMLR, 2023.

\bibitem[LTG{\etalchar{+}}22]{li2022diffusion}
Xiang Li, John Thickstun, Ishaan Gulrajani, Percy~S Liang, and Tatsunori~B
  Hashimoto.
\newblock Diffusion-lm improves controllable text generation.
\newblock {\em Advances in Neural Information Processing Systems},
  35:4328--4343, 2022.

\bibitem[LVH{\etalchar{+}}23]{liu20232}
Guan-Horng Liu, Arash Vahdat, De-An Huang, Evangelos~A Theodorou, Weili Nie,
  and Anima Anandkumar.
\newblock I$^2$ sb: Image-to-image schr\"odinger bridge.
\newblock {\em arXiv preprint arXiv:2302.05872}, 2023.

\bibitem[LWCC23]{li2023towards}
Gen Li, Yuting Wei, Yuxin Chen, and Yuejie Chi.
\newblock Towards faster non-asymptotic convergence for diffusion-based
  generative models.
\newblock {\em arXiv preprint arXiv:2306.09251}, 2023.

\bibitem[LZL{\etalchar{+}}24]{liu2024sora}
Yixin Liu, Kai Zhang, Yuan Li, Zhiling Yan, Chujie Gao, Ruoxi Chen, Zhengqing
  Yuan, Yue Huang, Hanchi Sun, Jianfeng Gao, et~al.
\newblock Sora: A review on background, technology, limitations, and
  opportunities of large vision models.
\newblock {\em arXiv preprint arXiv:2402.17177}, 2024.

\bibitem[ND21]{nichol2021improved}
Alexander~Quinn Nichol and Prafulla Dhariwal.
\newblock Improved denoising diffusion probabilistic models.
\newblock In {\em International Conference on Machine Learning}, pages
  8162--8171. PMLR, 2021.

\bibitem[NI20]{nakada2020adaptive}
Ryumei Nakada and Masaaki Imaizumi.
\newblock Adaptive approximation and generalization of deep neural network with
  intrinsic dimensionality.
\newblock {\em Journal of Machine Learning Research}, 21(174):1--38, 2020.

\bibitem[OAS23]{oko2023diffusion}
Kazusato Oko, Shunta Akiyama, and Taiji Suzuki.
\newblock Diffusion models are minimax optimal distribution estimators.
\newblock pages 26517--26582, 2023.

\bibitem[PNR{\etalchar{+}}21]{papamakarios2021normalizing}
George Papamakarios, Eric Nalisnick, Danilo~Jimenez Rezende, Shakir Mohamed,
  and Balaji Lakshminarayanan.
\newblock Normalizing flows for probabilistic modeling and inference.
\newblock {\em The Journal of Machine Learning Research}, 22(1):2617--2680,
  2021.

\bibitem[RBL{\etalchar{+}}22]{rombach2022high}
Robin Rombach, Andreas Blattmann, Dominik Lorenz, Patrick Esser, and Bj{\"o}rn
  Ommer.
\newblock High-resolution image synthesis with latent diffusion models.
\newblock In {\em Proceedings of the IEEE/CVF conference on computer vision and
  pattern recognition}, pages 10684--10695, 2022.

\bibitem[RDN{\etalchar{+}}22]{ramesh2022hierarchical}
Aditya Ramesh, Prafulla Dhariwal, Alex Nichol, Casey Chu, and Mark Chen.
\newblock Hierarchical text-conditional image generation with clip latents.
\newblock {\em arXiv preprint arXiv:2204.06125}, 1(2):3, 2022.

\bibitem[RM15]{rezende2015variational}
Danilo Rezende and Shakir Mohamed.
\newblock Variational inference with normalizing flows.
\newblock In {\em International conference on machine learning}, pages
  1530--1538. PMLR, 2015.

\bibitem[Sch32]{schrodinger1932theorie}
Erwin Schr{\"o}dinger.
\newblock Sur la th{\'e}orie relativiste de l'{\'e}lectron et
  l'interpr{\'e}tation de la m{\'e}canique quantique.
\newblock In {\em Annales de l'institut Henri Poincar{\'e}}, volume~2, pages
  269--310, 1932.

\bibitem[SCS{\etalchar{+}}22]{saharia2022photorealistic}
Chitwan Saharia, William Chan, Saurabh Saxena, Lala Li, Jay Whang, Emily~L
  Denton, Kamyar Ghasemipour, Raphael Gontijo~Lopes, Burcu Karagol~Ayan, Tim
  Salimans, et~al.
\newblock Photorealistic text-to-image diffusion models with deep language
  understanding.
\newblock {\em Advances in Neural Information Processing Systems},
  35:36479--36494, 2022.

\bibitem[SDWMG15]{sohl2015deep}
Jascha Sohl-Dickstein, Eric Weiss, Niru Maheswaranathan, and Surya Ganguli.
\newblock Deep unsupervised learning using nonequilibrium thermodynamics.
\newblock In {\em International conference on machine learning}, pages
  2256--2265. PMLR, 2015.

\bibitem[SE19]{song2019generative}
Yang Song and Stefano Ermon.
\newblock Generative modeling by estimating gradients of the data distribution.
\newblock {\em Advances in neural information processing systems}, 32, 2019.

\bibitem[SE20]{song2020improved}
Yang Song and Stefano Ermon.
\newblock Improved techniques for training score-based generative models.
\newblock {\em Advances in neural information processing systems},
  33:12438--12448, 2020.

\bibitem[SSDK{\etalchar{+}}20]{song2020score}
Yang Song, Jascha Sohl-Dickstein, Diederik~P Kingma, Abhishek Kumar, Stefano
  Ermon, and Ben Poole.
\newblock Score-based generative modeling through stochastic differential
  equations.
\newblock In {\em International Conference on Learning Representations}, 2020.

\bibitem[Sto80]{stone1980optimal}
Charles~J Stone.
\newblock Optimal rates of convergence for nonparametric estimators.
\newblock {\em The annals of Statistics}, pages 1348--1360, 1980.

\bibitem[Tsy09]{AB2009}
Alexandre~B. Tsybakov.
\newblock {\em Introduction to Nonparametric Estimation}.
\newblock Springer series in statistics. Springer, 2009.

\bibitem[Vin11]{vincent2011connection}
Pascal Vincent.
\newblock A connection between score matching and denoising autoencoders.
\newblock {\em Neural computation}, 23(7):1661--1674, 2011.

\bibitem[VKK21]{vahdat2021score}
Arash Vahdat, Karsten Kreis, and Jan Kautz.
\newblock Score-based generative modeling in latent space.
\newblock {\em Advances in Neural Information Processing Systems},
  34:11287--11302, 2021.

\bibitem[WJX{\etalchar{+}}21]{wang2021deep}
Gefei Wang, Yuling Jiao, Qian Xu, Yang Wang, and Can Yang.
\newblock Deep generative learning via schr{\"o}dinger bridge.
\newblock In {\em International Conference on Machine Learning}, pages
  10794--10804. PMLR, 2021.

\bibitem[YZS{\etalchar{+}}23]{yang2023diffusion}
Ling Yang, Zhilong Zhang, Yang Song, Shenda Hong, Runsheng Xu, Yue Zhao, Wentao
  Zhang, Bin Cui, and Ming-Hsuan Yang.
\newblock Diffusion models: A comprehensive survey of methods and applications.
\newblock {\em ACM Computing Surveys}, 56(4):1--39, 2023.

\bibitem[ZRA23]{zhang2023adding}
Lvmin Zhang, Anyi Rao, and Maneesh Agrawala.
\newblock Adding conditional control to text-to-image diffusion models.
\newblock In {\em Proceedings of the IEEE/CVF International Conference on
  Computer Vision}, pages 3836--3847, 2023.

\end{thebibliography}

\end{document}